\newcommand{\myparskip}{3pt}
\newcommand{\fullversion}[2]{#2}
\newcommand{\old}[1]{}
\newcommand{\new}[1]{#1}
\newtheorem{theorem}{Theorem}
\newtheorem{corollary}{Corollary}
\newtheorem{lemma}{Lemma}
\newtheorem{proposition}{Proposition}
\title{Online Submodular Maximization via Online Convex Optimization}
\newcommand{\IMZKC}{\texttt{ZKC}}
\newcommand{\IMEpinions}{\texttt{Epinions}}
\newcommand{\FLMovieLens}{\texttt{MovieLens}}
\newcommand{\TeamFormation}{\texttt{SynthTF}}
\newcommand{\WeightedCoverage}{\texttt{SynthWC}}
\newcommand{\Algorithm}[1]{\texttt{#1}}
\newcommand{\RAOCOOGA}{\Algorithm{RAOCO-OGA}}
\newcommand{\RAOCOOMA}{\Algorithm{RAOCO-OMA}}
\newmdtheoremenv{btheorem}{Theorem}
\newmdtheoremenv{blemma}{Lemma}
\newmdtheoremenv{bassumption}{Assumption}
\newmdtheoremenv{bcorollary}{Corollary}
\newtheorem{observation}{Observation}
\newtheorem{assumption}{Assumption}
\newtheorem{definition}{Definition}
\renewcommand{\vec}[1]{\bm{\mathbf{#1}}}
\newcommand{\x}{\vec{x}}
\newcommand{\y}{\vec{y}}
\newcommand{\z}{\vec{z}}
\newcommand{\h}{\vec{h}}
\newcommand{\w}{\vec{w}}
\newcommand{\interval}[1]{\left[#1\right]}
\newcommand\norm[1]{\lVert#1\rVert}
\newcommand{\E}{\mathbb{E}}
\newcommand{\I}{\mathcal{I}}
\newcommand{\X}{\mathcal{X}}
\newcommand{\reals}{\mathbb{R}}
\newcommand{\naturals}{\mathbb{N}}
\renewcommand{\H}{\vec{H}}
\newcommand{\set}[1]{\left\{#1\right\}}
\newcommand{\parentheses}[1]{\left(#1\right)}
\newcommand{\conv}[1]{\mathrm{conv}\parentheses{#1}}
\newcommand{\card}[1]{\left|#1\right|}
\renewcommand{\P}{{\mathcal{P}}}
\newcommand{\supp}[1]{\mathrm{supp}\parentheses{#1}}
\newcommand{\F}{\mathcal{F}}
\newcommand{\T}{[T]}
\newcommand{\M}{\mathcal{M}}
\newcommand{\charvec}{\vec{1}}
\renewcommand{\F}{\mathcal{F}}
\newcommand{\D}{\mathcal{D}}
\renewcommand{\w}{\vec{\w}}
\newcommand{\BigO}[1]{\mathcal{O}\parentheses{#1}}
\newcommand{\Y}{\mathcal{Y}}
\newcommand{\regret}{\mathrm{regret}}
\newcommand{\aregret}{\alpha \text{-}\mathrm{regret}}
\newcommand{\cfunc}{\tilde{f}}
\newcommand{\cfuncSet}{\tilde{\F}}
\newcommand{\argmax}{\mathrm{argmax}}
\newcommand{\g}{\vec g}
\newcommand{\xmark}{\ding{53}}%
\newcommand{\paintbg}{\cellcolor{royalazure!7}}
\newcommand{\inner}[2]{ \parentheses{#1}\cdot\parentheses{#2}}
\newcommand{\py}{\y^\pi}
\newcommand{\pg}{\g^\pi}
\newcommand{\argmin}{\mathrm{argmin}}
\NewDocumentCommand{\eqmathbox}{o O{c} m}{%
  \IfValueTF{#1}
    {\def\eqmathbox@##1##2{\eqmakebox[#1][#2]{$##1##2$}}}
    {\def\eqmathbox@##1##2{\eqmakebox{$##1##2$}}}
  \mathpalette\eqmathbox@{#3}
}
\newcommand{\rank}{r}
\renewcommand{\paragraph}[1]{\par\noindent\textbf{#1}}
\newenvironment{packed_itemize}{
\begin{list}{\labelitemi}{\leftmargin=2em}
\vspace{-5pt}
 \setlength{\itemsep}{0pt}
 \setlength{\parskip}{0pt}
 \setlength{\parsep}{0pt}
}{\end{list}}
\newcommand{\catalog}{\mathcal{C}}
\newcommand{\servers}{\mathcal{S}}
\newcommand{\capacity}{c}
\newcommand{\ppath}{p}
\definecolor{royalazure}{rgb}{0.0, 0.22, 0.66}
\newcommand{\thickhline}{%
    \noalign {\ifnum 0=`}\fi \hrule height 1pt
    \futurelet \reserved@a \@xhline
}
\newcolumntype{"}{@{\hskip\tabcolsep\vrule width 1pt\hskip\tabcolsep}}
\newcommand{\efunc}{\hat{f}}
\newcommand{\eFunc}{\hat{F}}
\DeclareFontFamily{U}{mathx}{\hyphenchar\font45}
\DeclareFontShape{U}{mathx}{m}{n}{
      <5> <6> <7> <8> <9> <10>
      <10.95> <12> <14.4> <17.28> <20.74> <24.88>
      mathx10
      }{}
\DeclareSymbolFont{mathx}{U}{mathx}{m}{n}
\DeclareMathSymbol{\bigtimes}{1}{mathx}{"91}
\newcommand{\forcednewline}{$$ $$}
\DeclarePairedDelimiter\ceil{\lceil}{\rceil}
\newcommand{\avg}{\mathrm{avg}}
\newcommand{\given}{\,\big|\,}
\begin{document}

\title{Online Submodular Maximization via Online Convex Optimization}

\author{Tareq~Si~Salem\thanks{Northeastern University, Boston, MA, USA \tt{\{sisalem, gozcan, ioannidis\}@ece.neu.edu}}\and
Gözde~Özcan\footnotemark[1]
\and
Iasonas~Nikolaou\thanks{Boston University, Boston, MA, USA, \tt{\{nikolaou, evimaria\}@bu.edu}}
\and
Evimaria~Terzi\footnotemark[2]
\and 
Stratis Ioannidis\footnotemark[1]
}

\begin{titlepage}
\maketitle
\begin{abstract}
    We study monotone submodular maximization under general matroid constraints in the online setting. We prove that online optimization of a large class of submodular functions, namely, weighted threshold potential functions, reduces to online convex optimization (OCO). This is precisely because functions in this class admit a concave relaxation; 
    as a result, OCO policies,
    coupled with an appropriate rounding scheme, can be used to achieve sublinear regret in the combinatorial setting. 
    We  show that  
    our reduction extends to many different versions of the online learning problem, including the dynamic regret, bandit, and optimistic-learning settings.  
    
\end{abstract}
\end{titlepage}
\section{Introduction}\label{sec:intro}

 In  online submodular optimization (OSM)~\cite{onlineassignement}, 
 submodular reward functions 
  chosen by an adversary are revealed over several rounds. In each round, a decision maker first commits to a set satisfying, e.g., matroid constraints. Subsequently, the reward function is revealed and evaluated over this set. 
  The objective is to minimize $\alpha$-regret, i.e., the difference of the cumulative reward accrued from the one attained by a static set, selected by an $\alpha$-approximation algorithm operating in hindsight. OSM  has received considerable interest recently, both in the full information \cite{harvey2020improved,matsuoka2021tracking,onlineassignement,niazadeh2021online} and bandit setting \cite{niazadeh2021online,matsuoka2021tracking, wan2023bandit}, where only the reward values (rather than the entire functions) are revealed.

Online convex optimization (OCO) studies a similar online setting in which reward functions are concave, and decisions are selected from a compact convex set. First proposed by~\citet{zinkevich2003}, who showed that projected gradient ascent attains sublinear regret, OCO generalizes previous online problems like prediction with expert advice~\cite{littlestone94}, and has become widely influential in the learning community~\cite{hazan2016introduction, shalev2012online,mcmahan2017survey}. Its success is evident from the multitude of OCO variants in literature:  in \emph{dynamic regret} OCO~\cite{zinkevich2003, besbes2015non, jadbabaie2015online, mokhtari2016online}, the regret is evaluated w.r.t.~an optimal comparator sequence instead of an optimal static decision in hindsight. \emph{Optimistic}~OCO~\cite{rakhlin2013online,dekel2017online,mohri2016accelerating} takes advantage of benign sequences, in which reward functions are predictable: the decision maker attains tighter regret bounds when predictions are correct, falling back to the existing OCO regret guarantees when predictions are unavailable or inaccurate. \emph{Bandit} OCO algorithms~\cite{hazan2014bandit, kleinberg2004nearly, Flaxman2005} study the aforementioned bandit setting, where again only rewards (i.e., function evaluations) are observed.

We make the following contributions:
\begin{packed_itemize}
    \item We provide a methodology for reducing OSM to OCO, when submodular functions selected by the adversary are bounded from above and below by concave relaxations and coupled with an opportune rounding. We prove that algorithms and regret guarantees in the OCO setting transfer to $\alpha$-regret guarantees in the OSM setting, via a transformation that we introduce. Ratio $\alpha$ is determined by how well concave relaxations approximate the original submodular functions.
    
    \item We show that the above condition is satisfied by a wide class of submodular functions, namely, \emph{weighted threshold potential} (WTP) functions. This class strictly generalizes weighted coverage functions \cite{karimi2017stochastic,stobbe2010efficient}, and includes many important applications, including influence maximization~\cite{kempe2003maximizing}, facility location~\cite{krause2014submodular,frieze1974cost},   cache networks~\cite{ioannidis2016adaptive,li2021online}, similarity caching~\cite{sisalem2022ascent},  demand forecasting~\cite{ito2016large}, and team formation~\cite{li2018learning}, to name a few.
    
    \item We show our reduction also \old{extents}\new{extends} to the dynamic regret and optimistic settings, reducing such full-information OSM settings to the respective OCO variants. To the best of our knowledge, our resulting algorithms are the first to come with guarantees for the dynamic regret and optimistic settings in the context of OSM with general matroid constraints. 
    Finally, we also provide a different reduction for the bandit setting, again for all three (static, dynamic regret, and optimistic) variants; this reduction applies to general submodular functions, but \new{is} restricted to partition matroids. 
\end{packed_itemize}
The remainder of the paper is organized as follows. We present related work and a technical preliminary in Sec.~\ref{sec:related} and Sec.~\ref{s:tech_preliminary}, respectively, and our setting in Sec.~\ref{s:problem}. We state our main results in Sec.~\ref{sec:osmviaoco}, \new{and} discuss extensions in Sec.~\ref{sec:extensions}. Our experimental results are in Sec.~\ref{sec:experiments}; we conclude in Sec.~\ref{sec:conclusions}.

\begin{table*}[!t]
\resizebox{\textwidth}{!}{
\begin{tiny}
\begin{tabular}{||c||c||c|c|c||c|c|c||c|c|c||c||}
\hline
 & \multicolumn{1}{c||}{}  &\multicolumn{9}{c||}{\textbf{$(1-1/e)$-regret (Full Information)}} & \\
\cline{3-11}
\textbf{Paper}  & \textbf{Prob. Class} &\multicolumn{3}{c||}{\textbf{Static}}&\multicolumn{3}{c||}{\textbf{Dynamic}} & \multicolumn{3}{c||}{\textbf{Optimistic}} &  \textbf{Time} \\
\cline{3-11}
&&\textbf{Uni.}&\textbf{Part.}&\textbf{Gen.}&\textbf{Uni.}&\textbf{Part.}&\textbf{Gen.}&\textbf{Uni.}&\textbf{Part.}&\textbf{Gen.}&\\
\hline
\cite{niazadeh2021online} & GS& $r\sqrt{\log\parentheses{n} T}$&\xmark&\xmark&\xmark&\xmark&\xmark&\xmark&\xmark & \xmark & $T^4 O_b$\\
\hline
\cite{harvey2020improved}& GS& \multicolumn{3}{c||}{$ \sqrt{r\log\parentheses{\frac{n}{r}} T}$}&\xmark &\xmark &\xmark&\xmark&\xmark&\xmark&\parbox{2cm}{ \centering $n r^2 + O_{\mathrm m}\cdot n^4/\epsilon^3$ \\ $\textstyle\log({{n^3 T}/{\epsilon}})$ }\\
\hline
\cite{matsuoka2021tracking}  & GS&$r  \sqrt{ r\log(nT) T}$&\xmark&\xmark&$\sqrt{r (r\log(nT) + P_T) T}$&\xmark&\xmark&\xmark & \xmark & \xmark &   ${n r}$\\
\hline
\cite{onlineassignement}  & GS & \multicolumn{2}{c|}{\!\!\!$r^{\frac{3}{2}} \sqrt{ \log(n) T} $ }&\xmark & \xmark & \xmark & \xmark & \xmark & \xmark & \xmark  & ${n^2 c_{\mathrm p} }$\\ 
\hline
\hline
\cite{chen2018online}  & DR-S&\multicolumn{3}{c||}{$\sqrt{rnT}$}& \xmark& \xmark&\xmark&\xmark&\xmark&\xmark& $\sqrt{T} O_{\mathrm {oco}}\cdot O_{\mathrm m}+ n r^2$ \\
\hline
\cite{zhang2019online}  & DR-S&\multicolumn{3}{c||}{$T^{\frac{4}{5}}$ }&  \xmark& \xmark&\xmark&\xmark&\xmark&\xmark& $ T^{\frac{3}{5}} O_{\mathrm{oco}}\cdot O_{\mathrm m}+ n r^2 $ \\
\hline
\cite{zhang2022stochastic}  & DR-S&\multicolumn{3}{c||}{$\sqrt{rnT}$ }& \xmark& \xmark&\xmark&\xmark&\xmark&\xmark& $O_{\mathrm{oco}} \cdot O_{\mathrm m}+ n r^2 $  \\
\hline
\hline
\cite{kakade2007playing}  & LWD&\multicolumn{3}{c||}{$n(\alpha+2) \sqrt{T}$ }&  \xmark& \xmark&\xmark&\xmark&\xmark&\xmark& $ T O_{\mathrm{\alpha}}$ \\
\hline
\hline
\paintbg \parbox{.25cm}{\centering This work}& \parbox{1cm}{\centering\paintbg WTP}& \multicolumn{3}{c||}{\paintbg $r\sqrt{\log(\frac{n}{r}) T}$}  & \multicolumn{3}{c|}{\paintbg $\sqrt{r (r\log(\frac{n}{r}) + \log(n) P_T) T}$}&\multicolumn{3}{c|}{\paintbg \parbox{2.7cm}{\centering $\sqrt{r (r\log(\frac{n}{r}) + \log(n) P_T)}t$\\  $\cdot \sqrt{\displaystyle\sum^T_{t =1} \norm{\g_t - \g_t^\pi}_\infty^2}$} }&\paintbg ${n r^2}$\\
\hline
\end{tabular}
\end{tiny}
}
\caption{Order ($\BigO{\,\cdot\,}$) comparison of regrets and time complexities attained by different online submodular optimization algorithms for general submodular  (GS) and others for continuous DR-submodular  (DR-S) functions. We also include~\citet{kakade2007playing}, who operate on Linearly Weighted Decomposable (LWD) functions, and our work on Weighted Threshold Potential (WTP) functions. 
We also indicate whether algorithms operate over uniform, partition, or general matroid constraints, in the static, bandit, and optimistic settings. Regret and complexity are characterized in terms of the time horizon $T$, the ground set size $n$, and the matroid rank $r$. Additional, algorithm-specific parameters are specified in Appendix~\fullversion{C.VI of \citet{si2023online}}{\ref{appendix:additional_related_work}}, and the derivation of the regret constants in this work is provided in Appendix~\fullversion{C of \citet{si2023online}}{\ref{appendix:setups_omd}}.
The SoTA general submodular+general matroid algorithm \cite{harvey2020improved} has a tighter regret than us by a factor $\sqrt{r}$, but has a much higher computational complexity. We attain the same or \old{or} better regret than the DR-S \cite{chen2018online,zhang2019online,zhang2022stochastic}  and remaining algorithms, that also either operate on restricted constraint sets \cite{niazadeh2021online,matsuoka2021tracking, onlineassignement} or on the much more restrictive LWD class \cite{kakade2007playing}.
Most importantly, our work readily generalizes to the dynamic and optimistic settings. With the sole exception of \citet{matsuoka2021tracking}, who study dynamic regret restricted to uniform matroids,  our work is the first to provide guarantees for online submodular optimization in the dynamic and optimistic settings under general matroid constraints. Leveraging the concave relaxation and eschewing computing the multilinear relaxation also yields significant computational complexity dividends. }
\label{table:compare}
\end{table*}

\section{Related Work}\label{sec:related}

\noindent\textbf{Offline Submodular Maximization and Relaxations of Submodular Functions.} Continuous relaxations of submodular functions play a prominent role in submodular maximization. The so-called continuous greedy algorithm~\cite{calinescu2011maximizing} maximizes 
the multilinear relaxation of a submodular objective over the convex hull of a matroid, using a variant of the Frank-Wolfe algorithm \cite{frank1956algorithm}. The fractional solution is then rounded via pipage \cite{ageev2004pipage} or swap rounding \cite{chekuri2010dependent}, which we also use.  The multilinear relaxation is not convex but is continuous DR-submodular \cite{bach2019submodular,bian2017guaranteed}, and continuous greedy comes with a $1-\frac{1}{e}$ approximation guarantee. However, the multilinear relaxation is generally not tractable and is usually estimated via sampling. 
Ever since the seminal paper by Ageev and Sviridenko~\cite{ageev2004pipage}, several works have exploited the existence of concave relaxations of weighted coverage functions (e.g.,~\cite{karimi2017stochastic,ioannidis2016adaptive}), a strict subset of the threshold potential functions we consider here. For coverage functions, a version of our ``sandwich'' property (Asm.~\ref{assumption:sandwich}) follows directly by the Goemans \& Williamson inequality \cite{goemans1994new}, which we also use.  Both in the standard~\cite{ageev2004pipage} and stochastic offline~\cite{karimi2017stochastic,ioannidis2016adaptive} submodular maximization setting, in which the objective is randomized, exploiting concave relaxations of coverage functions yields significant computational dividends, as it eschews any sampling required for estimating the multilinear relaxation. We depart from both by considering a much broader class than coverage functions and 
studying the online/no-regret setting.

\noindent\textbf{OSM via Regret Minimization.} Several online algorithms have been proposed for maximizing general submodular functions \cite{niazadeh2021online,harvey2020improved,matsuoka2021tracking,onlineassignement} under different matroid constraints. There has also been recent work~\cite{chen2018online,zhang2019online,zhang2022stochastic} on the online maximization of continuous DR-submodular functions~\cite{bach2019submodular}. Proposed algorithms are applicable to our setting, because the multi-linear relaxation is DR-submodular, and guarantees can be extended to matroid constraint sets again through rounding~\cite{chekuri2010dependent}, akin to the approach we follow here. Also pertinent is the work by~\citet{kakade2007playing}: their proposed online algorithm operates over reward functions that can be decomposed as the weighted sum of finitely many (non-parametric) reference functions---termed Linearly Weighted Decomposable (LWD); the adversary selects only the weights. Applied to OSM, this is a more \old{restricive}\new{restrictive} function class than the ones we study. 

We compare these algorithms to our \new{work} in Table~\ref{table:compare}. In the full information setting, the OSM algorithm   by~\citet{harvey2020improved} has a slightly tighter $\alpha$-regret than us, but  \new{also a much} higher computational complexity. We attain the same or better regret than  DR-S \cite{chen2018online,zhang2019online,zhang2022stochastic}  and remaining algorithms that either operate on restricted constraint sets \cite{niazadeh2021online,matsuoka2021tracking, onlineassignement} or on the much more restrictive LWD class~\cite{kakade2007playing}.  Most importantly, our work  generalizes to the dynamic and optimistic settings. With the  exception of \citet{matsuoka2021tracking}, who \new{study} dynamic regret restricted to uniform matroids,  our work provides the first guarantees for OSM in the dynamic and optimistic settings under general matroid constraints.

\noindent\textbf{OSM in the Bandit Setting.} Our reduction to OCO in the bandit setting extends the analysis by~\citet{wan2023bandit}, who provide a reduction to just FTRL in the static setting, under general submodular functions and partition matroid constraints. We generalize this to any OCO algorithm and to the dynamic and optimistic settings. Interestingly, \citet{wan2023bandit} conjecture that no sublinear regret algorithm exists for general submodular functions under general matroid constraints in the bandit setting. We compare to bounds attained by \citet{wan2023bandit} and other bandit algorithms for OSM \cite{niazadeh2021online,onlineassignement,zhang2019online,kakade2007playing} in Table~\fullversion{4}{\ref{table:compare2}} in \fullversion{\cite{si2023online}}{Appendix~\ref{appendix:bandit:setting}}. Our main contribution is again the extension to the dynamic and optimistic settings. 

\section{Technical Preliminary}
\label{s:tech_preliminary}

\noindent\textbf{Submodularity and Matroids.}
   Given a ground set $V=[n]\triangleq \set{1,2,\dots, n}$, 
a set function $f: 2^V \to \reals$ is \emph{submodular} if  $f(S \cup \set{i, j}) - f(S \cup \set{j})  \leq  f(S \cup \set{i}) - f(S)$ for all $S \subseteq V$ and $i,j \in V \setminus S$ and \emph{monotone} if    $f(A) \leq f(B)$  for all $A \subseteq B \subseteq 2^V$. 
A \emph{matroid} is a pair $\M = (V, \I)$, where  $\I \subseteq  2 ^{V}$, 
for which the following holds:  (1) if $B \in \I$ and $A \subseteq B$, then $A \in \I$, 
(2) if $A, B \in \I$ and $\card{A} < \card{B}$, then there \old{exits}\new{exists} an $x \in B \setminus A$ s.t. $A \cup \set{x} \in \I$. 
The \emph{rank} $r \in \naturals$ of $\M$ is the cardinality of the largest set in $\I$. 
 With slight abuse of notation, we  represent set functions $f: 2^V \to \reals$  as functions over $\{0,1\}^n$: given a set function $f$ and an $\x\in\{0,1\}^n$, we denote by $f(\x)$ as the value $f(\supp{\x})$,  where  $\supp{\x} \triangleq \set{i \in V: x_i\neq 0} \subseteq V$ is the support of $\x$. Similarly, we treat matroids as subsets of  $\{0,1\}^n$.  



 \noindent\textbf{Online Learning.} \label{s:online_learning}
 In the general protocol of \emph{online learning} \cite{cesabianchi2006}, a decision-maker makes sequential decisions and incurs rewards as follows: 
 at timeslot $t \in [T]$, where $T\in \naturals$ is the time horizon, the {decision-maker} first commits to a decision $\x_t \in \X$ from some set $\X$. Then, a reward function  $f_t:\X\to\reals_{\geq 0}$ is selected by an adversary from a set of functions $\F$ and revealed to the decision-maker, who accrues a reward $f_t(\x_t)$.  The decision $\x_t$ is determined  according to a (potentially random) mapping  $\P_{\X,t}: \X^{t-1} \times  \F^{t-1}\to \Y$, i.e., 
 \begin{align}\label{eq:onlineprotocol}
     \x_{t} = \P_{\X, t} \parentheses{\parentheses{\x_s}^{t-1}_{s=1} , \parentheses{f_s}^{t-1}_{s=1}} .
 \end{align}
Let $\vec\P_\X = (\P_{\X,t})_{t \in \T}$ be the \emph{online policy} of the decision-maker. 
We define the \emph{regret} of  $\vec\P_\X$ at horizon $T$ as:
\begin{equation}
    \begin{split}
        \mathrm{regret}_T ({\vec\P}_\X)  
     \triangleq  \sup_{({f}_t)_{t=1}^{T}  \in {\F}^T}  \bigg\{ \max_{\x\in \X}  \sum^T_{t=1} {f}_t(\x) -\underset{{{\vec\P}_\X}}{\E}\left[\sum^T_{t=1}{f}_t(\x_t)\right]\bigg\} \label{eq:general_regret}.
    \end{split}
\end{equation}
We seek policies that attain a sublinear (i.e., $o(T)$) regret; intuitively, such policies perform on average as
well as the static optimum in hindsight.
Note that the regret is defined w.r.t.~the optimal \emph{fixed} decision $\x$, i.e., the time-invariant decision $\x\in \X$ that would be optimal in hindsight, after the sequence $(f_t)_{t=1}^{T}$ is revealed. When selecting  $\x_t$, the decision-maker has no information about the upcoming reward $f_t$. Finally, this is the \emph{full-information} setting: at each timeslot, the decision maker observes the entire reward function $f_t(\,\cdot\,)$, rather than just the reward $f_t(\x_t)\in \reals_{\geq 0}$.

Deviating from these assumptions is of both practical and theoretical interest. 
In the \emph{dynamic regret} setting \cite{zinkevich2003}, the regret is measured w.r.t. a time-variant optimum, appropriately constrained so that changes from one timeslot to the next do not vary significantly. 
In  \emph{learning with optimism}  \cite{rakhlin2013online}, additional information is assumed to be available w.r.t. $f_t$, in the form of so-called predictions. In the \emph{bandit} setting \cite{auer1995gambling}, the online policy ${\vec\P}_\X$ of the decision maker only has access to rewards $f_t(\x_t)\in \reals_{\geq 0}$, as opposed to the entire reward function.  

\paragraph{Online Convex Optimization.} The \emph{online convex optimization} (OCO) framework~\cite{zinkevich2003, hazan2016introduction} follows the above online learning protocol~\eqref{eq:onlineprotocol},  where (a) the decision space $\X$ is a  convex set in $\reals^n$, and (b) the set of reward functions $\F$ is a subset of concave functions over $\X$.  
Formally, OCO operates under the following  assumption:
\begin{assumption}
    Set  $\X \subset \reals^n$ is  \new{convex and compact}. 
     The reward functions in $\F$ are all $L$-Lipschitz concave functions w.r.t. a norm $\norm{\,\cdot\,}$ over $\X$, for some common $L\in \reals_{>0}$. 
     \label{asm:oco}
\end{assumption}
There is a  rich literature on OCO policies \cite{hazan2016introduction}; examples  include Online Gradient Ascent (OGA)~\cite{hazan2016introduction}, Online Mirror Ascent (OMA)~\cite{bubeck2011introduction}, and Follow-The-Regularized-Leader (FTRL)~\cite{mcmahan2017survey}. All three enjoy sublinear regret: 
\begin{theorem} Under Asm.~\ref{asm:oco} OGA, OMA,  and FTRL attain $O(\sqrt{T})$ regret. \label{thm:oco}
\end{theorem}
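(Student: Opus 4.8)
The plan is to treat all three algorithms through the single lens of \emph{online linear optimization}, exploiting the fact that concavity lets us replace each reward by its linearization at the played point. Fix the realized sequence $(f_t)_{t=1}^T$ and any comparator $\x \in \X$, and let $\g_t$ be a supergradient of $f_t$ at the played point $\x_t$. Concavity of $f_t$ gives $f_t(\x) - f_t(\x_t) \le \inner{\g_t}{\x - \x_t}$, so that
\begin{equation}
\max_{\x \in \X}\sum_{t=1}^T f_t(\x) - \sum_{t=1}^T f_t(\x_t) \le \max_{\x \in \X}\sum_{t=1}^T \inner{\g_t}{\x - \x_t}.
\end{equation}
Asm.~\ref{asm:oco} bounds the supergradients in the dual norm, $\norm{\g_t}_* \le L$, and compactness makes $\diam{\X} < \infty$. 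Since the bound I derive below depends only on $L$, $\diam{\X}$, and $T$, it holds uniformly over adversary sequences, so the outer supremum in the regret poses no difficulty; it therefore suffices to bound the right-hand side, the regret against the \emph{linearized} losses, for each policy.

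For OGA, which plays $\x_{t+1} = \Pi_\X(\x_t + \eta \g_t)$ with Euclidean projection $\Pi_\X$, I would use the nonexpansiveness of the projection together with the potential $\norm{\x_t - \x}^2$. Expanding $\norm{\x_{t+1} - \x}^2 \le \norm{\x_t + \eta\g_t - \x}^2$ and rearranging yields the per-step inequality $\inner{\g_t}{\x - \x_t} \le \frac{1}{2\eta}\parentheses{\norm{\x_t - \x}^2 - \norm{\x_{t+1} - \x}^2} + \frac{\eta}{2}\norm{\g_t}^2$. Summing telescopes the first term to $\frac{1}{2\eta}\diam{\X}^2$, while the second is at most $\frac{\eta}{2} L^2 T$; choosing $\eta = \Theta(1/\sqrt{T})$ balances the two and gives a $\BigO{\sqrt{T}}$ bound.

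For OMA and FTRL I would run the same argument with the Euclidean potential replaced by the Bregman divergence of a regularizer $R$ that is $1$-strongly convex with respect to $\norm{\,\cdot\,}$. The mirror/Bregman-projection step again produces a per-step bound of the form $(\text{divergence telescope}) + \frac{\eta}{2}\norm{\g_t}_*^2$, where the dual norm appears precisely because $R$ is strongly convex in the primal norm; summing and tuning $\eta$ gives $\BigO{\sqrt{T}}$, with $\diam{\X}^2$ replaced by the range of $R$ over $\X$, which is finite by compactness. Since OGA is the special case $R = \frac12\norm{\,\cdot\,}^2$, and FTRL coincides with the ``lazy'' form of mirror ascent under the same regularizer, a single lemma covers all three.

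The genuinely delicate point is not the telescoping algebra but matching the geometry: the Lipschitz bound lives in the dual norm $\norm{\,\cdot\,}_*$, so the regularizer must be strongly convex in $\norm{\,\cdot\,}$ for the $\frac{\eta}{2}\norm{\g_t}_*^2$ term to be controlled, and one must verify that such an $R$ with bounded range over the compact $\X$ exists. For the Euclidean (OGA) case this is automatic; for OMA/FTRL it is where the choice of mirror map enters, and is the step I would treat most carefully.
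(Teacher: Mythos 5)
Your proposal is correct and takes essentially the same route as the paper: concavity-based linearization via supergradients, a Euclidean/Bregman potential telescope in which the $\frac{\eta}{2\rho}\norm{\g_t}_\star^2$ term is controlled by the Lipschitz (hence bounded-supergradient) assumption, and $\eta = \Theta(1/\sqrt{T})$ tuning — the paper derives exactly this in Appendix~\ref{app:dervofthm:oco} as the $P_T = 0$, $\pg_t = \vec 0$ special case of its master mirror-ascent bound (Theorem~\ref{thm:general_regret_dynamic+optimistic}), with OGA recovered via the Euclidean mirror map $\Phi(\x) = \frac{1}{2}\norm{\x}_2^2$. The only (cosmetic) divergence is that the paper handles FTRL through a separately stated bound under Asm.~\ref{asm:regularizer} rather than your equally valid identification of linearized FTRL with lazy mirror ascent, and you correctly flag the one genuinely load-bearing hypothesis — a regularizer/mirror map strongly convex in the primal norm with bounded Bregman range over $\Y$ — which is precisely what the paper packages into Asms.~\ref{asm:mirror_map}--\ref{asm:bounded_bregman_divergence}.
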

\new{Details on all three algorithms and the regret they attain are} in \fullversion{Appendix~A of \citet{si2023online}}{Appendix~\ref{appendix:oco}}. 
Most importantly, the OCO framework generalizes to 
the dynamic,  learning-with-optimism, as well as  bandit settings (see  Sec.~\ref{sec:extensions}). 

\noindent\textbf{Weighted Threshold Potentials.} A \emph{threshold potential}~\cite{stobbe2010efficient} $\Psi_{b, \vec{w}, S}: \set{0,1}^n \to \reals_{\geq 0}$, also known as a \emph{budget-additive function} \cite{andelman2004auctions, dobzinski2016breaking,buchfuhrer2010inapproximability}, is defined as: 
\begin{align}
      \Psi_{b,\vec w, S}(\x)&\triangleq  \textstyle \min\set{b, \sum_{j \in S} x_j w_j},\,\text{for $\x \in \set{0,1}^n$}, \label{eq:budget-additive}
\end{align}
where   $b \in \reals_{\geq 0} \cup \set{\infty}$ is a threshold,  $S \subseteq V$ is a subset of $V=[n]$, and  $\vec w = (w_j)_{j \in S}\in [0,b]^{|S|}$ is a weight vector bounded by $b$.\footnote{Assumption $ w_{j} \leq b$, $j \in S$, is w.l.o.g., as  replacing $w_{j}$ with $\min\set{w_{j}, b}$ preserves  values of $f$ over  $\set{0,1}^n$.}  The linear combination of threshold potential functions defines the rich class of \emph{weighted threshold potentials} (WTP)~\cite{stobbe2010efficient}, 
defined as: 
\begin{align}
   f(\x)  &\textstyle \triangleq \sum_{\ell \in C} c_{\ell} \Psi_{b_\ell,\vec w_\ell, S_\ell} (\x), \quad \text{for $\x \in \set{0,1}^n$},
   \label{eq:wtp}
\end{align}
where $C$ is an arbitrary index set and $c_{\ell} \in \reals_{\geq 0}$, for $\ell\in C$.  WTP functions are submodular and monotone (see Appendix~\fullversion{B of \citet{si2023online}}{\ref{appendix:WTP}}). 
We define the \emph{degree} of a WTP function $\Delta_f = \max_{\ell \in C} |S_\ell|$ as the maximum number of variables that a threshold potential $\Psi$ in $f$ depends on.

We give several examples of WTP functions in \fullversion{Appendix~B of \citet{si2023online}}{Appendix~\ref{appendix:WTP}}. In short, classic problems such as 
 influence maximization~\cite{kempe2003maximizing} and facility location~\cite{krause2014submodular,frieze1974cost},  resource allocation problems like cache networks~\cite{ioannidis2016adaptive,li2021online} and similarity caching~\cite{sisalem2022ascent}, as well as demand forecasting~\cite{ito2016large} and team formation~\cite{li2018learning}  can all be expressed using WTP functions. 
 Overall, the WTP class is very broad: there exists a hierarchy among submodular functions, including weighted coverage functions~\cite{karimi2017stochastic}, weighted cardinality truncations~\cite{dolhansky2016deep}, and sums of concave functions composed with non-negative 
modular functions~\cite{stobbe2010efficient}; all of them are strictly dominated by the WTP class (see~\citet{stobbe2010efficient}, as well as \fullversion{Appendix~B of \citet{si2023online}}{Appendix~\ref{appendix:WTP}}).

\section{Problem Formulation}
\label{s:problem}
We consider a combinatorial version of the online learning protocol defined in Eq.~\eqref{eq:onlineprotocol}.  In particular, we focus on the case where (a) the decision set is $\X \subseteq \set{0,1}^n$, i.e., the vectorized representation of subsets of $V=[n]$, and (b) the set $\F$ of reward functions comprises set functions over $\X$. Though some of our results (e.g., Thm.~\ref{theorem:sandwich}) pertain to this general combinatorial setting, we are particularly interested in the case where (a)  $\X$ is a matroid, and (b) $\F$ is the WTP class, i.e., the set of functions whose form is given by Eq.~\eqref{eq:wtp}.

Both in the general combinatorial setting and for WTP functions,  evaluating  the best fixed decision  may be computationally intractable even in hindsight, i.e., when all reward functions were revealed.  
 As is customary (see, e.g.,~\cite{krause2014submodular}), instead of the  regret in Eq.~\eqref{eq:general_regret}, we consider the so-called $\alpha$-regret:
\begin{equation}
    \begin{split}
    \alpha\text{-}\mathrm{regret}_T ({\vec\P}_\X) \triangleq 
    \sup_{\parentheses{f_t}_{t=1}^{T}  \in {\F}^T} \bigg\{ \max_{\x\in \X}  \alpha\sum^T_{t=1} {f}_t(\x) 
     -\underset{{{\vec\P}_\X}}{\E} \interval{\sum^T_{t=1} {f}_t(\x_t)}\bigg\} 
    \label{eq:alpha_regret}.
    \end{split}
\end{equation}
 Intuitively, we compare the performance of the policy $\vec \P_{\X}$ w.r.t. the best {polytime}$(n)$ $\alpha$-approximation of the static offline optimum in hindsight. For example, the approximation ratio would be $\alpha=1-1/e$ in the case of submodular set functions maximized over matroids. 

\section{Online Submodular Optimization via Online Convex Optimization}
\label{sec:osmviaoco}

\subsection{The Case of General Set Functions}
\label{s:osm_genral}
\begin{algorithm}[t]

\caption{Rounding-Augmented OCO (RAOCO) policy}
\begin{algorithmic}[1]
\begin{footnotesize}
\Require OCO policy ${\vec\P}_{\Y}$, randomized rounding $\Xi:\Y \to \X$
\For {$t = 1,2,\dots, T$}
    \State $\y_t \gets \P_{\Y, t}\parentheses{ (\y_s)^{t-1}_{s=1}, (\cfunc_s)^{t-1}_{s=1}}$
\State $\x_t \gets \Xi (\y_t)$
\State Receive reward $f_t(\x_t)$
\State Reward function $f_t$  is revealed 
\State Construct $\cfunc_t$ from $f_t$ satisfying Asm.~\ref{assumption:sandwich}
\EndFor
\end{footnotesize}
\end{algorithmic}
\label{alg:saoco}
\end{algorithm}
First, we show how OCO can be leveraged to tackle online learning in the general combinatorial setting, i.e., when $\X\subseteq \{0,1\}^n$ and $\F$ comprises general functions defined over $\X$. 

\paragraph{Rounding Augmented OCO Policy.} 
We begin by stating a ``sandwich'' property that functions in $\F$ should satisfy, so that the reduction to OCO holds. To do so, we first need to introduce the notion of randomized rounding. Let  $\Y \triangleq \conv\X$ be the convex hull of $\X$. A \emph{randomized rounding}  is a random map ${\Xi:\Y \to \X}$, i.e., a map  from a fractional  $\y \in \Y$ and, possibly, a source of randomness to an integral variable $\x \in \X$.  We  assume that the set $\F$ satisfies the following:
\begin{assumption} (Sandwich Property)\label{assumption:sandwich}
There exists an $\alpha \in (0,1]$, an $L\in \reals_{>0}$, and a randomized rounding  $\Xi:\Y \to \X$ such that, for every $f : \X \to \reals_{\geq 0} \in \F$ there exists a $L$-Lipschitz concave function $\cfunc : \Y \to \reals$ s.t.   
\begin{align}
    \cfunc(\x) &\geq f(\x), \quad\text{for all $ \x \in \X$, and }\label{eq:alpha_approx_upper}\\
    \E_{\Xi} \interval{f(\Xi(\y))} &\geq \alpha \cdot \cfunc(\y) ,\quad \text{ for all}~\y \in \Y. 
    \label{eq:alpha_approx_lower}
\end{align}
\end{assumption}
We refer to $\cfunc$ as the \emph{concave relaxation} of $f$. Intuitively, Asm.~\ref{assumption:sandwich} postulates the existence of such a concave relaxation $\cfunc$ that is not ``far'' from $f$: Eqs.~\eqref{eq:alpha_approx_upper} and~\eqref{eq:alpha_approx_lower} imply that $\cfunc$ bounds $f$ both from above and below, up to the approximation factor $\alpha$. Moreover, the upper bound (Eq.~\eqref{eq:alpha_approx_upper}) needs to only hold for integral values, while the lower bound (Eq.~\eqref{eq:alpha_approx_lower}) needs to only hold in expectation, under an appropriately-defined randomized rounding $\Xi$. 

Armed with this assumption, we can convert any  OCO policy $\vec \P_\Y$ operating over  $\Y=\conv\X$ to a   \emph{Randomized-rounding Augmented OCO} (RAOCO) policy, denoted by $\vec\P_\X$, operating over $\X$. This transformation (see Alg.~\ref{alg:saoco}) uses both the randomized-rounding  $\Xi$, as well as the concave relaxations $(\cfunc_s)_{s=1}^{t-1}$ of the functions $(f_s)_{s=1}^{t-1}$ observed so far. 
At $t \in \T$, the RAOCO policy amounts to:
\begin{subequations}
\label{eq:raoco}
\begin{align}
\y_t &=\P_{\Y, t}\parentheses{ (\y_s)^{t-1}_{s=1}, (\cfunc_s)^{t-1}_{s=1}}, \\
\x_t &=\Xi(\y_t)\in \X.
\end{align}
\end{subequations}
In short,  the OCO policy $\P_{\Y}$ is first  used to generate a new fractional state  $\y_t \in \Y$ by applying $\P_{\Y, t}$ to the  history of concave relaxations.
Then, this fractional decision $\y_t$ is randomly mapped to an integral decision $\x_t\in\X$ according to the rounding scheme $\Xi$. Then, the reward $f(\x_t)$ is received and $f_t$ is revealed, at which point a concave function $\cfunc_t$ is constructed from $f_t$ and added to the history. 
Our first main result is the following: 
 
%

\begin{theorem}
Under Asm.~\ref{assumption:sandwich}, given an OCO policy $\vec\P_\Y$,  the RAOCO policy $\vec\P_\X$ described by Alg.~\ref{alg:saoco} satisfies
   $ \aregret_T \parentheses{\vec\P_\X} \leq \alpha \cdot \regret_T \parentheses{{\vec{\P}}_{\Y}}.$
\label{theorem:sandwich}
\end{theorem}
The proof is in \fullversion{Appendix~D of \citet{si2023online}}{Appendix~\ref{appendix:sandwich}}. As a result, \emph{any regret guarantee obtained by an OCO algorithm over $\Y$, immediately transfers to an $\alpha$-regret for RAOCO, where $\alpha$ is determined by Asm.~\ref{assumption:sandwich}.}
In particular, the decision set $\Y$ is closed, bounded, and convex by construction. Combined with the fact that concave relaxations $\cfunc$ are $L$-Lipschitz (by Asm.~\ref{assumption:sandwich}), Thms.~\ref{thm:oco} and~\ref{theorem:sandwich} yield the following  corollary:
\begin{corollary}\label{cor:raoco}
Under Asm.~\ref{assumption:sandwich},  RAOCO policy $\vec\P_\X$  in  Alg.~\ref{alg:saoco} equipped with OGA, OMA, or FTRL as OCO policy $\vec\P_\Y$  has sublinear $\alpha$-regret. That is, 
     $   \aregret_T\parentheses{\vec\P_\X} = \BigO{\sqrt{T}}.$
\end{corollary}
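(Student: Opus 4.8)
The plan is to prove Cor.~\ref{cor:raoco} by \emph{composing} the two preceding results, Thm.~\ref{thm:oco} and Thm.~\ref{theorem:sandwich}, with no new machinery. The essential observation is that RAOCO runs its OCO subroutine $\vec\P_\Y$ on the sequence of concave relaxations $(\cfunc_t)_{t=1}^T$ over the set $\Y = \conv\X$, so the bulk of the argument is to verify that this \emph{induced} OCO instance satisfies Asm.~\ref{asm:oco}; once that is done, the two theorems chain together mechanically.

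First I would verify the first clause of Asm.~\ref{asm:oco}, namely that $\Y = \conv\X$ is convex and compact. Since $\X \subseteq \set{0,1}^n$ is a finite set of points, its convex hull is a polytope, hence closed, bounded, and convex. Next I would verify the second clause: the reward functions passed to $\vec\P_\Y$ in Alg.~\ref{alg:saoco} are exactly the relaxations $\cfunc_t$, and Asm.~\ref{assumption:sandwich} guarantees that each such $\cfunc_t$ is concave and $L$-Lipschitz over $\Y$ with a \emph{common} constant $L$ independent of $t$ and of the adversary's choice of $f_t$. Hence the induced instance is a bona fide OCO instance in the sense of Asm.~\ref{asm:oco}.

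With Asm.~\ref{asm:oco} verified, I would invoke Thm.~\ref{thm:oco} to conclude that each of OGA, OMA, and FTRL, used as $\vec\P_\Y$, attains $\regret_T(\vec\P_\Y) = \BigO{\sqrt{T}}$ on the induced instance. Because $L$ and the finite diameter $\diam{\Y}$ are fixed constants that do not grow with $T$, this bound is uniform over all admissible sequences $(\cfunc_t)_{t=1}^T$. Finally I would apply Thm.~\ref{theorem:sandwich}, which under Asm.~\ref{assumption:sandwich} gives $\aregret_T(\vec\P_\X) \le \alpha \cdot \regret_T(\vec\P_\Y)$; substituting the $\BigO{\sqrt{T}}$ bound and using $\alpha \in (0,1]$ yields $\aregret_T(\vec\P_\X) = \BigO{\sqrt{T}}$, as claimed.

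There is essentially no technical obstacle here, since both Thm.~\ref{thm:oco} and Thm.~\ref{theorem:sandwich} are already available; the only thing to get right is the \emph{bookkeeping} that the hypotheses line up --- in particular that the single constant $L$ supplied by Asm.~\ref{assumption:sandwich} is precisely the Lipschitz constant Asm.~\ref{asm:oco} demands, and that compactness of $\Y$ follows from finiteness of $\X$. The one point I would state carefully is uniformity: the regret in Eq.~\eqref{eq:general_regret} is a supremum over all function sequences, so I would emphasize that the hidden $\BigO{\sqrt{T}}$ constant depends only on $L$, $n$, and $\diam{\Y}$ --- none of which vary with the adversary --- which is exactly what lets the bound survive the supremum and transfer through Thm.~\ref{theorem:sandwich}.
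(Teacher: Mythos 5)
Your proposal is correct and matches the paper's own argument: the text preceding Cor.~\ref{cor:raoco} justifies it exactly as you do, by noting that $\Y=\conv{\X}$ is closed, bounded, and convex by construction and that Asm.~\ref{assumption:sandwich} supplies the common $L$-Lipschitz concave relaxations, so that Thm.~\ref{thm:oco} and Thm.~\ref{theorem:sandwich} compose to give $\aregret_T\parentheses{\vec\P_\X} \leq \alpha\cdot\regret_T\parentheses{\vec\P_\Y} = \BigO{\sqrt{T}}$. Your extra remark on the uniformity of the hidden constant across the adversary's sequences is careful bookkeeping the paper leaves implicit, but it is not a different route.
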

\new{To use  this result}, Asm.~\ref{assumption:sandwich} \new{should} hold, and both the randomized rounding and the concave relaxations used in RAOCO \new{should be poly-time}: \new{all are} true for WTP functions optimized over matroid constraints, which we turn to next.

\subsection{The Case of Weighted Threshold Potentials} \label{s:osm_wtp}
We now consider the case where the decision set is a matroid, and reward functions belong to the class of WTP functions, defined by Eq.~\eqref{eq:wtp}. We will show that, under appropriate definitions of a randomized rounding and concave relaxations, the class $\F$ satisfies Asm.~\ref{assumption:sandwich} and, thus, online learning via RAOCO comes with the regret guarantees of Corollary~\ref{cor:raoco}.
For $\Y=\conv{\X}$, consider the map $f\mapsto \cfunc$ of WTP functions $f:\X\to\reals$ to concave relaxations $\cfunc:\Y\to\reals$ of the form:
\begin{equation}
    \begin{split}
        \cfunc (\y) \triangleq f(\y) =\textstyle\sum_{\ell \in C} c_{\ell} \min\set{b_\ell, \sum_{j \in S_\ell} y_j w_{\ell ,j}}, 
     \label{eq:budget-additive-concave}
    \end{split}
\end{equation}
for $\y\in \Y$.  In other words, \emph{the relaxation of $f$ is itself}: it has the same functional form, allowing integral variables to become fractional.\footnote{In Appendix~\fullversion{G.IV of \citet{si2023online}}{\ref{appendix:alternate_relaxation}}, \new{we provide an example where the functional form of concave relaxations differs}.} This is clearly concave, as the minimum of affine functions is concave, and the positively weighted sum of concave functions is concave. 
Finally, all such functions are Lipschitz, with a parameter that depends on $c_\ell$, $b_\ell$, $\vec w_\ell$, $\ell\in C$.
 Let $\cfuncSet$ be the image of $\F$ under the map \eqref{eq:budget-additive-concave}.
We make the following assumption, which is readily satisfied if, e.g., all constituent parameters ($c_\ell$, $b_\ell$, $\vec w_\ell$, $\ell\in C$) are uniformly bounded, or the set $\F$ is finite, etc.: 
\begin{assumption}\label{assumption:lip}
    There exists an $L>0$ such that all functions in $\cfuncSet$ are  $L$-Lipschitz.
\end{assumption}


Next, we turn our attention to the randomized rounding $\Xi$. We can in fact characterize the property that $\Xi$ must satisfy for Asm.~\ref{assumption:sandwich} to hold for relaxations given by Eq.~ \eqref{eq:budget-additive-concave}:
\begin{definition}
A randomized rounding  $\Xi:\Y \to \X$ is \emph{negatively correlated} if,  for $\x = \Xi(\y) \in \X$ (a) the coordinates of $\x$ are negatively correlated~\footnote{A set of random variables $x_i\in\set{0,1}, i \in [n]$, are  negatively correlated if  $\E\interval{\prod_{i \in S} x_i }\leq \prod_{i \in S} \E\interval{x_i}$ for all $S \subseteq [n]$.} 
and (b)  $\E_\Xi\interval{\x} = \y$.\label{def:negatively_correlated_sampler}
\end{definition}
Our next result immediately implies that \emph{any negatively correlated rounding can be used in RAOCO}: 
\begin{lemma}\label{proposition:sandwich2}
Let $\Xi:{\Y} \to \X$ be a negatively correlated randomized rounding, and consider the concave relaxations $\cfunc$ constructed from $f\in\F$ via Eq.~\eqref{eq:budget-additive-concave}. Then, if Asm.~\ref{assumption:lip} holds for some $L>0$, the set $\F$ satisfies Asm.~\ref{assumption:sandwich} with 
 $\alpha = \parentheses{1 - \frac{1}{\Delta}}^\Delta$ where $\Delta = \sup_{f\in \F}\Delta_f$.
\end{lemma}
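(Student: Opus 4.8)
The plan is to verify the two inequalities of Assumption~\ref{assumption:sandwich} directly for the self-relaxation $\cfunc$ defined in Eq.~\eqref{eq:budget-additive-concave} under a negatively correlated rounding $\Xi$. The upper bound \eqref{eq:alpha_approx_upper} is essentially immediate: since $\cfunc$ and $f$ share the same functional form and agree on $\set{0,1}^n$, we in fact have $\cfunc(\x) = f(\x)$ for all integral $\x \in \X$, so \eqref{eq:alpha_approx_upper} holds with equality (and the $L$-Lipschitz concavity of $\cfunc$ is already established by Asm.~\ref{assumption:lip} together with the ``minimum of affine functions'' observation). The Lipschitz requirement of Asm.~\ref{assumption:sandwich} is exactly Asm.~\ref{assumption:lip}, so the entire burden falls on the lower bound \eqref{eq:alpha_approx_lower}.

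For the lower bound, I would reduce to a single threshold potential. Because $f = \sum_{\ell \in C} c_\ell \Psi_{b_\ell, \vec w_\ell, S_\ell}$ with all $c_\ell \geq 0$, and because $\cfunc$ decomposes the same way, it suffices by linearity of expectation to prove
\begin{equation}
\E_\Xi\interval{\Psi_{b, \vec w, S}(\Xi(\y))} \geq \parentheses{1 - \tfrac{1}{\Delta}}^\Delta \, \min\set{b, \textstyle\sum_{j \in S} y_j w_j}
\end{equation}
for each constituent threshold potential, where $|S| \leq \Delta$. Fixing one such term, write $\x = \Xi(\y)$ and $Z = \sum_{j \in S} x_j w_j$, so the left side is $\E[\min\set{b, Z}]$ and the right side involves $\min\set{b, \E[Z]}$ since $\E_\Xi[\x] = \y$ gives $\E[Z] = \sum_{j \in S} y_j w_j$. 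The key technical step is a concentration-type bound showing that, for a sum of $\set{0,1}$-weighted negatively correlated Bernoulli variables, the expected truncated sum $\E[\min\set{b, Z}]$ is at least a $\parentheses{1 - 1/\Delta}^\Delta$ fraction of $\min\set{b, \E[Z]}$; this is the combinatorial heart of the argument and I expect it to be the main obstacle.

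To establish that bound, the natural route is a correlation-gap / Goemans--Williamson style inequality. For the \emph{unweighted, unit-threshold} coverage case ($b=1$, $\vec w = \vec 1$), negative correlation gives $\E\interval{1 - \prod_{j\in S}(1-x_j)} \geq 1 - \prod_{j \in S}(1 - y_j) \geq (1 - (1-\tfrac{1}{|S|})^{|S|})\min\set{1, \sum_{j\in S} y_j}$ by the standard inequality of Goemans \& Williamson~\cite{goemans1994new}; and since $(1 - (1-\tfrac{1}{m})^m)$ is increasing in $m$ while $1 - (1-1/m)^m \geq 1 - 1/e$, one can track the dependence on $m = |S| \leq \Delta$. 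The remaining work is to lift this from the coverage case to a general threshold $b$ and general weights $\vec w_\ell$, which I would handle by writing $\min\set{b, Z}$ as an integral (or telescoping sum) of indicator-type terms $\min\set{1, \cdot}$ over level sets of the weights, applying the coverage bound pointwise, and then reassembling via the concavity of $\min\set{b, \cdot}$ and Jensen-type reasoning. The factor $\parentheses{1 - \tfrac{1}{\Delta}}^\Delta$ then emerges as the worst case over all terms, since the correlation gap degrades monotonically with the support size and $\Delta = \sup_{f \in \F} \Delta_f$ bounds every $|S_\ell|$. I would isolate the single-threshold concentration inequality as a standalone lemma, since verifying the weighted, general-$b$ extension of the Goemans--Williamson bound is where the delicate estimation lives.
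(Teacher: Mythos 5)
Your plan coincides with the paper's proof (Appendix~\ref{appendix:sandwich2}) on everything except the step you yourself flag as the crux, and there it diverges in a way that would not go through. The shared skeleton is: Eq.~\eqref{eq:alpha_approx_upper} holds with equality because $\cfunc$ agrees with $f$ on $\set{0,1}^n$, Lipschitzness is Asm.~\ref{assumption:lip}, and by linearity of expectation (plus $\E_\Xi\interval{\x}=\y$) the lower bound reduces to a per-threshold-potential correlation-gap inequality; for the unit-weight, unit-threshold coverage case your chain --- negative correlation applied to the complements $1-x_j$, then the Goemans--Williamson inequality --- is exactly the paper's. (One omission on the easy side: terms with $b_\ell=\infty$ are modular, so the expectation passes exactly with no loss of $\alpha$; your level-set integral over $[0,b]$ is not even defined there, so this case must be split off, as the paper does.)

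The genuine gap is the lifting to general $b$ and $\vec w$, and your sketched route fails on both of its legs. First, the level-set decomposition $\min\set{b,Z}=\int_0^b \mathds{1}\parentheses{Z\geq s}\,ds$ produces, for any level $s$ above the smallest weight, a knapsack-cover indicator $\mathds{1}\parentheses{Z\geq s}$ that is \emph{not} of the form $\min\set{1,\sum_j x_j}$, so neither negative correlation nor Goemans--Williamson can be applied termwise beyond the first level. Second, Jensen for the concave $\min\set{b,\cdot}$ gives $\E\interval{\min\set{b,Z}}\leq\min\set{b,\E\interval{Z}}$ --- the \emph{wrong} direction for Eq.~\eqref{eq:alpha_approx_lower}. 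The paper avoids level sets entirely: it normalizes weights by $b$ and sandwiches the threshold potential pointwise between product forms, namely $\min\set{b,\sum_j y_jw_j}\geq b-b\prod_j\parentheses{1-y_jw_j/b}$ by induction (Lemma~\ref{lemma:util_bounds_lower}) and the reverse inequality with the GW factor via the substitution $y_j'=y_jw_j/b$ (Lemma~\ref{lemma:util_bounds_upper}); the missing idea in your plan is then a \emph{weighted} negative-correlation lemma, $\E\interval{\prod_j\parentheses{1-(w_j/b)x_j}}\leq\prod_j\parentheses{1-(w_j/b)y_j}$, which is nontrivial because $w_jx_j/b$ is no longer binary, and which the paper proves through the exact multilinear expansion $\prod_{i}\parentheses{1-w_ix_i}=\sum_{S'\subseteq S}\prod_{i\in S'}(1-x_i)\prod_{i\in S'}w_i\prod_{i\in S\setminus S'}(1-w_i)$ (Lemma~\ref{lemma:tech_equiv_expression}), writing the weighted product as a nonnegative combination of unweighted complement-products to which negative correlation applies directly (Lemma~\ref{lemma:weighted_negative_correlation}). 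Two smaller slips: the coverage factor $1-(1-1/m)^m$ is \emph{decreasing} in $m$, not increasing (that is precisely why $m=\Delta$, after zero-padding each term to $\Delta$ variables, is the worst case); and your chain, where valid, certifies the constant $1-(1-1/\Delta)^\Delta$, which is stronger than the $(1-1/\Delta)^\Delta$ in the lemma statement and is in fact what the paper's surrounding discussion (e.g., $\alpha=3/4$ for quadratic functions with $\Delta=2$) relies on.
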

The proof is in \fullversion{Appendix~E of \citet{si2023online}}{Appendix~\ref{appendix:sandwich2}}. As $\Delta\to\infty$, the approximation ratio $\alpha$ approaches $1-1/e$ from above, recovering the usual approximation guarantee. However, for finite $\Delta$, we in fact obtain an improved approximation ratio; an example (quadratic submodular functions) is described in \fullversion{Appendix~B of \citet{si2023online}}{Appendix~\ref{appendix:WTP}}. 
%
%
%
%
%
%
Finally, and most importantly, \emph{a negatively-correlated randomized rounding can always be constructed if $\X$ is  \new{a} matroid}. 
\citet{chekuri2010dependent} provide two polynomial-time randomized rounding algorithms that satisfy this property: 
\begin{lemma} (\citet[Theorem~1.1.]{chekuri2010dependent})\label{theorem:rounding}
    Given a matroid $\X\subset\{0,1\}^n$, let $\y \in \conv{\X}$  and $\Xi$ be either  \emph{swap rounding} or \emph{randomized pipage rounding}. Then, $\Xi$ is negatively correlated.
\end{lemma}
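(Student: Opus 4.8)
The statement is a restatement of \citet[Theorem~1.1]{chekuri2010dependent}, so the plan is to recall the structure of their argument for swap rounding (the randomized pipage case being analogous). First I would reduce to the base polytope: since $\X$ is a matroid and $\y \in \conv\X$, the point $\y$ lies in the matroid polytope, hence can be written (in polynomial time) as a convex combination of independent sets $\y = \sum_{k=1}^{K} \beta_k \charvec_{B_k}$ with $\sum_k \beta_k = 1$, $\beta_k > 0$, and each $B_k$ a base (after possibly padding the ground set). Swap rounding then repeatedly \emph{merges} two sets from the current collection into one: given $B_1, B_2$ with weights $\beta_1, \beta_2$, while $B_1 \neq B_2$ it picks $i \in B_1 \setminus B_2$, invokes the symmetric exchange property of matroids to find $j \in B_2 \setminus B_1$ with $B_1 - i + j$ and $B_2 - j + i$ both independent, and with probability $\beta_1/(\beta_1+\beta_2)$ moves $B_2$ to contain $i$ (otherwise moves $B_1$ to contain $j$), shrinking $\card{B_1 \triangle B_2}$ by two. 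Each merge reduces $K$ by one, so after $K-1$ merges a single set remains; this is the output $\x = \Xi(\y)$.

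Second, I would verify the two requirements of Definition~\ref{def:negatively_correlated_sampler} at the level of a single elementary swap. For (b), marginal preservation: in one swap the coin is tuned so that coordinate $i$ ends up present with probability $\beta_1/(\beta_1+\beta_2)$, exactly matching the fractional mass it carried before, while all untouched coordinates are deterministic in that step; applying the tower rule over the whole sequence yields $\E_\Xi[\x] = \y$. For (a), within a single swap only the two coordinates $i, j$ are randomized and the coin makes exactly one of them present, so $x_i$ and $x_j$ are \emph{perfectly negatively correlated} in that step while every other coordinate is held fixed.

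The main obstacle is upgrading these per-swap facts to \emph{global} negative correlation $\E[\prod_{i\in S} x_i] \le \prod_{i\in S} \E[x_i]$ for every $S \subseteq [n]$, since a coordinate may be swapped many times and become entangled with others across steps. I would handle this with a supermartingale argument on the product potential: tracking the intermediate (fractional) membership values $x_i^{(t)}$ after $t$ swaps, I would show that for each fixed $S$ the quantity $\E[\prod_{i \in S} x_i^{(t)}]$ is non-increasing in $t$. The key inequality is local: a single swap replaces a pair of factors by values that are a mean-preserving, negatively-correlated spread, and since $\prod_{i\in S}(\cdot)$ is multilinear and the spread touches at most two of its factors, the conditional expectation of the product cannot increase. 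Chaining these inequalities from the initial state, where $\prod_{i\in S} y_i$ is exactly the product of marginals, to the terminal single base yields property (a). Finally, randomized pipage rounding admits the same analysis: each pipage step moves mass along an edge of the matroid polytope, rounds at least one coordinate, is tuned to preserve marginals, and induces the same two-coordinate negatively-correlated update, so the identical martingale argument applies, completing both cases.
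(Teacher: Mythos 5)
Your proposal is correct and follows essentially the argument that the paper relies on: the paper does not reprove this lemma but imports it verbatim from \citet[Theorem~1.1]{chekuri2010dependent}, reviewing only the swap-rounding procedure and its by-construction marginal preservation ($\E[\x_K]=\y$) in Appendix~\ref{appendix:swap}. Your reconstruction --- decomposing $\y$ into a convex combination of bases, performing elementary swaps via the symmetric exchange property with the coin tuned to preserve marginals, and then chaining the local fact that a mean-preserving, anti-correlated two-coordinate update cannot increase the multilinear product, so that $\E\left[\prod_{i\in S}x_i^{(t)}\right]$ is non-increasing in $t$ for every $S\subseteq[n]$ --- is precisely the structure of the proof in the cited reference, for both swap rounding and randomized pipage rounding.
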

We review swap rounding in \fullversion{Appendix~F of \citet{si2023online}}{Appendix~\ref{appendix:swap}}. Interestingly, the existence of a negatively correlated rounding is inherently linked to  matroids: a negatively-correlated rounding exists \emph{if and only if~$\X$ is a matroid} (see Thm.~I.1. in \citet{chekuri2010dependent}).  Lemma~\ref{proposition:sandwich2} thus implies that the reduction of RAOCO to OCO policies is also  linked to matroids.
Putting everything together, Lemmas~\ref{proposition:sandwich2}--\ref{theorem:rounding} and Corollary~\ref{cor:raoco} yield the following:
\begin{theorem}\label{thm:main2}
Let $\X\!\subseteq\!\{0,1\}^n$ be a matroid, and  $\F$ be a subset of the WTP class for which Asm.~\ref{assumption:lip} holds. Then, the RAOCO policy $\vec \P_\X$ defined by Alg.~\ref{alg:saoco} using swap rounding or randomized pipage rounding as $\Xi$ and OGA, OMA, or FTRL as OCO policy $\vec\P_\Y$, and  concave relaxations in Eq.~\eqref{eq:budget-additive-concave}  
 has sublinear $\alpha$-regret. In particular, 
     $   \aregret\parentheses{\vec\P_\X} = \BigO{\sqrt{T}}.$
\end{theorem}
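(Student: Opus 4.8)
The plan is to establish the theorem as a direct composition of the three building blocks proved earlier, verifying that the specific instantiation of RAOCO in the statement meets each of their hypotheses in turn. Since the heavy lifting (the sandwich property and the regret-transfer) has already been carried out in Lemma~\ref{proposition:sandwich2} and Thm.~\ref{theorem:sandwich}, the remaining task is essentially bookkeeping: confirm that each assumption invoked downstream is supplied upstream, and then chain the conclusions.

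First, because $\X\subseteq\{0,1\}^n$ is a matroid by hypothesis, I would invoke Lemma~\ref{theorem:rounding} to conclude that the chosen rounding $\Xi$ --- either swap rounding or randomized pipage rounding --- is negatively correlated in the sense of Def.~\ref{def:negatively_correlated_sampler}. Next, since $\F$ is a subset of the WTP class and the concave relaxations are taken to be those of Eq.~\eqref{eq:budget-additive-concave}, I would feed this negatively correlated $\Xi$ together with Asm.~\ref{assumption:lip} (assumed in the statement) into Lemma~\ref{proposition:sandwich2}. This yields that $\F$ satisfies the sandwich property, Asm.~\ref{assumption:sandwich}, with approximation factor $\alpha=\parentheses{1-\frac{1}{\Delta}}^{\Delta}$ where $\Delta=\sup_{f\in\F}\Delta_f$ (interpreted as the limiting value $1-1/e$ when $\Delta=\infty$, so that $\alpha\in(0,1]$ in all cases). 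Finally, with Asm.~\ref{assumption:sandwich} now in force and the OCO policy $\vec\P_\Y$ taken to be OGA, OMA, or FTRL, I would apply Corollary~\ref{cor:raoco} to conclude $\aregret_T\parentheses{\vec\P_\X}=\BigO{\sqrt{T}}$.

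The one point requiring care --- and the closest thing to an obstacle --- is checking that Corollary~\ref{cor:raoco} is genuinely applicable to the present relaxations, i.e., that its internal appeal to Thm.~\ref{thm:oco} is licensed. Concretely, the OCO subproblem runs over $\Y=\conv\X$, which is the convex hull of finitely many points of $\{0,1\}^n$ and hence a compact convex polytope, and the functions handed to $\vec\P_\Y$ are exactly the relaxations $\cfunc_t$ of Eq.~\eqref{eq:budget-additive-concave}, which are concave (a nonnegative combination of minima of affine maps) and $L$-Lipschitz by Asm.~\ref{assumption:lip}. Thus Asm.~\ref{asm:oco} holds for the OCO instance, the $O(\sqrt{T})$ OCO regret of Thm.~\ref{thm:oco} applies, and the regret-transfer bound $\aregret_T\parentheses{\vec\P_\X}\le\alpha\cdot\regret_T\parentheses{\vec\P_\Y}$ of Thm.~\ref{theorem:sandwich} (already absorbed into Corollary~\ref{cor:raoco}) converts it into the claimed sublinear $\alpha$-regret. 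No genuinely new estimate is needed; the theorem is the assembly of Lemmas~\ref{proposition:sandwich2}--\ref{theorem:rounding} with Corollary~\ref{cor:raoco}, and the proof amounts to threading the matroid hypothesis through the negative-correlation guarantee, then through the sandwich property, and finally through the OCO regret bound.
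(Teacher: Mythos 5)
Your proposal is correct and follows exactly the paper's own route: the paper proves Theorem~\ref{thm:main2} by chaining Lemma~\ref{theorem:rounding} (negative correlation of swap/pipage rounding over matroids), Lemma~\ref{proposition:sandwich2} (the sandwich property with $\alpha=\parentheses{1-\frac{1}{\Delta}}^\Delta$ under Asm.~\ref{assumption:lip}), and Corollary~\ref{cor:raoco}, whose hypotheses you verify just as the paper does when noting that $\Y=\conv\X$ is compact and convex and the relaxations of Eq.~\eqref{eq:budget-additive-concave} are $L$-Lipschitz concave. Your extra care in checking that Asm.~\ref{asm:oco} holds for the OCO subproblem matches the paper's remarks preceding Corollary~\ref{cor:raoco}, so there is nothing to add.
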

Note that, though all algorithms yield $O(\sqrt{T})$ regret, the dependence of constants on problem parameters (such as $n$ and the matroid rank $r$), as reported in Table~\ref{table:compare}, is optimized under OMA (see Appendix~\fullversion{C of \citet{si2023online}}{\ref{appendix:setups_omd}}). 

\subsection{Computational Complexity}

\paragraph{OCO Policy.} OCO policies are polytime (see, e.g.,~\cite{hazan2016introduction}). Taking gradient-based OCO policies (e.g., OMA in {Alg.~\fullversion{2 in \citet{si2023online}}{\ref{alg:oma} in the Appendix}}), their computational complexity is  dominated by a projection operation to the convex set $\Y=\conv\X$. 
The exact time complexity of this projection depends on  $\Y$, however, given a membership oracle that decides $\y \in \conv\X$, the projection can be computed efficiently in polynomial time~\cite{hazan2016introduction}. Moreover, the projection problem is a convex problem that can be computed efficiently (e.g., iteratively to an arbitrary precision) and can also be computed in strongly polynomial time~\cite[Theorem~3]{gupta2016solving}.

\begin{table*}[!t]

    \centering
    \resizebox{\textwidth}{!}{%
    \begin{tabular}{|c|c|c|c||c|c|c||c|c|c|c||c|c|c|c||c|c|c|c||c|c|}
    \cline{5-21}
         \multicolumn{4}{c}{} & \multicolumn{3}{|c||}{\RAOCOOGA{}} & \multicolumn{4}{c||}{\RAOCOOMA{}} & \multicolumn{4}{c||}{$\texttt{FSF}^\star$} & \multicolumn{4}{c||}{\texttt{TabularGreedy}} & \multicolumn{2}{c|}{\texttt{Random}} \\
    \hline
        \multicolumn{2}{|c|}{\makecell{Datasets \\ \& Constr.}} & $F^\star$ &$t$ & $\bar{F}_{\X}/F^\star$ & std. dev. & $\eta$ & $\bar{F}_{\X}/F^\star$ & std. dev. & $\eta$ & $\gamma$ & $\bar{F}_{\X}/F^\star$ & std. dev. & $\eta$ & $\gamma$ & $\bar{F}_{\X}/F^\star$ & std. dev. & $\eta$ & $c_{\mathrm p}$ & $\bar{F}_{\X}/F^\star$ & std. dev.  \\
    \hline
         \multirow{6}{*}{\rotatebox{90}{\IMZKC}} & \multirow{3}{*}{\rotatebox{90}{uniform}} & \multirow{3}{*}{$0.234$} & $33$ & $0.902$ & $1.85\times 10^{-2}$ & \multirow{3}{*}{$2.5$} & $\mathbf{0.965}$ & $6.02\times 10^{-3}$ & \multirow{3}{*}{$10$} & \multirow{3}{*}{$0.05$} & $0.839$ & $5.02\times 10^{-3}$ & \multirow{3}{*}{$75$} & \multirow{3}{*}{$0.0$} & $0.833$ & $8.12\times 10^{-3}$ & \multirow{3}{*}{$160$} & \multirow{3}{*}{$1$} & $0.642$ & $3.03 \times 10^{-2}$\\
    \cline{4-6}\cline{8-9}\cline{12-13}\cline{16-17}\cline{20-21}
                                                 &                              &                          & $66$ & $0.924$ & $1.60\times 10^{-2}$ &  & $\mathbf{0.967}$ & $5.65\times 10^{-3}$ & & & $0.896$ & $3.83\times 10^{-3}$ &  & & $0.894$ & $2.55\times 10^{-3}$ & & & $0.624$ & $2.80 \times 10^{-2}$\\
    \cline{4-6}\cline{8-9}\cline{12-13}\cline{16-17}\cline{20-21}
                                                 &                              &                         & $99$ & $0.945$ & $7.70\times 10^{-3}$ &  & $\mathbf{0.982}$ & $5.28\times 10^{-3}$ & & & $0.933$ & $4.17\times 10^{-3}$ &  & & $0.931$ & $1.47\times 10^{-3}$ & & & $0.622$ & $1.91 \times 10^{-2}$\\
    \cline{2-21}
                                                 & \multirow{3}{*}{\rotatebox{90}{partition}}                                      & \multirow{3}{*}{$0.83$} & $33$ & $0.994$ & $2.52\times 10^{-3}$ & \multirow{3}{*}{$8$} & $\mathbf{0.997}$ & $6.95\times 10^{-4}$ & \multirow{3}{*}{$10$} & \multirow{3}{*}{$0.1$}  & \multicolumn{4}{c||}{\multirow{3}{*}{\xmark}} & $0.985$ & $5.65\times 10^{-3}$ & \multirow{3}{*}{$10$} & \multirow{3}{*}{$1$}& $0.953$ & $4.88 \times 10^{-3}$\\
    \cline{4-6}\cline{8-9}\cline{16-17}\cline{20-21}
                                 & & & $66$ & $0.99$ & $8.75\times 10^{-4}$ &  & $\mathbf{0.994}$ & $3.48\times 10^{-4}$ & & & \multicolumn{4}{c||}{} & $0.987$ & $2.93\times 10^{-3}$ & & & $0.950$ & $2.87 \times 10^{-3}$\\
    \cline{4-6}\cline{8-9}\cline{16-17}\cline{20-21}
                                 & & & $99$ & $0.993$ & $9.66\times 10^{-4}$ &  & $\mathbf{0.997}$ & $3.37\times 10^{-4}$ & & &\multicolumn{4}{c||}{}& $0.995$ & $2.70\times 10^{-3}$ & & & $0.953$ & $1.77 \times 10^{-3}$\\
    \hline
         \multirow{6}{*}{\rotatebox{90}{\IMEpinions}} & \multirow{3}{*}{\rotatebox{90}{uniform}} & \multirow{3}{*}{$0.171$} & $50$ & $0.845$ & $1.61\times 10^{-2}$ & \multirow{3}{*}{$4$} & $\mathbf{0.853}$ & $2.46\times 10^{-2}$ & \multirow{3}{*}{$10$} & \multirow{3}{*}{$0.01$} & $0.703$ & $6.03\times 10^{-2}$ & \multirow{3}{*}{$75$} & \multirow{3}{*}{$0.0$} & $0.694$ & $3.44\times 10^{-2}$ & \multirow{3}{*}{$160$} & \multirow{3}{*}{$1$} & $0.632$ & $2.99 \times 10^{-2}$\\
    \cline{4-6}\cline{8-9}\cline{12-13}\cline{16-17}\cline{20-21}
                & & & $100$ & $0.865$ & $1.13\times 10^{-2}$ & & $\mathbf{0.906}$ & $1.25\times 10^{-2}$ & & & $0.776$ & $1.28\times 10^{-2}$ &  & & $0.768$ & $2.87\times 10^{-2}$ & & & $0.615$ & $2.32 \times 10^{-2}$\\
    \cline{4-6}\cline{8-9}\cline{12-13}\cline{16-17}\cline{20-21}
                & & & $149$ & $0.88$ & $7.97\times 10^{-3}$ & & $\mathbf{0.925}$ & $9.09\times 10^{-3}$ & & & $0.807$ & $1.52\times 10^{-2}$ &  & & $0.805$ & $2.48\times 10^{-2}$ & & & $0.629$ & $2.36 \times 10^{-2}$\\
    \cline{2-21}
                & \multirow{3}{*}{\rotatebox{90}{partition}} & \multirow{3}{*}{$0.171$} & $50$ & $0.826$ & $1.35\times 10^{-2}$ & \multirow{3}{*}{$3$} & $\mathbf{0.861}$ & $1.82\times 10^{-2}$ & \multirow{3}{*}{$10$} & \multirow{3}{*}{$0.001$} & \multicolumn{4}{c||}{\multirow{3}{*}{\xmark}} & $0.720$ & $2.28\times 10^{-2}$ & \multirow{3}{*}{$160$} & \multirow{3}{*}{$1$} & $0.620$ & $6.08 \times 10^{-3}$\\
    \cline{4-6}\cline{8-9}\cline{16-17}\cline{20-21}
                                 & & & $100$ & $0.854$ & $6.73\times 10^{-3}$ & & $\mathbf{0.908}$ & $9.42\times 10^{-3}$ & & & \multicolumn{4}{c||}{} & $0.786$ & $1.27\times 10^{-2}$ & & & $0.619$ & $4.70 \times 10^{-3}$\\
    \cline{4-6}\cline{8-9}\cline{16-17}\cline{20-21}
                                 & & & $149$ & $0.88$ & $2.40\times 10^{-3}$ &  & $\mathbf{0.927}$ & $6.34\times 10^{-3}$ & & & \multicolumn{4}{c||}{} & $0.818$ & $1.3\times 10^{-2}$ & & & $0.625$ & $1.10 \times 10^{-2}$\\
    \hline
         \multirow{6}{*}{\rotatebox{90}{\FLMovieLens{}}} & \multirow{3}{*}{\rotatebox{90}{uniform}} & \multirow{3}{*}{$0.407$} & $98$ & $0.749$ & $3.21\times 10^{-2}$ & \multirow{3}{*}{$0.5$} & $\mathbf{0.792}$ & $2.60\times 10^{-2}$ & \multirow{3}{*}{$1.0$} & \multirow{3}{*}{$0.05$} & $0.681$ & $8.41\times 10^{-2}$ & \multirow{3}{*}{$1.0$} & \multirow{3}{*}{$0.001$} & $0.69$ & $1.11\times 10^{-2}$ & \multirow{3}{*}{$160$} & \multirow{3}{*}{$1$} & $0.748$ & $5.26 \times 10^{-2}$\\
    \cline{4-6}\cline{8-9}\cline{12-13}\cline{16-17}\cline{20-21}
                                 & & & $196$ & $\mathbf{0.786}$ & $3.82\times 10^{-2}$ &  & $0.781$ & $1.36\times 10^{-2}$ & & & $0.713$ & $7.44\times 10^{-2}$ & & & $0.676$ & $1.03\times 10^{-2}$ & & & $0.7$ & $4.76 \times 10^{-2}$\\
    \cline{4-6}\cline{8-9}\cline{12-13}\cline{16-17}\cline{20-21}
                                 & & & $293$ & $0.846$ & $3.98\times 10^{-2}$ &  & $\mathbf{0.866}$ & $1.02\times 10^{-2}$ & & & $0.756$ & $6.42\times 10^{-2}$ &  & & $0.769$ & $1.65\times 10^{-2}$ & & & $0.711$ & $3.11 \times 10^{-2}$\\
    \cline{2-21}
                                 & \multirow{3}{*}{\rotatebox{90}{partition}} & \multirow{3}{*}{$0.419$} & $98$ & $0.889$ & $5.20\times 10^{-2}$ & \multirow{3}{*}{$0.5$} & $\mathbf{0.948}$ & $9.70\times 10^{-3}$ & \multirow{3}{*}{$10$} & \multirow{3}{*}{$0.001$} & \multicolumn{4}{c||}{\multirow{3}{*}{\xmark}} & $0.908$ & $2.39\times 10^{-2}$ & \multirow{3}{*}{$160$} & \multirow{3}{*}{$8$} & $0.829$ & $5.54 \times 10^{-2}$\\
    \cline{4-6}\cline{8-9}\cline{16-17}\cline{20-21}
                                 & & & $196$ & $0.872$ & $2.12\times 10^{-2}$ &  & $\mathbf{0.908}$ & $5.14\times 10^{-3}$ & & & \multicolumn{4}{c||}{} & $0.902$ & $1.98\times 10^{-2}$ & & & $0.814$ & $1.51 \times 10^{-2}$\\
    \cline{4-6}\cline{8-9}\cline{16-17}\cline{20-21}
                                 & & & $293$ & $0.926$ & $2.34\times 10^{-2}$ &  & $0.948$ & $3.13\times 10^{-3}$ & & & \multicolumn{4}{c||}{} & $\mathbf{0.964}$ & $2.01\times 10^{-2}$ & & & $0.874$ & $2.84 \times 10^{-3}$\\
    \hline
         \multirow{6}{*}{\rotatebox{90}{\TeamFormation}} & \multirow{3}{*}{\rotatebox{90}{uniform}} & \multirow{3}{*}{$200$}  & $33$ & $0.984$ & $7.05\times 10^{-3}$ & \multirow{3}{*}{$4$} & $\mathbf{0.987}$ & $4.03 \times 10^{-3}$ & \multirow{3}{*}{$0.05$} & \multirow{3}{*}{$0.1$} & $0.845$ & $4.73 \times 10^{-2}$ & \multirow{3}{*}{$1$} & \multirow{3}{*}{$0$} & $0.844$ & $2.21\times 10^{-2}$ & \multirow{3}{*}{$1$} & \multirow{3}{*}{$2$} & $0.605$ & $3.13\times 10^{-2}$\\
    \cline{4-6}\cline{8-9}\cline{12-13}\cline{16-17}\cline{20-21}
                 & & & $66$ & $0.994$ & $3.87 \times 10^{-3}$ &  & $\mathbf{0.994}$ & $2.83 \times 10^{-3}$ & & & $0.868$ & $2.14\times 10^{-2}$ & & & $0.886$ & $1.88\times 10^{-2}$ & & & $0.603$ & $2.01\times 10^{-2}$\\
    \cline{4-6}\cline{8-9}\cline{12-13}\cline{16-17}\cline{20-21}
                 & & & $99$ & $0.995$ & $2.01 \times 10^{-3}$ &  & $\mathbf{0.998}$ & $2.55 \times 10^{-3}$ & & & $0.869$ & $2.95\times 10^{-2}$ & & & $0.902$ & $2.28\times 10^{-2}$ & & & $0.612$ & $1.18\times 10^{-2}$\\
    \cline{2-21}
                 & \multirow{3}{*}{\rotatebox{90}{partition}} & \multirow{3}{*}{$400$} & $33$ & $0.98$ & $3.48 \times 10^{-3}$ & \multirow{3}{*}{$3$} & $\mathbf{0.983}$ & $2.33 \times 10^{-3}$ & \multirow{3}{*}{$0.1$} & \multirow{3}{*}{$0.001$} & \multicolumn{4}{c||}{\multirow{3}{*}{\xmark}} & $0.834$ & $4.01\times 10^{-3}$ & \multirow{3}{*}{$1$} & \multirow{3}{*}{$1$} & $0.611$ & $9.15\times 10^{-3}$\\
    \cline{4-6}\cline{8-9}\cline{16-17}\cline{20-21}
                & &  & $66$ & $0.990$ & $1.75 \times 10^{-3}$ &  & $\mathbf{0.991}$ & $1.18 \times 10^{-3}$ & & & \multicolumn{4}{c||}{} & $0.852$ & $2.04\times 10^{-3}$ & & & $0.607$ & $5.44\times 10^{-3}$ \\
    \cline{4-6}\cline{8-9}\cline{16-17}\cline{20-21}
                & &  & $99$ & $0.993$ & $1.11 \times 10^{-3}$ &  & $\mathbf{0.994}$ & $7.75\times 10^{-4}$ & & & \multicolumn{4}{c||}{} & $0.857$ & $1.36\times 10^{-3}$ & & & $0.601$ & $7.71\times 10^{-3}$\\
    \hline
    \end{tabular}       
    }
    \caption{Average cumulative reward $\bar{F}_{\X}$ ($t = T/3, 2T/3, T$), normalized by fractional optimal $F^\star$, of integral policies across different datasets and constraints. Optimal hyperparameters ($\eta, \gamma, c_{\mathrm p}$) are reported with each algorithm (see Appendix~\fullversion{H of \citet{si2023online}}{\ref{appendix:experiments}} for value ranges explored). \Algorithm{RAOCO} combined with \Algorithm{OGA} or \Algorithm{OMA} outperforms competitors almost reaching one, with the exception of \texttt{MovieLens}, where \Algorithm{TabularGreedy} does better. As \Algorithm{Random} also performs well on \texttt{MovieLens}, this indicates that the (static) offline optimal is quite poor for this reward sequence.   By Property~\ref{assumption:sandwich}, fractional solutions strictly dominate the integral optimal, which implies that in all cases \Algorithm{RAOCO} outperformed the $1-1/e$ approximation, attaining an almost optimal value.  }
    \label{tab:res}
\end{table*}

\paragraph{Concave Relaxations and Randomized Rounding.}  Concave relaxations are linear in the representation of the function $f$, but in practice come ``for free'', once parameters in Eq.~\eqref{eq:wtp} are provided.   Swap rounding over a general matroid is  $\BigO{n r^2}$, where 
$r$ is the rank of the matroid~\cite{chekuri2010dependent}. This dominates the remaining operations (including OCO), and thus determines the overall complexity of our algorithm.
This $\BigO{n r^2}$ term assumes access to \new{the} decomposition of a fractional point $\y \in \Y$ to bases of the matroid. Carathéodory's theorem implies the existence of such decomposition of at most $n+1$ points in $\X$; moreover, there exists a decomposition algorithm~\cite{cunningham1984testing} for general matroids with running time $\BigO{n^6}$. 
However, in all practical cases we consider (including uniform and partition matroids) the complexity is significantly lower. More specifically,
for partition matroids,  swap rounding reduces to an algorithm with linear time complexity, namely, $\BigO{m n}$ for $m$ partitions~\cite{Srinivasan2001}. 



\section{Extensions}\label{sec:extensions}

\paragraph{Dynamic Setting.} In the dynamic setting, the decision maker compares its performance to the best sequence of decisions $(\y^\star_t)_{t \in \T} $ with a path length {regularity} condition~\cite{zinkevich2003,cesa2012mirror}. I.e., let
$\Lambda_{\X} (T,P_T) \triangleq \set{\parentheses{\x_t}^T_{t=1} \in \X^T: \sum^T_{t=1} \norm{\x_{t+1} - \x_t} \leq P_T} \subset \X^T$
be the set of sequences of decision in a set $\X$ with path length less than $P_T \in \reals_{\geq 0}$ over a time horizon $T$. We extend the definition of the regret (Eq.~\eqref{eq:alpha_regret}) to that of  \emph{dynamic $\alpha$-regret}: 
\begin{align*}
     &\alpha\text{-}\mathrm{regret}_{T, P_T} ({\vec\P}_\X) \triangleq  \sup_{\parentheses{{f}_t}^T_{t=1} \in {\F}^T} \bigg\{\max_{\parentheses{\x^\star_t}^T_{t=1}\in    \Lambda_{\X} (T,P_T)}  \alpha\sum^T_{t=1} {f}_t(\x^\star_t) -\sum^T_{t=1} {f}_t(\x_t)\bigg\} .
\end{align*}
When $\alpha\text{-}\mathrm{regret}_{T, P_T}({\vec\P}_\X)$  is sublinear in $T$, the policy attains average rewards that asymptotically compete with the optimal decisions \emph{of bounded path length}, in hindsight.

Through our reduction to OCO, we can  leverage  OGA \cite{zinkevich2003} or meta-learning algorithms over OGA~\cite{zhao2020dynamic} to obtain dynamic regret guarantees in OSM. As an additional technical contribution, we provide  the first sufficient and necessary conditions for OMA to admit a dynamic regret guarantee (see Appendix~\fullversion{A of \citet{si2023online}}{\ref{appendix:oco}}). This allows to extend a specific instance of OMA operating on the simplex,  the so-called fixed-share algorithm~\cite{cesa2012mirror,herbster1998tracking},  to  matroid polytopes.  This yields a tighter regret guarantee than OGA~\cite{zinkevich2003,zhao2020dynamic} (see Theorem~\fullversion{7}{\ref{thm:general_regret_dynamic+optimistic}} in Appendix~\fullversion{A of \citet{si2023online}}{\ref{appendix:oco}}).
Putting everything together,  we get:
\begin{theorem}\label{thm:dynamic}
Under Asm.~\ref{assumption:sandwich},  RAOCO policy $\vec\P_\X$  described by  Alg.~\ref{alg:saoco} equipped with an OMA policy in Appendix~\fullversion{A of \citet{si2023online}}{\ref{appendix:oco}} as OCO policy $\vec\P_{\Y}$ 
 has sublinear dynamic $\alpha$-regret, i.e.,     
 $    \alpha\text{-}\mathrm{regret}_{T,P_T} ({\vec\P}_\X) = \BigO{\sqrt{P_T T} }.$
\end{theorem}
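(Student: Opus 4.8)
The plan is to establish a dynamic-regret analogue of Theorem~\ref{theorem:sandwich}: I will show that the (expected) dynamic $\alpha$-regret of $\vec\P_\X$ is bounded by $\alpha$ times the dynamic regret incurred by the underlying OCO policy $\vec\P_\Y$ on the relaxed reward sequence $(\cfunc_t)_{t\in\T}$ that $\vec\P_\Y$ actually receives in Alg.~\ref{alg:saoco}, and then invoke the dynamic-regret guarantee for OMA established in the appendix. The reduction mirrors the static argument behind Theorem~\ref{theorem:sandwich}; the only genuinely new ingredient is bookkeeping the path-length constraint across the integral set $\X$ and its convex hull $\Y=\conv\X$. Since the rounding $\Xi$ is random, I will bound the dynamic $\alpha$-regret in expectation over the randomness of $\Xi$ and $\vec\P_\Y$.

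First I would bound the comparator term. Fix any reward sequence $(f_t)_{t=1}^T\in\F^T$ with corresponding relaxations $(\cfunc_t)$. For any integral comparator sequence $(\x^\star_t)_{t=1}^T\in\Lambda_{\X}(T,P_T)$, the upper sandwich Eq.~\eqref{eq:alpha_approx_upper} gives $\sum_t f_t(\x^\star_t)\le\sum_t\cfunc_t(\x^\star_t)$, since each $\x^\star_t\in\X$. Because $\X\subseteq\Y$ under the same norm, every such sequence also lies in $\Lambda_{\Y}(T,P_T)$ with identical path length, so $\Lambda_{\X}(T,P_T)\subseteq\Lambda_{\Y}(T,P_T)$ and maximizing over the larger fractional class only increases the value: $\max_{(\x^\star_t)\in\Lambda_{\X}}\alpha\sum_t f_t(\x^\star_t)\le\alpha\max_{(\y^\star_t)\in\Lambda_{\Y}}\sum_t\cfunc_t(\y^\star_t)$.

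Next I would lower-bound the accrued reward. Conditioning on $\y_t$ (a measurable function of the past), the lower sandwich Eq.~\eqref{eq:alpha_approx_lower} applied to the rounding gives $\E_\Xi[f_t(\Xi(\y_t))\mid\y_t]\ge\alpha\,\cfunc_t(\y_t)$; taking total expectation and summing over $t$ yields $\E[\sum_t f_t(\x_t)]\ge\alpha\,\E[\sum_t\cfunc_t(\y_t)]$. Subtracting from the comparator bound, the factor $\alpha$ pulls out, and since as $(f_t)$ ranges over $\F^T$ the relaxations $(\cfunc_t)$ range over $\cfuncSet^T$ (exactly the sequence fed to $\vec\P_\Y$), the supremum over $\F^T$ is dominated by the supremum over $\cfuncSet^T$, giving $\alpha\text{-}\mathrm{regret}_{T,P_T}(\vec\P_\X)\le\alpha\cdot\mathrm{regret}_{T,P_T}(\vec\P_\Y)$, where the right-hand side is the dynamic regret of $\vec\P_\Y$ over the convex, compact set $\Y$ against comparators in $\Lambda_{\Y}(T,P_T)$.

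Finally, I would close the argument by noting that under Asm.~\ref{assumption:lip} each $\cfunc_t\in\cfuncSet$ is $L$-Lipschitz and concave, so $\Y$ and $(\cfunc_t)$ meet the premises of the OMA dynamic-regret bound in the appendix, which yields $\mathrm{regret}_{T,P_T}(\vec\P_\Y)=\BigO{\sqrt{P_T T}}$; substituting gives the claim. I expect the main obstacle to be making the conditional-expectation step rigorous so that the per-round lower sandwich composes correctly along the \emph{adaptive} sequence $(\y_t)$, together with confirming that the appendix OMA guarantee genuinely applies on a matroid polytope $\Y$ (rather than only the simplex of the classical fixed-share setting). The path-length bookkeeping itself is benign, since relaxing the integral comparator class to the fractional one under the same norm preserves the budget $P_T$ and only loosens the bound in the correct direction.
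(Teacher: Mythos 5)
Your proposal is correct and follows essentially the same route as the paper: the same sandwich argument from Theorem~\ref{theorem:sandwich} extended to dynamic comparators (using $\Lambda_{\X}(T,P_T)\subseteq\Lambda_{\Y}(T,P_T)$ and Eq.~\eqref{eq:alpha_approx_lower} in expectation over $\Xi$) to obtain $\aregret_{T,P_T}(\vec\P_\X)\leq\alpha\cdot\regret_{T,P_T}(\vec\P_\Y)$, followed by the OMA dynamic-regret bound of Theorem~\ref{thm:general_regret_dynamic+optimistic}, whose applicability to matroid polytopes is exactly what the paper's Asms.~\ref{asm:mirror_map}--\ref{asm:bounded_gradient_mirror_map} and Proposition~\ref{proposition:negative_entropy} (or the Euclidean setup) secure. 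One minor note: you need not invoke Asm.~\ref{assumption:lip} here, since the $L$-Lipschitzness of the relaxations is already part of Asm.~\ref{assumption:sandwich}, which is the theorem's sole hypothesis.
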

This follows from Thm.~\ref{theorem:sandwich} and the dynamic regret guarantee for OMA    in Appendix~\fullversion{A of \citet{si2023online}}{\ref{appendix:oco}}.

\paragraph{Optimistic Setting.} 
\label{s:optimism}
In the optimistic setting, the decision maker has access to additional information available in the form of predictions: a  function $\pi_{t+1}:\interval{0,1}^n \to \reals_{\geq 0}$,  serving as a prediction of the reward function $f_{t+1}(\x)$ is made available  before committing to a decision $\x_{t+1}$ at timeslot $t \in \T$.  The prediction $\pi_t$  encodes prior information available to the decision maker at timeslot $t$.\footnote{Function $\pi_t$ can extend over fractional values, e.g.,  be the multi-linear relaxation of a set function.}  Let  $\g_t$ and $\pg_t$ be supergradients of $\cfunc_t$ and $\pi_t$ at point $\y_t$, respectively.  We can define an optimistic OMA policy 
(see Alg.~\fullversion{3}{\ref{alg:ooma}} in Appendix~\fullversion{A of \citet{si2023online}}{\ref{appendix:oco}}) that leverages both the prediction and the observed rewards. Applying again our reduction of OSM to this setting we get:
\begin{theorem}\label{thm:osm_optimistic}
Under Asm.~\ref{assumption:sandwich},  RAOCO policy $\vec\P_\X$  in Alg.~\ref{alg:saoco} equipped with  OOMA in Appendix~\fullversion{A of \citet{si2023online}}{\ref{appendix:oco}} as policy $\vec\P_{\Y}$ yields
  $   \alpha\text{-}\mathrm{regret}_{T, P_T} ({\vec\P}_\X)=\BigO{ \sqrt{P_T\sum^T_{t=1}\norm{\g_t - \pg_t}^2_\infty}}.$ 
\end{theorem}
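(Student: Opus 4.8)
The plan is to treat Theorem~\ref{thm:osm_optimistic} as another instance of the reduction philosophy behind Theorem~\ref{theorem:sandwich}: once the sandwich property (Asm.~\ref{assumption:sandwich}) is in place, any regret guarantee for the inner OCO policy $\vec\P_\Y$ transfers, up to the factor $\alpha$, to a corresponding $\alpha$-regret guarantee for the outer RAOCO policy $\vec\P_\X$. The only new ingredient relative to the static case is that both the comparator and the regret bound are now richer: a path-length-constrained comparator sequence and a prediction-dependent error term. Accordingly, I would first establish a dynamic/optimistic analog of Theorem~\ref{theorem:sandwich}, and then substitute the dynamic-optimistic regret bound for OOMA proved in Appendix~\fullversion{A of \citet{si2023online}}{\ref{appendix:oco}}.

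First I would establish the reduction. Fix an arbitrary reward sequence $\parentheses{f_t}_{t=1}^T$ with concave relaxations $\parentheses{\cfunc_t}_{t=1}^T$ from Eq.~\eqref{eq:budget-additive-concave}. For the policy's per-round reward, the lower sandwich bound~\eqref{eq:alpha_approx_lower} gives $\E_\Xi\interval{f_t(\x_t)} = \E_\Xi\interval{f_t(\Xi(\y_t))} \geq \alpha\,\cfunc_t(\y_t)$, while for any comparator sequence $\parentheses{\x^\star_t}_{t=1}^T \in \Lambda_\X(T,P_T)$ the upper sandwich bound~\eqref{eq:alpha_approx_upper} gives $\alpha f_t(\x^\star_t) \leq \alpha\,\cfunc_t(\x^\star_t)$. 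Subtracting and summing over $t$ bounds the dynamic $\alpha$-regret of $\vec\P_\X$ by $\alpha$ times $\sum_t \cfunc_t(\x^\star_t) - \sum_t \cfunc_t(\y_t)$. The crucial observation is that $\X \subseteq \Y = \conv\X$ implies $\Lambda_\X(T,P_T) \subseteq \Lambda_\Y(T,P_T)$, so the optimal integral comparator sequence is admissible as a fractional one; taking the maximum over $\Lambda_\Y(T,P_T)$ only enlarges the right-hand side. Hence $\aregret_{T,P_T}\parentheses{\vec\P_\X} \leq \alpha\cdot \regret_{T,P_T}\parentheses{\vec\P_\Y}$, where the right-hand side is the \emph{dynamic} regret of the OCO policy measured against $P_T$-bounded comparator sequences in $\Y$.

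Next I would instantiate $\vec\P_\Y$ with OOMA and invoke its guarantee. The relaxations $\cfunc_t$ are $L$-Lipschitz (Asm.~\ref{assumption:lip}), so their supergradients $\g_t$ at $\y_t$ are uniformly bounded, and OOMA additionally receives the prediction supergradients $\pg_t$ of $\pi_t$ at $\y_t$. The dynamic-optimistic regret bound for OOMA established in Appendix~\fullversion{A of \citet{si2023online}}{\ref{appendix:oco}} is $\regret_{T,P_T}\parentheses{\mathrm{OOMA}} = \BigO{\sqrt{P_T \sum_{t=1}^T \norm{\g_t - \pg_t}_\infty^2}}$. Combining this with the reduction above and absorbing the constant $\alpha \in (0,1]$ into the $\BigO{\,\cdot\,}$ yields the claim.

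I expect the main obstacle to be the reduction step rather than the plug-in: I must verify that the sandwich argument commutes cleanly with a time-varying comparator, in particular that the single inclusion $\Lambda_\X \subseteq \Lambda_\Y$ suffices, and that the predictions $\pi_t$ (equivalently the supergradients $\pg_t$) can be handed to OOMA without disturbing the two sandwich inequalities, which constrain only $f_t$ and $\cfunc_t$ and are agnostic to how $\y_t$ is generated. A secondary delicate point is confirming that the Lipschitz constant from Asm.~\ref{assumption:lip} indeed controls $\norm{\g_t}_\infty$, and hence the prediction-error terms, so that the OOMA guarantee applies verbatim; this is precisely where the bounded-parameter hypothesis on the WTP family enters.
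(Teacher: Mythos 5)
Your proposal is correct and follows essentially the same route as the paper: Appendix~\ref{appendix:extensions} first proves the dynamic analog of Theorem~\ref{theorem:sandwich}, i.e.\ $\aregret_{T,P_T}\parentheses{\vec\P_\X} \leq \alpha \cdot \regret_{T,P_T}\parentheses{\vec\P_\Y}$, using exactly your two sandwich inequalities together with the inclusion $\Lambda_{\X}(T,P_T) \subseteq \Lambda_{\Y}(T,P_T)$, and then plugs in the OOMA bound of Theorem~\ref{thm:general_regret_dynamic+optimistic}. The only cosmetic slip is that in the general setting of the theorem the $L$-Lipschitzness of the relaxations is already part of Asm.~\ref{assumption:sandwich} (Asm.~\ref{assumption:lip} is the WTP-specific sufficient condition), and, as in the paper, the $\norm{\,\cdot\,}_\infty$ in the bound corresponds to instantiating the dual norm $\norm{\,\cdot\,}_\star$ via the (shifted) negative-entropy mirror map.
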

This theorem follows from Theorem~\ref{theorem:sandwich} and the optimistic regret guarantee established for OMA policies in Theorem~\fullversion{7}{\ref{thm:general_regret_dynamic+optimistic}} in Appendix~\fullversion{A of \citet{si2023online}}{\ref{appendix:oco}}.
The optimistic regret guarantee in Theorem~\ref{thm:osm_optimistic} shows that the regret of a policy can be reduced to $0$ when the predictions are perfect, while providing $\BigO{\sqrt{T}}$ guarantee in Thm.~\ref{thm:main2} when the predictions are arbitrarily bad (with bounded gradients).
To the best of our knowledge, ours is the first work to provide guarantees for  optimistic OSM.

\paragraph{Bandit Setting.}
\label{s:bandit}
Recall that in the bandit setting the decision maker  only has  access to the reward $f_t(\x_t)$ after committing to a decision $\x_t \in \X$ at $t\in\T$; i.e., the reward function is \emph{not} revealed. 
Our reduction to OCO does not readily apply to the bandit setting; however, we show that the bandit algorithm by \citet{wan2023bandit} can be used to construct such a reduction. The main challenge is to estimate gradients of inputs in $\Y$ only from bandit feedback; this can be done via a perturbation method (see also~\citet{hazan2014bandit}). This approach, described in Appendix~\fullversion{G of \citet{si2023online}}{\ref{appendix:extensions}}, yields the following theorem:
 

\begin{figure}[t]
    \centering
    \subcaptionbox{Stationary}{\includegraphics[height = .19\textwidth,trim={0 0 .5cm 0}]{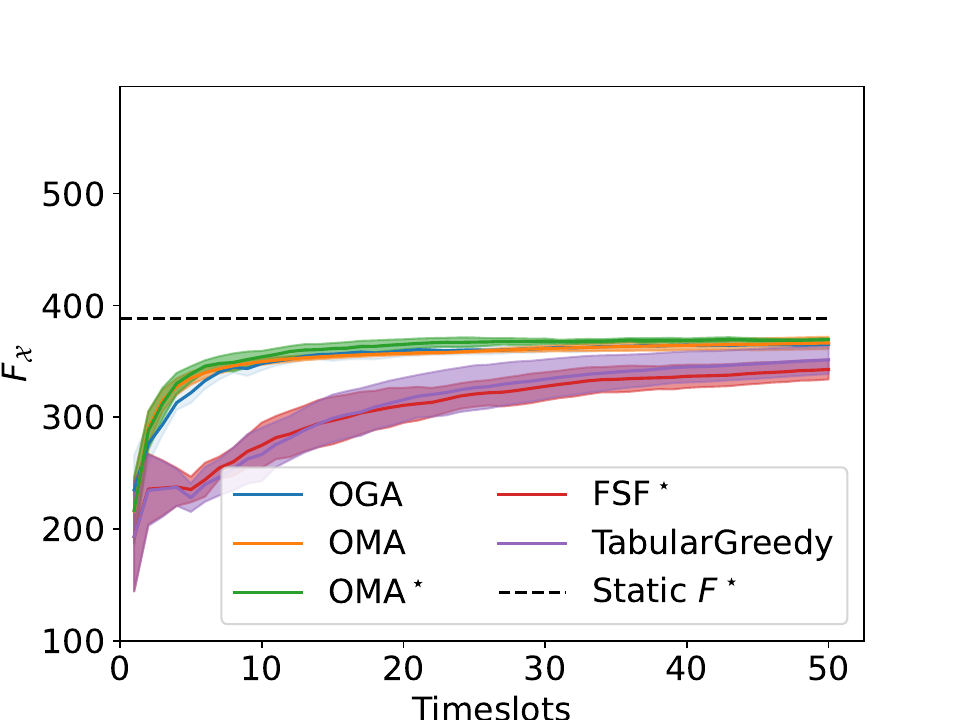}}
    \subcaptionbox{Non-Stationary}{\hspace*{-10pt}\includegraphics[height = .19\textwidth, trim={0.54cm 0 .7cm 0},clip]{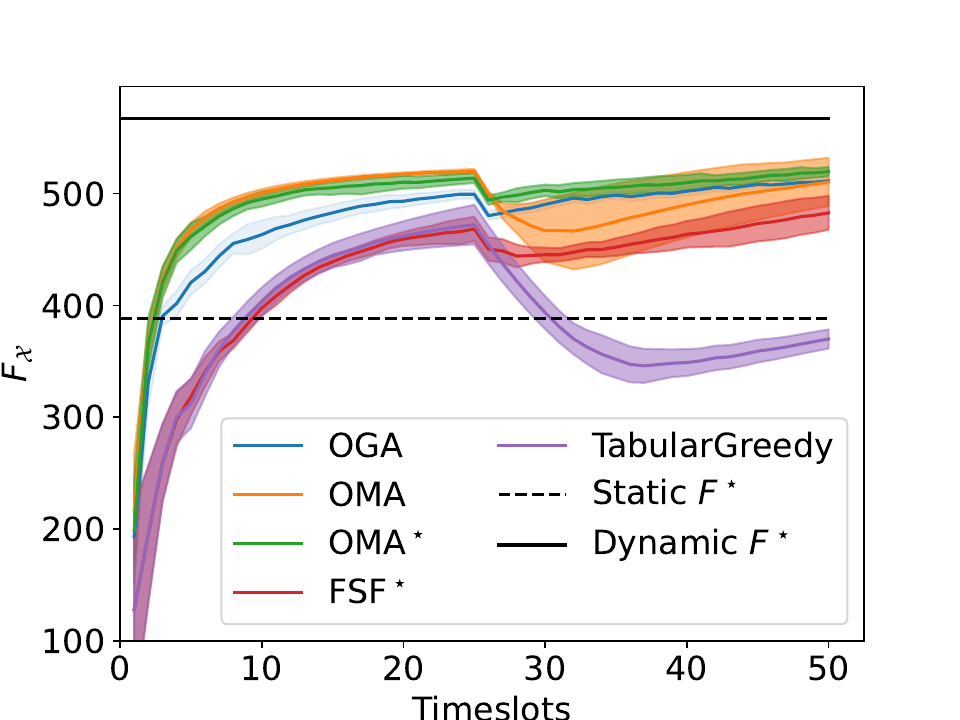}}
    \caption{ Average cumulative reward $\bar F_\X$ of the different policies under \WeightedCoverage{} dataset under different setups: stationary in (a) and non-stationary in (b).  Non-stationarity in (b) is applied by changing the objective at $t=25$ (see Appendix~\fullversion{H of \citet{si2023online}}{\ref{appendix:experiments}}). The area depicts the standard deviation over 5 runs.}\label{fig:dynamicregret}
\end{figure}


\begin{theorem} Under bounded submodular monotone rewards and partition matroid constraint sets, LIRAOCO policy $\vec\P_\X$  in Alg.~\fullversion{5}{\ref{alg:lisaoco}} in Appendix~\fullversion{G of \citet{si2023online}}{\ref{appendix:extensions}} equipped with  an OCO policy $\vec\P_{\Y_\delta}$  \label{theorem:bandit}
yields 
 $   \aregret_{T, P_T, W} \parentheses{\vec\P_\X} \leq  W \cdot \mathrm{regret}_{T/W, P_T} \parentheses{\vec{\P}_{\Y_\delta}} + \frac{T}{W} + 2 \alpha \delta r^2 n   T,$
 where $\delta, W$, are tuneable parameters of the algorithm and  $\mathrm{regret}_{T/W, P_T} \parentheses{\vec{\P}_{\Y_\delta}}$ is the regret of an OCO policy executed for $T/W$ timeslots.
\end{theorem}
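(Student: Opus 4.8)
The plan is to bound $\aregret_{T,P_T,W}(\vec\P_\X)$ for the LIRAOCO policy by lifting the full-information sandwich reduction (Asm.~\ref{assumption:sandwich} and Thm.~\ref{theorem:sandwich}) to bandit feedback through three ingredients. First, a \emph{blocking} scheme collapses $W$ consecutive bandit rounds into a single ``super-round'' that drives the OCO policy $\vec\P_{\Y_\delta}$: I would partition $\T$ into $T/W$ blocks, let $\vec\P_{\Y_\delta}$ emit one fractional point $\y_\tau\in\Y_\delta$ per block $\tau$, and within each block spend a few rounds to \emph{explore} (perturb $\y_\tau$ and round via the negatively-correlated $\Xi$ of Lemma~\ref{theorem:rounding}) and the rest to \emph{exploit}. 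Second, a \emph{perturbation-based supergradient estimator} reconstructs the information $\vec\P_{\Y_\delta}$ needs from the scalar rewards $f_t(\x_t)$ alone, as in~\citet{hazan2014bandit,wan2023bandit}. Third, a \emph{domain-shrinking} step replaces $\Y=\conv\X$ by a contracted polytope $\Y_\delta$ so that perturbed and rounded iterates stay feasible.

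Next I would establish the three error terms separately. (i) \emph{Shrinking bias}: comparing the best bounded-path-length comparator in $\Y$ with the best one in $\Y_\delta$ costs at most $\BigO{\alpha\,\delta\,r^2 n\,T}$, which follows from monotonicity, the $L$-Lipschitzness of the surrogate, and the geometry of a partition-matroid polytope of rank $r$ over a ground set of size $n$; this yields the $2\alpha\delta r^2 n T$ term. (ii) \emph{Exploration cost}: since rewards are bounded, each block loses at most a constant on its exploration rounds, and with $T/W$ blocks this contributes the $\frac{T}{W}$ term. (iii) \emph{OCO regret transfer}: the estimated supergradients $\hat\g_\tau$ should be (approximately) unbiased for a supergradient of the concave surrogate $\cfunc$ at $\y_\tau$, so that $\vec\P_{\Y_\delta}$, run for $T/W$ rounds, enjoys its nominal dynamic regret $\mathrm{regret}_{T/W,P_T}(\vec\P_{\Y_\delta})$; because each super-round aggregates the reward of $W$ real rounds, this regret enters scaled by $W$.

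I would then assemble the bound by a block-wise application of the sandwich inequalities. The comparator's cumulative reward is upper bounded via Eq.~\eqref{eq:alpha_approx_upper} by the surrogate evaluated at the comparator, while the played (rounded) reward is lower bounded in expectation via Eq.~\eqref{eq:alpha_approx_lower} by $\alpha\,\cfunc(\y_\tau)$; the remaining gap between $\sum_\tau \cfunc$ at the comparator and at $(\y_\tau)_\tau$ is exactly what $\vec\P_{\Y_\delta}$ controls. Telescoping across blocks and adding the shrinking and exploration terms delivers the stated inequality.

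The main obstacle is step (iii): producing an (approximately) \emph{unbiased, low-variance} supergradient estimate from a single scalar observation per round \emph{without} a closed-form concave relaxation, since the result is stated for general submodular functions rather than the WTP class. Here I expect the negatively-correlated rounding $\Xi$ to play a dual role---both the action mechanism and the randomness driving estimation---and I would tune $W$ and $\delta$ to trade estimator variance (decreasing in $W$) against exploration cost and shrinking bias. A secondary subtlety is the dynamic comparator: I must verify that the block-aggregated comparator sequence inherits a controlled path length $P_T$, so that the dynamic OCO guarantee $\mathrm{regret}_{T/W,P_T}$ applies verbatim to the super-rounds.
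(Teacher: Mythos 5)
Your architecture matches the paper's proof at the skeleton level: blocking $\T$ into $T/W$ windows, a spherical one-point perturbation estimator in the spirit of \citet{hazan2014bandit,wan2023bandit}, a shrunken feasible region $\Y_\delta$, and the three error terms (exploration $\BigO{T/W}$, shrinkage $\BigO{\alpha\delta r^2 n T}$, OCO regret scaled by $W$) assemble exactly as you predict. But your assembly step fails at precisely the point you yourself flag. You invoke the sandwich inequalities of Asm.~\ref{assumption:sandwich} to pass between $f_t$ and a concave relaxation $\cfunc_t$; for general monotone submodular rewards no such relaxation is available, and the paper's proof of Theorem~\ref{theorem:bandit} never touches Asm.~\ref{assumption:sandwich}. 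The surrogate that the OCO policy actually learns is built from two \emph{nonconcave} objects: (a) the extension $\efunc_t(\y) = \E_{\x \sim \hat\Xi(\y)}\interval{f_t(\x)}$ induced by the partition-matroid-specific sampling rounding $\hat\Xi$ of Def.~\ref{def:ext_rounding_scheme}, which is exactly multilinear, monotone, and DR-submodular (Lemma~\ref{lemma:bandit_extention}) --- \emph{not} the negatively correlated swap/pipage rounding of Lemma~\ref{theorem:rounding} that you propose, whose output law depends on the chosen convex decomposition of $\y$ and does not yield a multilinear extension; negative correlation is simply not the operative property here --- and (b) the auxiliary function $\eFunc_t(\y) = \int^1_0 \frac{\exp(z-1)}{z}\efunc_t(z\cdot\y)\,dz$ of \citet{zhang2022stochastic}, whose gradient obeys the variational inequality $(\y-\x)\cdot\nabla\eFunc(\x) \geq (1-1/e)\efunc(\y) - \efunc(\x)$ (Lemma~\ref{lemma:aux}). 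That inequality is the substitute for the sandwich property: controlling the \emph{linear} regret $\sum_l \nabla\eFunc^{\avg}_l(\y^\delta_l)\cdot(\y^\delta_{l,\star}-\y^\delta_l)$ via any first-order OCO policy directly yields $(1-1/e)$-regret on $\efunc^{\avg}_l$, which suffices because integral comparators satisfy $f_t(\x_{\star,t}) = \efunc_t(\y_{\star,t})$ exactly.

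With this replacement, the ``main obstacle'' in your step (iii) dissolves rather than being resolved by your rounding: the target of estimation is $\nabla\eFunc^{\avg}_l(\y^\delta_l)$, and the algorithm samples the scaling $z_l$ with density proportional to $\exp(z-1)$ on $[0,1]$ precisely so that the one-point spherical estimate of the smoothed $\efunc$ at $z_l\y^\delta_l$, averaged over $z_l$ and over the exploration slot, is an \emph{exactly} unbiased estimate of $\delta\vec v_l\cdot\nabla\eFunc^{\avg}_l(\y^\delta_l)$ with second moment $\BigO{\alpha^2 n^4 L^2/\delta^2}$; multilinearity of $\efunc$ is what makes the underlying finite-difference identities exact, and this is why the estimator must be tied to $\hat\Xi$ and not to swap rounding. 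Two smaller corrections: exploration uses exactly \emph{one} timeslot $t_l$ drawn uniformly at random per block (uniformity is what makes the estimate unbiased for the block \emph{average} $\eFunc^{\avg}_l$, and it is also what caps the exploration loss at one bounded reward per block, giving $T/W$), and the comparator subtlety you defer is handled by definition rather than verification --- the benchmark class $\Lambda_\X(T,P_T,W)$ in the theorem statement is restricted to sequences constant within each block, so the block-level comparator inherits path length $P_T$ verbatim. Absent the multilinear-extension-plus-auxiliary-function machinery, your proof stalls at the pass from bandit values to a surrogate the OCO policy can provably learn.
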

\new{Thm.~\ref{theorem:bandit}} applies to general submodular functions, but is restricted to partition matroids.  \new{The theorem} also extends to dynamic and optimistic settings: we provide the full description in  Appendix~\fullversion{G of \citet{si2023online}}{\ref{appendix:extensions}}. Our analysis generalizes  \citet{wan2023bandit} in that (a) we show that a reduction can be performed to any OCO algorithm, rather than just FTRL, as well as  (b) in extending it to the dynamic and optimistic settings 
(see also Table~\fullversion{4}{\ref{table:compare2}} in \fullversion{\citet{si2023online}}{the Appendix}).

\section{Experiments}\label{sec:experiments}

\paragraph{Datasets and Problem Instances.} We consider five different OSM problems: two \old{are}  influence maximization problems, over the \IMZKC{} \cite{zachary1977information} and \IMEpinions{} 
\cite{epinions} graphs, respectively, a facility location problem over the \FLMovieLens{} dataset \cite{movielens}, a team formation, and a weighted coverage problem over synthetic datasets. Reward functions are generated over a finite horizon and optimized online over both uniform and partition matroid constraints. 
\new{Details are provided in Appendix~\fullversion{H of \citet{si2023online}}{\ref{appendix:experiments}}.}

\paragraph{Algorithms.} We implement the policy in Alg.~\ref{alg:saoco} (\Algorithm{RAOCO}), coupled with \Algorithm{OGA} (\RAOCOOGA{}) and \Algorithm{OMA} (\RAOCOOMA{}) as OCO policies. As competitors, we also implement the \emph{fixed share forecaster} (\Algorithm{FSF$^\star$}) policy by~\citet{matsuoka2021tracking}, which only applies to uniform matroids, and the 
\Algorithm{TabularGreedy} policy by~\citet{onlineassignement}, as well as a \Algorithm{Random} algorithm, which selects a decision u.a.r.~from the bases of $\X$.
\new{Details and hyperparameters explored are reported in Appendix~\fullversion{H of \citet{si2023online}}{\ref{appendix:experiments}}}. Our code is publicly available.\footnote{\url{https://github.com/neu-spiral/OSMviaOCO}}

\paragraph{Metrics.} For each setting, we compute $F^\star = \max_{\y\in \Y}\frac{1}{T}\sum_{t=1}^T\cfunc(\y)$, the value of the optimal fractional solution in hindsight, assuming rewards are replaced by their concave relaxations. 
Note that this involves solving a convex optimization problem, and overestimates the (intractable) offline optimal, i.e., $F^\star\geq \max_{\x\in \X}\frac{1}{T}\sum_{t=1}^Tf(\x)$. 
For each online policy, we compute both the integral and fractional average cumulative reward at different timeslots $t$, given by $\bar{F}_{\X} =\frac{1}{t}\sum_{s=1}^tf(\x_s)$,  $\bar{F}_{\Y} =\frac{1}{t}\sum_{s=1}^t\cfunc(\y_s)$, respectively.
We repeat each experiment for $5$ different random seeds, and use this to report standard deviations.


\paragraph{OSM Policy Comparison.} A comparison between the two  versions of \Algorithm{RAOCO} (\Algorithm{OGA} and \Algorithm{OMA}) with the three competitors is shown in Table~\ref{tab:res}. 
We observe that both \Algorithm{OGA} and \Algorithm{OMA} significantly outperform competitors, with the exception of \texttt{MovieLens}, where \Algorithm{TabularGreedy} does better. \Algorithm{OMA} is almost always better than \Algorithm{OGA}. Most importantly, we significantly outperform both \Algorithm{TabularGreedy} and \Algorithm{FSF$^\star$} w.r.t.~running time, being $2.15-250$ times faster (see Appendix~\fullversion{H of \citet{si2023online}}{\ref{appendix:experiments}}).

\paragraph{Dynamic Regret and  Optimistic Learning.} Fig.~\ref{fig:dynamicregret} shows the performance of the different policies in a dynamic scenario. All the policies have similar performance in the stationary setting, however in the non-stationary setting only \Algorithm{OGA}, \Algorithm{OMA}, and \Algorithm{FSF$^\star$} show robustness. We further experiment with an optimistic setting in Fig.~\fullversion{4 in \citet{si2023online}}{\ref{fig:optimistic_figure}} under a non-stationary setting where we provide additional information about future rewards to the optimistic policies, which can leverage this information and yield better results (see Appendix~\fullversion{H of \citet{si2023online}}{\ref{appendix:experiments}}).

\section{Conclusion}
\label{sec:conclusions}
We provide a reduction of  online maximization of a large class of submodular
functions to online convex optimization and  show that our reduction extends to many different versions of the online learning problem.
There are many possible extensions. As our framework does not directly apply to general submodular functions, it would be interesting to derive a  general method to construct the concave relaxations which are the building block of our reduction in Sec.~\ref{s:osm_genral}. It is also meaningful to investigate the applicability of our reduction to monotone submodular functions with bounded curvature~\cite{feldman2021guess}, and non-monotone submodular functions~\cite{krause2014submodular}.

\section{Acknowledgements}
We gratefully acknowledge support from the NSF (grants 1750539, 2112471, 1813406, and 1908510).

\bibliographystyle{unsrtnat}
\bibliography{references}

\clearpage
\onecolumn
\appendix
\addcontentsline{toc}{section}{Appendices}
\renewcommand{\thesubsection}{\Alph{section}.\Roman{subsection}}

\begin{center}
{\Large  \textbf{Appendices}}
\end{center}

\begin{algorithm}[htbp]
\caption{Online Mirror Ascent: $\mathrm{OMA}_{ \eta,\Phi, \Y}(\y_t, \cfunc_t)$\label{alg:oma}}
    \begin{algorithmic}[1]
        \Require Learning rate $\eta \in \reals_{\geq 0}$, mirror map $\Phi: \mathcal{D} \to \reals$, decision set $\Y$, decision $\y_t$, reward function $\cfunc_t: \Y \to \reals$, 
\State
		$\hat{\vec{y}}_t\gets \nabla \Phi (\vec{y}_t)$\Comment{{Map the primal point to a dual point}}
		\State
		$\hat{\vec{z}}_{t+1}   \gets \hat{\vec{y}}_t + \eta \vec g_t$\Comment{{ Take supergradient step in the dual space ($\vec g_t \in \partial \cfunc_t(\y_t)$)}}
		\State
		$\vec{z}_{t+1}   \gets \left(\nabla \Phi\right)^{-1}(\hat{\vec{z}}_{t+1})$\Comment{{ Map the dual point to a primal point}}
		\State
		$\vec{y}_{t+1}\gets \Pi_{\Y \cap \D}^\Phi(\vec{z}_{t+1})$\Comment{{ Project new point onto feasible region $\mathcal{X}$}}
        \State \Return $\y_{t+1}$
    \end{algorithmic}
\end{algorithm}

\begin{figure}[htbp]
    \centering
    \includegraphics[width=.5\linewidth]{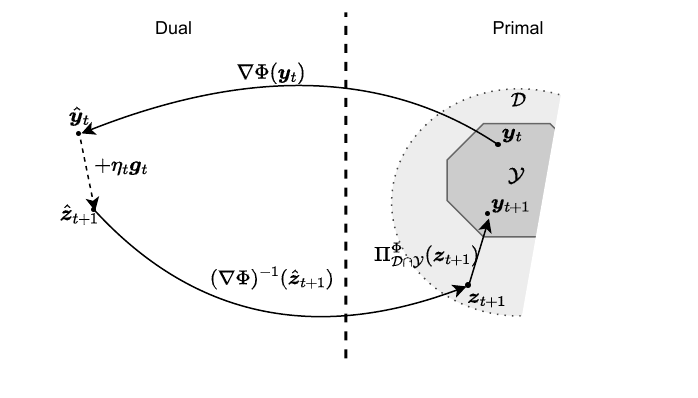}
    \caption{Online Mirror Ascent in Alg.~\ref{alg:oma}.\label{fig:oma}}
    \label{fig:my_label}
\end{figure}

\begin{algorithm}[t]
\caption{Optimistic Online Mirror Ascent: $\mathrm{OOMA}_{ \eta,\Phi, \Y}(\y_t, \cfunc_t, \pi_{t+1})$}\label{alg:ooma}
    \begin{algorithmic}[1]
        \Require Learning rate $\vec \eta \in \reals_{\geq 0}$, mirror map $\Phi: \mathcal{D} \to \reals$, decision set $\Y$, decision $\y_t$, reward function $\cfunc_t: \Y \to \reals$, prediction function $\pi_{t+1}$
        \State $\y^{\pi}_{t+1} \gets \mathrm{OMA}_{ \eta,\Phi, \Y}(\y_t, \cfunc_t)$ \Comment{Adapt secondary decision according to the reward $f_t$ and previous primary decision $\y_t$}
        \State $\y_{t+1} \gets \mathrm{OMA}_{ \eta,\Phi, \Y}(\y^{\pi}_t, \pi_{t+1})$ \Comment{Adapt primary decision according to the precision $\pi_{t+1}$ of the future reward $\cfunc_{t+1}$ and previous secondary decision $\py_t$}
    \State \Return $\y_{t+1}$ 
    \end{algorithmic}
\end{algorithm}

\section{Online Convex Optimization}
\label{appendix:oco}
In this section, we  describe  two powerful families of OCO policies: online mirror ascent (OMA, in Appendix~\ref{s:desc_omd}) and follow the regularized leader (FTRL, in Appendix~\ref{s:desc_ftrl}). Online gradient ascent (OGA) can be described as a special case of OMA, under a Euclidean mirror map. Our exposition below also simultaneously covers  the guarantees that come with OMA in the dynamic and optimistic settings; both follow from Theorem~\ref{thm:general_regret_dynamic+optimistic} (see Appendix~\ref{appendix:extensions} for the derivation). Results for the static setting (e.g., Theorem~\ref{thm:oco}) also follow from Theorem~\ref{thm:general_regret_dynamic+optimistic} by appropriately restricting the corresponding parameters (see Appendix~\ref{app:dervofthm:oco}).

The setting, algorithms, and analysis we present here are a combination of the ones provided by \citet{rakhlin2013online}, \citet{zinkevich2003}, and \citet{dynamicregret2020}. In particular, \citet{rakhlin2013online} present OMA in the optimistic setting, while \citet{zinkevich2003} and \citet{dynamicregret2020} studied OGA and OMA, respectively, in the dynamic setting. In the ``master theorem'' (Theorem~\ref{thm:general_regret_dynamic+optimistic}) that we present below, we combine the proof techniques by all three, to give an algorithm that comes with simultaneous guarantees in all  settings (static or dynamic, optimistic or non-optimistic, and combinations thereof).

\subsection{Online Mirror Ascent} 
\label{s:desc_omd}
Online Mirror Ascent~\cite[Sec. 5.3]{bubeck2011introduction} is the online version of the mirror ascent (MA) algorithm~\cite{beck2003mirror} for convex optimization of a fixed, known function; OMA is a class of policies that generalize OGA. The main premise behind mirror ascent is that variables and supergradients live in two distinct spaces: the primal space, for variables, and the dual space, for supergradients. The two are linked via a function known as a mirror map. Contrary to standard supergradient ascent, updates using the gradient occur on the dual space; the mirror map is used to invert this update to a change on the primal variables.  For several constrained optimization problems of
interest, mirror ascent leads to faster convergence compared to gradient ascent~\cite[Sec.~4.3]{bubeck2011introduction}. 

The primal and dual spaces are $\mathbb{R}^n$. To disambiguate between the two, we denote primal points by $\y, \z \in \mathbb{R}^n$   and dual points by  $\hat{\y},\hat{\z} \in \mathbb{R}^n$, respectively.   Formally, OMD is parameterized by (1) a learning rate $\eta \in \mathbb{R}_+$, and (2) a  differentiable map $\Phi : \mathcal{D} \to \mathbb{R}$ where $\D$ is its domain. 
\begin{assumption}\label{asm:mirror_map}
    The map $\Phi: \D \to \reals$ satisfies the following properties:
    \begin{enumerate}
        \item The domain $\D$ of $\Phi$ is a convex and open set such that the decision set $\Y$ is included in its closure, i.e., $\Y \subseteq  \mathrm{closure}(\mathcal{D})$, and their intersection is nonempty $\Y \cap \D \neq \emptyset$. 
        \item The map $\Phi$ is $\rho$ strongly-convex over $\D$ w.r.t. a norm $\norm{\,\cdot\,}$ and differentiable over $\D$.
        \item The map $\nabla \Phi(\x): \D \to \reals^n$ is surjective.
        \item The gradient of $\Phi$ diverges on the boundary of $\D$, i.e., $\lim_{\x \to \partial \D} \norm{\nabla \Phi(\x)} = +\infty$, where  $\partial \D = \mathrm{closure}(\D) \setminus \D$.

    \end{enumerate}
\end{assumption}
When the map $\Phi$ satisfies the above properties is said to be a \emph{mirror map}. 

OMA takes the form described in Alg.~\ref{alg:oma}. Given $\eta$ and $\Phi$, an OMA iteration proceeds as follows: after receiving the reward $\cfunc_t(\y_t)$ and the reward function $\cfunc_t(\,\cdot\,)$ is revealed, the current state $\y_t$ is first mapped from the primal  to the dual space via: 
\begin{align}\label{eq:mapstep}\hat{\y}_{t} = \nabla \Phi(\y_t).\end{align} Then, a regular supergradient ascent step is performed in the \emph{dual space} to obtain an updated dual point: \begin{align}\hat{\z}_{t+1}   = \hat{\y}_t +  \eta \g_t, \end{align} where $\g_t \in \partial \cfunc_t(\y_t)$ is the supergradient of $\cfunc_t$ at point $\y_t$. This updated dual point is then mapped back to the primal space using the inverse  of mapping $\nabla \Phi$, i.e.:
\begin{align}\label{eq:invstep}\z_{t+1} = \left(\nabla \Phi\right)^{-1}\!(\hat{\z}_{t+1}).\end{align}
The resulting primal point $\z_{t+1}$ may lie outside the decision set $\Y$. To obtain the final feasible point $\y_{t+1}\in \Y$, a projection is made using the Bregman divergence associated with the mirror map~$\Phi$. The final result becomes:
\begin{align}\label{eq:bregprojstep}
\y_{t+1}= \Pi_{\Y \cap \D}^\Phi(\z_{t+1}),
\end{align}
where $\Pi_{\Y \cap \D}^\Phi(\,\cdot\,)$ is the Bregman projection~\eqref{eq:Bregman_projection} associated to $\Phi$ onto the set $\Y \cap \D$.  The steps described Eqs.~\eqref{eq:mapstep}--\eqref{eq:bregprojstep} define OMA. The  illustration in Fig.~\ref{fig:oma} summarizes these steps. Note that $\vec{y}_{t+1}$ is a function of $\{(\vec{y}_{t}, \cfunc_{t})\} \subseteq \{(\y_{s}, \cfunc_s)\}^{t}_{s=1} $, hence OMA is indeed an OCO policy. 

We conclude our description of OMA by providing a formal definition of supergradients and the Bregman projection.
\begin{definition}
    The superdifferential of a concave function $\cfunc : \reals^n \to \reals$ at point $\y \in \reals^n$ is defined as the set $\partial \cfunc (\y) \subset \reals^n$ of supergradients s.t.  every $\vec g(\y) \in \partial \cfunc(\y)$ satisfies the following inequality
\begin{align}
\label{eq:def_supergradient}
    \cfunc(\y) - \cfunc(\y') \geq \vec g(\y) \cdot (\y - \y' ), \quad \text{for every $\y' \in \reals^n$.}
\end{align}
When the function $\cfunc$ is differentiable, then by definition $\partial \cfunc(\y) = \{\nabla \cfunc(\y)\}$, where $\nabla \cfunc(\y)$ is the gradient of $\cfunc$ at point $\y \in \reals^n$.
\end{definition}
\begin{definition}
The Bregman projection~\cite{kiwiel1997proximal} associated to a map $\Phi$ onto a convex set $\mathcal S$ is  denoted by $\Pi^{\Phi}_{\mathcal{S}}:\mathbb{R}^n\to\mathcal{S}$, is defined as
    \begin{align}
        \Pi^{\Phi}_{\mathcal{S}}(\y') &= \underset{\y \in {\mathcal{S}}}{\arg\min}\,D_\Phi (\y,\y'), & \text{where}& &  D_\Phi(\y,\y') = \Phi(\y) - \Phi(\y') - \nabla {\Phi(\y')} \cdot (\y - \y')\label{eq:Bregman_projection}.
    \end{align}
\end{definition}
OMA considers that the decision set is bounded w.r.t. the Bregman divergence associated to $\Phi$. Formally, we assume the following. 
\begin{assumption}\label{asm:bounded_bregman_divergence}
Consider a decision set $\Y$ and a mirror map $\Phi$ and $\y_1  \in \Y$. The Bregman divergence $D_\Phi$  associated to $\Phi$ is bounded over $\Y$, i.e., there exits $D< \infty$ s.t.
\begin{align}
{D_\Phi(\y, \y_1)} &\leq D^2,  &\text{for any $\y \in \Y$.} \label{eq:bounded_diam_mirror_map}
\end{align}   
\end{assumption}
\begin{assumption}\label{asm:bounded_gradient_mirror_map}
    Consider a decision set $\Y$ and a mirror map $\Phi$. The dual norm $\norm{\,\cdot\,}_\star$ of the gradient of the mirror map $\Phi$ is bounded by $L_\Phi  \in \reals_{\geq 0}$, i.e., 
    \begin{align}
        \norm{\nabla \Phi (\y)}_\star &\leq L_\Phi, &\text{ for every $\y \in \Y$.} \label{eq:bounded_grad_map}
    \end{align}
\end{assumption}

Note that Asm.~\ref{asm:bounded_gradient_mirror_map} is typically not required to establish static regret guarantees~\eqref{eq:general_regret} (see, e.g., \citet{hazan2016introduction}). However,  as we  remark in Appendix~\ref{appendix:setups_omd}, policies with sublinear static regret do not necessarily perform well in dynamic settings, e.g., this the case for OMA configured with the standard negative entropy mirror map~\cite{bubeck2011introduction}.  In Theorem~\ref{thm:general_regret_dynamic+optimistic}, we show that Asm.~\ref{asm:bounded_gradient_mirror_map} is sufficient to establish sublinear dynamic regret against a comparator sequence (see Sec.~\ref{sec:extensions}) with path length $P_T >0$.

\paragraph{Regret Guarantee.} Here, we provide a general regret bound for \emph{Optimistic} OMA in Alg.~\ref{alg:ooma} configured with a general mirror map $\Phi$; note that the same algorithm applies to both the optimistic and dynamic settings, and so does the formal guarantee that we provide below. The non-optimistic regret bounds follow from the optimistic regret bounds by setting the predictions to $0$ ($\pi_{t} = 0$), and Alg.~\ref{alg:ooma} reduces to Alg.~\ref{alg:oma}. Static (i.e., non-dynamic) bounds follow by setting $P_T=0$.


\begin{theorem}\label{thm:general_regret_dynamic+optimistic}
     Consider an OCO setting with a convex decision space $\Y$,  concave rewards $\cfunc \in \hat{\F}$ satisfying Asm.~\ref{asm:oco} from a set $\hat \F$ selected by an adversary, and predictions $\pi_t: \Y \to \reals$ for $t \in \T$.   The regret of OOMA in Alg.~\ref{alg:ooma} configured with $\Phi$ satisfying Asms.~\ref{asm:mirror_map}--\ref{asm:bounded_gradient_mirror_map} against a sequence  of decision $\parentheses{\y_t^\star}_{t \in \T} \in  \Lambda_{\Y} (T,P_T) $ is upper bounded as follows
    \begin{align}
         \mathrm{regret}_{T, P_T} (\vec \P_\Y) & \leq {\eta {\sum^T_{t =1} \norm{\vec g_t - \pg_t}^2_\star} }/{2\rho} + \parentheses{D^2 + 2 L_\Phi P_T}/{\eta}. 
    \end{align}
When  the learning rate $\eta_\star  = \sqrt{{2\rho \parentheses{D^2 + 2 L_\Phi P_T}} / {\sum_{t \in \T} \norm{\vec g_t - \pg_t}^2_\star}}$ is selected, the following holds
\begin{align}
      \mathrm{regret}_{T, P_T} (\vec \P_\Y) \leq \sqrt{{2/\rho\parentheses{D^2 + 2 L_\Phi P_T} \sum^T_{t=1} \norm{\vec g_t - \pg_t}^2_\star}}
\end{align}
\end{theorem}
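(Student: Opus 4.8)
The plan is to pass to linearized losses and then run the standard telescoping argument for optimistic mirror ascent, handling two extra complications: the comparator $(\y_t^\star)_{t\in\T}$ is time varying, and the two \texttt{OMA} calls of Alg.~\ref{alg:ooma} maintain two coupled sequences, the played $(\y_t)$ and the auxiliary $(\py_t)$. First I would use concavity of each $\cfunc_t$ (Asm.~\ref{asm:oco}): for a supergradient $\g_t\in\partial\cfunc_t(\y_t)$, definition~\eqref{eq:def_supergradient} gives $\cfunc_t(\y_t^\star)-\cfunc_t(\y_t)\leq \g_t\cdot(\y_t^\star-\y_t)$, so it suffices to upper bound the linearized dynamic regret $\sum_{t=1}^T\g_t\cdot(\y_t^\star-\y_t)$.

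The core is a one-round inequality obtained from two elementary facts about the mirror map: the three-point identity $D_\Phi(\vec a,\vec c)-D_\Phi(\vec a,\vec b)-D_\Phi(\vec b,\vec c)=(\nabla\Phi(\vec b)-\nabla\Phi(\vec c))\cdot(\vec a-\vec b)$, and the first-order optimality of the Bregman projection~\eqref{eq:Bregman_projection}, which for $\vec y^{+}=\Pi^\Phi_{\Y\cap\D}(\vec z)$ yields $(\nabla\Phi(\vec z)-\nabla\Phi(\vec y^{+}))\cdot(\vec a-\vec y^{+})\le 0$ for every $\vec a\in\Y$. Applying these to each of the two updates of Alg.~\ref{alg:ooma} --- the realized-gradient step that produces $\py_{t+1}$ and the prediction step (using $\pg_t\in\partial\pi_t$) that produces the played $\y_t$ --- and combining the two resulting inequalities so that the intermediate divergence cancels, I would obtain, for every feasible comparator,
\[
\g_t\cdot(\y_t^\star-\y_t)\leq \frac1\eta\big(D_\Phi(\y_t^\star,\py_t)-D_\Phi(\y_t^\star,\py_{t+1})\big)+(\g_t-\pg_t)\cdot(\py_{t+1}-\y_t)-\frac1\eta\big(D_\Phi(\py_{t+1},\y_t)+D_\Phi(\y_t,\py_t)\big).
\]
The middle prediction-error term is then controlled by Fenchel--Young, $(\g_t-\pg_t)\cdot(\py_{t+1}-\y_t)\le \frac{\eta}{2\rho}\norm{\g_t-\pg_t}_\star^2+\frac{\rho}{2\eta}\norm{\py_{t+1}-\y_t}^2$, and the quadratic remainder is absorbed by $-\frac1\eta D_\Phi(\py_{t+1},\y_t)$ using the $\rho$-strong convexity of $\Phi$ from Asm.~\ref{asm:mirror_map}; the leftover $-\frac1\eta D_\Phi(\y_t,\py_t)$ is nonpositive and discarded. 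This leaves a per-round bound whose only positive non-telescoping part is $\frac{\eta}{2\rho}\norm{\g_t-\pg_t}_\star^2$.

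Summing over $t\in\T$ with comparator $\y_t^\star$, the $\frac{\eta}{2\rho}$ terms add up to the first summand of the claimed bound, and it remains to control $\frac1\eta\sum_t\big(D_\Phi(\y_t^\star,\py_t)-D_\Phi(\y_t^\star,\py_{t+1})\big)$; this is the step I expect to be the main obstacle, since the comparator changes with $t$ and the sum no longer telescopes cleanly. I would re-index (Abel summation) to isolate the nonpositive boundary term $-\frac1\eta D_\Phi(\y_T^\star,\py_{T+1})$, the term $\frac1\eta D_\Phi(\y_1^\star,\py_1)$, and the residual $\frac1\eta\sum_t\big(D_\Phi(\y_{t+1}^\star,\py_{t+1})-D_\Phi(\y_t^\star,\py_{t+1})\big)$. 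Expanding each residual difference through the definition of $D_\Phi$ and bounding $\Phi(\y_{t+1}^\star)-\Phi(\y_t^\star)$ by convexity turns it into $(\nabla\Phi(\y_{t+1}^\star)-\nabla\Phi(\py_{t+1}))\cdot(\y_{t+1}^\star-\y_t^\star)$, which H\"older's inequality and the mirror-map gradient bound of Asm.~\ref{asm:bounded_gradient_mirror_map} control by $2L_\Phi\norm{\y_{t+1}^\star-\y_t^\star}$. Since $(\y_t^\star)_{t\in\T}\in\Lambda_\Y(T,P_T)$ has path length at most $P_T$, these terms sum to at most $2L_\Phi P_T$, and $D_\Phi(\y_1^\star,\py_1)\le D^2$ by Asm.~\ref{asm:bounded_bregman_divergence}; together they give the $\frac1\eta(D^2+2L_\Phi P_T)$ summand.

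Combining the two pieces establishes the first inequality. The second then follows by minimizing the right-hand side $\frac{\eta}{2\rho}\sum_t\norm{\g_t-\pg_t}_\star^2+\frac1\eta(D^2+2L_\Phi P_T)$ over $\eta>0$: the minimizer is $\eta_\star=\sqrt{2\rho(D^2+2L_\Phi P_T)/\sum_t\norm{\g_t-\pg_t}_\star^2}$, and substituting it collapses the bound to $\sqrt{(2/\rho)(D^2+2L_\Phi P_T)\sum_t\norm{\g_t-\pg_t}_\star^2}$, as claimed. The static ($P_T=0$) and non-optimistic ($\pi_t\equiv 0$, hence $\pg_t=\vec 0$) specializations are then immediate.
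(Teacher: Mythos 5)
Your proposal is correct and takes essentially the same route as the paper's proof: the paper also reduces to the linearized dynamic regret via concavity, invokes the per-round optimistic mirror-ascent inequality (citing Lemma~1 of Rakhlin and Sridharan, which you instead re-derive inline from the three-point identity and Bregman-projection optimality), splits the prediction-error term by the identical AM-GM/Fenchel--Young step with the quadratic absorbed by $\rho$-strong convexity, and handles the time-varying comparator by the same re-indexing of the Bregman telescope, bounding each residual $D_\Phi(\y^\star_{t+1},\cdot)-D_\Phi(\y^\star_t,\cdot)$ via H\"older and Asm.~\ref{asm:bounded_gradient_mirror_map} by $2L_\Phi\norm{\y^\star_{t+1}-\y^\star_t}$, summing to $2L_\Phi P_T$, with $D_\Phi(\y^\star_1,\py_0)\leq D^2$ from Asm.~\ref{asm:bounded_bregman_divergence}. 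The only difference is presentational (you prove the cited one-step lemma from scratch), and your optimal learning rate $\eta_\star$ and final bound match the paper's exactly.
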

\begin{proof}
    We adapt the proof of~\citet[Lemma~1]{rakhlin2013optimization} to support dynamic regret. 
\begin{align}
    &\sum_{t \in [T]}\inner{\g_t}{ \y^\star_t - \y_t} \\
    &\leq \sum_{t \in [T]} \norm{\g_t - \pg_t}_\star \norm{\y_t - \py_t}_\star + \frac{1}{\eta} \parentheses{D_\Phi(\y^\star_t, \py_{t-1}) - D_\Phi(\y^\star_t, \py_{t}) - \frac{\rho}{2} \norm{\py_t - \y_t}^2 }\label{eq:proof_dynamic_optimistic_p0}
\end{align}
In order to upper bound the r.h.s. term in the above inequality. We first bound the term $\sum_{t \in \T}\norm{\g_t - \pg_t}_\star \norm{\y_t - \py_t}$, and then, secondly, we bound the term $\sum_{t\in [T]}\parentheses{D_\Phi(\y^\star_t, \py_{t-1}) - D_\Phi(\y^\star_t, \py_{t})}$.

First, note that it holds that $ a b = \inf_{\eta'} \frac{\eta'}{2} a^2 + \frac{1}{2\eta'} b^2$, so we have
\begin{align}
    \norm{\g_t - \pg_t}_\star \norm{\y_t - \py_t} &=\inf_{\eta'} \set{\frac{\eta'}{2}   \norm{\g_t - \pg_t}^2_\star  + \frac{1}{2\eta'} \norm{\y_t - \py_t}^2 }  \\
    &\leq \frac{\eta}{2\rho}   \norm{\g_t - \pg_t}^2_\star  + \frac{\rho}{2\eta} \norm{\py_{t}- \y_t}^2 & &\text{(for some $\eta /\rho$)}.\label{eq:proof_dynamic_optimistic_p1}
\end{align}

Second, we employ the Cauchy-Schwarz inequality and the  bounds on $D_\Phi$ and $\nabla\Phi$ from the assumptions, so we have the following:
\begin{align}
   &\sum_{t\in [T]}\parentheses{D_\Phi(\y^\star_t, \py_{t-1}) - D_\Phi(\y^\star_t, \py_{t})}  \\
   &\leq  \underbrace{D_\Phi(\y^\star_1,\y^\pi_0)}_{\text{$\leq D^2$}} +  \sum_{t\in [T]}\parentheses{D_\Phi(\y^\star_{t+1}, \py_{t}) - D_\Phi(\y^\star_t, \py_{t})} \\
   & \leq D^2 + \sum_{t \in [T]} \underbrace{\inner{\nabla \Phi(\y^\star_{t+1}) - \nabla \Phi(\y^\pi_{t})}{ \y^\star_{t+1} - \y^\star_t}}_{\text{Cauchy-Schwarz's Ineq.}}  - \underbrace{D_\Phi(\y^\star_{t} , \y^\star_{t+1})}_{\text{$\geq 0$}}  \\
   &\leq D^2 + \sum_{t \in [T]} \underbrace{\norm{\nabla \Phi(\y^\star_{t+1}) - \nabla \Phi(\y^\pi_{t})}_\star}_{\text{$\leq L_{\Phi}$~\eqref{eq:bounded_grad_map}}}\norm{ \y^\star_{t+1} - \y^\star_t} \\
   & \leq  D^2 + 2 L_{\Phi} \sum_{t \in [T]} \norm{ \y^\star_{t+1} - \y^\star_t}  .\label{eq:proof_dynamic_optimistic_p2}
\end{align}

Combine Eqs.~\eqref{eq:proof_dynamic_optimistic_p0}--\eqref{eq:proof_dynamic_optimistic_p2} to obtain the final upper bound.
\begin{align}
\sum_{t\in \T} \cfunc_t(\y^\star_t) - \cfunc_t(\y_t)& \leq \sum_{t \in [T]}{\g_t} \cdot { \y^\star_t - \y_t } \\
&\leq \frac{1}{\eta}\Big({D^2 + 2 L_{\Phi} \underbrace{\sum_{t \in [T]}  \norm{ \y^\star_{t+1} - \y^\star_t} }_{\text{$P_T$}} }\Big) + \frac{\eta }{2\rho} \sum_{t \in \T}  \norm{\g_t - \pg_t}^2_\star\\
&= \frac{1}{\eta}\Big({D^2 + 2 L_{\Phi} P_T}\Big) + \frac{\eta }{2\rho} \sum_{t \in \T}  \norm{\g_t - \pg_t}^2_\star.
\end{align}

The learning rate $\eta_\star  = \sqrt{{2\rho \parentheses{D^2 + 2 L_\Phi P_T}}/{\sum_{t \in \T} \norm{\vec g_t - \pg_t}^2_\star}}$  yields the tightest upper bound given by the theorem.  
\end{proof}

Note that, to obtain the tightest upper bound on the regret,  we assume that when setting the learning rate we have access to quantities $\rho$, $D$, $L_\Phi$, $P_T$, as well as quantity  $l_T \triangleq \sqrt{\sum_{t \in \T} \norm{\vec g_t - \pg_t}^2_\star}$. The first three are problem-specific bounds and are usually easy to determine given the problem instance (see also Appendix~\ref{appendix:setups_omd}). Parameter $P_T$ is a usual problem input, as it bounds the power of the (dynamic) adversary; assuming that it is known is standard (see, e.g., \citet{zinkevich2003,besbes2015non}). Guarantees however can be still provided even if it is not a priori known using the same meta-learning approach as the one discussed next (see, e.g., \citet{zhao2020dynamic}). 

Quantity $l_T$ can readily be bounded by $L \sqrt{T} $ in the non-optimistic setting, by Assumption~\ref{asm:oco}. 
However, to obtain a tighter bound, but also when operating in the optimistic setting, this parameter can be learned through a meta-algorithm (see also \citet{hazan2016introduction,shalev2012online, mcmahan2017survey}). In short, one can execute in parallel multiple instances of OMA algorithms configured for different $l_T$ with an appropriately chosen range and resolution,
 and  frame an expert problem to learn the best expert (policy). The meta problem is a standard prediction with expert advice problem, and can be tackled with well understood parameter-free and computationally efficient learning algorithms~\cite{hazan2016introduction,shalev2012online, mcmahan2017survey}. Alternatively, one could consider adaptive online algorithms~\cite{shalev2012online} that adapt to the observed gradients with a slight degradation in performance. 

Note that in our implementation and experiments, we perform a grid search for $\eta$ values (see Sec.~\ref{sec:experiments}). We also implemented the meta-learning algorithm for the optimistic setting (see Sec.~\ref{appendix:experiments}), also with a grid search for $\eta$. 


\subsection{Follow the Regularized Leader}
\begin{algorithm}[t]
\caption{Follow the Regularized Leader: $\mathrm{FTRL}_{ \eta,R, \Y}(({\y_s})^t_{s=1}, ({\cfunc_s})^{t}_{s=1})$\label{alg:ftrl}}
    \begin{algorithmic}[1]
        \Require Learning rate $\eta \in \reals_{\geq 0}$, regularization function $R: \X \to \reals$, decision set $\Y$, decisions $\y_s$, reward functions $\cfunc_s: \Y \to \reals$ for  $s \in [t]$ 
        \State $\vec g_s \gets \partial_{\y}  \cfunc_s(\y_s)$  
\State $\vec{y}_{t+1}\gets \argmin_{\y\in \Y } \, \,\eta\, \sum^t_{s=1}\g_s \cdot  \y + R(\y)$ \Comment{{ Project new point onto feasible region $\mathcal{X}$}}
        \State \Return $\y_{t+1}$
    \end{algorithmic}
\end{algorithm}

\label{s:desc_ftrl}
It is well known that \emph{follow the leader} (FTL) policy, also known as \emph{fictitious play} in economics,  fails to provide sublinear regret guarantee~\cite{hazan2016introduction}. The FTL policy selects na\"ively the decision that would minimize the past costs, i.e., $ \x_{t+1} \in \argmin_{\y \in \Y}\sum^{t}_{s=1} f_s(\y)$. This policy fails against an adversary; an adversary can drive the policy to change severely its decision from one timeslot to another. The policy can be modified to exhibit more \emph{stability}~\cite{mcmahan2017survey} through regularization, by the introduction of regularization function $R: \X \to \reals$. The update rule is then modified to the following: 
\begin{align}
    \x_{t+1} = \argmin_{\y \in \Y}\,\, \eta \sum^{t}_{s=1} \g_s \cdot \y + R(\y).
\end{align}
We assume that the
regularization function satisfies the  following properties:
\begin{assumption}\label{asm:regularizer}
    The map $R : \X \to \reals$  is 1-strongly convex w.r.t. a norm $\norm{\,\cdot\,}$, smooth, and twice differentiable.
\end{assumption}
The pseudocode of the algorithm is provided in Alg.~\ref{alg:ftrl}. 
\paragraph{Regret Guarantee.} The regret of the FTRL policy is given by:
\begin{theorem}(\cite[Theorem~5.2.]{hazan2016introduction})
     Consider an OCO setting with a convex decision space $\Y$,  concave rewards $\cfunc \in \hat{\F}$ satisfying Asm.~\ref{asm:oco} from a set $\hat \F$ selected by an adversary. The regret of FTRL in Alg.~\ref{alg:ftrl} against a fixed decision $\y_\star \in \Y$ is upper bounded as follows
\begin{align}
\sum^T_{t=1} \cfunc_t (\y^\star) - \sum^T_{t=1} \cfunc_t (\y_t) \leq 2\eta \sum_{t \in [T]} \norm{\g_t}^2_\star + \frac{R(\y^\star) - R(\y_1)}{\eta}.
\end{align}
When  the learning rate rate $\eta_\star  = \sqrt{\frac{2\parentheses{R(\y^\star) - R(\y_1)}}{\sum_{t \in [T]} \norm{\g_t}^2_\star}} $ is selected, the following holds:
\begin{align}
    \sum^T_{t=1} \cfunc_t (\y^\star) - \sum^T_{t=1} \cfunc_t (\y_t)  \leq 2\sqrt{2 \parentheses{R(\y^\star) - R(\y_1)} \sum_{t \in [T]} \norm{\g_t}^2_\star}.
\end{align}
\end{theorem}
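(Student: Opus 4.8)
The plan is to follow the standard analysis of FTRL (e.g.,~\cite[Ch.~5]{hazan2016introduction}), adapted to the concave/maximization convention used here. First I would \emph{linearize} the losses: since each $\cfunc_t$ is concave and $\g_t \in \partial \cfunc_t(\y_t)$, the supergradient inequality~\eqref{eq:def_supergradient} (applied at $\y_t$ against $\y^\star$) gives $\cfunc_t(\y^\star) - \cfunc_t(\y_t) \leq \g_t \cdot \parentheses{\y^\star - \y_t}$ for every fixed comparator $\y^\star \in \Y$. Summing over $t \in \T$, it therefore suffices to bound the regret of the \emph{linearized} problem, $\sum_{t \in \T} \g_t \cdot \parentheses{\y^\star - \y_t}$, where $\y_{t+1}$ is the (ascent form of the) FTRL iterate of Alg.~\ref{alg:ftrl}. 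This reduction is the only place where concavity of the rewards is invoked; everything afterward is a purely deterministic statement about the linear gains $\y \mapsto \g_t \cdot \y$ regularized by $R$.

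Next I would apply the \emph{be-the-leader} (BTL) argument. Treating $-R/\eta$ as a fictitious round-$0$ gain and $\y_1 = \argmin_{\y \in \Y} R(\y)$ as its leader, an induction on $T$ establishes $\sum_{t=1}^{T} \g_t \cdot \y_{t+1} - \tfrac{1}{\eta} R(\y_1) \geq \sum_{t=1}^{T} \g_t \cdot \y^\star - \tfrac{1}{\eta} R(\y^\star)$ — i.e., the leader who also sees the current gain never underperforms any fixed $\y^\star$. Rearranging and inserting $\y_{t+1}$ converts the comparator gap into a sum of \emph{stability} terms plus a regularizer difference: $\sum_{t \in \T} \g_t \cdot \parentheses{\y^\star - \y_t} \leq \sum_{t \in \T} \g_t \cdot \parentheses{\y_{t+1} - \y_t} + \tfrac{1}{\eta}\parentheses{R(\y^\star) - R(\y_1)}$.

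It then remains to bound each per-round stability term $\g_t \cdot \parentheses{\y_{t+1} - \y_t}$. Here I would use that the cumulative regularized objective $\Phi_t(\y) \triangleq R(\y) - \eta\sum_{s=1}^{t} \g_s\cdot\y$ inherits $1$-strong convexity from $R$ (Asm.~\ref{asm:regularizer}), since it differs from $R$ by a linear term. Comparing the minimizers $\y_t,\y_{t+1}$ of $\Phi_{t-1},\Phi_t$ via the first-order (variational) optimality conditions and strong convexity yields the iterate-stability estimate $\norm{\y_{t+1} - \y_t} \leq \eta \norm{\g_t}_\star$; Hölder's inequality for the dual-norm pairing then gives $\g_t \cdot \parentheses{\y_{t+1} - \y_t} \leq \eta \norm{\g_t}_\star^2$, and summing together with the telescoping accounting produces the claimed $2\eta\sum_{t \in \T}\norm{\g_t}_\star^2$ term. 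Combined with the previous display, this establishes the first inequality. The second follows by substituting the stated $\eta_\star$, i.e., by balancing the two terms $2\eta A + B/\eta$ with $A = \sum_{t\in\T}\norm{\g_t}_\star^2$ and $B = R(\y^\star)-R(\y_1)$ through AM--GM.

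I expect the main obstacle to be the stability step: establishing $\norm{\y_{t+1}-\y_t}\leq\eta\norm{\g_t}_\star$ when $\Y$ is constrained, so the minimizers may lie on the boundary and stationarity fails — this forces one to use the variational inequalities for \emph{both} constrained minimizers rather than plain first-order conditions. A minor secondary point is bookkeeping the sign/direction convention so that the supergradient-ascent form used above is consistent with the $\argmin$ written in Alg.~\ref{alg:ftrl}.
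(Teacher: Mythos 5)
The paper does not prove this theorem at all---it imports it verbatim as Theorem~5.2 of \citet{hazan2016introduction}---and your reconstruction (supergradient linearization, the follow-the-leader/be-the-leader induction with $R/\eta$ as a fictitious round-zero cost, and the iterate-stability bound $\norm{\y_{t+1}-\y_t}\leq\eta\norm{\g_t}_\star$ obtained from $1$-strong convexity of the regularized cumulative objective via variational inequalities for both constrained minimizers) is precisely the standard argument behind that citation, and it is correct, including your flagged point that the constrained case requires the variational inequalities rather than plain stationarity. One detail worth knowing: your stability step actually yields the tighter first bound $\eta\sum_{t\in\T}\norm{\g_t}_\star^2$ (constant $1$, not $2$), and this is fortunate, because, writing $A=\sum_{t\in\T}\norm{\g_t}_\star^2$ and $B=R(\y^\star)-R(\y_1)$, the stated rate $\eta_\star=\sqrt{2B/A}$ does \emph{not} minimize $2\eta A+B/\eta$ (the minimizer is $\sqrt{B/(2A)}$, and substituting $\eta_\star$ into the constant-$2$ bound gives $\tfrac{5}{2}\sqrt{2AB}>2\sqrt{2AB}$), whereas substituting it into your constant-$1$ bound gives $\tfrac{3}{2}\sqrt{2AB}\leq 2\sqrt{2AB}$, so both displayed inequalities in the statement follow from your argument as written.
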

\subsection{Derivation of Theorem~\ref{thm:oco}}\label{app:dervofthm:oco}
Assumption~\ref{asm:oco} implies that the supergradients of the reward functions are bounded. Note that when the reward functions are Lipschitz continuous their supergradients are bounded at any point~\cite{shalev2012online}. Moreover, the diameter of the constraint set is bounded by a constant. Thus, Theorem~\ref{thm:general_regret_dynamic+optimistic}  implies that in the case of OMA in Alg.~\ref{alg:oma} configured with an appropriately selected mirror map satisfying  Asms.~\ref{asm:mirror_map}--\ref{asm:bounded_gradient_mirror_map}  (e.g. OGA by selecting $\Phi(\x) = \frac{1}{2} \norm{\x}_2^2$) the regret bound is given by
\begin{align}
   \mathrm{regret}_{T} (\vec \P_\Y) \leq D \sqrt{{2/\rho \sum^T_{t=1} \norm{\vec g_t }^2_\star}} \leq D L {\frac{2 T}{\rho} }  = \BigO{\sqrt T}.
\end{align}
Furthermore, the regret of FTRL for an appropriately selected regularizer satisfying Asm~\ref{asm:regularizer} (e.g., Euclidean regularizer $R(\x) = \frac{1}{2} \norm{\x}_2^2$) in Alg.~\ref{alg:ftrl} is given by
\begin{align}
    \mathrm{regret}_{T} (\vec \P_\Y) \leq  2\sqrt{2 \parentheses{R(\y^\star) - R(\y_1)} \sum_{t \in [T]} \norm{\g_t}^2_\star} \leq  2 L \sqrt{2 \parentheses{R(\y^\star) - R(\y_1)} T} = \BigO{\sqrt{T}}.    
\end{align}

\section{Weighted Threshold Potential Functions}
\label{appendix:WTP}

\subsection{Submodularity and Monotonicity} 
 Consider a threshold potential $\Psi_{ {b,\vec w, S}}(A)$ defined in Eq.~\eqref{eq:budget-additive} for $A \subseteq V = [n]$.  We omit the parameters ${b, \vec{w}, S}$ when they are clear from the context. We split our proof into two parts, by first proving submodularity and then monotonicity.

\emph{Submodularity.}  Given a finite set $V$, the function is said to be submodular if and only if for every $A \subseteq V$ and $i,j \in V \setminus A$ it holds
    \begin{align}
       f(A \cup \set{i, j}) - f(A \cup \set{j})  &\leq  f(A \cup \set{i}) - f(A).
    \end{align}   
We have the following 
\begin{align}
    \Psi(A \cup \set{i}) - \Psi(A)&= \begin{cases}
        0 &\text{for $\sum_{k \in S \cap A } w_k \geq b$},\\
        \min\set{b - \sum_{k \in S \cap A } w_k , w_i} &\text{otherwise.}
    \end{cases}
\label{eq:p1_sub}
\end{align}
It also holds
\begin{align}
    \Psi(A \cup \set{i, j}) - \Psi(A \cup \set{j})&= \begin{cases}
        0 &\text{for $\sum_{k \in S \cap A } w_k \geq b - w_j$},\\
        \min\set{b - \sum_{k \in S \cap A } w_k - w_j, w_i} &\text{otherwise.}
    \end{cases}
\label{eq:p2_sub}
\end{align}
Combine Eqs.~\eqref{eq:p1_sub} and \eqref{eq:p2_sub} to obtain
\begin{align}
    \Psi(A \cup \set{i, j}) - \Psi(A \cup \set{j}) \leq  \Psi(A \cup \set{i}) - \Psi(A). 
\end{align}
Thus, we proved that the set function $g$ is submodular. The function of interest  in Eq.~\eqref{eq:wtp} can be expressed for every $S \subseteq V$as
\begin{align}
    f(A) = \sum_{\ell \in C} c_{\ell} \Psi_{b_{\ell}, \vec {w}_{\ell}, S_{\ell}} (A).\label{eq:mono_p1}
\end{align}
The function $f$ is a weighted sum with positive weights of submodular function; therefore, from the definition $f$ is also submodular.

\emph{Monotonicity.} Consider the sets $A \subseteq B \subseteq V$. We have the following 
\begin{align}
    \Psi(B) = \min\set{b, \sum_{k \in S \cap B} w_k} = \min\set{b, \sum_{k \in S \cap A} w_k +\sum_{k \in S \cap \parentheses{B \setminus A}} w_k}. 
\end{align}
Note that $\sum_{k \in S \cap \parentheses{B \setminus A} } w_k \geq 0$. Thus, it holds
\begin{align}
    \Psi(B) \geq \min\set{b, \sum_{k \in S \cap A} w_k} = \Psi(A).\label{eq:mono_p2}
\end{align}
We conclude the proof by noting that $f$ is a weighted sum with positive weights of monotone functions~\eqref{eq:mono_p2} as expressed in Eq.~\eqref{eq:mono_p1}.

\subsection{Applications} \label{sec:examples}

\subsubsection{Weighted Coverage Functions}\label{appendix:wcf}

 For $V =[n]$, let $\{S_\ell
\}_{\ell \in C}$ be a collection of subsets of $V$, and $c_\ell \in \reals_{\geq 0}$ be a weight associated to each subset $S_\ell$. A weighted set coverage function $f: \{0, 1\}^n \rightarrow \reals_{\geq 0}$ receives as input an $\x\in\{0,1\}^n$ representing a subset of $V$ and accrues reward $c_\ell$ if $S_\ell$ is ``covered'' by an element in $\x$: formally,  \begin{align} f(\x) = \sum_{\ell \in C} c_\ell \left(1 - \prod_{j \in 
    S_\ell} (1-x_j)\right).\label{eq:wcf}\end{align} 
  \begin{observation}
    The weighted coverage function $f$ 
   belongs to the class of WTP functions described by Eq.~\eqref{eq:wtp}; in particular:
  \begin{align}f(\x) &= \sum_{\ell \in C} c_{\ell} \Psi_{1, \vec 1, S_\ell}(\x) &&\text{for every $\x \in \set{0,1}^n$.}\end{align}
  \end{observation}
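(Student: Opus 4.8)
The plan is to prove the identity termwise: since both $f$ and the claimed WTP representation are weighted sums over the same index set $C$ with the same nonnegative coefficients $c_\ell$, it suffices to show that for each fixed $\ell \in C$ and every $\x \in \set{0,1}^n$,
\begin{align}
    1 - \prod_{j \in S_\ell} (1 - x_j) = \Psi_{1, \vec 1, S_\ell}(\x) = \min\set{1, \textstyle\sum_{j \in S_\ell} x_j},
\end{align}
where the second equality is just the definition of the threshold potential in Eq.~\eqref{eq:budget-additive} with $b = 1$, $\vec w = \vec 1$, and subset $S_\ell$. Summing this identity against $c_\ell$ over $\ell \in C$ then yields the stated equality $f(\x) = \sum_{\ell \in C} c_\ell \Psi_{1, \vec 1, S_\ell}(\x)$.

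The core of the argument is a two-case analysis driven by whether the support of $\x$ meets $S_\ell$. First I would observe that because $\x \in \set{0,1}^n$, each factor $1 - x_j \in \set{0,1}$, so the product $\prod_{j \in S_\ell}(1 - x_j)$ equals $1$ exactly when $x_j = 0$ for all $j \in S_\ell$, i.e.\ when $\supp{\x} \cap S_\ell = \emptyset$, and equals $0$ otherwise. Hence the left-hand side is $0$ when $\supp{\x} \cap S_\ell = \emptyset$ and $1$ otherwise. On the right-hand side, $\sum_{j \in S_\ell} x_j$ is a sum of $\set{0,1}$-valued terms, so it is $0$ precisely when $\supp{\x} \cap S_\ell = \emptyset$ (giving $\min\set{1, 0} = 0$) and is at least $1$ otherwise (giving $\min\set{1, \sum_{j \in S_\ell} x_j} = 1$). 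The two sides therefore agree in both cases, establishing the termwise identity.

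There is no real obstacle here: the only point worth flagging is that the equality holds only at integral points $\x \in \set{0,1}^n$ and not on the fractional relaxation $\Y = \conv{\X}$, since over $\set{0,1}^n$ the multilinear expression $1 - \prod_{j}(1-x_j)$ and the concave min-expression coincide, whereas they differ for fractional inputs. This is exactly why the observation is phrased for $\x \in \set{0,1}^n$, and it is consistent with Eq.~\eqref{eq:budget-additive-concave}, where $\Psi_{1,\vec 1, S_\ell}$ serves as the concave relaxation on $\Y$ rather than the multilinear form. Thus the computation confirms that weighted coverage functions are the WTP functions obtained by taking all thresholds $b_\ell = 1$ and all weights $\vec w_\ell = \vec 1$, placing them strictly inside the WTP class.
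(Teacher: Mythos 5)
Your proof is correct and takes the same route as the paper: the paper reduces the observation to the single identity $1-\prod_{i\in S}(1-x_i) = \min\set{1,\sum_{i\in S} x_i}$ for $\x\in\set{0,1}^n$ and asserts it without further argument, while you supply the explicit two-case verification (support meets $S_\ell$ or not) before summing against $c_\ell$. Your added remark that the identity fails on fractional points and that this is why the concave relaxation in Eq.~\eqref{eq:budget-additive-concave} is a genuine relaxation is accurate and consistent with the paper.
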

 This follows from the simple fact that
\begin{align}
     1-\prod_{i\in S}(1-x_i) = \min\set{1,\sum_{i\in S} x_i}, \quad \text{for all}~\x\in\{0,1\}^n, S\subseteq V.
\end{align}

Classic problems such as 
 influence maximization~\cite{kempe2003maximizing} and facility location~\cite{krause2014submodular,frieze1974cost} can all be expressed via the maximization of weighted coverage functions subject to matroid constraints (see also~\cite{karimi2017stochastic}). Another such example is cache networks~\cite{ioannidis2016adaptive}, which have also been studied in the online setting~\cite{li2021online}. 
Our online setting naturally captures different epidemics over the same graph in influence maximization and dynamic tasks in facility location~\cite{karimi2017stochastic}, as well as time-variant content item requests in cache networks~\cite{li2021online}. We describe each of these settings in more detail below.

\paragraph{Influence Maximization \cite{kempe2003maximizing, karimi2017stochastic}.} In the classic paper by Kempe et al.~\cite{kempe2003maximizing}, a network is represented as a graph $G(V,E)$ and a probabilistic model like Independent Cascades~\cite{goldenberg2001talk} or Linear Threshold~\cite{granovetter1978threshold} is used to emulate the propagation of an epidemic over $G$, starting from a seed set $S\subseteq V$. The objective function is then the total number of nodes in $V$ reached by the epidemic. This is hard to compute so, in practice, the objective is generated by sampling different \emph{reachability graphs} from the underlying probabilistic epidemic model, and estimating the objective in expectation. This estimation is indeed a WC function: in particular, $C=V$, and there is one set $S_i$ for every node $i$, comprising the nodes that would infect $i$ if they are placed in the seed set, and $c_i=1$.

The influence maximization setting maps very well to the online submodular maximization problem we consider, particularly in the bandit setting. Different reachability graphs/coverage functions correspond to different instances the epidemic propagation model at each timeslot or, more broadly, different epidemics, potentially with time-varying statistics. The decision maker gets to pick the seeds subject to, e.g., a cardinality/uniform matroid constraint, and observes either the entire reachability graph (in the full information setting), or just the final number of nodes infected (in the bandit setting).

\paragraph{Facility Location \cite{krause2014submodular, frieze1974cost, karimi2017stochastic}.} In the classic facility location problem, we are given a complete weighted bipartite graph $G(V\cup V',E)$, where $V=[n]$, $E= V\times V'$, with non-negative weights $w_{v,v'}\geq 0$, $v\in V$,$v\in V'$. 
The decision maker selects a subset $S \subset V$ and each $v'\in V$ responds by selecting the $v \in S$ with the highest weight $w_{v,v'}.$ The goal is to
maximize the average weight of these selected edges, i.e. to maximize
\begin{align}
f(S) = \frac{1}{|V'|}
\sum_{v'\in V}\max_{v\in S} w_{v,v'}
\end{align}
given some matroid constraints on $S$. Set 
$V$ can be considered a set of facilities and $V'$ a set of customers or tasks to be served, while $w_{v,v'}$ is the utility accrued if facility $v$ serves tasks $v'$. Karimi et al.~\cite{karimi2017stochastic} note that this is also an instance of the \emph{Exemplar-based Clustering} problem, in which
$V = V'$ is a set of objects and $w_{v,v}$ is the similarity (or inverted distance) between objects $v$ and $v$,
and one tries to find a ``summary'' subset $S$ of exemplars/centroids highly similar to all elements in $V$. Karimi et al.~also show that this objective is a weighted coverage function, of the form \eqref{eq:wcf}, because, for $\x\in\{0,1\}^n$ the characteristic vector of $S$, we have 
\begin{align}
\begin{split}
 \max_{v\in S} w_{v,v'} &=  \sum_{i=1}^{n-1} \left(w_{\pi_j,v'}-w_{\pi_{i+1},v'}\right)\left(1-\prod_{j=1}^i(1- x_{\pi_j})\right) +  w_{\pi_n,v'}\left(1-\prod_{j=1}^n (1-x_{\pi_n})\right)\nonumber
\end{split}
\end{align}
where $\vec \pi:V\to V$ is a permutation such that:
$w_{\pi_1,v'}\geq w_{\pi_2,v'} \geq \ldots \geq w_{\pi_n,v'}.$

This also maps well to our online setting. In the context of facility location, we wish to choose facilities per timeslot in the presence of time-varying, online arrivals of customers/tasks, and in the context of exemplar clustering we wish to choose summaries/exemplars that are good across a potentially time-varying sequence of sets of points. Both fit very well in the full-information setting, as all distances/similarities are immediately observable/computable whenever the tasks/points are revealed.

\paragraph{Cache Networks \cite{ioannidis2016adaptive,li2021online}.} Ioannidis and Yeh \cite{ioannidis2016adaptive} consider a network of caches, each capable of storing at most a constant number of content items of equal size. Item requests are routed over paths in the graph, and are satisfied upon hitting the first cache that contains the requested item. The objective is to determine a mapping of items to caches that minimizes the aggregate retrieval cost. 

Formally, the network is represented as a directed graph $G(V,E)$, with $V=[n]$,
 Let $\catalog$ be the set of content items available.
Each node $v\in V$ has capacity $\capacity_v\in \naturals$:  exactly $\capacity_v$ content items in $\catalog$ can be stored in $v$. 
Let 
$x_{vi}  \in \{0,1\}$ 
the variable indicating whether $v\in V$ stores  item $i \in \catalog$, and let $\x =[x_{vi}]_{v\in V,i \in \catalog}\in \{0,1\}^{|V|\times |\catalog|}$. 
Note that the capacity constraints imply that
$\textstyle\sum_{i\in \catalog} x_{vi} \leq c_v$, for all $v\in V.$ This set of constraints defines a partition matroid \cite{ioannidis2016adaptive}.
Each item $i$ in the catalog $\catalog$ is associated with a fixed set of \emph{designated servers} $\servers_i\subseteq V$, that always store $i$; w.l.o.g., it is assumed that memory used is outside the cache capacity (as it is fixed and outside the optimization problem). A request \emph{request} $r$ is a pair $(i,p)$ where $i\in \catalog$ is the item requested, and $\ppath$ is the path traversed to serve this request. The terminal node in the path is always a designated source node for $i$, i.e., if $|p|=K$, 
$p_K \in S_i.$ An incoming request $(i,p)$ is routed over the network $G$  following path $p$, until it reaches a cache that stores $i$. The objective is to find an allocation $\x$ that maximizes the \emph{caching gain}, i.e., the reduction in routing costs compared to serving the request from the final designated server. Given request $r=(i,p)$, this is captured by the objective:
\begin{align}
f(\x) = \sum_{k=1}^{|p|-1} \!\!w_{p_{k+1}p_k}\left(1\!-\! \prod_{k'=1}^k (1\!-\!x_{p_{k'}i}) \right),
\label{gain}
\end{align}
where $w_{u,v}\geq 0$ is the cost of transfering an item over edge $(u,v)$. This objective is clearly of the form in Eq.~\eqref{eq:wcf}.


This problem also maps very well to the online submodular maximization setting: different functions, varying through time, naturally correspond to different requests $r=(i,p)$ for different contents, traversing different paths across the network. Again, the full information (rather than bandit) setting is natural here, as the entire function is revealed once the item and the path are revealed; both are needed at execution time, to identify what is requested and how the request should be routed. The online submodular maximization setting was recently studied by Li et al.~\cite{li2021online}, who used a distributed variant of the algorithm by Streeter et al.~\cite{streeter2008online} to obtain $\alpha$-regret  guarantees (see~Table~\ref{table:compare} for a comparison of \cite{streeter2008online} with RAOCO).

\subsubsection{Similarity Caching} Similarity caching~\cite{sisalem2022ascent} is an extension of the so-called paging problem~\cite{king72,flajolet92}.
The objective is to efficiently use local storage memory and provide approximate (similar) answers to a query in order to reduce retrieval costs from some distant server. Formally,
consider a catalog of files  represented by the set $V = [n]$, and an augmented catalog represented by the set $\hat{V} = [2n]$.  Item $i \in V$ is the local copy of a remote file $i+n \in \hat V$. Let $c(q,i) \in \reals$ be the cost of responding to query $q \in V$ with a similar file $i\in \hat V$: this can be determined, e.g., by the similarity or distance between the files, in some appropriately defined feature space. Si Salem et al.~\cite{sisalem2022ascent} show that the reward received under this setting is given by 
\begin{align}
  f(\x) = \sum_{l=1}^{C_q } \parentheses{c(q, \pi_{q,l+1}) - c(q, \pi_{q,l})} \min \left\{k - \card{S_l}, \sum_{j \in [l] \setminus S_l} x_{\pi_{q,j}} \right\},
\end{align}
for some $C_q \in \naturals$, subset $S_l \subset V$ where $\card {S_l}< k$, and permutation $\vec\pi_q: \hat V \to \hat V$ satisfying $c(q, \pi_{q,l+1}) - c(q, \pi_{q,l}) \geq 0 $ for $l \in [C_q]$. This is indeed a WTP function:
\begin{observation}
The similarity caching objective belongs to the class of WTP functions defined by Eq.~\eqref{eq:wtp}. In particular, $f(\x) = \sum^{C_q}_{l=1}  \parentheses{c(q, \pi_{q,l+1}) - c(q, \pi_{q,l})}  \Psi_{ k - \card{S_l},\vec 1, S_l} (\x)$. 
\end{observation}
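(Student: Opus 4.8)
The plan is to prove the claim by a direct, term-by-term identification: I would show that each summand of the similarity-caching reward of Si~Salem et al.~\cite{sisalem2022ascent} is itself a threshold potential in the sense of Eq.~\eqref{eq:budget-additive}, with unit weights, so that the entire objective already has the weighted form of Eq.~\eqref{eq:wtp}. There is essentially nothing to ``prove'' beyond matching the parameters $(b,\vec w, S)$ of a threshold potential against each min-term, and checking that the resulting scalar coefficients are non-negative (as required for membership in the WTP class).

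Concretely, I would fix a query $q$ and, for each $l \in [C_q]$, introduce the support set $T_l \triangleq \set{\pi_{q,j} : j \in [l]\setminus S_l} \subseteq \hat V$, i.e.\ the image under the permutation $\vec\pi_q$ of the index set $[l]\setminus S_l$. Instantiating Eq.~\eqref{eq:budget-additive} with threshold $b = k - \card{S_l}$, weights $\vec w = \vec 1$, and support $T_l$ gives
\[
\Psi_{k-\card{S_l},\,\vec 1,\,T_l}(\x) = \min\set{k - \card{S_l},\; \sum_{i \in T_l} x_i} = \min\set{k - \card{S_l},\; \sum_{j \in [l]\setminus S_l} x_{\pi_{q,j}}},
\]
where the second equality is merely a re-indexing of the inner sum along $\vec\pi_q$. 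This is exactly the $l$-th min-term in the similarity-caching reward, so the $S_l$ appearing in the subscript of the observation should be read as this coordinate set $T_l$.

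Next I would verify the coefficients: the WTP form in Eq.~\eqref{eq:wtp} requires non-negative weights $c_\ell \geq 0$, and here the coefficient of the $l$-th term is $c(q,\pi_{q,l+1}) - c(q,\pi_{q,l})$, which is non-negative precisely by the ordering property $c(q,\pi_{q,l+1}) - c(q,\pi_{q,l}) \geq 0$ that $\vec\pi_q$ is chosen to satisfy. Substituting the identification above then yields $f(\x) = \sum_{l=1}^{C_q}\parentheses{c(q,\pi_{q,l+1}) - c(q,\pi_{q,l})}\,\Psi_{k-\card{S_l},\,\vec 1,\,T_l}(\x)$, matching Eq.~\eqref{eq:wtp} with $C = [C_q]$, coefficients $c_l = c(q,\pi_{q,l+1}) - c(q,\pi_{q,l})$, thresholds $b_l = k - \card{S_l}$, weight vectors $\vec 1$, and supports $T_l$. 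Since this reduces to comparing definitions, there is no substantive obstacle; the only point needing care is the permutation bookkeeping, namely that the reward sums over the \emph{indices} $j \in [l]\setminus S_l$ of the permuted coordinates $x_{\pi_{q,j}}$ whereas a threshold potential sums over a \emph{support set} of coordinates, which is resolved by making the image set $T_l$ explicit.
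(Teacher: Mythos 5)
Your proposal is correct and takes essentially the same route as the paper, which treats the observation as a direct parameter identification: each min-term is $\Psi_{b,\vec w, S}$ with $b = k-\card{S_l}$ and $\vec w = \vec 1$, and the coefficients $c(q,\pi_{q,l+1}) - c(q,\pi_{q,l}) \geq 0$ by the stipulated ordering of $\vec\pi_q$, so the whole sum matches Eq.~\eqref{eq:wtp}. Your explicit introduction of the image set $T_l = \set{\pi_{q,j} : j \in [l]\setminus S_l}$ simply makes precise the support-set bookkeeping that the paper leaves implicit when writing $\Psi_{k-\card{S_l},\vec 1, S_l}$.
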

Note that the above functions \emph{cannot} be described as weighted coverage functions (Eq.~\eqref{eq:wcf}), and serve as an example of how WTP generalizes the WCF class (see also the discussion in Section~\ref{s:generality_wtp} below). Moreover, this problem also fits the full information OSM scenario very well: queries $q$ arrive online and, once revealed, the full reward function is determined as well.

\subsubsection{Quadratic Set Functions} \label{appendix: quadratic}
Motivated by applications such as 
price optimization on the basis of demand forecasting~\cite{ito2016large}, scheduling~\cite{Skutella2001}, detection
in multiple-input multiple-output channels in communication systems~\cite{MIMO} and team formation~\cite{lappas2009finding,boon2003team}, we consider quadratic monotone submodular 
functions $f: \{0, 1\}^n \rightarrow \reals_{\geq 0}$ of the form 
\begin{align}
f(\x) = \vec{h}^\intercal \x + \frac{1}{2}\x^\intercal \vec{H}\x,
\end{align}
where, w.l.o.g., $\H$ is symmetric and has zeros in the diagonal. Note that $f$ is monotone and submodular if and only if : (a) $\H \leq \vec{0}_{n \times n}$, and (b) $\h + \H \x \geq \vec{0}_{n \times 1}$. Thinking of this as the performance of a team, the linear/modular part of the function captures the strength of each team member, and the quadratic part captures the complementarity/overlaps between their skills.

\begin{observation}
The quadratic function $f$  belongs to the class of WTP functions~\eqref{eq:wtp}, i.e.,
 $f(\x) =  \Psi_{\infty,  \h + \H \vec{1}, [n]}(x) +
    \sum_{i=1}^{n-1} \sum_{j=i+1}^n
        (-H_{i,j}) \Psi_{1, \vec{1}, \set{i,j}}(\x)
$, for every $\x\in\set{0,1}^n$.
\end{observation}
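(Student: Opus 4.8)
The plan is to verify the claimed decomposition by evaluating each threshold potential on the right-hand side at an arbitrary $\x \in \set{0,1}^n$ and matching the two sides term by term; the identity is purely algebraic, so conditions (a) $\H \leq \vec{0}_{n\times n}$ and (b) $\h + \H\x \geq \vec{0}_{n\times 1}$ enter only to certify that the expression is a bona fide WTP of the form \eqref{eq:wtp}. First I would dispose of the unbounded potential: since its threshold is $b=\infty$, we have $\Psi_{\infty,\h+\H\vec{1},[n]}(\x) = \sum_{j\in[n]}(\h+\H\vec{1})_j x_j = (\h+\H\vec{1})^\intercal\x$, a purely linear term. For this to be a legitimate WTP constituent its weight vector must be non-negative, which is exactly condition (b) evaluated at $\x=\vec{1}$, i.e.\ $\h+\H\vec{1}\geq\vec{0}$. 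Likewise each coefficient $-H_{ij}$ multiplying a pairwise potential is non-negative by condition (a), so the whole expression is a genuine non-negative combination of threshold potentials.

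The crux is the binary identity for the pairwise potentials. For $\x\in\set{0,1}^n$ one checks the four cases $(x_i,x_j)\in\set{0,1}^2$ to obtain
\[
\Psi_{1,\vec{1},\set{i,j}}(\x) = \min\set{1,x_i+x_j} = x_i + x_j - x_i x_j.
\]
Substituting this into the double sum converts the pairwise potentials into a quadratic plus a linear remainder. The quadratic part collects, using that $\H$ is symmetric with zero diagonal, to $\sum_{i<j}H_{ij}x_i x_j = \tfrac{1}{2}\x^\intercal\H\x$, which is the desired quadratic term of $f$.

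The remaining step is the linear bookkeeping, which I expect to be the only place demanding care. The terms $-\sum_{i<j}H_{ij}(x_i+x_j)$ reorganize, again by symmetry of $\H$ and the zero diagonal, into $-\sum_k\bigl(\sum_j H_{kj}\bigr)x_k = -(\H\vec{1})^\intercal\x$. This linear remainder exactly cancels the $(\H\vec{1})^\intercal\x$ contributed by the unbounded potential, leaving $\h^\intercal\x + \tfrac{1}{2}\x^\intercal\H\x = f(\x)$, as claimed. The entire argument is a finite algebraic verification; the only subtlety is confirming that the spurious linear terms produced by $\min\set{1,x_i+x_j}$ and by the $\H\vec{1}$ weight annihilate one another, which is precisely why $\h+\H\vec{1}$ (rather than $\h$ alone) must appear as the weight of the unbounded potential.
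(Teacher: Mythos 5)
Your proof is correct and takes essentially the same route as the paper's: both hinge on the binary identity $\min\set{1,x_i+x_j}=x_i+x_j-x_ix_j$ together with the symmetry and zero diagonal of $\H$, with you merely expanding the right-hand side and cancelling the $(\H\vec{1})^\intercal\x$ terms, whereas the paper rewrites $\x^\intercal\H\x$ and substitutes into $f$. Your additional check that conditions (a) and (b) make the coefficients $-H_{ij}$ and the weight vector $\h+\H\vec{1}$ non-negative (the latter being (b) at $\x=\vec{1}$) is a sound and welcome certification that the decomposition is a bona fide WTP, but it does not alter the argument.
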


To see this, observe that the following equality holds for all $x, y \in \{0,1\}$
    \begin{align}
    \label{lemma: quadratic}
        1 - (1-x)(1-y) = x + y - xy = \min \{1, x+y\}
    \end{align}
    We have that
    \begin{align*}
        \x^\intercal \H \x 
            &= \sum_{i=1}^n \sum_{j=1}^n x_i x_j H_{i,j} \overset{\eqref{lemma: quadratic}}{=} \sum_{i=1}^n \sum_{j=1}^n (x_i + x_j - \min \{1, x_i + x_j\}) H_{i,j} \\
            &= 2\x^\intercal \H \vec{1} + 2\sum_{i=1}^{n-1} \sum_{j=i+1}^n 
            (-H_{i,j})\min \{1, x_i + x_j\}
    \end{align*}
    The last equality holds because $\H$ is symmetric and hollow. Substituting the above expression for $\x^\intercal \H \x$ into 
    $
        f(\x) = \vec{h}^\intercal \x + \frac{1}{2}\x^\intercal \vec{H}\x
    $
    yields the observation. 

This also matches  the OSM setting well. For example, different team performance functions can correspond to the score/quality attained by the team at a sequence of progressively revealed tasks. In the bandit setting, only the final performance score is revealed, while in the full information setting, this is decomposed into the individual contributions and their pairwise interactions.

Finally, note that, by definition, the degree $\Delta_f$ of quadratic set functions is at most 2. As a result, the $\alpha$-regret we attain in this setting has $\alpha=\frac{3}{4}$, by Lemma~\ref{proposition:sandwich2}.

\subsection{Generality of the WTP Class}
\label{s:generality_wtp}
\paragraph{SCMM $\subset$ WTP.} Stobbe and Krause~\cite{stobbe2010efficient} defined the class of sums of concave functions composed of non-negative modular functions plus an arbitrary modular function (SCMM) taking the form
\begin{align}
    f(\x) = \vec c \cdot \x + \sum_{j\in J} \phi_j(\vec w_j\cdot\x) \label{eq:scmm}
\end{align}
where $\vec c$, $\vec w_j \in \reals^n$ and $\vec 0\leq \vec w_j \leq \vec1$ (element-wise) and $\phi_j:[0,\vec w_j \cdot \vec 1]\to \reals $ are arbitrary concave functions. They then show that the SCMM class can be expressed in terms of threshold potentials.
\begin{proposition} (Stobbe and Krause~\cite{stobbe2010efficient}) For $\phi \in C^2([0,M])$,
    \begin{align}\label{eq:threshold_relation}
        \phi(v) = \phi(v) + \phi'(M) v -\int^M_{0} \min \set{v, y} \phi''(y) dy\quad \text{for $v \in [0,M]$}.
    \end{align}
\end{proposition}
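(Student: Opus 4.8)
The plan is to verify the identity by a direct computation of the integral term on the right-hand side; the statement is, at bottom, just an integral representation of $\phi$ (a form of Taylor's theorem with integral remainder) rewritten so as to expose the threshold-potential structure. Before starting I would read the right-hand constant as $\phi(0)$ rather than the printed $\phi(v)$: if it were literally $\phi(v)$ the two copies would cancel and the asserted equality would force $\phi(0)=\phi(v)$ for all $v$, which fails for non-constant $\phi$. With the constant read as $\phi(0)$ the identity is correct, and its shape ($\phi(0)$ plus a modular term plus an integral of $\min\{v,y\}$ terms) is exactly what is needed downstream.

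First I would fix $v \in [0,M]$ and split the domain of integration at the breakpoint $y=v$, using $\min\{v,y\}=y$ on $[0,v]$ and $\min\{v,y\}=v$ on $[v,M]$:
\[
\int_0^M \min\{v,y\}\,\phi''(y)\,dy = \int_0^v y\,\phi''(y)\,dy + v\int_v^M \phi''(y)\,dy.
\]
Next I would evaluate each piece. The second integral is immediate from the fundamental theorem of calculus: $v\int_v^M \phi''(y)\,dy = v\bigl(\phi'(M)-\phi'(v)\bigr)$. For the first, integration by parts (taking the antiderivative $\phi'$ of $\phi''$) gives $\int_0^v y\,\phi''(y)\,dy = v\,\phi'(v) - \int_0^v \phi'(y)\,dy = v\,\phi'(v) - \phi(v) + \phi(0)$. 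Adding the two pieces, the $v\,\phi'(v)$ terms cancel, yielding
\[
\int_0^M \min\{v,y\}\,\phi''(y)\,dy = \phi(0) + v\,\phi'(M) - \phi(v),
\]
and rearranging gives $\phi(v) = \phi(0) + \phi'(M)\,v - \int_0^M \min\{v,y\}\,\phi''(y)\,dy$, which is the claim.

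There is no genuine obstacle here: the only points requiring care are (i) splitting $\min\{v,y\}$ at the correct breakpoint and (ii) tracking the boundary terms in the integration by parts, and both are routine since $\phi\in C^2([0,M])$ guarantees $\phi''$ is continuous, hence the integrals exist and the fundamental theorem applies on each subinterval. The content of the identity is conceptual rather than technical: when $v = \vec w_j\cdot\x$ one has $\min\{v,y\}=\Psi_{y,\vec w_j,[n]}(\x)$, and concavity of $\phi$ makes $-\phi''(y)\ge 0$, so the formula expresses each concave-composed-with-modular summand of the SCMM function in Eq.~\eqref{eq:scmm} as a constant plus a modular term plus a nonnegatively weighted (integral) combination of threshold potentials. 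Summing over $j$ and folding the modular parts into a single threshold potential with $b=\infty$ then exhibits any SCMM function in the WTP form of Eq.~\eqref{eq:wtp}, establishing SCMM $\subset$ WTP.
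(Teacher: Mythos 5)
Your proof is correct, and your reading of the statement is the right one: the first right-hand term as printed, $\phi(v)$, is a transcription typo for $\phi(0)$ --- taken literally the identity would collapse to $\phi(v)=\phi(0)$ for all $v\in[0,M]$, since, as your own computation shows, $\phi'(M)\,v-\int_0^M \min\{v,y\}\,\phi''(y)\,dy=\phi(v)-\phi(0)$. Note that the paper offers no proof of this proposition at all; it is imported verbatim (typo included) from Stobbe and Krause, so there is no in-paper argument to compare against, and a direct verification like yours is exactly what is called for. Your derivation is sound at every step: splitting the integral at the breakpoint $y=v$, using the fundamental theorem of calculus to get $v\int_v^M\phi''(y)\,dy=v\bigl(\phi'(M)-\phi'(v)\bigr)$, and integrating by parts to get $\int_0^v y\,\phi''(y)\,dy=v\,\phi'(v)-\phi(v)+\phi(0)$, after which the $v\,\phi'(v)$ terms cancel as you say; the $C^2$ hypothesis guarantees continuity of $\phi''$, which is all these steps need. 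Your closing paragraph is also faithful to how the paper uses the identity: since $\min\{v,y\}$ evaluated at $v=\vec w_j\cdot\x$ is a threshold potential and concavity gives $-\phi''(y)\ge 0$, each summand of an SCMM function in Eq.~\eqref{eq:scmm} becomes a constant plus a modular term plus a nonnegatively weighted integral of threshold potentials, which is precisely the content of the $\mathrm{SCMM}\subset\mathrm{WTP}$ inclusion and of the discretized formula the paper states immediately after the proposition.
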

Discretized versions of Eq.~\eqref{eq:threshold_relation} can be efficiently computed when $v$ belongs to a finite set, e.g., 
\begin{align}
\phi\parentheses{\sum_{i \in S} x_i}&=\phi(0)+(\phi(|S|)-\phi(|S|-1)) \parentheses{\sum_{i \in S} x_i}\nonumber\\&+ \sum_{k=1}^{|S|-1}(2 \phi(k)-\phi(k-1)-\phi(k+1)) \min\set{k, \sum_{i \in S} x_i}.
\end{align}

\paragraph{A Hierarchy of Submodular Functions.} Consider now a finite set $V = [n]$, $c_{i} \in \reals_{\geq 0}$, $b_{i} \in \reals_{\geq 0} \cup\set{\infty}$, $w_{i,j} \in \reals_{\geq 0}$, and $S_{i}\subseteq V$ for $i \in C$, $j \in V$. Consider the following classes of submodular functions:
\begin{enumerate}

 \item Weighted cardinality truncations (WCT) functions which take the form $f(\x) = \sum_{i \in C} c_i \Psi_{b_i,\vec 1, S_i}(\x)$.
\item Weighted coverage (WC) functions which take the form $f(\x) = \sum_{i \in C} c_i \Psi_{1,\vec 1, S_i}(\x)$.
\item Facility location (FL) functions. $f(\x) =\sum_{i \in C} \max\set{x_j w_{i,j}: j \in [n]}$. It is a subclass of weighted coverage functions~\cite{karimi2017stochastic}.
\end{enumerate}

The following chain  relationship between these classes of functions holds: 
\begin{align}
    \mathrm{FL} \subset \mathrm{WC} \subset \mathrm{WCT} \subset \mathrm{SCMM} \subset \mathrm{WTP}~\eqref{eq:budget-additive} \subset \text{Submodular-Functions}.
\end{align}
The strict inclusions $\mathrm{WC} \subset \mathrm{WCT} \subset \mathrm{SCMM}$ was shown by Dolhansky and Bilmes~\cite{dolhansky2016deep}, and the strict inclusion $\mathrm{SCMM} \subset \mathrm{WTP}$ was shown by Stobbe and Krause~\cite{stobbe2010efficient}.

\section{OMA under WTP Functions and Matroid Polytopes and Derivation of Bounds in Table~\ref{table:compare}}
\label{appendix:setups_omd}
For completeness, in this section, we consider set functions in the class of WTP functions~\eqref{eq:wtp}, and discuss implementation details (i.e., how OMA is instantiated) over the concave relaxations of functions in this class. We note that, to obtain the most favorable dependence of regret constants on problem parameters, we need to use an OMA policy coupled with a negative entropy mirror map. However, we prove a negative result: OMA with negative entropy has linear dynamic regret when applied ``out-of-the-box'' (see Proposition~\ref{proposition:linear_dynamic_regret} in Section~\ref{s:negative_result}).
Nevertheless, we are able to overcome this issue by extending the so-called fixed share update rule, originally proposed for optimization over the simplex~\cite{cesa2012mirror,herbster1998tracking}, to matroid polytopes (see Proposition~\ref{proposition:negative_entropy}).

We provide a summary of notation specific to this section in Sec.~\ref{s:notation_summary},  the negative result in Section~\ref{s:negative_result},  the two setups of OMA in Alg.~\ref{alg:oma}: the shifted negative entropy in Sec.~\ref{s:shifted_oma} and the Euclidean in Sec.~\ref{s:euclidean} (OGA) setup, the characterization of the supergradient of the concave relaxation of WTP functions in Sec.~\ref{appendix:additional_related_work}, and the derivation of the regret bounds and a comparison with related work in Sec.~\ref{appendix:supergradient_computation}.

\subsection{Notation Summary}\label{s:notation_summary}
We denote by $\charvec_S = \parentheses{\mathds{1} \parentheses{i \in S}}_{i \in V} \in \set{0,1}^n$ the characteristic vector of a subset $S \subseteq V$ for $\mathds{1}\parentheses{\chi} \in \set{0,1}$ takes the value $1$ when $\chi$ is true, and $0$ otherwise. We consider  the convex set $\Y$ as the matroid basis polytope of some matroid $\M (V, \I)$ with rank $r \in \naturals$, i.e., the set $\Y$ is given by
\begin{align}
    \mathcal Y = \conv{\set{\vec 1_S: S \in \mathcal{I}}} \subset \set{\y \in [0,1]^n: \norm{\y}_1 = r}.
\end{align}  
We denote by $d_\infty$ the upper bound on discrete derivatives of $f$, which is given by
\begin{align}
      d_\infty \triangleq \max_{i \in [n]} \set{ f(\set{i}) - f(\emptyset)} =\max_{i \in [n]} \set{ f(\set{i})}.
\end{align}

\subsection{A Negative Result}\label{s:negative_result}
OMA configured with the negative entropy mirror map $\Phi(\y) = \sum_{y \in [n]} y_i \log(y_i)$  which enjoys sublinear static regret over the simplex $\Y = \Delta_n \subset [0,1]^n$ does not maintain such a  guarantee for dynamic regret. We prove this negative result in Proposition~\ref{proposition:linear_dynamic_regret} through a simple construction; since $L_\Phi = \infty$ in this setting, then Asm.~\ref{asm:bounded_gradient_mirror_map} is also necessary for OMA to yield a sublinear dynamic regret policy. Formally: 
\begin{proposition}\label{proposition:linear_dynamic_regret}
    OMA in Alg.~\ref{alg:oma} configured with negative entropy mirror map $\Phi(\y) = \sum_{y \in [n]} y_i \log(y_i)$ has linear dynamic regret ($P_T > 0$) under simplex constraint set $\Y = \Delta_n \subset [0,1]^n$ for any $\eta \in \reals_{> 0}$. 
\end{proposition}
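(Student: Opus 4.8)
The plan is to exhibit an explicit two‑phase adversarial instance on which the negative‑entropy variant of Alg.~\ref{alg:oma} (i.e.\ exponentiated gradient) is provably slow to react, and thereby incurs $\Omega(T)$ dynamic regret. The conceptual reason is exactly the failure of Asm.~\ref{asm:bounded_gradient_mirror_map}: for $\Phi(\y)=\sum_i y_i\log y_i$ the gradient $\nabla\Phi(\y)_i = 1+\log y_i$ diverges as $y_i\to 0$, so $L_\Phi=\infty$ and the dynamic bound of Theorem~\ref{thm:general_regret_dynamic+optimistic} is vacuous. I would turn this qualitative obstruction into a quantitative lower bound by forcing a coordinate of the iterate to become exponentially small, and then showing the multiplicative dynamics need $\Omega(T)$ steps to undo this. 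I would work on $\Delta_2$ (embedding trivially into $\Delta_n$ for $n\ge 2$), with $\y_1=(1/2,1/2)$ and linear rewards $f_t(\y)=\g_t\cdot\y$ (concave, $1$‑Lipschitz, so Asm.~\ref{asm:oco} holds): take $\g_t=(1,0)$ for $t\le T/2$ (Phase~1) and $\g_t=(0,1)$ for $t>T/2$ (Phase~2), and compare against the single‑switch comparator that plays $e_1$ in Phase~1 and $e_2$ in Phase~2, whose path length $P_T=\lVert e_2-e_1\rVert$ is a positive constant.

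The key computation I would carry out is to track the log‑odds $r_t \triangleq \log\!\big(y_{t,2}/y_{t,1}\big)$. For the negative‑entropy mirror map the OMA update of Alg.~\ref{alg:oma} reduces to the closed‑form multiplicative rule $y_{t+1,i}\propto y_{t,i}\,e^{\eta g_{t,i}}$ (the Bregman projection onto $\Delta_2$ being a renormalization), so the log‑odds evolve \emph{additively}:
\begin{align}
r_{t+1} = r_t + \eta\,(g_{t,2}-g_{t,1}).
\end{align}
Hence $r_t$ decreases by $\eta$ per step in Phase~1 and increases by $\eta$ per step in Phase~2. Starting from $r_1=0$, at the end of Phase~1 we get $r_{T/2+1} = -\eta T/2$, i.e.\ $y_{t,2}=\sigma(r_t)$ is driven to $e^{-\Theta(\eta T)}$, where $\sigma(\cdot)$ is the logistic function. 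In Phase~2, writing $t=T/2+k$, one has $r_t = \eta\,(k-1-T/2)$, which stays negative and large in magnitude until $k\approx T/2$ — this is precisely the $\Omega(T)$ recovery time, and crucially it is \emph{independent of $\eta$} because the buildup and recovery rates cancel.

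The remaining step is to convert slow recovery into linear regret by bounding the reward OMA collects in Phase~2. Since $f_t(\y_t)=y_{t,2}=\sigma(r_t)$, the reindexing $j=T/2+1-k$ gives
\begin{align}
\sum_{t>T/2} f_t(\y_t) \;=\; \sum_{j=1}^{T/2}\frac{1}{1+e^{\eta j}} \;\le\; \sum_{j=1}^{\infty} e^{-\eta j} \;=\; \frac{1}{e^{\eta}-1},
\end{align}
a constant. The comparator earns $1$ per round, i.e.\ $T$ in total, while OMA earns at most $T/2$ in Phase~1 plus $\tfrac{1}{e^{\eta}-1}$ in Phase~2, so $\mathrm{regret}_{T,P_T}(\vec\P_\Y)\ge T/2 - \tfrac{1}{e^{\eta}-1}=\Omega(T)$ for every fixed $\eta>0$, establishing the claim. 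I expect the main obstacle to be purely expository rather than technical: one must justify that the negative‑entropy OMA update is exactly the normalized multiplicative rule (so the log‑odds recursion holds verbatim), and then argue the Phase‑2 reward sum is $O(1)$ via the geometric bound above; the additive structure of $r_t$ is what makes both steps clean and renders the bound uniform over $\eta$.
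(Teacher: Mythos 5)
Your proposal is correct and takes essentially the same route as the paper: the identical two-phase linear adversary on $\Delta_2$ with the single-switch comparator, and the same closed-form multiplicative iterates (your additive log-odds recursion $r_{t+1}=r_t+\eta(g_{t,2}-g_{t,1})$ is just a reparametrization of the paper's explicit formulas for $\x_t$). The only difference is the final accounting: you bound the Phase-2 reward by the geometric sum $\sum_{j}\left(1+e^{\eta j}\right)^{-1}\le 1/(e^{\eta}-1)$, which is sharper for fixed $\eta$ but degenerates when $\eta=\Theta(1/T)$, whereas the paper bounds each Phase-2 term by $1/2$ to get the $\eta$-uniform lower bound $T/4$; noting in your argument that $\left(1+e^{\eta j}\right)^{-1}\le 1/2$ as well immediately recovers that uniformity.
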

\begin{proof}
    Consider $n=2$, and the following rewards $f_t(\x) = x_1$ for $t \leq T/2$, and $f_t(\x) = x_2$ for $T/2\leq t \leq T$. The dynamic optimum selects $\x_t = (1,0)$ for $t \leq T/2$, and $\x_t = (0,1)$ for $T/2\leq t \leq T$ incurring a total reward of $T$. It is easy to verify that the policy's state is given by $\x_t = \parentheses{e^{(t-1)\eta}/(e^{(t-1)\eta} +1),1/(e^{(t-1)\eta} +1) }$ for $t \leq T/2$ and  $\x_t = \big(e^{(T/2 - 1)\eta} / (e^{(t-1-T/2)\eta} + e^{(T/2-1)\eta})$, $e^{(t-1-T/2)\eta} / (e^{(t-1-T/2)\eta} + e^{(T/2-1)\eta})\big)$ for $T/2+1 \leq t \leq T$. Thus, the regret is given by \begin{align}\textstyle\mathrm{regret}_{T, 2} (\vec \P_{\Delta_2})= T - \sum^{\frac{T}{2}}_{t=1} \underbrace{\frac{e^{(t-1)\eta}}{e^{(t-1)\eta} + 1}}_{\text{$\leq 1$}} + \underbrace{\frac{e^{(t-1)\eta}}{e^{\left({\frac{T}{2}}-1\right)\eta} + e^{(t-1)\eta}} }_{\text{$\leq 1/2$}} \geq T/4 = \Omega (T).
    \end{align}
    This concludes the proof.
\end{proof}

We overcome this negative result next, by extending the so-called fixed share update rule, originally proposed for optimization over the simplex~\cite{cesa2012mirror,herbster1998tracking}, to matroid polytopes (see Proposition~\ref{proposition:negative_entropy}).

\subsection{Shifted Negative Entropy Setup}\label{s:shifted_oma}
Let  $\gamma \in[0, \sqrt{e^{-2} + 1/4} - 1/2]$ be a shifting parameter where $\sqrt{e^{-2} + 1/4} - 1/2\approx 0.12$. The shifted negative entropy setup of OMA, defined as 
\begin{align}
 \Phi (\y) &\triangleq   \eqmathbox[left][l]{ \sum_{i \in [n]}  (y_i +\gamma) \log(y_i + \gamma)} & &\eqmathbox[right][l]{\text{for $\y \in \mathcal{D} \triangleq (-\gamma, \infty)^n$}}.\label{eq:shifted_entropy}
\end{align}
Note that it holds $\nabla \Phi(\y) = \parentheses{1 + \log(y_i + \gamma)}_{i \in [n]}$ and $\parentheses{\nabla \Phi}^{-1}(\hat\y) = \parentheses{\exp\parentheses{\hat y_i - 1} - \gamma}$ for $\y \in \mathcal{D}$ and $\hat \y \in \reals^n$. Thus, at timeslot $t$, the steps in lines~2--4 in Alg.~\ref{alg:oma} correspond to the following update rule
\begin{align}
    z_{t+1, i } &= y_{t,i} e^{\eta_t g_{t,i}} + \gamma \parentheses{e^{\eta_t g_{t,i}} -1} & &\text{for $i \in [n]$}.
\end{align}
Note that this update rule has the same form as the fixed share update rule~\cite{cesa2012mirror,herbster1998tracking}. However, this policy extends beyond the simplex.

It is easy to see why the \emph{pure} multiplicative update rule obtained by OMA in Alg.~\ref{alg:oma} configured with the negative-entropy mirror map fails to provide sublinear dynamic regret guarantees through the constructed scenario in the proof of Proposition~\ref{proposition:linear_dynamic_regret}. The adversary can drive the state of the algorithm to arbitrary close to $(1,0)$, but when the best decision changes to $(0,1)$ the algorithm fails to adapt quickly purely from a multiplicative update. This is why the additive term controlled by $\gamma$ is needed, which also allows us to control $L_\Phi \leq \log(1/\gamma)$.

We summarize the properties satisfied by this  setup in the following proposition.

\begin{proposition}\label{proposition:negative_entropy}
Consider a matroid basis polytope $\Y$ with rank $r$ defined in Sec.~\ref{appendix:setups_omd} and $\y_1 = \Pi^\Phi_{\Y \cap \D} ({\rank}/{n} \vec 1)$. Then,  Asms.~\ref{asm:mirror_map}--\ref{asm:bounded_bregman_divergence} are satisfied under the mirror map defined in Eq.~\eqref{eq:shifted_entropy} ($\gamma \in [0,\sqrt{e^{-2} + 1/4} - 1/2]$) under the bounds in Table~\ref{tab:bounds}.
\end{proposition}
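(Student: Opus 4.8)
The plan is to verify each clause of Asms.~\ref{asm:mirror_map}, \ref{asm:bounded_bregman_divergence}, and~\ref{asm:bounded_gradient_mirror_map} directly from the explicit form of $\Phi$ in Eq.~\eqref{eq:shifted_entropy}, reading off the constants in Table~\ref{tab:bounds} along the way. The single fact I would use everywhere is that on the basis polytope every feasible point satisfies $u_i\triangleq y_i+\gamma\in[\gamma,1+\gamma]$ and $\sum_{i\in[n]}u_i=r+n\gamma$.

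First I would dispatch the four mirror-map axioms. The domain $\D=(-\gamma,\infty)^n$ is open and convex, its closure $[-\gamma,\infty)^n$ contains $\Y\subseteq[0,1]^n$, and $\frac{r}{n}\charvec\in\Y\cap\D\neq\emptyset$, giving clause~(1). Since $\nabla\Phi(\y)=\parentheses{1+\log(y_i+\gamma)}_{i\in[n]}$ with inverse $\parentheses{\nabla\Phi}^{-1}(\hat\y)=\parentheses{e^{\hat y_i-1}-\gamma}_{i\in[n]}$, the gradient is a bijection from $\D$ onto $\reals^n$, giving surjectivity (clause~3); and as $y_i\downarrow-\gamma$ the component $\log(y_i+\gamma)\to-\infty$, so $\norm{\nabla\Phi(\y)}\to\infty$ on $\partial\D$, giving clause~(4). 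For strong convexity (clause~2) the Hessian is $\mathrm{diag}\parentheses{1/(y_i+\gamma)}$; applying the generalized Pinsker inequality to the rescaled probability vector $u_i/(r+n\gamma)$ yields $D_\Phi(\y,\y')\geq\frac{1}{2(r+n\gamma)}\norm{\y-\y'}_1^2$ for $\y,\y'$ of equal coordinate sum, so $\Phi$ is $\rho$-strongly convex w.r.t.\ $\norm{\cdot}_1$ on the affine hull of $\Y$ with $\rho=1/(r+n\gamma)$.

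Next I would handle the bounded-gradient bound (Asm.~\ref{asm:bounded_gradient_mirror_map}) with dual norm $\norm{\cdot}_\star=\norm{\cdot}_\infty$: for $\y\in\Y$, $\norm{\nabla\Phi(\y)}_\infty=\max_i\abs{1+\log(y_i+\gamma)}$ with $y_i+\gamma\in[\gamma,1+\gamma]$. As $u\mapsto1+\log u$ is increasing, the maximum magnitude sits at an endpoint, and here the admissible range $\gamma\in[0,\sqrt{e^{-2}+1/4}-1/2]$ is \emph{exactly} the condition $\gamma(1+\gamma)\leq e^{-2}$, which rearranges to $-1-\log\gamma\geq1+\log(1+\gamma)$; hence the small-coordinate endpoint dominates and $L_\Phi=-1-\log\gamma=\log(1/\gamma)-1\leq\log(1/\gamma)$. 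For the Bregman bound (Asm.~\ref{asm:bounded_bregman_divergence}) I would expand $D_\Phi(\y,\y')=\sum_i u_i\log(u_i/u_i')-u_i+u_i'$, which on $\Y$ collapses to $\sum_i u_i\log(u_i/u_i')$; then the generalized Pythagorean inequality for the projection $\y_1=\Pi^\Phi_{\Y\cap\D}(\frac rn\charvec)$ of Eq.~\eqref{eq:Bregman_projection} gives $D_\Phi(\y,\y_1)\leq D_\Phi(\y,\frac rn\charvec)=\sum_i u_i\log u_i-(r+n\gamma)\log(\frac rn+\gamma)$ for every $\y\in\Y$. Maximizing the convex functional $\sum_i u_i\log u_i$ over the box $u_i\in[\gamma,1+\gamma]$ with fixed sum $r+n\gamma$ places $r$ coordinates at $1+\gamma$ and $n-r$ at $\gamma$, yielding $D^2=r(1+\gamma)\log(1+\gamma)+(n-r)\gamma\log\gamma-(r+n\gamma)\log(\frac rn+\gamma)$, which as a sanity check reduces to $r\log(n/r)$ at $\gamma=0$.

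The hard part is conceptual rather than computational: reconciling the global wording of Asm.~\ref{asm:mirror_map}(2) with the geometry. The map $\Phi$ is strictly convex on all of $\D$, but it is $\ell_1$-strongly convex with a \emph{uniform} constant only on the affine slice $\{\sum_i y_i=r\}$, since the Hessian eigenvalues $1/(y_i+\gamma)$ degrade as coordinates grow. I would therefore make explicit that the OMA iterates and the comparator sequence all lie in $\Y$, so the slice-restricted $\rho=1/(r+n\gamma)$ is precisely the constant consumed by Theorem~\ref{thm:general_regret_dynamic+optimistic} in Eqs.~\eqref{eq:proof_dynamic_optimistic_p0}--\eqref{eq:proof_dynamic_optimistic_p1}. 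The remaining work is bookkeeping: confirming the constrained maximizer of $\sum_i u_i\log u_i$ is the claimed box vertex (a standard convexity/exchange argument, clean because $r$ is integral), and checking that all $\gamma$-dependent constants degrade gracefully to the $\gamma=0$ coverage-function case.
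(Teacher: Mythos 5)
Your proposal is correct in substance and lands on exactly the constants in Table~\ref{tab:bounds}, but two of your sub-arguments differ from the paper's in an interesting way. For strong convexity, the paper works at the Hessian level: it shows $\y^\intercal \nabla^2\Phi(\vec w)\y = \sum_i y_i^2/(w_i+\gamma) \geq \norm{\y}_1^2/(r+\gamma n)$ for every $\vec w \in \Y$ via Cauchy--Schwarz, which holds for \emph{arbitrary} directions $\y\in\reals^n$, not only along the affine slice $\set{\sum_i y_i = r}$; your Pinsker/KL argument on the rescaled vectors $u_i/(r+n\gamma)$ gives the same modulus $\rho = 1/(r+\gamma n)$ but only between equal-sum points. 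Both suffice here, since the regret analysis (Eqs.~\eqref{eq:proof_dynamic_optimistic_p0}--\eqref{eq:proof_dynamic_optimistic_p1}) only ever invokes strong convexity between points of $\Y$, and your explicit remark reconciling the global wording of Asm.~\ref{asm:mirror_map}(2) with the slice-restricted constant is a point the paper leaves implicit. For the gradient bound, your endpoint analysis showing that $\gamma \in [0,\sqrt{e^{-2}+1/4}-1/2]$ is precisely the condition $\gamma(1+\gamma)\leq e^{-2}$, i.e., $-1-\log\gamma \geq 1+\log(1+\gamma)$, actually \emph{explains} where the admissible $\gamma$ range comes from; the paper simply asserts $L_\Phi = \log(1/\gamma)-1$. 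For the Bregman bound, both proofs use the projection inequality $D_\Phi(\y,\y_1)\leq D_\Phi(\y,\frac{r}{n}\charvec)$ and maximize at an integral vertex; you retain the negative term $(n-r)\gamma\log\gamma$, obtaining a slightly tighter constant than the table's $r(1+\gamma)\log\parentheses{(1+\gamma)n/(r+\gamma n)}$, which the paper's proof discards.

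One concrete slip: you assert $\frac{r}{n}\charvec \in \Y\cap\D$ to witness nonemptiness in clause~(1) of Asm.~\ref{asm:mirror_map}. This is false for general matroid base polytopes --- e.g., for a partition matroid with blocks $\set{1}$ and $\set{2,3}$ and per-block rank $1$, every point of $\Y$ has $y_1=1$, so $\frac{2}{3}\charvec \notin \Y$. This is exactly why the paper defines $\y_1$ as the \emph{projection} $\Pi^\Phi_{\Y\cap\D}(\frac{r}{n}\charvec)$ rather than taking $\frac{r}{n}\charvec$ itself. The slip is harmless to the rest of your argument: for $\gamma>0$ one has $\Y \subseteq [0,1]^n \subset \D$ so $\Y\cap\D = \Y \neq \emptyset$ (and for $\gamma=0$ the barycenter of the bases works whenever every element lies in some basis), and your Pythagorean step only requires $\frac{r}{n}\charvec \in \D$, which holds. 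With that witness repaired, the proof is complete.
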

\begin{proof}

\noindent\emph{{Strong Convexity.}}
Since $\Phi$ is twice differentiable, it is sufficient~\cite{shalev2012online} to show that $\y^\intercal \nabla^2 \Phi(\vec w) \y  \geq \frac{1}{r + \gamma n} \norm {\y}^2_1$ for every $\vec w \in \Y$ and $\y \in \reals^n$, so that  $\Phi$ is $\frac{1}{r + \gamma n}$ strongly convex w.r.t. the norm $\norm{\,\cdot\,}_{1}$ over the set $\Y$.
\begin{align}
        \y^\intercal \nabla^2 \Phi(\vec w) \y &= \sum_{i \in [n]}\frac{ y^2_i }{w_i + \gamma} =\textstyle \frac{1}{\norm{\vec w + \gamma \vec 1}_1}\parentheses{\sum_{i \in [n]} w_i + \gamma } \parentheses{\sum_{i \in [n]} \frac{y^2_i}{w_i + \gamma }} \\
        &\geq  \frac{1}{\norm{\vec w + \gamma \vec 1}_1} \parentheses{\sum_{i \in  [n]} \sqrt{w_i + \gamma } \frac{y_i}{\sqrt{w_i + \gamma}}}^2 \geq \frac{1}{r + \gamma n} \norm{\y}^2_1. 
    \end{align}
The inequality follows from Cauchy–Schwarz inequality.

\noindent\emph{{Bregman Divergence Bound.}} Consider a point $\y_1 = \Pi^\Phi_{\Y \cap \D} ({\rank}/{n} \vec 1) \in \Y$, thus for $\y \in \Y$ the Bregman divergence $D_\Phi(\y,\y_1)$ is given by 
\begin{align}
    D_\Phi(\y,\y_1) &\leq D_\Phi(\y,{\rank}/{n} \vec 1)  = \sum_{i \in [n]} (y_i + \gamma) \log\parentheses{\frac{y_i + \gamma}{\rank/n + \gamma}}  + \rank - y_i
    \leq \sum_{i \in [n]} (y_i + \gamma) \log\parentheses{\frac{y_i + \gamma}{\rank/n + \gamma}}\nonumber  \\
    &\leq r (1+\gamma) \log\parentheses{\frac{(1+\gamma)n}{\rank + \gamma n}}.
\end{align}
The upper bound is obtained by setting $\y \in \Y \cap \set{0,1}^n$.

\noindent \emph{{Mirror Map's  Lipschitzness.}} The gradient of the mirror map $\Phi$ is upper bounded by $L_\Phi =\log(1/\gamma)-1$ under the $\norm{\,\cdot\,}_\infty$ norm.

It is straightforward to check that all the properties in Asm~\ref{asm:mirror_map} are satisfied. 
\end{proof}

\begin{corollary}  \label{corollary:1} Under WTP reward functions in Eq.~\eqref{eq:wtp}  and concave functions in Eq.~\eqref{eq:budget-additive-concave},  RAOCO policy in Alg.~\ref{alg:saoco} $\vec\P_\X$ equipped with an OOMA policy in  Alg.~\ref{alg:ooma} configured with shifted negative entropy mirror map~\eqref{eq:shifted_entropy} for $\gamma \in[0, \sqrt{e^{-2} + 1/4} - 1/2]$ and a fixed learning rate $\eta =\sqrt{\frac{2 \parentheses{r (1+\gamma) \log\parentheses{\frac{(1+\gamma)n}{\rank + \gamma n}} + 2 (\log(1/\gamma)-1) P_T }}{r \sum_{t \in \T} \norm{\vec g_t - \pg_t}^2_\star}} $ has the following $\alpha$-regret
\begin{align}
    \alpha\mathrm{-regret} (\vec \P_\X) &\leq   \sqrt{2 (r + \gamma n) \parentheses{r (1+\gamma) \log\parentheses{\frac{(1+\gamma)n}{\rank + \gamma n}} + 2 (\log(1/\gamma)-1) P_T } \sum_{t \in [T]} \norm{\g_t - \g^\pi_t}^2_\infty}.
\end{align}
\end{corollary}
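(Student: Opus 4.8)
The plan is to obtain the corollary by composing three already-established ingredients: the reduction from $\alpha$-regret to OCO regret (a dynamic/optimistic analogue of Theorem~\ref{theorem:sandwich}), the master OOMA regret bound (Theorem~\ref{thm:general_regret_dynamic+optimistic}), and the mirror-map constants computed in Proposition~\ref{proposition:negative_entropy}. Once these pieces are in place the corollary is essentially a substitution exercise; the only genuine work is verifying that each ingredient applies with the claimed parameters.

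First I would invoke the dynamic/optimistic version of the sandwich reduction. Since inequalities \eqref{eq:alpha_approx_upper} and \eqref{eq:alpha_approx_lower} of Asm.~\ref{assumption:sandwich} hold pointwise at each timeslot, the argument of Theorem~\ref{theorem:sandwich} extends verbatim to a time-varying comparator: for any $(\x^\star_t)_{t=1}^T \in \Lambda_\X(T,P_T)$ we have $\alpha \sum_t f_t(\x^\star_t) \le \alpha\sum_t \cfunc_t(\x^\star_t)$ by \eqref{eq:alpha_approx_upper}, while $\E[f_t(\Xi(\y_t))] \ge \alpha\, \cfunc_t(\y_t)$ by \eqref{eq:alpha_approx_lower}. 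Subtracting and taking the supremum over adversarial sequences yields
\begin{align}
\alpha\text{-}\mathrm{regret}_{T,P_T}(\vec\P_\X) \le \alpha\cdot \mathrm{regret}_{T,P_T}(\vec\P_\Y) \le \mathrm{regret}_{T,P_T}(\vec\P_\Y),
\end{align}
where the last step uses $\alpha \le 1$; this explains why the $\alpha$ factor does not appear in the stated bound. The comparator path length $P_T$ is preserved because the relaxation map is applied to the same integral comparator sequence, so the fractional comparator lies in $\Lambda_\Y(T,P_T)$.

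Next I would bound $\mathrm{regret}_{T,P_T}(\vec\P_\Y)$ via Theorem~\ref{thm:general_regret_dynamic+optimistic}. This requires that the shifted negative-entropy mirror map \eqref{eq:shifted_entropy} satisfy Asms.~\ref{asm:mirror_map}--\ref{asm:bounded_gradient_mirror_map} over the matroid basis polytope $\Y$, which is precisely the content of Proposition~\ref{proposition:negative_entropy}: it supplies $\rho = 1/(r+\gamma n)$ for strong convexity w.r.t.\ $\norm{\,\cdot\,}_1$, the diameter bound $D^2 = r(1+\gamma)\log\!\big(\tfrac{(1+\gamma)n}{r+\gamma n}\big)$, and $L_\Phi = \log(1/\gamma)-1$. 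Because $\Phi$ is strongly convex with respect to $\norm{\,\cdot\,}_1$, its dual norm is $\norm{\,\cdot\,}_\infty$, so the supergradient term $\sum_t \norm{\g_t - \pg_t}^2_\star$ in the master theorem becomes $\sum_t \norm{\g_t - \pg_t}^2_\infty$. Substituting these three constants into the optimal-$\eta$ bound $\mathrm{regret}_{T,P_T}(\vec\P_\Y) \le \sqrt{(2/\rho)(D^2 + 2 L_\Phi P_T)\sum_t\norm{\g_t-\pg_t}^2_\star}$ and using $2/\rho = 2(r+\gamma n)$ reproduces exactly the right-hand side claimed in the corollary, with the learning rate $\eta_\star$ of Theorem~\ref{thm:general_regret_dynamic+optimistic} instantiated at these constants.

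The steps are individually routine; the only point demanding care is the bookkeeping of norms and dual norms---tracking the strong-convexity norm ($\norm{\,\cdot\,}_1$) and its dual ($\norm{\,\cdot\,}_\infty$) consistently through the master theorem so that the final supergradient term is measured in $\norm{\,\cdot\,}_\infty$. I would also confirm that WTP functions equipped with the relaxation \eqref{eq:budget-additive-concave} and a negatively correlated rounding (swap or pipage) do satisfy the sandwich property, which is guaranteed by Lemmas~\ref{proposition:sandwich2}--\ref{theorem:rounding}; this legitimizes invoking the reduction in the first step. No estimate beyond those furnished by Proposition~\ref{proposition:negative_entropy} is required.
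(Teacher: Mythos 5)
Your proposal is correct and follows essentially the same route as the paper, whose proof of this corollary is simply the composition you describe: the dynamic/optimistic version of the sandwich reduction (proved in Appendix~\ref{appendix:extensions}, giving $\aregret_{T,P_T}(\vec\P_\X) \leq \alpha\cdot\regret_{T,P_T}(\vec\P_\Y)$), the master bound of Theorem~\ref{thm:general_regret_dynamic+optimistic}, and the constants $\rho = 1/(r+\gamma n)$, $D^2$, $L_\Phi$ from Proposition~\ref{proposition:negative_entropy}, with the $\ell_1$/$\ell_\infty$ primal--dual norm pairing handled exactly as you state. Your explicit justification of the dropped $\alpha$ factor via $\alpha \leq 1$ and your check that WTP functions satisfy Asm.~\ref{assumption:sandwich} through Lemmas~\ref{proposition:sandwich2}--\ref{theorem:rounding} are details the paper leaves implicit, but the argument is the same.
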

The corollary follows directly from Theorem~\ref{thm:general_regret_dynamic+optimistic} and Proposition~\ref{proposition:negative_entropy}.  Note that when $P_T = 0$ and $\gamma=0$, the regret is $\alpha\mathrm{-regret} (\vec \P_\X) = \BigO{ r \sqrt{ \log(n/r)T}}$. When $P_T > 0$ and $\gamma = 1/n$, the regret is  $\alpha\mathrm{-regret} (\vec \P_\X) = \BigO {\sqrt{r (r \log(n/r) + \log(n) P_T)T}}$.

\subsection{Euclidean Setup}\label{s:euclidean}
The Euclidean setup is defined for the squared Euclidean norm mirror map, i.e., 
\begin{align}
    \Phi (\y) &\triangleq  \eqmathbox[left][l]{\frac{1}{2}\norm{\y}^2_2}  & &\eqmathbox[right][l]{\text{for $\y \in \mathcal{D} \triangleq \reals^n$}.}\label{eq:euclidean}
\end{align}
Note that it holds $\nabla \Phi(\y) = \y$ and $\parentheses{\nabla \Phi}^{-1}(\hat\y) =\hat \y$ for $\y \in \mathcal{D}$ and $\hat \y \in \reals^n$. Thus, at timeslot $t$, the steps in lines~2--4 in Alg.~\ref{alg:oma} correspond to the Online Gradient Ascent update rule
\begin{align}
   \z_{t+1} &= \y_t  + \eta \g_{t} 
\end{align}
The Bregman projection step in line~5 Alg.~\ref{alg:oma} corresponds to the Euclidean projection.
\begin{align}
    \y_{t+1} &= \Pi^{\Phi}_{\Y\cap \D} (\z_{t+1})= \Pi^{\Phi}_{\Y} (\z_{t+1})  =  \argmin_{\y \in \Y} \norm{\z_{t+1}  -\y}^2_2.
\end{align}
We summarize the properties satisfied by this  setup in the following proposition.
\begin{proposition}
Consider a matroid basis polytope $\Y$ with rank $r$ defined in Sec.~\ref{appendix:setups_omd} and $\y_1 = \Pi^{\Phi}_{\Y}{(\vec 0)}$. Then,  Asms.~\ref{asm:mirror_map}--\ref{asm:bounded_gradient_mirror_map} are satisfied for the mirror map defined in Eq.~\eqref{eq:euclidean}  under the bounds in Table~\ref{tab:bounds}.
\end{proposition}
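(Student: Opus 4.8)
The plan is to follow the same template as the proof of Proposition~\ref{proposition:negative_entropy}, verifying each property in turn for the Euclidean mirror map $\Phi(\y)=\frac12\norm{\y}_2^2$ with domain $\D=\reals^n$, and tracking the resulting constants $\rho$, $D$, and $L_\Phi$. Since $\nabla\Phi(\y)=\y$ and $\nabla^2\Phi(\y)=I$, most parts of Asm.~\ref{asm:mirror_map} are immediate: $\D=\reals^n$ is open and convex with $\Y\subseteq\reals^n=\mathrm{closure}(\D)$ and $\Y\cap\D=\Y\neq\emptyset$; the gradient map $\nabla\Phi=\mathrm{id}$ is trivially surjective onto $\reals^n$; and since $\partial\D=\mathrm{closure}(\reals^n)\setminus\reals^n=\emptyset$, the boundary-divergence condition holds vacuously. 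For strong convexity I would note $\y^\intercal\nabla^2\Phi(\vec w)\,\y=\norm{\y}_2^2$ for all $\vec w$, so $\Phi$ is $1$-strongly convex w.r.t.\ $\norm{\,\cdot\,}_2$, giving $\rho=1$; the dual norm is then $\norm{\,\cdot\,}_\star=\norm{\,\cdot\,}_2$.

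Next I would verify Asm.~\ref{asm:bounded_bregman_divergence}. Substituting $\nabla\Phi(\y')=\y'$ into the definition~\eqref{eq:Bregman_projection} and simplifying gives the standard identity
\begin{align}
  D_\Phi(\y,\y') = \tfrac12\norm{\y}_2^2 - \tfrac12\norm{\y'}_2^2 - \y'\cdot(\y-\y') = \tfrac12\norm{\y-\y'}_2^2.
\end{align}
The only genuinely problem-specific step is bounding the $\ell_2$-diameter of the matroid basis polytope in terms of the rank $r$ rather than the ambient dimension $n$. For any $\y,\y'\in\Y\subseteq\set{\y\in[0,1]^n:\norm{\y}_1=r}$ each coordinate satisfies $(y_i-y_i')^2\leq|y_i-y_i'|$, so $\norm{\y-\y'}_2^2\leq\norm{\y-\y'}_1$; and since $\sum_i y_i=\sum_i y_i'=r$ the positive part of $\y-\y'$ is at most $r$, whence $\norm{\y-\y'}_1\leq 2r$. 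Taking $\y'=\y_1=\Pi^\Phi_\Y(\vec 0)$ therefore yields $D_\Phi(\y,\y_1)\leq r$, i.e.\ $D^2=r$.

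Finally, for Asm.~\ref{asm:bounded_gradient_mirror_map} I would use self-duality of the $\ell_2$ norm: $\norm{\nabla\Phi(\y)}_\star=\norm{\y}_2$, and for $\y\in\Y$ the same $[0,1]$-boundedness gives $\norm{\y}_2^2=\sum_i y_i^2\leq\sum_i y_i=r$, so $L_\Phi=\sqrt{r}$. Assembling $\rho=1$, $D^2=r$, and $L_\Phi=\sqrt{r}$ recovers exactly the Euclidean row of Table~\ref{tab:bounds}. I do not expect a substantive obstacle here --- unlike the shifted-entropy case, no additive shift is needed to keep $L_\Phi$ finite because $\D=\reals^n$ has empty boundary --- so the only care required is in the diameter and gradient estimates, where exploiting the matroid-polytope structure ($\norm{\y}_1=r$, $\y\in[0,1]^n$) is what produces the favorable $r$-dependence reported in Table~\ref{table:compare}.
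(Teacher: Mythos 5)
Your verification template is the same as the paper's (direct check of Asms.~\ref{asm:mirror_map}--\ref{asm:bounded_gradient_mirror_map}), and the parts of it concerning Asm.~\ref{asm:mirror_map} are fine and in fact more explicit than the paper, which merely asserts these properties are "straightforward to check": your observations that $\D=\reals^n$ has empty boundary (so the boundary-divergence condition is vacuous), that $\nabla\Phi=\mathrm{id}$ is surjective, and that $\rho=1$ w.r.t.\ $\norm{\,\cdot\,}_2$ all match. The Bregman identity $D_\Phi(\y,\y')=\frac12\norm{\y-\y'}_2^2$ is also how the paper opens its proof.

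However, your concluding claim that you "recover exactly the Euclidean row of Table~\ref{tab:bounds}" is false, and the discrepancy is not cosmetic. First, on $D^2$: your $\ell_1$-diameter argument never uses the hypothesis $\y_1=\Pi^{\Phi}_{\Y}(\vec 0)$, which is precisely what the paper exploits. Via the generalized Pythagorean inequality for Bregman projections, $D_\Phi(\y,\y_1)\leq D_\Phi(\y,\vec 0)=\frac12\norm{\y}_2^2\leq \frac{r}{2}$ (using $y_i^2\leq y_i$ and $\norm{\y}_1=r$), so the paper gets $r/2$ where you get $r$; the table entry $\min\set{r,n-r}/2$ is tighter still, and your bound matches neither. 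Second, and more substantively, on $L_\Phi$: you invoke $\ell_2$ self-duality and obtain $L_\Phi=\sqrt{r}$, whereas the paper bounds the gradient in the $\ell_\infty$ norm, $\max_{\y\in\Y}\norm{\nabla\Phi(\y)}_\infty=\max_{\y\in\Y}\norm{\y}_\infty = 1$, pairing it (via H\"older, which is valid for any dual pair in the relevant step of the proof of Theorem~\ref{thm:general_regret_dynamic+optimistic}) with a path length measured in $\ell_1$. This choice is what makes $L_\Phi$ dimension-free; it is exactly the constant used in Corollary~\ref{corollary:2}, whose dynamic-regret bound $\sqrt{(r+4P_T)\sum_t\norm{\g_t-\pg_t}_2^2}$ would degrade to $\sqrt{(r+4\sqrt{r}\,P_T)\sum_t\norm{\g_t-\pg_t}_2^2}$ under your $L_\Phi=\sqrt{r}$. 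Since the proposition asserts the assumptions hold \emph{with the bounds in Table~\ref{tab:bounds}}, establishing the weaker constants $D^2=r$ and $L_\Phi=\sqrt{r}$ does not prove the statement as made. (For full disclosure, the paper's own proof states $L_\Phi=1$ and $D^2=r/2$ while the table lists $1/2$ and $\min\set{r,n-r}/2$, so the table is tighter than even the paper's argument --- but your constants are looser than both, and in the $L_\Phi$ case dimension-dependent, which is the genuine gap.)
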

\begin{proof}
    Under this setup the Bregman divergence is given by $D_{\Phi} (\y, \y) = \frac{1}{2}\norm{\y' -\y}^2_2$ and thus $\Phi$ is 1-strongly convex w.r.t. $\norm{\,\cdot\,}_2$ (from the definition of strong convexity). The upper bound on $D_{\Phi} (\y, \y_1)$ for $\y_1  = \Pi^{\Phi}_{\Y}{(\vec 0)}$ is given by $ \frac{1}{2} \norm{\y}^2 \leq \frac{r}{2}$. The gradient of the mirror map $\Phi$ is upper bounded by $L_\Phi = \max_{\y \in \Y} \norm{\y}_\infty = 1$. It is straightforward to check that all the properties in Asm~\ref{asm:mirror_map} are satisfied. 
\end{proof}

\begin{corollary}\label{corollary:2} Under WTP reward functions in Eq.~\eqref{eq:wtp}  and concave functions in Eq.~\eqref{eq:budget-additive-concave},  RAOCO policy in Alg.~\ref{alg:saoco} $\vec\P_\X$ equipped with an OMA policy in  Alg.~\ref{alg:oma} configured with Euclidean mirrr map~\eqref{eq:euclidean} and a fixed learning rate $\eta =\sqrt{\frac{r + 2 P_T}{\sum_{t \in \T} \norm{\vec g_t - \pg_t}_2^2}} $ has the following $\alpha$-regret
\begin{align}
    \alpha\mathrm{-regret} (\vec \P_\X) &\leq  \sqrt{  (r + 4 P_T)\sum^T_{t=1} \norm{\g_t - \g^\pi_t}_2^2}.
\end{align}
\end{corollary}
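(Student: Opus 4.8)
The plan is to derive the bound by pure composition of three already-established results: the sandwich reduction (Theorem~\ref{theorem:sandwich}), the OCO master bound for mirror ascent (Theorem~\ref{thm:general_regret_dynamic+optimistic}), and the constants computed for the Euclidean map in the preceding Euclidean-setup Proposition. No new analysis is needed; the work is to chain these together and substitute the correct numerical constants.

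First I would confirm the reduction applies. For the WTP class with relaxations~\eqref{eq:budget-additive-concave}, Lemma~\ref{proposition:sandwich2} (using a negatively-correlated rounding, which exists on any matroid by Lemma~\ref{theorem:rounding}) establishes Asm.~\ref{assumption:sandwich}; moreover these relaxations are concave and $L$-Lipschitz, so they satisfy Asm.~\ref{asm:oco} and $\vec\P_\Y$ is a legitimate OCO policy. I would then note that Theorem~\ref{theorem:sandwich} extends to the dynamic comparator: for any $(\x^\star_t)_t \in \Lambda_\X(T,P_T)$, the upper bound~\eqref{eq:alpha_approx_upper} gives $\alpha f_t(\x^\star_t) \le \alpha\cfunc_t(\x^\star_t)$ and the lower bound~\eqref{eq:alpha_approx_lower} gives $\E[f_t(\x_t)] \ge \alpha\cfunc_t(\y_t)$; since $\X \subseteq \Y = \conv\X$ with path length measured in the same norm, $(\x^\star_t)_t$ is itself a feasible $\Y$-comparator, so summing over $t$ and taking the supremum over adversary choices yields $\aregret_{T,P_T}(\vec\P_\X) \le \alpha\cdot\regret_{T,P_T}(\vec\P_\Y)$.

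Next I would plug the Euclidean constants into Theorem~\ref{thm:general_regret_dynamic+optimistic}. From the Euclidean-setup Proposition: $\Phi = \frac{1}{2}\norm{\cdot}_2^2$ is $1$-strongly convex w.r.t.\ $\norm{\cdot}_2$, so $\rho = 1$; $L_\Phi = \max_{\y\in\Y}\norm{\y}_\infty = 1$; and with $\y_1 = \Pi^\Phi_\Y(\vec 0)$ the Bregman bound is $D^2 = r/2$, where the key estimate is that on the rank-$r$ basis polytope $\norm{\y}_2^2 = \sum_i y_i^2 \le \sum_i y_i = r$ because $y_i \in [0,1]$. As $\norm{\cdot}_2$ is self-dual, the dual-norm terms collapse to $\norm{\g_t - \pg_t}_2$. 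Substituting into the optimized-$\eta$ bound of Theorem~\ref{thm:general_regret_dynamic+optimistic} gives
\begin{align*}
\regret_{T,P_T}(\vec\P_\Y) \le \sqrt{\frac{2}{\rho}\parentheses{D^2 + 2L_\Phi P_T}\sum_{t=1}^T \norm{\g_t - \pg_t}_2^2} = \sqrt{\parentheses{r + 4P_T}\sum_{t=1}^T \norm{\g_t - \pg_t}_2^2},
\end{align*}
and combining with $\aregret_{T,P_T}(\vec\P_\X) \le \alpha\cdot\regret_{T,P_T}(\vec\P_\Y) \le \regret_{T,P_T}(\vec\P_\Y)$ (valid since $\alpha\in(0,1]$) delivers the claimed $\alpha$-regret.

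The main obstacle here is negligible---this is a substitution, not a genuinely new argument---so the only real risk is arithmetic bookkeeping. The two places I would double-check are the $D^2 = r/2$ estimate (which rests on $\norm{\y}_2^2 \le \norm{\y}_1 = r$ over the basis polytope, together with the projection inequality $\norm{\y - \y_1}_2 \le \norm{\y}_2$ for $\y_1 = \Pi^\Phi_\Y(\vec 0)$) and the cancellation $\frac{2}{\rho}(D^2 + 2L_\Phi P_T) = r + 4P_T$. The same constants give $\eta_\star = \sqrt{(r+4P_T)/\sum_{t}\norm{\g_t-\pg_t}_2^2}$, so I would set the learning rate accordingly (the $r+2P_T$ appearing under the root of the displayed $\eta$ looks like it should read $r+4P_T$ to be consistent with the regret bound).
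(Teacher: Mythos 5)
Your proposal is correct and takes essentially the same route as the paper, whose proof of Corollary~\ref{corollary:2} is exactly this composition: the dynamic form of Theorem~\ref{theorem:sandwich} (with $\alpha\le 1$ absorbed), Theorem~\ref{thm:general_regret_dynamic+optimistic} with self-dual $\norm{\,\cdot\,}_2$, and the Euclidean-setup proposition's constants $\rho=1$, $L_\Phi=1$, $D^2=r/2$, giving $\frac{2}{\rho}\parentheses{D^2+2L_\Phi P_T}=r+4P_T$. Your closing remark is also well taken: the optimized rate from Theorem~\ref{thm:general_regret_dynamic+optimistic} with these constants is $\eta_\star=\sqrt{(r+4P_T)/\sum_{t\in\T}\norm{\g_t-\pg_t}_2^2}$, so the $r+2P_T$ in the corollary's stated learning rate is an internal inconsistency in the paper (it corresponds to the value $L_\Phi=1/2$ listed in Table~\ref{tab:bounds} rather than the $L_\Phi=1$ proved in the proposition), and your version is the one consistent with the displayed regret bound.
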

The corollary follows directly from Theorem~\ref{thm:general_regret_dynamic+optimistic} and Proposition~\ref{proposition:negative_entropy}.

Note that all the OCO regret results in this paper follow from Corollaries~\ref{corollary:1} and~\ref{corollary:2}. In particular, the static non-optimistic regret guarantees are obtained for $P_T = 0$ and $\pg_t = \vec 0$. The dynamic non-optimistic regret guarantees are obtained for $P_T > 0$ and  $\pg_t = \vec 0$. Finally, the non-dynamic optimistic regret guarantees are obtained for $P_T = 0$ and  $\pg_t \neq 0$. 

\subsection{Supergradient Computation}\label{appendix:supergradient_computation}
We conclude our description of the different setups of OMA by providing an explicit expression of the supergradient of the concave relaxation of WTP functions. 
\begin{proposition}
    A supergradient  of $\cfunc: \interval{0,1}^n \to \reals_{\geq 0}$ in Eq.~\eqref{eq:budget-additive-concave} at point $\y \in \interval{0,1}^n$ is given as
    \begin{align}
        \vec g(\y) \triangleq  \parentheses{\sum_{\ell \in C}c_l w_{\ell,j} \mathds{1}\parentheses{j \in S_\ell \land \sum_{j' \in S_\ell} y_{j'} w_{j'} \leq b_\ell}}_{j \in V},\label{eq:subgradient}
    \end{align}\label{proposition:supergradient}
    where $\land$ is a logical \emph{and} operator.
\end{proposition}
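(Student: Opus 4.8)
The plan is to compute a supergradient of the concave relaxation $\cfunc$ directly from its definition in Eq.~\eqref{eq:budget-additive-concave}, exploiting the fact that $\cfunc$ is a positively weighted sum of functions of the form $\Psi_{b_\ell,\vec w_\ell,S_\ell}(\y) = \min\{b_\ell, \sum_{j\in S_\ell} y_j w_{\ell,j}\}$. Since the superdifferential of a positively weighted sum of concave functions is the (Minkowski) sum of the individual superdifferentials scaled by the weights $c_\ell$, it suffices to exhibit a single valid supergradient of each threshold potential $\Psi_{b_\ell,\vec w_\ell,S_\ell}$ at $\y$, and then add them up with weights $c_\ell$ to obtain a supergradient of $\cfunc$.

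**Next I would handle a single threshold potential.** Write $h(\y) = \min\{b, g(\y)\}$ where $g(\y) = \sum_{j\in S} y_j w_j$ is linear. The minimum of two concave (here affine) functions is concave, and its superdifferential is governed by which branch is active. When $g(\y) < b$ (strict), $h$ agrees locally with the affine function $g$, so its unique supergradient is $\nabla g = (w_j \mathds{1}(j\in S))_{j\in V}$. When $g(\y) > b$, $h$ is locally constant at $b$, so the supergradient is $\vec 0$. On the boundary $g(\y) = b$, the superdifferential is the convex hull $\{\lambda \nabla g : \lambda\in[0,1]\}$, and in particular $\nabla g$ itself is a valid supergradient. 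Hence the choice
\begin{align}
    \vec g_\ell(\y) = \parentheses{w_{\ell,j}\,\mathds{1}\parentheses{j\in S_\ell \land \textstyle\sum_{j'\in S_\ell} y_{j'} w_{\ell,j'} \leq b_\ell}}_{j\in V}
\end{align}
is a supergradient of $\Psi_{b_\ell,\vec w_\ell,S_\ell}$ at every $\y$, where using the non-strict inequality $\leq b_\ell$ is legitimate precisely because $\nabla g$ lies in the superdifferential on the boundary. The only point requiring care is verifying the defining inequality~\eqref{eq:def_supergradient} globally (not just locally): I would check $h(\y)-h(\y') \geq \vec g_\ell(\y)\cdot(\y-\y')$ by a short case split on whether the active branch at $\y$ is $b_\ell$ or $g(\y)$, using that $\min\{b,\cdot\} \leq b$ and $\min\{b,g(\y')\}\leq g(\y')$ in each case.

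**Finally I would assemble the result.** Scaling each $\vec g_\ell(\y)$ by $c_\ell\geq 0$ and summing over $\ell\in C$ yields exactly the claimed vector in Eq.~\eqref{eq:subgradient}, and it is a supergradient of $\cfunc$ by the additivity of superdifferentials under nonnegative combinations. **The main obstacle** is being careful on the boundary set $\{g_\ell(\y)=b_\ell\}$: one must confirm that the particular selection encoded by the indicator $\sum y_{j'}w_{j'}\leq b_\ell$ (i.e., keeping the full gradient $w_{\ell,j}$ rather than zero) remains a valid supergradient there, which follows because the superdifferential at a boundary kink is a whole segment containing both $\vec 0$ and $\nabla g_\ell$. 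Everything else is a routine verification of the subgradient inequality branch by branch, so I would not belabor it.
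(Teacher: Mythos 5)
Your proposal is correct and takes essentially the same route as the paper: the paper also reduces to a single threshold potential $\tilde{\Psi}_{b,\vec w,S}$, verifies the defining supergradient inequality~\eqref{eq:def_supergradient} globally by the same case split on whether $\sum_{j\in S} w_j y_j > b$ or $\leq b$, and then assembles $\vec g(\y)$ via $\partial\bigl(\sum_{\ell\in C} c_\ell \tilde{\Psi}_\ell\bigr) \supseteq \sum_{\ell\in C} c_\ell\, \partial\tilde{\Psi}_\ell$. Your extra discussion of the kink superdifferential $\set{\lambda \nabla g : \lambda \in [0,1]}$ on the boundary set is a harmless embellishment, since the direct global verification you sketch (and the paper carries out) already covers the boundary case with the non-strict inequality.
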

\begin{proof}
Consider the concave relaxation $\tilde{\Psi}$ of a threshold potential $\Psi$ in Eq.~\eqref{eq:budget-additive} given by
   \begin{align}
       {\tilde{\Psi}_{b, \vec w, S}}(\y) &\triangleq \min\set{b, \sum_{j \in S } w_j y_j }, &\text{for $\y \in [0,1]^n$.}
   \end{align}
    We omit the parameters ${b,\vec w, S}$ when they are clear from the context.
   We verify that the vector $\vec g'(\y) =\parentheses{w_{j} \mathds{1}\parentheses{j \in S \land \sum_{j' \in S} y_{j'} w_{j'} \leq b}}_{j\in V}$ is a supergradient of $\tilde\Psi (\,\cdot\,)$ at point $\y$, i.e., it holds~\eqref{eq:def_supergradient}
  \begin{align}
    {\tilde{\Psi}}(\y) - {\tilde{\Psi}}(\y') &\geq \vec g'(\y) \cdot (\y - \y'), \quad \text{for every $\y' \in \reals^n$.} \label{eq:supergrdient_ineq}
\end{align}
Consider $\sum_{j \in S} w_{j} y_{j} > b$, then it holds 
\begin{align}
    {\tilde{\Psi}}(\y) - {\tilde{\Psi}}(\y') = b - \min\set{b , \sum_{j \in S } w_k j'_j} \geq 0.
\end{align}
\begin{align}
    \vec g'(\y) \cdot (\y - \y') &= \sum_{j\in V}  g_j(\y) (y_j - y'_j) = \sum_{j\in S}  w_j (y_j - y'_j) \mathds{1} \parentheses{\sum_{j' \in S} w_{j'} y_{j'} \leq b} = 0\\
    &\leq {\tilde{\Psi}}(\y) - {\tilde{\Psi}}(\y').
\end{align}
Thus, when $\sum_{j \in S} w_j y_j > b$ the inequality in Eq.~\eqref{eq:supergrdient_ineq} holds. It remains to check if it holds when $\sum_{j \in S} w_j y_j \leq b$. We have 
\begin{align}
    \vec g'(\y) \cdot (\y - \y')& = \sum_{j\in V}  g_j(\y) (y_j - y'_j) = \sum_{j\in S}  w_j (y_j - y'_j) =  \sum_{j\in S}  w_jy_j - \sum_{j\in S} w_j {y'_j}\\
    &\leq \sum_{j\in S}  w_j y_j - \min\set{b,\sum_{j\in S} w_j {y'_j}}\leq \min\set{b, \sum_{j\in S}  w_j y_j} - \min\set{b,\sum_{j\in S} w_j {y'_j}} \\
    &= {\tilde{\Psi}}(\y) - {\tilde{\Psi}}(\y').
\end{align}
The vector $\vec g'(\y)$ is a supergradient of $\tilde \Psi (\,\cdot\,)$ at point $\y$. We conclude the proof that by noting that $\cfunc(\y) = \sum_{\ell \in C} c_\ell \tilde{\Psi}_{b_\ell, \vec w_\ell, S_\ell}(\y)$, and it is straightforward to check that $\partial \parentheses{\sum_{\ell \in C} c_\ell \tilde{\Psi}_{b_\ell, \vec w_\ell, S_\ell}(\y)}= \sum_{\ell \in C}c_\ell  \partial \parentheses{ \tilde{\Psi}_{b_\ell, \vec w_\ell, S_\ell}(\y) }$ from the definition of superdifferentials.
\end{proof}
\begin{table}[t]
    \centering
    \begin{tiny}
    \begin{tabular}{|c|c|c|c|c|c|c|c|}
    \toprule
    \hline
         \textbf{Mirror Map}~$\Phi$& \textbf{Domain} $\mathcal{D}$ &  \textbf{Dual-primal update step} & \textbf{Str. convexity}~{$\rho$} & \textbf{Bregman div. bound}~$D^2$ &  \textbf{Lip. const.}~ $L_\Phi$\\
         \hline
         $\sum^n_{i=1} (y_i+\gamma) \log(y_i + \gamma)$ & $(-\gamma, \infty)^n$ & $z_{t+1, i } = (y_{t,i} + \gamma) e^{\eta_t g_{t,i}}-\gamma, i \in [n]$ & $1/(\rank + \gamma n)$ & $ r (1+\gamma) \log\parentheses{\frac{(1+\gamma)n}{\rank + \gamma n }}$& $\log(1/\gamma)  - 1$ \\
         
         \hline
         $\sum^n_{i=1} y_i\log(y_i)$ & $\reals^n_{>0}$ & $z_{t+1, i } = y_{t,i} e^{\eta_t g_{t,i}}, i \in [n]$ & $1/\rank$ & $ r \log\parentheses{\frac{n}{\rank}}$& $\infty$\\
\hline
        $\frac{1}{2}\norm{\y}^2_2$ & $\reals^n$ & $\z_{t+1} = \y_t + \eta_t \g_t$ & $1$ & $\min\set{r, n-r}/2$ & $1/2$ \\
         \bottomrule
    \end{tabular}
        
    \end{tiny}
    \caption{Properties of the different mirror map choices of OMA in Alg.~\ref{alg:oma}. }
    \label{tab:bounds}
\end{table}

\subsection{Regret Bounds Derivation and Comparison with Related Work}
\label{appendix:additional_related_work}
In this section, we first discuss the derivation of the regret bounds attained by this work in Table~\ref{table:compare}. Then, we discuss algorithm-specific parameters of related work.

\paragraph{Regret Bounds.} We discuss the regret bound appearing in Table~\ref{table:compare}. We obtain the tightest regret bounds by employing the negative entropy setup of OMA in Sec.~\ref{appendix:setups_omd},  allowing us to optimize the regret constants. In particular, the regret of  OMA when configured with the negative-entropy mirror map grows logarithmically w.r.t. the problem dimension (i.e., $\BigO{r\sqrt{\log(n/r) T}}$ in Corollary~\ref{corollary:1}) instead of sublinearly as in the case of OGA (e.g., $\BigO{\sqrt{rnT}}$ in Corollary~\ref{corollary:2}). We note that the related work assumes that the reward functions are bounded by $1$. We make the same assumption to better compare our bounds in Table~\ref{tab:bounds}.  However, note that our bounds are further scaled by $d_\infty = \max_{i \in [n]} \set{ f(\set{i}) - f(\emptyset)} =\max_{i \in [n]} \set{ f(\set{i})} \in [1/n, 1]$ which can further improve our bounds under the static and dynamic full-information settings.

\paragraph{Algorithm-specific Parameters in Related Work.} \citet{niazadeh2021online} require a Blackwell approachability policy. We denote by  $O_{a}$ its time complexity. \citet{onlineassignement} require a color palette size  parameter $c_{\mathrm{p}} \in \naturals$. This parameter creates $c_{\mathrm{p}} $ expert algorithms (OCO algorithms operating over the simplex) for a single slot, e.g., over partition matroids the algorithm instantiates $r c_{\mathrm{p}}$ experts.  Harvey et al.~\cite{harvey2020improved} require a discretization parameter $\epsilon > 0$ to convert their continuous-time algorithm to a discrete algorithm, an oracle that gives the multi-linear extension with time-complexity $O_{\mathrm{m}}$, and swap rounding~\ref{appendix:swap}. Works that operate on continuous DR-submodular objectives~\cite{chen2018online,zhang2019online,zhang2022stochastic} require OCO policies with a time complexity $O_{\mathrm{oco}}$; moreover, these algorithms operate over the multilinear extension with time complexity  $O_{\mathrm{m}}$. The decisions can be rounded through a randomized-rounding algorithm to operate over matroids (e.g., swap rounding~\ref{appendix:swap}). We assume that these algorithms have access to a decomposition oracle; then the rounding step has an additional time complexity of $\BigO{r^2 n}$.  \citet{kakade2007playing} operate on Linearly Weighted Decomposable (LWD) functions, i.e., reward functions of the form $f_t(\x) = \vec\Phi(\x) \cdot \vec w_t$, where  $\vec \Phi(\x) \in \reals^s$ is a  vector-valued set function known to the decision maker, and $\vec w_t\in \reals^s$ are time-varying vector-valued weights selected by an adversary for some $s \in \naturals$. This algorithm requires access to an $\alpha$-approximation oracle with time-complexity~$O_\alpha$.

\section{Proof of Theorem~\ref{theorem:sandwich}}
\label{appendix:sandwich}
\begin{proof}
Consider the sequence of reward functions $\set{f_1, f_2, \dots, f_T} \in \F^T$ and the associated sequence of concave relaxations  $\set{\cfunc_1, \cfunc_2, \dots, \cfunc_T} \in \hat{\F}^T$. Then,
\begin{align}
    \max_{\x \in \X} \sum_{t\in\T} f_t(\x) &\leq \max_{\x \in \X} \sum_{t\in\T} \cfunc_t(\x) \leq \max_{\y \in \Y} \sum_{t\in\T} \cfunc_t(\y).\label{eq:proof:piece1}
\end{align}
The first inequality is implied by Eq.~\eqref{eq:alpha_approx_upper} in Assumption~\ref{assumption:sandwich}, and the second inequality holds because maximizing over a superset $\Y =\conv\X \supseteq \X$ can only increase the objective value attained. The total expected reward obtained by the RAOCO policy  is given by 
\begin{align}
    \E_\Xi \interval{\sum_{t\in\T} f_t(\x_t)} = \sum_{t\in\T}\E_\Xi  \interval{ f_t(\x_t)} =   \sum_{t\in\T}\E_\Xi  \interval{ f_t(\Xi(\y_t))}\overset{\eqref{eq:alpha_approx_lower}}{\geq} \alpha \sum_{t\in\T} \cfunc_t(\y_t).\label{eq:proof:piece2}
\end{align}
From Eqs.~\eqref{eq:proof:piece1} and \eqref{eq:proof:piece2}, it follows that:
\begin{align}
    \alpha \max_{\x \in \X}\sum_{t\in\T} f_t(\x) -  \E_\Xi \interval{\sum_{t\in\T} f_t(\x_t)} \leq  \alpha \max_{\y \in \Y}\sum_{t\in\T} \cfunc_t(\y) -  \alpha \sum_{t\in\T} \cfunc_t(\y_t).
\end{align}
We conclude the proof by noting that the above inequality holds for arbitrary sequences of reward functions $\set{f_1, f_2, \dots, f_T} \in \F^T$.
\end{proof}

\section{Proof of Lemma~\ref{proposition:sandwich2}}
\label{appendix:sandwich2}

We begin by proving a series of auxiliary lemmas. 
\begin{lemma}
Consider  $n \in \naturals$, $\y \in [0,1]^n$, $b \in \reals_{>0}$, and  $\vec {w} \in   [0,b]^n$.  The following holds
\begin{align}
  \min\left\{b , \sum_{i \in V} y_i w_i \right\} \geq b - b \prod_{i \in V} (1 - y_i  w_i /b).
  \label{eq:lower_bound_piece}
\end{align}
\label{lemma:util_bounds_lower}
\end{lemma}
\begin{proof}
We define $A_n \triangleq b - b \prod_{i \in V} (1 - y_i  w_i/b)$ and $B_n \triangleq \min\left\{b, \sum_{i \in V} y_i  w_i\right\}$. 

{We first show by induction that, if $A_n \leq B_n$, then this inequality holds also for $n+1$.}

\noindent\emph{Base Case ($n=1$).}
\begin{align}
    A_1 &= b - b + y_1 w_1 = y_1 w_1 = \min\{b, w_1 y_1\} = B_1.
\end{align}
\noindent\emph{Induction Step.}
\begin{align}
    A_{n+1} &= b - b \prod_{i \in [n+1]} (1 - y_i  w_i/b )
    \label{eq:an+1}
    \\
    &=b - b \prod_{i \in [n]} (1 - y_i  w_i/b ) (1 - y_{n+1} w_{n+1}/b) \\
    &= b - b \prod_{i \in [n]} (1 - y_i w_i/b) +  (b y_{n+1} w_{n+1}/b) \prod_{i \in [n]} (1 - y_i w_i/b)\\
    &= A_n +  y_{n+1} w_{n+1}\prod_{i \in [n]} (1 - y_i w_i/b) \\
    &\leq A_n +  y_{n+1} w_{n+1}.
  \end{align}
The last inequality holds since by construction $w_i\le b$ and thus $0\le y_i w_i /b \le 1$, and $0 \leq \prod_{i \in V} \left(1- {y_i w_i}/{b}\right)\leq 1$. For the same reason, $0\le\prod_{i \in [n+1]} (1 - y_i  w_i/b )\le 1$ and thus, by~\eqref{eq:an+1}, we have
$A_{n+1} \leq b$. Moreover, note that if $B_n = b$ then $B_{n+1} = b$. Therefore:
\begin{align}
    A_{n+1} &\leq \min\left\{b, A_n +  y_{n+1}  w_{n+1}\right\} \\
    &\leq \min\left\{b, B_n +  y_{n+1}  w_{n+1} \right\}\\
    &= \begin{cases}
    \min\left\{b, \sum^{n+1}_{i=1} y_i w_i\right\} = B_{n+1}, &\text{ if}\quad B_n \leq b,  \\
    \min\left\{b, b + y_{n+1}  w_{n+1} \right\} = b = B_{n+1}, &\text{ if}\quad B_n = b,
    \end{cases}
\end{align}
and the proof by induction is completed.
\end{proof}

\begin{lemma}\label{lemma:util_bounds_upper}
Consider $n \in \naturals$, $\y \in [0,1]^n$, $b \in \reals_{>0}$, and $\vec{w} \in  [0,b]^n$. The following holds
\begin{align}
    b - b\prod_{i \in [n]}  (1- y_i w_i / b)  \geq (1-1/n)^{n}  \min\left\{b,  \sum_{i \in [n]} y_i  w_i\right\},
     \label{eq:upper_bound_piece}
\end{align}
\end{lemma}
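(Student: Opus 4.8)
The plan is to strip away the scale parameters and reduce to a single scale-free inequality, then handle the two regimes of the minimum separately. First I would substitute $z_i \triangleq y_i w_i / b$; the hypotheses $y_i\in[0,1]$ and $w_i\in[0,b]$ guarantee $z_i\in[0,1]$, and dividing \eqref{eq:upper_bound_piece} through by $b>0$ turns it into the claim
\[
1 - \prod_{i\in[n]}(1-z_i) \;\ge\; c\,\min\set{1,\; \sum_{i\in[n]} z_i}, \qquad z_i\in[0,1],
\]
which I would establish with the sharp constant $c = 1-(1-1/n)^n$. Because $(1-1/n)^n\le 1/e\le\tfrac12\le 1-(1-1/n)^n$, this sharp form immediately implies the displayed bound (whose constant $(1-1/n)^n$ is smaller), so proving it is enough.

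For the saturated regime $\sum_i z_i\ge 1$ the minimum equals $1$, and the target becomes $\prod_i(1-z_i)\le(1-1/n)^n$. This is a one-line application of AM--GM: $\prod_i(1-z_i)\le\parentheses{\tfrac1n\sum_i(1-z_i)}^n=\parentheses{1-\tfrac1n\sum_i z_i}^n\le(1-1/n)^n$, where $1-\tfrac1n\sum_i z_i\in[0,1-1/n]$ is legitimate since $1\le\sum_i z_i\le n$. I expect this case to be entirely routine.

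The interesting regime is $\sum_i z_i\le 1$ (the case $\sum_i z_i=0$ being trivial), where the minimum equals $S\triangleq\sum_i z_i$ and I must show $1-\prod_i(1-z_i)\ge cS$. Here the idea is a scaling/concavity argument. Setting $z_i'\triangleq z_i/S$ so that $\sum_i z_i'=1$, I would consider $\psi(t)\triangleq 1-\prod_i(1-t z_i')$ on $t\in[0,1]$; since its derivative $\sum_i z_i'\prod_{j\ne i}(1-t z_j')$ is nonincreasing in $t$, the function $\psi$ is concave with $\psi(0)=0$, so the chord-from-the-origin bound gives $\psi(S)\ge S\,\psi(1)$. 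As $\psi(S)=1-\prod_i(1-z_i)$, it then remains to show $\psi(1)=1-\prod_i(1-z_i')\ge c$, i.e.\ to \emph{maximize} $\prod_i(1-z_i')$ over the simplex $\sum_i z_i'=1$. Concavity of $\log(1-\cdot)$ and Jensen (equivalently AM--GM) put the maximizer at the uniform point $z_i'=1/n$, giving $\prod_i(1-z_i')\le(1-1/n)^n$ and hence $\psi(1)\ge c$. Chaining the two steps yields $1-\prod_i(1-z_i)=\psi(S)\ge S\,\psi(1)\ge cS$, as desired.

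The main obstacle is this last regime, and specifically getting the scaling argument pointed the right way: the bound is tight exactly on the simplex face at the uniform configuration $z_i'=1/n$, and one recovers a general interior point $z_i=Sz_i'$ by scaling \emph{down} along the ray from the origin, which is where concavity (rather than the reverse inequality) is used. Verifying that $t\mapsto 1-\prod_i(1-tz_i')$ is genuinely concave on the whole interval, and that the simplex maximizer is the symmetric one, are the two elementary facts the argument rests on. Finally, multiplying the scale-free inequality through by $b$ undoes the normalization and reproduces \eqref{eq:upper_bound_piece}, completing the proof.
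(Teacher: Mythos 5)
Your proof is correct, and it departs from the paper's after the shared first step. The paper performs exactly your normalization $z_i = y_i w_i / b$ and then concludes in one line by citing Lemma~3.1 of Goemans and Williamson \cite{goemans1994new} for the scale-free inequality; you instead reprove that inequality from scratch, and in its sharp form $1-\prod_{i\in[n]}(1-z_i) \ge \bigl(1-(1-1/n)^n\bigr)\min\bigl\{1,\sum_{i\in[n]} z_i\bigr\}$. Both halves of your argument check out: in the saturated regime, AM--GM gives $\prod_i (1-z_i) \le \bigl(1-\frac{1}{n}\sum_i z_i\bigr)^n \le (1-1/n)^n$, and in the regime $S=\sum_i z_i \le 1$ your ray function $\psi(t)=1-\prod_i(1-t z_i/S)$ is genuinely concave on $[0,1]$ (its derivative is a nonnegative combination of products of nonnegative, nonincreasing factors, which is legitimate because $z_i \le S$ ensures $z_i/S \le 1$), whence $\psi(S)\ge S\,\psi(1)$, and Jensen/AM--GM at the simplex point yields $\psi(1)\ge 1-(1-1/n)^n$. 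Your preliminary observation that $(1-1/n)^n \le 1/e \le 1/2 \le 1-(1-1/n)^n$ correctly licenses the downgrade to the constant $(1-1/n)^n$ appearing in the lemma's display. What your route buys is self-containedness plus the sharp Goemans--Williamson constant --- which is, in fact, the constant the paper implicitly relies on elsewhere (e.g., the claim that $\alpha = 3/4$ for degree-$2$ WTP functions corresponds to $1-(1-1/2)^2$, not $(1-1/2)^2$), so your version also tidies up a notational wrinkle in the paper's statement of the bound. What the paper's route buys is brevity: one substitution and a citation.
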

\begin{proof}
    Consider $\y' = (y_i w_i/b)_{i \in [n]} \in [0,1]^n$. Then, it follows from \cite[Lemma~3.1]{goemans1994new}.
    \begin{align}
        1 - 1\prod_{i \in [n]}  (1- y'_i)  \geq (1-1/n)^n  \min\left\{1,  \sum_{i \in [n]} y'_i\right\}.
    \end{align}
Replacing $\y'$ by its value and multiplying both sides by $b$ yields Eq.~\eqref{eq:upper_bound_piece}.
\end{proof}

\begin{lemma}
Consider $n \in \naturals$, $\y \in \interval{0,1}^n$, and $\vec w \in [0,1]^n$. The following holds 
\begin{align}
  \prod_{i \in [n]} (1-w_i y_i) =  \sum_{S \in 2^{[n]}} \prod_{i \in S} (1-y_i) \prod_{i \in S} w_i \prod_{i \in [n]\setminus S} (1-w_i).\label{eq:tech_equiv_expression}
\end{align}\label{lemma:tech_equiv_expression}
\end{lemma}
\begin{proof}

We show by induction that, if Eq.~\eqref{eq:tech_equiv_expression} holds for some $n \in \naturals$, then this equality holds also for $n+1$. Define $A_n(\y) \triangleq \prod_{i \in [n]} (1-w_i y_i)$ and $B_n(\y) \triangleq \sum_{S \in 2^{[n]}} \prod_{i \in S} (1-y_i) \prod_{i \in S} w_i \prod_{i \in [n]\setminus S} (1-w_i)$ for $n \in \naturals$ and $y \in \interval{0,1}^n$.

\emph{Base case $(n=1)$.} In this setting the powerset of $\set{1}$ is  $2^{\set{1}} =\set{\emptyset, \set{1}} $.
\begin{align}
    1- w_1 y_1 &=  \sum_{S \in \set{\emptyset, \set{1}}}  \prod_{i \in S} (1-y_i) \prod_{i \in S} w_i \prod_{i \in [n]\setminus S} (1-w_i)\\
    &= 1-w_1 + (1-y_1) w_1 = 1- w_1 y_1. 
\end{align}

\emph{Induction Step.}
\begin{align}
    B_{n+1}(\y) &= \sum_{\substack{S \in 2^{[n+1]} \\ n+1 \notin S}} \prod_{i \in S} (1-y_i) \prod_{i \in S} w_i \prod_{i \in [n+1]\setminus S} (1-w_i) \nonumber \\
    &+  \sum_{\substack{S \in 2^{[n+1]} \\ n+1 \in S}} \prod_{i \in S} (1-y_i) \prod_{i \in S} w_i \prod_{i \in [n]\setminus S} (1-w_i)  \label{eq:induction_tech_p1} \\
     &= \sum_{S \in 2^{[n]} } \prod_{i \in S} (1-y_i) \prod_{i \in S} w_i \prod_{i \in [n]\setminus S} (1-w_i) (1-w_{n+1})  \nonumber \\
     &+ \sum_{S \in 2^{[n]} } \prod_{i \in S} (1-y_i) \prod_{i \in S} w_i \prod_{i \in [n]\setminus S} (1-w_i)  w_{n+1} (1-y_{n+1}) \label{eq:induction_tech_p2}\\
     &= A_n(\y) (1- w_{n+1}) (1-w_{n+1} + w_{n+1} (1 - y_{n+1})) = A_{n+1} (\y) \label{eq:induction_tech_p3}.
\end{align}
The first equality Eq.~\eqref{eq:induction_tech_p1} is obtained from the definition of $B_n(\y)$ and splitting the summation over the set $2^{[n+1]}$ into a summation over two subsets $\set{S \subseteq 2^{[n+1]}: n+1 \notin S}$ and $\set{S \subseteq 2^{[n+1]}: n+1 \in S}$.  Equation~\eqref{eq:induction_tech_p2} is obtained by factoring the terms that depend on $n+1$. Equation~\eqref{eq:induction_tech_p2} follows from the definition of $A_n(\y)$.  This concludes the proof.

\end{proof}


\begin{lemma}\label{lemma:weighted_negative_correlation}
Consider random variables $x_i \in \set{0,1}, i \in V$ with expected values $\E[x_i] = y_i, i \in V$ that are {negatively correlated}. The random variables $w_i x_i \in \set{0, w_i}$ are also negatively correlated for non-negative weights $w_i\in [0,1] ,i\in V$. The following holds for every $S \subseteq V$ 
\begin{align}                                                
  \E\interval{  \prod_{i \in S} (1-w_i x_i)} &\leq \prod_{i \in S} (1-w_iy_i), & \E\interval{  \prod_{i \in S} (1+w_i -x_i)} &\leq \prod_{i \in S} (1+w_i - y_i).\label{eq:weighted_negative_correlation}
\end{align}
\end{lemma}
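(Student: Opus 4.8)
The plan is to treat the three assertions separately but to let all of them lean on the pointwise identity already proved in Lemma~\ref{lemma:tech_equiv_expression}, which is the natural bridge between the weighted products appearing here and the plain negative correlation of the coordinates. The claim that the $w_i x_i$ are negatively correlated is the easy part: since the $w_i$ are fixed nonnegative constants, for any $S\subseteq V$ one has $\E[\prod_{i\in S} w_i x_i] = (\prod_{i\in S} w_i)\,\E[\prod_{i\in S} x_i]$, and applying the negative correlation of the $x_i$ (Def.~\ref{def:negatively_correlated_sampler}) gives $\E[\prod_{i\in S} w_i x_i]\le (\prod_{i\in S} w_i)\prod_{i\in S}\E[x_i]=\prod_{i\in S}\E[w_i x_i]$. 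No machinery beyond pulling constants out of the expectation is needed here.

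For the first inequality in~\eqref{eq:weighted_negative_correlation} I would exploit that Lemma~\ref{lemma:tech_equiv_expression} is a pointwise identity valid at every argument in $[0,1]^{|S|}$, hence in particular at every realization $\x\in\{0,1\}^V$. Applying it with ground set $S$ yields $\prod_{i\in S}(1-w_ix_i)=\sum_{T\subseteq S}\prod_{i\in T}(1-x_i)\prod_{i\in T}w_i\prod_{i\in S\setminus T}(1-w_i)$, and I would take expectations of both sides. Every coefficient $\prod_{i\in T}w_i\prod_{i\in S\setminus T}(1-w_i)$ is nonnegative because $w_i\in[0,1]$, so each term can be bounded using the lower-tail negative correlation $\E[\prod_{i\in T}(1-x_i)]\le\prod_{i\in T}(1-y_i)$. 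Substituting these termwise bounds and then applying Lemma~\ref{lemma:tech_equiv_expression} a \emph{second} time, now with $\y$ in place of $\x$, collapses the nonnegative combination back into the single product $\prod_{i\in S}(1-w_iy_i)$, which is exactly the desired bound.

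The second inequality I would derive from the first by a reparametrization of the weights rather than a fresh expansion. Writing $1+w_i-x_i=(1+w_i)\bigl(1-\frac{1}{1+w_i}x_i\bigr)$ and setting $w_i'\triangleq \frac{1}{1+w_i}$, which lies in $[0,1]$ since $w_i\ge 0$, I get $\prod_{i\in S}(1+w_i-x_i)=\prod_{i\in S}(1+w_i)\cdot\prod_{i\in S}(1-w_i'x_i)$. Taking expectations and invoking the already-proved first inequality for the weights $w_i'$ gives the upper bound $\prod_{i\in S}(1+w_i)\prod_{i\in S}(1-w_i'y_i)=\prod_{i\in S}(1+w_i-y_i)$, as claimed.

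The step I would be most careful about is the direction of negative correlation used in the middle of the argument: the expansion forces control of $\E[\prod_{i\in T}(1-x_i)]$, i.e. the lower-orthant product, rather than $\E[\prod_{i\in T}x_i]$. I would therefore invoke the fact that the negatively correlated roundings of \citet{chekuri2010dependent} (Lemma~\ref{theorem:rounding}) satisfy both the upper- and lower-tail bounds, so that $\E[\prod_{i\in T}(1-x_i)]\le\prod_{i\in T}(1-y_i)$ is genuinely available alongside $\E[\prod_{i\in T}x_i]\le\prod_{i\in T}y_i$. Everything else is routine bookkeeping: verifying $w_i'\in[0,1]$ for the reduction, and confirming that the expansion coefficients are nonnegative so that the termwise inequalities aggregate in the correct direction.
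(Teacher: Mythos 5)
Your proposal is correct, and for the core inequality it is essentially the paper's own argument: the paper likewise applies the pointwise identity of Lemma~\ref{lemma:tech_equiv_expression} to expand $\prod_{i\in S}(1-w_ix_i)$ realization by realization, takes expectations term by term (the coefficients $\prod_{i\in S'}w_i\prod_{i\in S\setminus S'}(1-w_i)$ being nonnegative), bounds each $\E\interval{\prod_{i\in S'}(1-x_i)}$ by $\prod_{i\in S'}(1-y_i)$ via negative correlation, and collapses the sum back with a second application of the identity at $\y$. Where you genuinely diverge is the second inequality: the paper's printed proof stops after the first chain and never treats $\E\interval{\prod_{i\in S}(1+w_i-x_i)}$ explicitly, presumably leaving an analogous expansion to the reader, whereas your reparametrization $1+w_i-x_i=(1+w_i)\bigl(1-w_i'x_i\bigr)$ with $w_i'=\tfrac{1}{1+w_i}\in(0,1]$ derives it in one line from the first inequality; this is cleaner and avoids redoing the combinatorial bookkeeping. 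Your caution about the \emph{direction} of negative correlation is also well placed and, if anything, more careful than the paper: Def.~\ref{def:negatively_correlated_sampler} (and its footnote) only stipulates the upper-orthant bound $\E\interval{\prod_{i\in S}x_i}\leq\prod_{i\in S}\E\interval{x_i}$, while the proof needs the complement version $\E\interval{\prod_{i\in T}(1-x_i)}\leq\prod_{i\in T}(1-y_i)$, which the paper uses tacitly; as you note, this is legitimate because the swap and pipage roundings of \citet{chekuri2010dependent} satisfy negative correlation for both the coordinates and their complements. Your one-line verification that the $w_ix_i$ are negatively correlated (pulling the constants $\prod_{i\in S}w_i$ out of the expectation) is the obvious and correct argument, which the paper also omits.
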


\begin{proof} We exploit the alternative expression of $ \prod_{i \in B} (1-w_i x_i)$ developed in Lemma~\ref{lemma:tech_equiv_expression} and the negative correlation property to prove this proposition. Consider l.h.s. term in Eq.~\eqref{eq:weighted_negative_correlation}
    \begin{align}
        \E\interval{  \prod_{i \in S} (1-w_i x_i)} &\overset{\eqref{eq:tech_equiv_expression}}{=} \E\interval{  \sum_{S' \in 2^{S}} \prod_{i \in S'} (1-x_i) \prod_{i \in S'} w_i \prod_{i \in S\setminus S'} (1-w_i)}\\
        &= \sum_{S' \in 2^{S}} \E\interval{ \prod_{i \in S'} (1-x_i)} \prod_{i \in S'} w_i \prod_{i \in S \setminus S'} (1-w_i)\\
        &\leq \sum_{S' \in 2^{S}}  \prod_{i \in S'} (1-y_i) \prod_{i \in S'} w_i \prod_{i \in S \setminus S'} (1-w_i)\\
        &=  \prod_{i \in S} (1-w_i y_i).\label{eq:exp_piece_induction}
    \end{align}  
\end{proof}

Putting everything together,
    consider $\y \in \interval{0,1}^n$ and $\x = \Xi (\y)$ stated in the Proposition. From Lemmas~\ref{lemma:util_bounds_lower}--\ref{lemma:weighted_negative_correlation} we have:
    \begin{align}
        \E_\Xi\interval{f(\x)} &\overset{\eqref{eq:budget-additive}}{=}  \E_\Xi\interval{\sum_{i \in C} c_{i} \min\set{b_{i}, \sum_{j \in S_{i}}w_{i,j} x_{j}}} \\&\overset{\eqref{eq:lower_bound_piece}}{\geq}  \E_\Xi\interval{\sum_{i \in C} c_{i} \parentheses{ b_i - b_i \prod_{j \in S_i} \parentheses{1-\frac{x_i w_{i,j}}{b_i}}}}\\
        & \overset{\eqref{eq:exp_piece_induction}}{\geq}\sum_{i \in C} c_{i} \parentheses{ b_i - b_i \prod_{j \in S_i} \parentheses{1-\frac{y_i w_{i,j}}{b_i}}} \\& \overset{\eqref{eq:upper_bound_piece}}{\geq} \parentheses{1 - \frac{1}{\Delta}}^{\Delta}\sum_{i \in C} c_{i} \min\set{b_{i}, \sum_{j \in S_{i}}w_{i,j} y_{j}}.
    \end{align}
Note that when $b_i = \infty$, then $\E_\Xi\interval{\min\set{b_{i}, \sum_{j \in S_{i}}w_{i,j} x_{j}}} =\min\set{b_{i}, \sum_{j \in S_{i}}w_{i,j} y_{j}}$ due to linearity. This concludes the proof of Lemma~\ref{proposition:sandwich2}. \qed

\section{Swap Rounding}\label{appendix:swap}

In swap rounding~\cite{chekuri2010dependent}, we are given a fractional solution $\y \in \Y=\conv{\X}$. We assume that $\y$ is decomposed as a convex combination of bases (i.e., maximal elements) of $\X$, i.e., \begin{align}\mathbf{y} &= \sum_{k=1}^K \gamma_k \z_k, & &\text{where} & \gamma_k&\in [0,1],  \sum_{k=1}^K\gamma_k=1,~\text{and}~\z_k\in \X,k\in[K],~\text{are bases of}~\X.\end{align} 
We note again that Carathéodory's theorem implies the existence of such decomposition of at most $n+1$ points in $\X$; moreover, there exists a decomposition algorithm~\cite{cunningham1984testing} for general matroids with a running time $\BigO{n^6}$. 
However, in all practical cases we consider (including uniform and partition matroids) the complexity is significantly lower. More specifically,
on partition matroids, swap rounding reduces to an algorithm with linear time complexity, namely, $\BigO{m n}$ for $m$ partitions~\cite{Srinivasan2001}.

In short, swap rounding randomly picks between pairs of basis vectors over several iterations, producing a random final integral solution.  Let $\beta_k \triangleq \sum_{k'=1}^k \gamma_k$,  $k\in [K]$. Then:
\begin{itemize} 
\item Set $\x_1=\z_1$.
\item Then, at iteration,  a random selection is made between vector $\x_{k}\in \X$ and basis $\z_{k+1}\in\X$ to produce a new integral  $\x_{k+1}\in \X$ as follows: 
\begin{align}
 \x_{k+1} = \begin{cases} \x_{k}, &\text{with probability}  ~\frac{\beta_k}{\beta_{k+1}}, \\
 \z_{k+1},& \text{with probability}\frac{\gamma_{k+1}}{\beta_{k+1}}.
 \end{cases}
 \end{align}
 The authors refer to this randomized selection as a ``merge'' between $\x_{k}$ and $\z_{k+1}$.
\end{itemize}
The final vector $\x_K$ is the output of the algorithm and satisfies, by construction:
\begin{align}
    \E[\x_K] = \sum_{k=1}^K \gamma_k \z_k = \y.
\end{align}

\section{Extensions}
\label{appendix:extensions}

\subsection{Dynamic Setting} The derivation of Theorem~\ref{thm:dynamic} follows the same steps as the proof of Theorem~\ref{theorem:sandwich}.  Recall that 
$\Lambda_{\X} (T,P_T) \subset \X^T$ is the set of sequences of decision in a set $\X$ with path length less than $P_T \in \reals_{\geq 0}$ over a time horizon $T$ defined in Sec.\ref{sec:extensions}. Consider the sequence of reward functions $\set{f_1, f_2, \dots, f_T} \in \F^T$ and the associated sequence of concave relaxations  $\set{\cfunc_1, \cfunc_2, \dots, \cfunc_T} \in \hat{\F}^T$. Then, Asm.~\ref{assumption:sandwich} implies 
\begin{align}
    \max_{\parentheses{\x_t}^T_{t=1} \in \Lambda_{\X} (T,P_T)} \sum_{t\in\T} f_t(\x_t) &\leq \max_{\parentheses{\x_t}^T_{t=1} \in \Lambda_{\X} (T,P_T)} \sum_{t\in\T} \cfunc_t(\x_t) \leq \max_{\parentheses{\y_t}^T_{t=1} \in \Lambda_{\Y} (T,P_T)} \sum_{t\in\T} \cfunc_t(\y_t).\label{eq:proof:dynamic_piece1}
\end{align}
The first inequality is implied by Eq.~\eqref{eq:alpha_approx_upper} in Asm.~\ref{assumption:sandwich}, and the second inequality holds because maximizing over a superset $\Lambda_{\Y} (T,P_T)\supseteq \Lambda_{\X} (T,P_T)$ can only increase the attained objective value. The total expected reward obtained by the RAOCO policy  is given by 
\begin{align}
    \E_\Xi \interval{\sum_{t\in\T} f_t(\x_t)}\overset{\eqref{eq:proof:piece2}}{\geq} \alpha \sum_{t\in\T} \cfunc_t(\y_t).\label{eq:proof:dynamic_piece2}
\end{align}
Eqs.~\eqref{eq:proof:dynamic_piece1} and~\eqref{eq:proof:dynamic_piece2} imply that
\begin{align}
    \aregret{}_{T,P_T} \parentheses{\vec\P_\X} \leq \alpha \cdot \regret{}_{T,P_T} \parentheses{{\vec{\P}}_{\Y}}.\label{e:general_sandwitch}
\end{align}
The OCO policy ${\vec{\P}}_{\Y}$ is OMA in Alg.~\ref{alg:oma} configured with an appropriately selected mirror map satisfying  Asms.~\ref{asm:mirror_map}--\ref{asm:bounded_gradient_mirror_map} (e.g. OGA by selecting $\Phi(\x) = \frac{1}{2} \norm{\x}_2^2$). Then under Asm.~\ref{asm:oco}, we conclude from Theorem~\ref{thm:general_regret_dynamic+optimistic} the following 
\begin{align}
    \aregret{}_{T,P_T} \parentheses{\vec\P_\X} &\leq \alpha \cdot \regret{}_{T,P_T} \parentheses{{\vec{\P}}_{\Y}} \leq \sqrt{2/\rho\parentheses{D^2 + 2 L_\Phi P_T} L^2 T}= \BigO{\sqrt {P_T T}},
\end{align}
where $L, \rho^{-1}, D$ are all bounded from above by a finite quantity.

\subsection{Optimistic Setting} 
Under Asm.~\ref{assumption:sandwich}, the following inequality holds from Eq.~\eqref{e:general_sandwitch} in the dynamic setting.
\begin{align}
     \aregret{}_{T,P_T} \parentheses{\vec\P_\X} \leq \alpha \cdot \regret{}_{T,P_T} \parentheses{{\vec{\P}}_{\Y}}.
\end{align}
Combining the above inequality and Theorem~\ref{thm:general_regret_dynamic+optimistic} yields the following upper bound: 
\begin{align}
         \aregret{}_{T,P_T} \parentheses{\vec\P_\X} \leq \alpha \cdot \sqrt{{2/\rho\parentheses{D^2 + 2 L_\Phi P_T} \sum^T_{t=1} \norm{\vec g_t - \pg_t}^2_\star}} = \BigO{\sqrt{ P_T \sum^T_{t=1} \norm{\vec g_t - \pg_t}^2_\star}}
\end{align}
where $L, \rho^{-1}, D, P_T$ are all bounded from above by a finite quantity.

\subsection{Bandit setting}\label{appendix:bandit:setting}

\begin{table*}[!t]
\resizebox{\textwidth}{!}{
\begin{tiny}
\begin{tabular}{||c||c||c|c|c||c||c||c||}
\toprule
 & &\multicolumn{5}{c||}{\textbf{$(1-1/e)$-regret (Bandit)}} & \\
\cline{3-8}
\textbf{Paper}  & \textbf{Prob. Class} &\multicolumn{3}{c||}{\textbf{Static}} &\multicolumn{1}{c||}{\textbf{Dynamic}} & \multicolumn{1}{c||}{\textbf{Optimistic}} &  \textbf{Time} \\
\cline{3-8}
&&\textbf{Uni.}&\textbf{Part.}&\textbf{Gen.}&\textbf{Part.}&\textbf{Part.} & \\
\cline{1-8}
\cite{kakade2007playing}  & LWD&\multicolumn{3}{c||}{$n^{\frac{5}{3}} T^{\frac{2}{3}}$}& \xmark & \xmark &  $ T O_{\mathrm{\alpha}}$\\ 
\hline
\hline
\cite{niazadeh2021online} & GS& $rn^{\frac{2}{3}} \log^{\frac{1}{3}} (n) T^{\frac{2}{3}} $&\xmark&\xmark&\xmark&\xmark& $T^4 O_b$\\
\hline
\cite{onlineassignement}  & GS &\multicolumn{2}{c|}{$\log^{\frac{1}{3}}\!(n)(n  c_{\mathrm p} r)^{\frac{2}{3}}\!T^{\frac{2}{3}}$}& \xmark & \xmark & \xmark & ${n^2 c_{\mathrm p} }$\\ 
\hline
\cite{wan2023bandit}  & GS &\multicolumn{2}{c|}{$r^{\frac{4}{3}} n^{\frac{4}{3}}T^{\frac{2}{3}}$}& \xmark & \xmark& \xmark & $ n r$ \\
\hline
\hline
\paintbg \parbox{.25cm}{\centering This work}& \parbox{1cm}{\centering\paintbg  GS}&  \multicolumn{2}{c|}{\parbox{3cm}{\paintbg \centering $ \alpha r^{4/3} n^{4/3} T^{2/3}$}} & \paintbg \xmark & \paintbg $ \sqrt{P_T} T^{4/5}$& \parbox{3cm}{\centering\paintbg $W \sqrt{P_T \sum^{T/W}_{l=1} \norm{\tilde{\vec g}_{l} - \vec g^\pi_l}^2_2}$ \\ $+ T (\delta + 1/W)$ }&\paintbg ${n r}$\\
\bottomrule
\end{tabular}
\end{tiny}
}
\caption{Order ($\BigO{\,\cdot\,}$) comparison of regrets and time complexities attained by different online submodular optimization algorithms for general submodular  (GS) under the bandit setting. We also include Kakade et al.~\cite{kakade2007playing}, who operate on Linearly Weighted Decomposable (LWD) functions. Competing algorithms that outperform us either only operate on limited constraint sets \cite{niazadeh2021online,onlineassignement} or the LWD class \cite{kakade2007playing}; with the exception of the work of~\cite{wan2023bandit} which attains a tighter regret in the static setting over GS class, however when we employ a specialized static OCO algorithm we attain the same bound (see the discussion in Sec.~\ref{appendix:extensions}).   Our work also generalizes to the dynamic and optimistic settings in the bandit setting.}
\vspace*{-15pt}
\label{table:compare2}
\end{table*}

\begin{algorithm}[t]
\caption{Limited-Information Rounding-augmented OCO (LIRAOCO) policy}
\begin{algorithmic}[1]
\Require Exploration parameter $\delta \in (0,1/2n]$, OCO policy $\vec{\P}_{\Y_\delta}$, $\vec H_l, l \in [T/W]$
\For {$\mathcal{W}_l = \mathcal{W}_1, \mathcal{W}_2, \dots, \mathcal{W}_{T/W}$} \Comment{$\mathcal W_l$ is a sequence of $|\mathcal W_l| = W$ contiguous timeslots s.t. $[T] = \bigcup_{l \in [T/W]}\mathcal{W}_l$}
\State Draw $t_l$ u.a.r. from $\mathcal {W}_l$ 
\For{$t \in \mathcal {W}_l$}
\If {$t  = t_l$}
\State  $\y^\delta_l \gets \P_{\Y_\delta, l}\parentheses{ (\y^\delta_s)^{l-1}_{s=1}, (r_s)^{l-1}_{s=1}}$
\State Sample $z_l$ from $Z$ where $\mathbb P (Z < z) = \int^z_0\frac{\exp(s-1)}{1 - \exp(-1)} \mathds{1} \parentheses{s \in [0,1]} ds$
\State Sample $\vec v_l$ u.a.r. from $\mathbb S_{n-1}$
\If {$z_l \geq \frac{1}{2} $}
\State Sample $\vec u_l$ from $\set{\vec 0, \vec e_1, \vec e_2, \dots, \vec e_n}$ w.p. $\mathbb P (\vec u_l = \vec 0) = 1/2$ and  $\mathbb P (\vec u_l = \vec e_i) = 1/(2n)$
\State $\y_l \gets z_l \cdot  (\y^\delta_{l} + \parentheses{(\vec H_l \vec v_l)^{\intercal}\vec u_l} \vec u_l)  $ 
\State Play $\x_t \gets \hat{\Xi}(\y_l)$
\State Receive reward $f_t(\x_t)$ 
\State $ \tilde l_l \gets 2 \alpha \frac{n}{z_l} f_t(\x_t) (1 -2 \mathds{1} \parentheses{\vec u_l = \vec 0})$ 
\Else
\State Sample $\vec u_l$ u.a.r. from $\set{\vec e_1, \vec e_2, \dots, \vec e_n}$ 
\State Assign u.a.r. $\y_l \gets z_l \y^\delta_l + \frac{1}{2} \vec u_l$ or $\y_l \gets z_l \y^\delta_l$ 
\State Play $\x_t \gets \hat{\Xi}(\y_l)$
\State Receive reward $f_t(\x_t)$ 
\State $ \tilde l_l \gets 4 \alpha n \parentheses{(\vec H_l \vec v_l)^{\intercal}\vec u_l} f_t(\x_t) (1 -2 \mathds{1} \parentheses{\vec y_l = z_l \vec \y^\delta_l})$ 
\EndIf
\State Construct an estimate of supergradient $\tilde{\vec{g}}_l \gets  n \tilde l_l \vec H^{-1}_{l} \vec v_l$
\State Construct a linear function $r_l(\y) =  \y \cdot \tilde{\vec {g}}_l$
\Else 
\State Play $\x_t \gets \hat{\Xi}(\y^\delta_{l})$
\State Receive reward $f_t(\x_{t})$ 
\EndIf
\EndFor
\EndFor
\end{algorithmic}
\label{alg:lisaoco}
\end{algorithm}

In this section, we provide a full description of the Limited-Information Rounding-augmented OCO (LIRAOCO) policy in Alg.~\ref{alg:lisaoco}. The policy decomposes the timeslots $[T]$  to $T/W$ (assuming w.l.o.g. $T/W \in \naturals$) sequences of equally-sized contiguous timeslots denoted by $\mathcal{W}_l \triangleq \set{(l-1) W  +1, (l-1) W + 2, \dots, l W}$ for $l\in[T/W]$. The algorithm takes as input exploration matrices  $\vec H_l$ and shrinkage parameter $\delta >0$. In what follows we consider $\vec H_l = \delta \vec I$. Moreover, the algorithm utilizes an OCO algorithm $\P_{\Y_\delta}$ operating over a  properly designed convex subset $\Y_\delta$ of a set  $\Y$. The decisions are sampled through a rounding scheme $\hat \Xi: \Y \to \X$ at a given timeslot. The policy computes gradient estimates to explore the best actions by an OCO algorithm. The regret bound of the OCO policy is exploited by freezing the decision obtained by the algorithm for a time window of size $W \in \naturals$. Overall, the algorithm attains the bound specified in Theorem~\ref{theorem:bandit}. In what follows, we describe the decision set $\X$, the construction of the subset $\Y_\delta \subseteq \Y$, and the rounding scheme $\tilde \Xi$. Moreover, we introduce additional definitions that include the continuous extension of the reward function induced by the randomized rounding scheme, and its auxiliary function. We finally show that the gradient estimates in Alg.~\ref{alg:lisaoco} (line~21) are unbiased estimates of the gradients of the auxiliary function; this reduction relies on the reduction of \citet{wan2023bandit}. We restate Theorem~\ref{theorem:bandit}, and discuss how the regret bounds under dynamic and optimistic settings are obtained when OOMA in Alg.~\ref{alg:ooma} is selected as the OCO policy.

Under this setting, we restrict the decision set $\X$ to partition matroids.  Given a ground set $V = [n]$, partitions $V_i \subseteq V$, cardinalities $r_i \in \naturals$ for $i \in [m]$, the set $\X$ is given by  
\begin{align}
\textstyle \X \triangleq \set{\x \in \set{0,1}^{n}: \sum_{j \in V_i} x_j \leq r_i, i \in [m]}.\label{eq:parition_matroid}
\end{align} 
Let $n' \triangleq \sum_{i \in [m]}|V_i| \times r_j $, we define the alternative representation of the partition matroid $\X$ given as 
\begin{align}\label{eq:equiv_parition_matroid}
 \textstyle \X' \triangleq \set{\x \in \set{0,1}^{n'}: \sum_{k\in V_i} x_{i,j, k} \leq 1, j \in [r_i], i \in [m]},
\end{align} 
where $x_{i,j,k}$  indicates whether item $k \in V$ is selected from partition $V_i$ at slot $j$. Consider the mapping $T: \X' \to \X$ defined by
\begin{align}
\textstyle T( \x') &\triangleq \textstyle \parentheses{\min\set{1, \sum_{i \in [m]: k \in V_i} \sum_{j \in [r_i]} \x'_{i,j,k}}}_{k \in V}, \qquad \text{for $\vec x \in \X'$}. \label{eq:Tmapping}
\end{align}
It is easy to see that maximizing a function $f: \X \to \reals_{\geq 0}$ over the sets $\X$ or $\set{ T(\x'): \x' \in \X'}$ attains the same value, i.e., 
\begin{align}
\textstyle \max_{\x' \in \X'} f(T(\x')) =  \max_{\x \in \X} f(\x).
\end{align}
Thus, we could simply optimize over the set $\X'$ and map to points in $\X$ through the mapping $T$.

We define the fractional decision set for the OCO policy $\P_{\Y_\delta}$ employed by LIRAOCO policy in Alg.~\ref{alg:lisaoco} to be the convex set $\Y \triangleq \conv{\X'}$.  We define an appropriate subset of $\Y$ denoted by $\Y_\delta$ s.t. for every point $\y \in \Y_\delta$ we can form a ball of radius $\delta \in (0,1)$ contained in $\Y$. This set permits the decision maker to explore (perturb) without exiting the original decision set $\Y$. We provide an illustration in Fig.~\ref{fig:illustration}. Formally, 
\begin{proposition} \label{proposition:shrunken_set}
    Given the convex set $\Y = \conv{\X'}$ in Eq.~\eqref{eq:equiv_parition_matroid} and $\delta \in [0, 1/\parentheses{2 \max\set{|V_i|:i\in [m]}}]$, every ball formed around points in  
    \begin{align}
        \Y_\delta \triangleq  \set{\parentheses{(1- 2 \delta |V_i| ) \y_{i,j}+ \vec 1_{|V_i|} \delta}_{ i \in [m], j \in [r_i]}: \y \in \Y}
    \end{align}
    are contained in $\Y$, i.e., 
    \begin{align}
        \y + \delta \vec v \in \Y,\qquad  \text{for $\y \in \Y_\delta$, $\vec v \in \mathbb B_{n}$.}
    \end{align}
\end{proposition}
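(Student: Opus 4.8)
The plan is to first rewrite $\Y=\conv{\X'}$ in explicit polytope form and then verify, one constraint family at a time, that every point of the form $\y'+\delta\vec v$ with $\y'\in\Y_\delta$ and $\vec v\in\mathbb B_n$ (i.e. $\norm{\vec v}_2\le 1$) satisfies the defining inequalities of $\Y$. Since $\X'$ in Eq.~\eqref{eq:equiv_parition_matroid} is a Cartesian product, over the index pairs $(i,j)$ with $i\in[m]$, $j\in[r_i]$, of the $0/1$ vectors on each block $V_i$ whose entries sum to at most one, its convex hull factorizes into a product of scaled simplices, so
\begin{align}
\Y=\set{\y\in[0,1]^{n'}:\textstyle\sum_{k\in V_i}y_{i,j,k}\le 1,\ j\in[r_i],\ i\in[m]}.
\end{align}
It therefore suffices to check the box constraints $0\le(\y'+\delta\vec v)_{i,j,k}\le 1$ and the block-sum constraints $\sum_{k\in V_i}(\y'+\delta\vec v)_{i,j,k}\le 1$. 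Throughout I would write $\y'_{i,j,k}=(1-2\delta\card{V_i})y_{i,j,k}+\delta$ for the generator $\y\in\Y$ of $\y'$, noting that the hypothesis $\delta\le 1/(2\max_i\card{V_i})$ forces $1-2\delta\card{V_i}\ge 0$, so the affine shrink-and-shift map genuinely contracts each block toward its barycenter.

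For the box constraints I would use that a single coordinate of a unit vector obeys $\abs{v_{i,j,k}}\le\norm{\vec v}_2\le 1$. On the lower side, $\y'_{i,j,k}\ge\delta$ (because $y_{i,j,k}\ge 0$ and $1-2\delta\card{V_i}\ge 0$), hence $(\y'+\delta\vec v)_{i,j,k}\ge\delta-\delta=0$; the shift by $\delta$ is introduced precisely to absorb the worst-case perturbation. On the upper side, $y_{i,j,k}\le 1$ gives $\y'_{i,j,k}\le 1-2\delta\card{V_i}+\delta$, so $(\y'+\delta\vec v)_{i,j,k}\le 1-2\delta\card{V_i}+2\delta=1-2\delta(\card{V_i}-1)\le 1$, using $\card{V_i}\ge 1$.

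For the block-sum constraint I would sum the shifted generator over a block, obtaining $\sum_{k\in V_i}\y'_{i,j,k}=(1-2\delta\card{V_i})\sum_{k\in V_i}y_{i,j,k}+\delta\card{V_i}\le 1-\delta\card{V_i}$, where I used $\sum_{k\in V_i}y_{i,j,k}\le 1$. The perturbation contributes $\delta\sum_{k\in V_i}v_{i,j,k}\le\delta\sqrt{\card{V_i}}$ by Cauchy--Schwarz, since $\sum_{k\in V_i}v_{i,j,k}^2\le\norm{\vec v}_2^2\le 1$. Adding the two bounds yields $\sum_{k\in V_i}(\y'+\delta\vec v)_{i,j,k}\le 1-\delta\card{V_i}+\delta\sqrt{\card{V_i}}\le 1$, where the last inequality is exactly $\sqrt{\card{V_i}}\le\card{V_i}$, valid for $\card{V_i}\ge 1$. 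Combining the box and block-sum checks gives $\y'+\delta\vec v\in\Y$, which is the claim.

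The hard part is really only a bookkeeping balance: choosing the contraction so that both constraint families close simultaneously. The lower box constraint forces an inward shift of at least $\delta$ per coordinate, which inflates each block sum by $\delta\card{V_i}$, while the Euclidean perturbation can add up to $\delta\sqrt{\card{V_i}}$ to that block sum. The scaling $1-2\delta\card{V_i}$ opens slack $2\delta\card{V_i}$ in each block sum before shifting, and the factor $2$ is what leaves the residual margin $\delta(\card{V_i}-\sqrt{\card{V_i}})\ge 0$ once both the shift and the perturbation are accounted for; the bound $\delta\le 1/(2\max_i\card{V_i})$ is simply the requirement that this scaling remain nonnegative, so that $\Y_\delta\subseteq\Y$ is well defined.
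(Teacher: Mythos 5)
Your proof is correct, but it proceeds differently from the paper's. You verify membership in $\Y$ facet by facet: you write $\Y$ explicitly as the product of corner simplices $\set{\y \geq 0 : \sum_{k \in V_i} y_{i,j,k} \leq 1}$ and check the nonnegativity, (redundant) upper box, and block-sum inequalities directly, bounding the perturbation's contribution to each block sum by $\delta\sqrt{\card{V_i}}$ via Cauchy--Schwarz. The paper instead avoids any inequality bookkeeping by exhibiting $\y_{i,j} + \delta \vec v_{i,j}$ as an explicit convex combination of two points of the simplex: $(1-2\card{V_i}\delta)\,\y'_{i,j} + 2\card{V_i}\delta \cdot \frac{1}{2\card{V_i}}(\vec 1_{\card{V_i}} + \vec v_{i,j})$, where the second point lies in $\conv{\set{\vec 0, \vec e_1, \dots, \vec e_{\card{V_i}}}}$ because its coordinates sit in $[0, 1/\card{V_i}]$. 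The two arguments buy slightly different things. The paper's decomposition uses only the entrywise bound $\abs{v_{i,j,k}} \leq 1$, so it actually proves the stronger statement that perturbations from the unit $\ell_\infty$ ball stay inside $\Y$, and it explains structurally why the factor $2$ in the contraction $1 - 2\delta\card{V_i}$ appears: it is exactly the weight needed to absorb the shifted perturbation $(\vec 1 + \vec v)/(2\card{V_i})$. Your Cauchy--Schwarz route exploits the Euclidean geometry of $\mathbb B_n$ and, as your final slack computation $\delta(\card{V_i} - \sqrt{\card{V_i}}) \geq 0$ reveals, shows the factor $2$ is not tight for $\ell_2$ perturbations --- a contraction of order $1 - \delta(\card{V_i} + \sqrt{\card{V_i}})$ would already suffice --- an observation the paper's argument cannot deliver precisely because it is robust to the larger $\ell_\infty$ ball. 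Both proofs handle the endpoint $\delta = 1/(2\max_i \card{V_i})$, where the scaling degenerates to $0$, without issue.
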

\begin{proof}
Note that the set $\Y$ is a product of simplexes, i.e., $ \Y = \bigtimes_{i \in [m] }  \parentheses{\Delta_{V_{i}}}^{r_i}$, where $\Delta_{V_{i}}$ is a simplex with support $V_i$. We  consider the subset $\Delta_{\delta, V_i} = \set{ \delta \vec{1}_{|V_i|} + (1 - 2|V_i|\delta) \y: \y \in \Delta_{V_i}}$ for $\delta \in (0, 1/(2 |V_i|))$. Let $\vec y_{i,j} \in \Delta_{\delta,|V_i|}$ for $j \in [r_i]$ and $i \in [m]$. It holds that $\vec y_{i,j} + \delta \vec v_{i,j} = \delta \parentheses{\vec{1}_{|V_i|} +\vec v_{i,j}} + (1- 2 |V_i| \delta ) \y'_{i,j}$ for some point $\y'_{i,j}$ in $\Delta_{V_i}$. The point $ \frac{1}{2|V_i|} (\vec{1}_{|V_i|} + \vec v_{i,j})$ satisfies  $ \vec 0 \leq   \frac{1}{2|V_i|} (\vec v_{i,j} + \vec{1}_{|V_i|}) \leq \frac{1}{|V_i|} \vec{1}_{|V_i|}$ (element-wise). Thus, it holds $ \frac{1}{2|V_i|} (\vec v_{i,j} + \vec{1}_{|V_i|}) \in \conv{\set{\vec 0, \vec e_1, \vec e_2, \dots, \vec e_{|V_i|}}} \subseteq \Delta_{V_i}$.
Thus, we have 
\begin{align}
       \y_{i,j} + \delta \vec v_{i,j}  =(1- 2|V_i|\delta)\underbrace{\y'_{i,j}}_\text{$\in \Delta_{V_i}$}  + 2|V_i| \delta  \underbrace{ (\vec{1}_{|V_i|} +\vec v_{i,j}) /2|V_i|}_\text{$\in \Delta_{V_i}$}.
\end{align}
It follows that  $\y_{i,j} + \delta \vec v_{i,j}  \in  \Delta_{V_i}$ because   $\Delta_{V_i}$ is a convex set, and $\y_{i,j}$ is a convex combinations of two points in $\Delta_{V_i}$. Since $\Y$ is a product of simplexes, it holds that $$\Y_\delta =\set{\parentheses{(1- 2 \delta |V_i| ) \y_{k,l}+ \vec 1_{|V_i|} \delta}_{i \in [m],j \in [r_i]}: \y \in \Y}.$$
\end{proof}
\begin{figure}
    \centering
    \includegraphics[width=.2\linewidth]{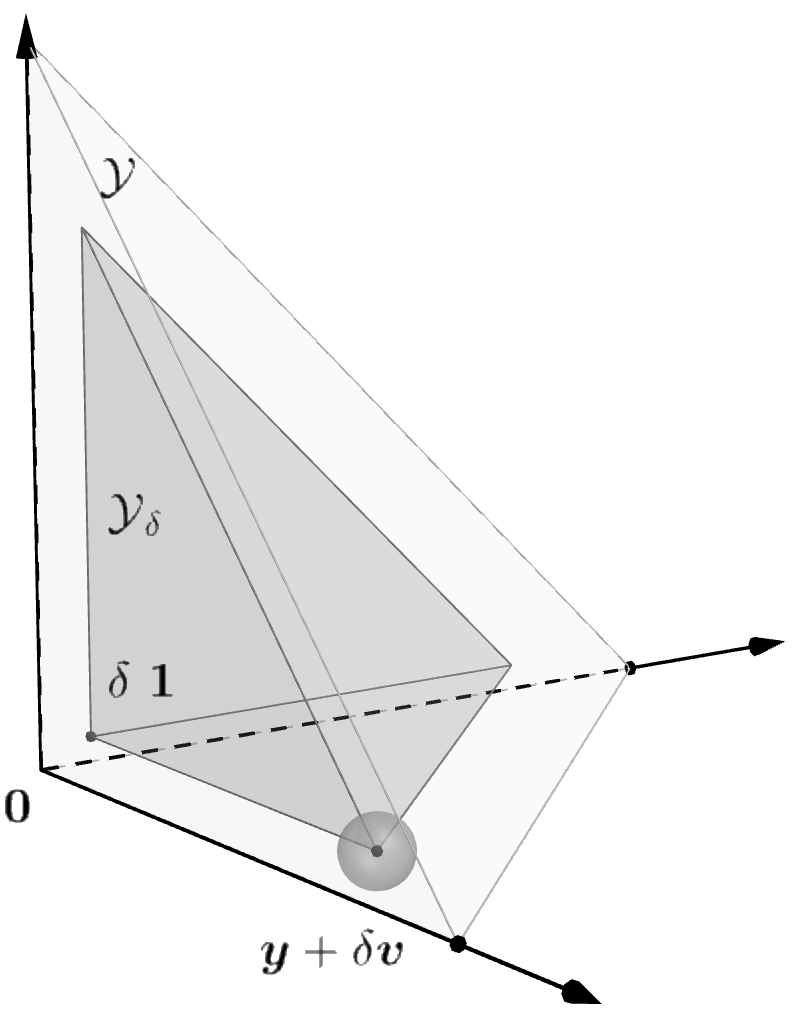}
    \caption{Illustration of Proposition~\ref{proposition:shrunken_set}: under the shrunken set $\Y_\delta$, any ball formed around points $\y \in \Y_\delta$ of radius $\delta$ is contained in $\Y$.\label{fig:illustration}}
\end{figure}

We provide a definition of the rounding scheme $\hat \Xi:\Y \to \X$ employed by LIRAOCO in Alg.~\ref{alg:lisaoco}.
\begin{definition} \cite[Extention Mapping]{wan2023bandit}
\label{def:ext_rounding_scheme}
      The randomized rounding scheme $\hat \Xi : \Y \to \X$ defined for sets  $\Y = \conv{\X'}$, and $\Y$ in Eqs.~\eqref{eq:equiv_parition_matroid} and \eqref{eq:parition_matroid}.  Given a point $\y \in \Y$, an intermediate randomized rounding scheme $\xi: \Y \to \X'$ outputs a point $\vec \omega \in \X'$, by sampling a single item $k \in V_i$ from the distribution $\vec y_{i,j}/\norm{\y_{i,j}}_1$ with support $V_i$ w.p. $\norm{\y_{i,j}}_1$ and assigns $\vec e_k$ to $\vec \omega_{i,j}$, otherwise it assigns  $\vec 0$ w.p. $1 - \norm{\y_{i,j}}_1$ to $\vec \omega_{i,j}$, for every  $i \in [m]$ and $j \in [r_i]$. The rounding scheme $\hat \Xi$ is then defined as the composition mapping $T \circ \xi:\Y \to \X $ where $T$ is provided in Eq.~\eqref{eq:Tmapping}.
\end{definition}

\begin{definition}
    A function $\efunc:\Y\to \reals_{\geq 0}$ is a DR-submodular function if for any $\y, \y'\in \Y$ 
    \begin{align}
       \efunc(\y + t \vec e_i) - \efunc(\y) \geq \efunc(\y'+ t \vec e_i) - \efunc(\y'), \label{eq:dr-submodularity_1}
    \end{align}
    where $\y \leq \y'$, $\y + t \vec e_i, \y' + t \vec e_i \in \Y$. 
    When $f$ is differentiable we have $\nabla \efunc(\y) \geq \nabla \efunc(\y')$ for $\y \leq \y'$. When $f$ is twice differentiable, the DR-submodularity is equivalent to
    \begin{align}
        \frac{\partial^2 \efunc(\y)}{\partial y_i \partial y_j} &\leq 0, &\text{ for all $i,j \in [n]$ and $\y \in \Y$}. \label{eq:dr-submodularity_2}
    \end{align}
    Moreover, $f$ is monotone if $\efunc(\y) \leq \efunc(\y')$ when $\y \leq \y'$.
\end{definition}
\begin{definition}\cite[Auxillary Function]{zhang2022stochastic}
    For a monotone DR-submodular function $\efunc : \Y \to \reals_{\geq 0}$ satisfying $\efunc(\vec 0) = 0$, its auxiliary function $\eFunc: \Y \to \reals_{\geq 0 }$ is defined as follows
\begin{align}
    \eFunc (\vec y) \triangleq \int^1_{0} \frac{\exp (z - 1)}{z} \efunc(z \cdot \vec y) dz.\label{def:aux} , \qquad \text{for $\y \in \Y.$}
\end{align}
\end{definition}

\begin{definition} \label{def:smoothed_func}
    For continuous function $l : \reals^n \to \reals$ and invertible matrix $\vec H \in \reals^{d\times d}$, an $\vec H$-smoothed version of $l$ denoted by  $l^{\vec H}$  is defined as
    \begin{align}
        l^{\vec H}( \x) \triangleq  \E_{\vec v \sim \mathbb B_d} \interval{l (\vec x + \vec H \vec v)},
    \end{align}
     where $\vec v \sim \mathbb B_n$ indicates that $\vec v$ is sampled from the $n$-dimensional unit ball $\mathbb B_d$ uniformly at random. 
\end{definition}
Remark that when $l: \reals^n \to \reals$ is a linear function, then $l = l^{\vec H}$.

\begin{lemma}\cite[Lemma~5.1]{wan2023bandit}\label{lemma:bandit_extention}
    Given a submodular monotone function $f:\X \to \reals_{\geq 0 }$ s.t. $f(\x) < L$ for $\x \in \X$ and $f(\vec 0) = 0$.   The randomized rounding scheme $\hat \Xi : \Y \to \X$ in Def.~\ref{def:ext_rounding_scheme} defines a multi-linear, monotone-increasing, DR-submodular, and $L$-lipschitz (w.r.t. $\norm{\,\cdot\,}_\infty$) function $\efunc: \Y \to \reals_{\geq 0}$ given by  \begin{align}
         \efunc (\y) &\triangleq \E_{\x \sim \hat\Xi (\y)} \interval{ f(\x)},  \qquad\text{for $\y \in \Y$},
    \end{align}
    
\end{lemma}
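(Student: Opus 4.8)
The plan is to work directly with the explicit description of $\hat\Xi = T\circ\xi$ from Def.~\ref{def:ext_rounding_scheme}. The key structural fact I would record first is that the intermediate rounding $\xi$ acts \emph{independently} across the slots $(i,j)$, $i\in[m]$, $j\in[r_i]$: within slot $(i,j)$ it outputs $\vec e_k$ with probability $y_{i,j,k}$ and $\vec 0$ with probability $1-\norm{\y_{i,j}}_1$, so that $\E[\omega_{i,j,k}]=y_{i,j,k}$. Writing $\tilde f \triangleq f\circ T$ and $\efunc(\y)=\E_{\omega\sim\xi(\y)}[\tilde f(\omega)]$, the probability of each atom $\omega$ factorizes over slots, and the per-slot probability is affine in $\y_{i,j}$. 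Since every coordinate $y_{i,j,k}$ belongs to exactly one slot, $\efunc$ is affine in each coordinate separately, i.e.\ \emph{multilinear}; in particular it is a smooth polynomial with $\partial^2\efunc/\partial y_c^2=0$ for every coordinate $c$.

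For monotonicity I would fix a slot $(i,j)$ and use multilinearity to write $\efunc(\y)=\sum_{k\in V_i} y_{i,j,k}(A_k-A_0)+A_0$, where $A_k=\E[\tilde f\mid\omega_{i,j}=\vec e_k]$ and $A_0=\E[\tilde f\mid\omega_{i,j}=\vec 0]$ depend only on the remaining slots. Then $\partial\efunc/\partial y_{i,j,k}=A_k-A_0=\E[f(S\cup\set{k})-f(S)]$, where $S=\supp{T(\omega)}$ is the random support produced by the other slots; this is nonnegative because $T$ is coordinatewise monotone (by Eq.~\eqref{eq:Tmapping}) and $f$ is monotone. This gives monotonicity of $\efunc$ and the gradient formula used below.

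The main work is DR-submodularity, which by Eq.~\eqref{eq:dr-submodularity_2} reduces to showing every mixed second partial is $\le 0$ (the function being a smooth polynomial). The diagonal terms vanish by multilinearity, and the same-slot off-diagonals ($k\neq k'$ within one slot) vanish because $A_k-A_0$ is independent of $y_{i,j,k'}$. The substantive case is two distinct slots $(i,j)\neq(i',j')$: there the cross derivative equals, averaged over the other slots, the discrete second difference of $\tilde f$ obtained by toggling slot $(i,j)$ between $\vec 0$ and $\vec e_k$ and slot $(i',j')$ between $\vec 0$ and $\vec e_{k'}$. I would then push these slot-level operations through $T$: by Eq.~\eqref{eq:Tmapping}, setting a slot to $\vec e_k$ adds item $k$ to the current support, capped by the $\min$ when $k$ is already covered. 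When $k\neq k'$ and neither is already covered, the four configurations yield the sets $S,\,S\cup\set{k},\,S\cup\set{k'},\,S\cup\set{k,k'}$, and the second difference is $\le 0$ \emph{exactly by submodularity of $f$}; when $k$ or $k'$ is already in $S$ the difference collapses to $0$; and in the delicate case $k=k'$ (possible when $i=i'$ but $j\neq j'$, i.e.\ two slots of the \emph{same} partition select the same item) the $\min$-cap forces three of the four values to equal $f(S\cup\set{k})$, so the second difference reduces to $-(f(S\cup\set{k})-f(S))\le 0$ by monotonicity. Averaging preserves the sign. I expect this translation of slot operations through the coverage map $T$ — and especially the $k=k'$ overlap case — to be the crux, since it is precisely where the $\min$ in $T$ converts the submodularity of $f$ into genuine diminishing returns of $\efunc$.

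Finally, for the remaining properties: $\efunc\ge 0$ and $\efunc(\vec 0)=f(\emptyset)=0$ are immediate from $f\ge 0$ and $f(\vec 0)=0$, and $\efunc<L$ since $\tilde f<L$ pointwise. For the Lipschitz claim I would bound each partial derivative using submodularity, $A_k-A_0=\E[f(S\cup\set{k})-f(S)]\le f(\set{k})<L$, giving a coordinatewise $L$-bound on $\nabla\efunc$; combining this with a monotone coupling of $\xi(\y)$ and $\xi(\y')$ yields the $\norm{\,\cdot\,}_\infty$-Lipschitz statement as in \cite[Lem.~5.1]{wan2023bandit}. This establishes that $\efunc$ is multilinear, monotone-increasing, DR-submodular, and $L$-Lipschitz, as claimed.
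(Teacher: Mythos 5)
Your proposal is correct and follows essentially the same route as the paper's proof: multilinearity via the factorized slot probabilities of $\xi$, monotonicity and the coordinatewise gradient bound via per-coordinate first differences of $f\circ T$, and DR-submodularity via the sign of expected second-order differences. If anything you are slightly more careful than the paper on two minor points: your explicit $k=k'$ case (two slots of the same partition selecting the same item), where the $\min$-cap in $T$ collapses the second difference to $-(f(S\cup\set{k})-f(S))$ so that \emph{monotonicity} rather than submodularity supplies the sign, is only implicit in the paper's one-line appeal to the second-order-difference characterization of submodularity; conversely, your closing ``monotone coupling'' step is unnecessary, since the bound $0\le \partial\efunc/\partial y_{i,j,k}\le L$ on every partial derivative is already exactly what the lemma's $L$-Lipschitz claim (gradient bounded in $\norm{\,\cdot\,}_\infty$) amounts to, which is precisely how the paper concludes as well.
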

\begin{proof} The proof follows the same lines as  the proof of ~\cite[Lemma 5.3]{wan2023bandit}.

\noindent\emph{Multilinearity.} Given a point $\y \in \Y$, we have 
    \begin{align}
         \efunc (\y) &= \E_{\x \sim \hat\Xi (\y)} \interval{ f(\x)}= \E_{\vec \omega \sim \xi (\y)} \interval{ f(T(\omega))} = \sum_{\x' \in \X'} \mathbb P (\vec \omega = \x') f(T(\x')). 
    \end{align}
    Note that $\mathbb P (\vec \omega = \x') = \prod_{i \in [m]} \prod_{j \in [r_i]} \mathbb{P} (\vec \omega_{i,j} = \x'_{i,j})$; moreover,  $\mathbb{P} (\vec \omega_{i,j} = \x'_{i,j}) = y_{i,j,k}$ when $\x'_{i,j} = \vec e_k$, otherwise $\mathbb{P} (\vec \omega_{i,j} = \x'_{i,j}) = 1 - \sum_{k \in V_i} y_{i,j,k}$ when $\x'_{i,j} = \vec 0$, for $i \in [m]$ and $j \in [r_i]$. The factor $f(T(\x'))$ is a constant independent from $\y$, so we conclude $\efunc$ is a multilinear function.

    \noindent\emph{Monotonicity and lipschitzness.} Let $\vec e_{i,j,k} = (\mathds{1} \parentheses{(i',j',k') = (i,j,k)})_{i'\in [m],j'\in[r_i],k'\in V_i}$ and $\bar {\vec {e}}_{i,j,k} = \vec 1 - \vec e_{i,j,k}$. Since the function $\efunc$ is multilinear, it holds that
    \begin{align}
      \frac{  \partial \efunc(\y)}{  \partial y_{i,j,k}} = \frac{ \efunc(\y \oplus \lambda \vec e_{i,j,k}) -  \efunc(\y \ominus \bar{\vec{e}}_{i,j,k})}{\lambda}.
    \end{align}
    where $\lambda = 1 - \sum_{k' \in V_i: k' \neq k} y_{i,j,k'} > 0$, and the binary operators $\oplus$ and $\ominus$ are component-wise maximum and minimum operations, respectively, i.e., $\y\oplus\y' = \parentheses{\max\set{y_{i,j,k}, y'_{i,j,k}}}_{i \in [m], j \in [r_i], k \in V_i}$ and $\y\ominus\y' = \parentheses{\min\set{y_{i,j,k}, y'_{i,j,k}}}_{i \in [m], j \in [r_i], k \in V_i}$. Consider the two mapping $L^{\oplus}_{i,j,k}: \X' \to \X' $ and  $L^{\ominus}_{i,j,k}: \X' \to \X' $ defined as follows:
    \begin{align}
       \parentheses{ L^{\oplus}_{i,j,k} (\x)}_{i',j'}  &= \begin{cases}
           \x_{i',j'} + \vec e_k \mathds{1} \parentheses{\x_{i',j'}=\vec 0} &\text{if $(i',j') = (i,j)$},\\
           \x_{i',j'} & \text{otherwise},
       \end{cases}, \\
       \parentheses{ L^{\ominus}_{i,j,k} (\x)}_{i',j'}  &= \begin{cases}
           \x_{i',j'} \mathds{1} \parentheses{\x_{i',j'}\neq \vec e_k} &\text{if $(i',j') = (i,j)$},\\
           \x_{i',j'} & \text{otherwise},
       \end{cases} 
    \end{align}
    for $i' \in [m], j' \in [r_i]$ and $\x \in \X'$. One could check that 
    \begin{align}\label{eq:mapping_expectation_relation}
        \efunc(\y \oplus \lambda \vec e_{i,j,k}) = \E_{\vec \omega \sim \xi (\y) } \interval{f (T( L^{\oplus}_{i,j,k} (\vec \omega)))},\quad \efunc(\y \ominus \lambda \vec e_{i,j,k}) = \E_{\vec \omega \sim \xi (\y) } \interval{f (T( L^{\ominus}_{i,j,k} (\vec \omega)))}.
    \end{align}
Thus, it holds 
\begin{align}
    \frac{  \partial \efunc(\y)}{  \partial y_{i,j,k}} = \frac{ \efunc(\y \oplus \lambda \vec e_{i,j,k}) -  \efunc(\y \ominus \bar{\vec{e}}_{i,j,k})}{\lambda} = \frac{\E_{\vec \omega \sim \xi (\y) } \interval{f (T( L^{\oplus}_{i,j,k} (\vec\omega))) - f (T( L^{\ominus}_{i,j,k} (\vec\omega)))}}{\lambda} \geq 0.
\end{align}
Because $\lambda \geq 0$, $L^{\ominus}_{i,j,k} (\vec\omega) \leq L^{\oplus}_{i,j,k} (\vec\omega)$ (element-wise) by construction, and $f$ is monotone increasing. Note that  $L^{\oplus}_{i,j,k} (\vec\omega) \neq L^{\ominus}_{i,j,k} (\vec\omega)$ only when $\vec \omega_{i,j} = \vec e_k$ or $\vec \omega_{i,j} = \vec 0$. So, we have 
\begin{align}
    \frac{  \partial \efunc(\y)}{  \partial y_{i,j,k}} \leq L \frac{\mathbb P (\vec \omega_{i,j} \in \set{\vec e_k, \vec 0})}{ 1 - \sum_{k' \in V_i: k' \neq k} y_{i,j,k'} } =L \frac{1 - \sum_{k' \in V_i: k' \neq k} y_{i,j,k'} }{ 1 - \sum_{k' \in V_i: k' \neq k} y_{i,j,k'} } =  L.
\end{align}
The function $\efunc$ is $M$-lipschitz w.r.t. $\norm{\,\cdot\,}_\infty$.

\noindent \emph{DR-submodularity.}  The partial derivative of a multilinear function is also multilinear. So, we have the following
\begin{align}
    &\frac{  \partial^2 \efunc(\y)}{  \partial y_{i,j,k} \partial y_{i',j',k'}} = \frac{  \partial}{  \partial y_{i',j',k'}} \parentheses{  \frac{ \efunc(\y \oplus \lambda \vec e_{i,j,k}) -  \efunc(\y \ominus \bar{\vec{e}}_{i,j,k})}{\lambda} }\\
    &=  \frac{ \efunc(\y \oplus \lambda \vec e_{i,j,k} \oplus \lambda' \vec e_{i',j',k'}) -  \efunc(\y \ominus \bar{\vec{e}}_{i',j',k'} \oplus \lambda \vec{e}_{i,j,k}) - f(\y \oplus \lambda' \vec {e}_{i',j',k'} \ominus \vec e_{i,j,k}) + f(\y \ominus \vec e_{i,j,k} \ominus \vec e_{i',j',k'})}{\lambda \lambda'}.
\end{align}
As established in Eq.~\eqref{eq:mapping_expectation_relation}, it holds
\begin{align}
    & \efunc(\y \oplus \lambda \vec e_{i,j,k} \oplus \lambda' \vec e_{i',j',k'}) -  \efunc(\y \ominus \bar{\vec{e}}_{i',j',k'} \oplus \lambda \vec{e}_{i,j,k}) - f(\y \oplus \lambda' \vec {e}_{i',j',k'} \ominus \vec e_{i,j,k}) + f(\y \ominus \vec e_{i,j,k} \ominus \vec e_{i',j',k'})=\\
    &\E_{\vec\omega \sim \xi(\y)} \bigg[ f (T (L^{\oplus}_{i,j,k} (L^{\oplus}_{i',j',k'}(\vec{\omega})))) - f (T (L^{\oplus}_{i,j,k} (L^{\ominus}_{i',j',k'}(\vec{\omega})))) - f (T (L^{\ominus}_{i,j,k} (L^{\oplus}_{i',j',k'}(\vec{\omega})))) \\
    &+f (T (L^{\ominus}_{i,j,k} (L^{\ominus}_{i',j',k'}(\vec{\omega}))))\bigg] .
\end{align}
Considering that $f$ is submodular and the second-order differences definition of submodularity~\cite{bach2013learning}, we obtain $f (T (L^{\oplus}_{i,j,k} (L^{\oplus}_{i',j',k'}(\vec\omega)))) - f (T (L^{\oplus}_{i,j,k} (L^{\ominus}_{i',j',k'}(\vec\omega)))) - f (T (L^{\ominus}_{i,j,k} (L^{\ominus}_{i',j',k'}(\vec\omega)))) \geq 0 $ for any $\vec \omega \in \X'$. It follows that the second-partial derivative is non-positive
\begin{align}
    \frac{  \partial^2 \efunc(\y)}{  \partial y_{i,j,k} \partial y_{i',j',k'}} \leq  0.
\end{align}
Thus,  $\efunc$ is DR-submodular. This concludes the proof.
\end{proof}

\begin{lemma}\cite[Corollary~6.8]{hazan2016introduction}
Let $\vec H \in \reals^{n\times n}$ be an invertible matrix, $l: \reals^n \to \reals$ be a continuous function. Then
\begin{align}
    \nabla l^{\vec H} (\x) = n \E_{\vec v \sim \mathbb S_{n-1}} \interval{l(\x + \vec H \vec v) \vec{H}^{-1} \vec v},
\end{align}
where $\vec v \sim \mathbb S_{n-1}$ indicates that $\vec v$ is sampled from the $(n-1)$-dimensional unit sphere $\mathbb S_{n-1}$ u.a.r.
\end{lemma}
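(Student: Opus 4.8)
The plan is to reduce the general anisotropic smoothing to the isotropic (unit-ball) case by an affine change of variables, prove the isotropic identity with the divergence theorem, and then recombine via the chain rule. First I would introduce the auxiliary function $m(\vec w) \triangleq l(\vec H\vec w)$ and observe that, since $\x + \vec H\vec v = \vec H(\vec H^{-1}\x + \vec v)$, the definition of the smoothed function gives $l^{\vec H}(\x) = \E_{\vec v\sim\mathbb B_n}[m(\vec H^{-1}\x + \vec v)] = \hat m(\vec H^{-1}\x)$, where $\hat m(\z) \triangleq \E_{\vec v\sim\mathbb B_n}[m(\z+\vec v)]$ is the smoothing of $m$ over the \emph{unit} ball. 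This isolates all dependence on $\vec H$ into a clean linear reparametrization of the argument.

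The core step is then the isotropic identity $\nabla\hat m(\z) = n\,\E_{\vec v\sim\mathbb S_{n-1}}[m(\z+\vec v)\,\vec v]$. I would write $\hat m(\z) = \frac{1}{\mathrm{vol}(\mathbb B_n)}\int_{\mathbb B_n} m(\z+\vec u)\,d\vec u$, differentiate under the integral sign to move the gradient onto $m$, and apply the gradient form of the divergence theorem $\int_{\mathbb B_n}\nabla_{\vec u} g(\vec u)\,d\vec u = \int_{\mathbb S_{n-1}} g(\vec u)\,\vec n(\vec u)\,dS$. The crucial geometric fact is that the outward unit normal to the unit sphere at a point $\vec v$ is $\vec v$ itself, which is exactly what produces the $\vec v$ weight inside the expectation. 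Converting the resulting surface integral into an average over $\mathbb S_{n-1}$ introduces the constant $\mathrm{area}(\mathbb S_{n-1})/\mathrm{vol}(\mathbb B_n) = n$, giving the claimed factor. Finally, applying the chain rule to $l^{\vec H}(\x) = \hat m(\vec H^{-1}\x)$ yields $\nabla l^{\vec H}(\x) = (\vec H^{-1})^\top\nabla\hat m(\vec H^{-1}\x) = n\,\E_{\vec v\sim\mathbb S_{n-1}}[l(\x+\vec H\vec v)\,(\vec H^{-1})^\top\vec v]$; since the instantiation of interest in Alg.~\ref{alg:lisaoco} uses the symmetric (indeed scalar) matrix $\vec H_l = \delta\vec I$, we have $(\vec H^{-1})^\top = \vec H^{-1}$ and recover the stated form.

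The main obstacle I anticipate is analytic rather than algebraic: the hypothesis only assumes $l$ continuous, so neither differentiation under the integral sign nor the divergence theorem applies to $l$ directly. To handle this I would first establish the identity for $l\in C^1$, where both operations are justified, and then extend to merely continuous $l$ by a standard mollification argument—approximating $l$ uniformly on compacta by smooth $l_\epsilon$, noting that the right-hand side is continuous in $l$ under this convergence (directly, since $\x + \vec H\,\mathbb S_{n-1}$ is compact) while the corresponding smoothed gradients converge uniformly, so that $\hat m$ is $C^1$ with the asserted gradient, and passing to the limit. A secondary point to verify carefully, should one prefer the direct route of applying Stokes on the ellipsoid $\vec H\,\mathbb B_n$ rather than reducing to the unit ball, is the transformation of the surface measure and outward normal under the linear map $\vec H$; the reduction-to-isotropic route above sidesteps this bookkeeping entirely.
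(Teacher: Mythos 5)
Your proof follows essentially the same route as the paper's: the isotropic identity on the unit ball via the divergence/Stokes theorem (with the factor $n = \mathrm{area}(\mathbb S_{n-1})/\mathrm{vol}(\mathbb B_n)$ coming from converting the surface integral to an expectation), combined with the linear change of variables $m(\vec w)=l(\vec H\vec w)$ to handle general $\vec H$; the paper proves the radius-$\delta$ isotropic case first and then verifies the general formula by computing the right-hand side with $g(\x)=l(\vec H\x)$, which is your chain-rule step read in reverse. Two of your refinements actually go beyond the paper's write-up. First, the transpose: the chain rule genuinely produces $(\vec H^{-1})^\top\vec v$, so the lemma as literally stated holds only for symmetric $\vec H$ — for instance $l(\x)=x_1$ with non-symmetric $\vec H$ gives right-hand side $\vec H^{-1}\vec H^\top\vec e_1\neq\vec e_1$ while $\nabla l^{\vec H}(\x)=\vec e_1$. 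The paper's Part~II silently writes $\nabla g^{\vec I}(\vec H^{-1}\x)=\vec H\,\nabla l^{\vec H}(\x)$ where $\vec H^\top$ belongs; this is harmless downstream because both instantiations used in the paper ($\delta\vec I$ and $(\nabla^2 R(\y_l))^{1/2}$) are symmetric, which you correctly flag. Second, regularity: the paper (following Hazan) applies Stokes with $l$ merely continuous and does not comment; your mollification argument is a valid fix, though it is also avoidable — one can differentiate the translated-domain integral $\int_{\x+\mathbb B}l(\vec u)\,d\vec u$ directly, which is $C^1$ in $\x$ for continuous $l$ and yields the surface integral without ever smoothing $l$.
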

\begin{proof} This proof follows the same lines as the proof of [Lemma 6.7]\cite{hazan2016introduction}.

\noindent \emph{Part I.} Consider the case when $\vec H = \delta \vec I$. 
Using Calculus Stoke's theorem, we have 
\begin{align}
    \nabla_{\x}\parentheses{ \int_{\mathbb B_\delta} l(\x + \vec v) d \vec v} = \int_{\mathbb S_\delta} l(\x + \vec u) \frac{\vec u}{\norm{\vec u}} d \vec u.\label{eq:stokes}
\end{align}
We have the following 
\begin{align}
    l^{\delta \vec I}(\x) = \E_{\vec v \sim \mathbb{B}} \interval{l(\x + \delta \vec v)}=  \frac{1}{\mathrm{vol}(\mathbb{B}_\delta)}\int_{\mathbb B_\delta} l(\x + \vec v) d \vec v,
\end{align}
 where ${\mathrm{vol}(\mathbb{B}_\delta)}$ is the volume of the $n$-dimensional ball of radius $\delta$. Similarly, we also have
 \begin{align}
     \E_{\vec u \sim \mathbb S_{n-1}} \interval{ l(\x + \delta \vec u) \vec u} = \frac{1}{\mathrm{vol}(\mathbb S_{n-1})} \int_{\mathbb S_\delta}l(\x + \vec u) \frac{\vec u}{\norm{\vec u}} d \vec u.
 \end{align}
The ratio of the volume of a ball in $n$ dimensions and the sphere of dimension $n-1$ of radii $\delta$ is $\mathrm{vol}(\mathbb{B}_n) / \mathrm{vol}{(\mathbb{S}_{n-1})} = \delta /n$. Combining these facts with Eq.~\eqref{eq:stokes}, we showed that:
\begin{align}
    \nabla l^{\delta \vec I} (\x) = \frac{n}{\delta}\E_{\vec v \sim \mathbb S_{n-1}} \interval{l(x+ \delta \vec v) \vec v}.\label{eq:gradient_estimator_identity}
\end{align}
\noindent \emph{Part II.} Let $g(\x) = l(\vec H \x)$, and $ g^{\vec I}(\x) = \E_{\vec v \in \mathbb{B}_n} \interval{ g(\x + \vec v)}$. We have the following 
\begin{align}
    n \E_{\vec v \sim \mathbb S_{n-1}} \interval{l(\x + \vec H \vec v) \vec{H}^{-1} \vec v} &= n \vec H^{-1} \interval{l(\x + \vec H \vec v)\vec v} \\
    &= n \vec H^{-1} \interval{g(\vec H^{-1}\x + \vec v)\vec v}\\
    &= \vec H^{-1} \nabla {g}^{\vec I}(\vec H^{-1} \x)\\
    &= \vec H^{-1} \vec H  \nabla  l^{\vec H}(\x) =\nabla  l^{\vec H}(\x).
\end{align}
This concludes the proof.
\end{proof}

\begin{lemma}\cite[Lemma~2]{zhang2022stochastic}
\label{lemma:aux}
Let $\efunc: \reals^n \to \reals$ be a monotone DR-submodular function 
and $\efunc(\vec 0) = 0$, $\vec x$, $\vec y \in \reals^n$. Let $\eFunc$ be the auxiliary function defined in~\eqref{def:aux}. Then    
\begin{align}
    &\nabla \eFunc(\x)  = \int^1_0 \exp (z-1) \nabla f (z \cdot \x ) dz, & (\y-\x) \cdot \nabla \eFunc(\x) &\geq (1 - 1/e) \efunc(\y) - \efunc(\x),&
\end{align}
\end{lemma}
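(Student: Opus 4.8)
The plan is to prove the gradient identity by differentiating under the integral sign, and the inequality by combining a first-order DR-submodular estimate with an integration by parts tailored to the weight $e^{z-1}/z$. For the gradient, I would differentiate $\eFunc(\x)=\int_0^1 \frac{e^{z-1}}{z}\efunc(z\x)\,dz$ under the integral. Since $\efunc$ is the smooth multilinear extension from Lemma~\ref{lemma:bandit_extention}, the integrand is $C^1$ in $\x$ and stays bounded as $z\to 0$ because $\efunc(\vec 0)=0$ forces $\efunc(z\x)=O(z)$; this justifies the Leibniz rule. The chain rule gives $\nabla_{\x}\efunc(z\x)=z\,\nabla\efunc(z\x)$, so the factor $z$ cancels the $1/z$ weight and yields $\nabla \eFunc(\x)=\int_0^1 e^{z-1}\nabla\efunc(z\x)\,dz$, which is the first claim.

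For the inequality, I would first establish the key first-order estimate: for every non-negative $\vec a$ and $\y$ in the (product-of-simplices) domain $\Y$, monotone DR-submodularity implies $\y\cdot\nabla\efunc(\vec a)\ge \efunc(\y)-\efunc(\vec a)$. The argument is that $t\mapsto \efunc(\vec a+t(\y-\vec a)_+)$ is concave, since concavity along the non-negative direction $(\y-\vec a)_+$ follows from $\partial_i\partial_j\efunc\le 0$; its tangent at $t=0$ therefore lies above it, giving $\efunc(\vec a\vee\y)-\efunc(\vec a)\le (\y-\vec a)_+\cdot\nabla\efunc(\vec a)$. Because $(\y-\vec a)_+\le\y$ componentwise (as $\vec a\ge\vec 0$) and $\nabla\efunc(\vec a)\ge\vec 0$ by monotonicity, the right-hand side is at most $\y\cdot\nabla\efunc(\vec a)$, while $\efunc(\vec a\vee\y)\ge\efunc(\y)$ again by monotonicity. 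Applying this with $\vec a=z\x$ gives $\y\cdot\nabla\efunc(z\x)\ge \efunc(\y)-\efunc(z\x)$.

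I would then combine these. Using the gradient identity, $(\y-\x)\cdot\nabla\eFunc(\x)=\int_0^1 e^{z-1}\big(\y\cdot\nabla\efunc(z\x)-\x\cdot\nabla\efunc(z\x)\big)\,dz$. Setting $h(z):=\efunc(z\x)$, I recognize $\x\cdot\nabla\efunc(z\x)=\frac{d}{dz}\efunc(z\x)=h'(z)$, and substitute the estimate above to lower bound the bracket by $\efunc(\y)-h(z)-h'(z)$. This produces $\int_0^1 e^{z-1}\efunc(\y)\,dz-\int_0^1 e^{z-1}\big(h(z)+h'(z)\big)\,dz$. The first integral is $(1-1/e)\efunc(\y)$, and the second is exactly $\int_0^1 \frac{d}{dz}\big(e^{z-1}h(z)\big)\,dz=h(1)-e^{-1}h(0)=\efunc(\x)$, using $h(0)=\efunc(\vec 0)=0$. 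Subtracting yields the claimed bound $(1-1/e)\efunc(\y)-\efunc(\x)$.

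The main obstacle is the first-order DR-submodular estimate $\y\cdot\nabla\efunc(\vec a)\ge\efunc(\y)-\efunc(\vec a)$: one must apply concavity along non-negative directions in the correct orientation and carefully discard the negative coordinates of $\y-\vec a$ via monotonicity. These are precisely the sign manipulations that cause the naive inequality $(\y-\x)\cdot\nabla\efunc(\x)\ge\efunc(\x\vee\y)-\efunc(\x)$ to fail in general, so the restriction to the non-negative vector $\y$ (rather than $\y-\x$) is essential. Everything else is routine, the only design insight being that the weight $e^{z-1}/z$ is chosen so that $h+h'$ assembles into the perfect derivative $\frac{d}{dz}(e^{z-1}h)$ and the boundary term at $z=0$ vanishes because $\efunc(\vec 0)=0$.
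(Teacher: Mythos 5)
Your proof is correct and takes essentially the same route as the paper's: the gradient identity via differentiation under the integral with the chain-rule factor $z$ cancelling the $1/z$ weight, the key estimate $\y\cdot\nabla\efunc(z\x)\ge\efunc(\y)-\efunc(z\x)$ derived exactly as the paper does (restrict to the non-negative direction $\y\oplus z\x - z\x\le\y$, apply the first-order DR bound, then use monotonicity $\efunc(\y\oplus z\x)\ge\efunc(\y)$), and the same integration by parts exploiting $w'(z)=w(z)$ for $w(z)=e^{z-1}$ together with $\efunc(\vec 0)=0$. The only cosmetic difference is that you package the computation as the perfect derivative $\frac{d}{dz}\bigl(e^{z-1}h(z)\bigr)$ with $h(z)=\efunc(z\x)$, whereas the paper evaluates $\x\cdot\nabla\eFunc(\x)$ and $\y\cdot\nabla\eFunc(\x)$ separately and lets the $\int^1_0 e^{z-1}\efunc(z\x)\,dz$ terms cancel --- the two calculations are identical.
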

\begin{proof}
This proof follows the same lines as the proof of \cite[Lemma 2]{zhang2022stochastic}. It is simplified for the case when $\theta (w) =1$ and ($w (z) = \exp(z-1)$). It follows from the DR-submodularity of $\efunc$:
\begin{align}
\efunc(\y) - \efunc(\x) = \int^1_{0} (\y -\x) \cdot \nabla \efunc(\x + z (\y - \x)) dz  \in [ (\y-\x)\cdot \nabla \efunc(\y),  (\y-\x)\cdot \nabla \efunc(\x)],
\end{align}
for $\x \leq \y$. The two inequalities follow from $\y \geq \x + z (\y - \x) \geq \x$ s.t. $\nabla \efunc(\x) \geq \nabla \efunc(\x + z (\y -\x)) \geq \nabla \efunc(\y)$ for any $z \in [0,1]$; a direct result from the DR-submodularity of $\efunc$, see Eq.~\eqref{eq:dr-submodularity_1}.

    The gradient of $\eFunc$ is given by $\nabla \eFunc(\x) = \nabla_{\x} \parentheses{\int^1_{0} \frac{\exp (z - 1)}{z} \efunc(z \cdot \vec x) dz} =\int^1_{0} \frac{z\exp (z - 1)}{z} \nabla \efunc (z \cdot \vec x) dz =\int^1_{0} \exp (z - 1) \nabla \efunc (z \cdot \vec x) dz $. Let $w(z) = \exp(z-1)$. Consider the following:
    \begin{align}
        \x \cdot \nabla \eFunc(\x) &= \int^1_{0} w(z) {\x\cdot\nabla \efunc(z\x)} dz = \int^1_{0} w(z) \frac{\partial \efunc(z\x)}{\partial z} dz =  \int^1_{0} w(z) {d\efunc(z\x)} \\
        &= \interval{ w(z)\efunc(z\x)}^1_{0} - \int^1_{0} w'(z) \efunc(z \x) dz =w(1) \efunc(\x) - \int^1_{0} w'(z) \efunc(z \x) dz \\
        &= \efunc(\x) - \int^1_{0} w(z) \efunc(z\x) dz.\label{eq:p1}
    \end{align}
    We use integration by part and the fact that $w'(z) = w(z)$.
    \begin{align}
        \y \cdot \nabla \eFunc(\y) &= \int^1_{0} w(z) {\y\cdot\nabla \efunc(z\x)} dz \geq \int^1_{0} w(z) {(\y \oplus z \x - z\x)\cdot\nabla \efunc(z\x)} dz \\
        &\geq  \int^1_{0} w(z) {(\efunc(\y \oplus z \x) - \efunc(z\x))\cdot\nabla \efunc(z\x)} dz \\
        &\geq \underbrace{\parentheses{\int^1_{0} w(z) dz} }_{\text{$1-1/e$}}\efunc(\y)  - \int^1_{0} w(z) \efunc(z \x) dz.\label{eq:p2}
    \end{align}
    The first inequality is obtained considering $\y \geq \y \oplus z\x - z\x \geq \vec 0 $ and $\nabla \efunc(z \x) \geq 0 $
    Combine Eqs.~\eqref{eq:p1} and \eqref{eq:p2} to obtain
    \begin{align}
        (\y - \x) \cdot \nabla \eFunc(\x) \geq (1-1/e) \efunc(\y) - \efunc(\x).
    \end{align}
    We conclude the proof.
\end{proof}

We define the following quantities which will be used in what follows.
\begin{align}
    \mathcal{H}_{l} \triangleq \set{\parentheses{\vec v_s, \vec u_s, z_s, t_s}: s \leq l}, \qquad 
    \efunc^\avg_l (\y) \triangleq \frac{1}{W}\sum_{t \in \mathcal W_l} \efunc_t(\y), \qquad
    \eFunc^\avg_l (\y) \triangleq \frac{1}{W}\sum_{t \in \mathcal W_l} \eFunc_t(\y), \qquad \text{ for $\y \in \Y$}. \label{eq:def_quantities}
\end{align}

\begin{lemma}\cite[Lemma~E.1]{wan2023bandit}
    Consider $\mathcal H_{l}$ and $\eFunc^{\avg}_t$  defined in Eq.~\eqref{eq:def_quantities},    the estimator $\tilde l_l$ in Alg.~\ref{alg:lisaoco} is an unbiased estimator for $\delta \vec v_l \cdot \nabla \eFunc^{\avg}_l(\vec y^\delta_{l})$, i.e., 
    \begin{align}
        \E \interval{\tilde l_l \given \mathcal{H}_{l-1}, \vec v_l} = \delta \vec v_l \cdot \nabla \eFunc^{\avg}_l(\vec y^\delta_l).
    \end{align}
\end{lemma}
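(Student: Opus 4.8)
The plan is to evaluate $\E[\tilde l_l \given \mathcal{H}_{l-1}, \vec v_l]$ by peeling off the fresh randomness of window $\mathcal{W}_l$ one layer at a time, in the order: the rounding $\hat\Xi$, the auxiliary direction $\vec u_l$ together with the sign-flipping coin, the scale $z_l$, and finally the uniformly drawn index $t_l$. Throughout I hold $\mathcal{H}_{l-1}$ and $\vec v_l$ fixed; the iterate $\y^\delta_l$ is then determined by the information in $\mathcal{H}_{l-1}$, and, since $\vec H_l = \delta\vec I$, I may substitute $(\vec H_l\vec v_l)^\intercal\vec u_l = \delta(\vec v_l\cdot\vec u_l)$. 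The innermost layer is immediate: because $\x_t = \hat\Xi(\y_l)$ and $\E_{\x\sim\hat\Xi(\y)}[f_t(\x)] = \efunc_t(\y)$ by Lemma~\ref{lemma:bandit_extention}, I replace each $f_t(\x_t)$ by $\efunc_{t_l}(\y_l)$ after conditioning on $\y_l$.

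First I would carry out the core computation at the second layer, fixing $t_l = t$ and $z_l = z$ and averaging over $\vec u_l$ and the coin. In the branch $z \geq \tfrac12$, the estimator equals $+\frac{2\alpha n}{z}\efunc_t(z\y^\delta_l + z\delta (\vec v_l)_i \vec e_i)$ with probability $\frac{1}{2n}$ for each $i$ (when $\vec u_l = \vec e_i$) and $-\frac{2\alpha n}{z}\efunc_t(z\y^\delta_l)$ with probability $\frac12$ (when $\vec u_l = \vec 0$), so averaging gives $\frac{\alpha}{z}\sum_{i=1}^n\left[\efunc_t(z\y^\delta_l + z\delta(\vec v_l)_i\vec e_i) - \efunc_t(z\y^\delta_l)\right]$. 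Here the crucial fact is the multilinearity of $\efunc_t$ (Lemma~\ref{lemma:bandit_extention}): since $\efunc_t$ is affine in each coordinate, the finite difference is exact, $\efunc_t(\y + c\vec e_i) - \efunc_t(\y) = c\,\frac{\partial \efunc_t}{\partial y_i}(\y)$ with $c = z\delta(\vec v_l)_i$, which collapses the sum to $\alpha\delta\,\vec v_l\cdot\nabla\efunc_t(z\y^\delta_l)$. The branch $z < \tfrac12$ proceeds identically: averaging the coin turns $4\alpha n\delta(\vec v_l)_i f_t(\x_t)$ into the two-point difference $2\alpha n\delta(\vec v_l)_i[\efunc_t(z\y^\delta_l + \tfrac12\vec e_i) - \efunc_t(z\y^\delta_l)]$, multilinearity again produces $\alpha n\delta(\vec v_l)_i\frac{\partial\efunc_t}{\partial y_i}(z\y^\delta_l)$, and averaging $\vec u_l$ uniformly over $\{\vec e_1,\dots,\vec e_n\}$ yields the same value $\alpha\delta\,\vec v_l\cdot\nabla\efunc_t(z\y^\delta_l)$. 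Thus both branches share the identical $z$-conditional expectation $\alpha\delta\,\vec v_l\cdot\nabla\efunc_t(z\y^\delta_l)$.

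Next I would integrate over $z_l$, whose density on $[0,1]$ is $\frac{\exp(z-1)}{1-\exp(-1)}$, and average over $t_l\sim\mathrm{Unif}(\mathcal{W}_l)$. The $z$-integral yields $\frac{\alpha\delta}{1-\exp(-1)}\,\vec v_l\cdot\int_0^1\exp(z-1)\nabla\efunc_t(z\y^\delta_l)\,dz$, and the gradient identity for the auxiliary function in Lemma~\ref{lemma:aux}, namely $\nabla\eFunc_t(\y^\delta_l) = \int_0^1\exp(z-1)\nabla\efunc_t(z\y^\delta_l)\,dz$, turns this into $\frac{\alpha\delta}{1-\exp(-1)}\,\vec v_l\cdot\nabla\eFunc_t(\y^\delta_l)$. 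Averaging over $t_l$ replaces $\eFunc_t$ by $\eFunc^\avg_l = \frac1W\sum_{t\in\mathcal{W}_l}\eFunc_t$. Finally, since $\alpha = 1-1/e = 1-\exp(-1)$, the normalizing constant of the $z_l$-density cancels $\alpha$ exactly, leaving $\delta\,\vec v_l\cdot\nabla\eFunc^\avg_l(\y^\delta_l)$, which is the claim.

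The hard part will be the second layer: one must reconcile the sampling probabilities ($\tfrac12$ versus $\tfrac1{2n}$), the sign-flipping indicators $1-2\mathds{1}(\cdot)$, and the two distinct perturbation scales ($z\delta(\vec v_l)_i$ versus $\tfrac12$) with the $\frac{2\alpha n}{z}$ and $4\alpha n\delta(\vec v_l)_i$ prefactors, so that after applying multilinearity \emph{both} branches deliver precisely the same directional-derivative estimate. A secondary point to verify is that every perturbed evaluation point stays in $\Y$, so that $\hat\Xi$ is well-defined; this feasibility is exactly what the shrunken-set construction $\y^\delta_l\in\Y_\delta$ guarantees (Proposition~\ref{proposition:shrunken_set}), together with $\vec 0\in\Y$, which makes $z\y^\delta_l\in\Y$ by convexity. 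The role of the case split $z\gtrless\tfrac12$ is precisely to keep these perturbations feasible while preserving unbiasedness.
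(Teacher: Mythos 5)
Your proposal is correct and follows essentially the same route as the paper's proof: condition on the window's randomness, compute the two branch expectations over $\vec u_l$ and the sign coin, use multilinearity of $\efunc_t$ (Lemma~\ref{lemma:bandit_extention}) to turn the finite differences into exact directional derivatives $\alpha\delta\,\vec v_l\cdot\nabla\efunc_{t_l}(z_l\y^\delta_l)$ in both cases, and then integrate over $z_l$ and average over $t_l$, invoking the gradient identity of Lemma~\ref{lemma:aux} with the normalization $1-e^{-1}=\alpha$ cancelling the prefactor. Your added remarks on feasibility of the perturbed points (via Proposition~\ref{proposition:shrunken_set} and the role of the $z_l\gtrless\tfrac12$ split) are a correct, if implicit-in-the-paper, justification that $\hat\Xi$ is well-defined at every evaluation point.
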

\begin{proof} 
This proof follows the same lines as the proof of \cite[Lemma E.1]{wan2023bandit}.
    
    \noindent\emph{Part I ($z_t \geq 1/2$).} Condition on $\mathcal{H}_{l-1}, \vec v_l, \vec u_l, z_l, t_l$. If $z_l\geq1/2$ and $\vec u_l = \vec 0$, we have 
    \begin{align}
         \E \interval{\tilde l_l \given \mathcal{H}_{l-1}, \vec v_l, \vec u_l, z_l, t_l} = \E \interval{-2 \alpha \frac{n}{z_l} f_{t_l}(\x_{t_l})  \given \mathcal{H}_{l-1}, \vec v_l, \vec u_l, z_l, t_l} = - \frac{2 \alpha n}{z_l} \efunc_{t_l}(z_l \y^\delta_l). 
    \end{align}
    If $z_l\geq 1/2$ and $\vec u_l \neq \vec 0$, we have 
    \begin{align}
         \E \interval{\tilde l_l \given \mathcal{H}_{l-1}, \vec v_l, \vec u_l, z_l, t_l} = \E \interval{-2 \alpha \frac{n}{z_l} f_{t_l}(\x_{t_l})  \given \mathcal{H}_{l-1}, \vec v_l, \vec u_l, z_l, t_l} = \frac{2 \alpha n}{z_l} \efunc_{t_l}(z_l \y^\delta_l + z_l \delta \parentheses{\vec v_l  \odot \vec u_l}). 
    \end{align}
    Condition on $\mathcal{H}_{l-1}, \vec v_l, z_l, t_l$, to obtain 
    \begin{align}
        \E \interval{\tilde l_l \given \mathcal{H}_{l-1}, \vec v_l, z_l, t_l} &= \frac{1}{2} \parentheses{- \frac{2 \alpha n}{z_l} \efunc_{t_l}(z_l \y^\delta_l)} + \frac{1}{2n} \parentheses{\sum^n_{i=1} \frac{2 \alpha n}{z_l} \efunc_{t_l}(z_l \y^\delta_l + z_l \delta \parentheses{\vec v_l  \odot \vec e_i})}\\
        &=\alpha \sum^n_{i=1} \frac{1}{z_l} \parentheses{\efunc_{t_l} (z_l \y^\delta_l + z_l \delta \parentheses{\vec v_l  \odot \vec e_i}) -\efunc_{t_l}(z_l \y^\delta_l)}  \\
        &= \alpha \sum^n_{i=1} \frac{z_l}{z_l} \delta \vec v_l \cdot \vec e_i \frac{\partial \efunc_{t_l}(z_l \y^\delta_l)}{\partial x_i} = \alpha \delta \vec v_l \cdot \nabla \efunc_{t_l} (z_l \y^\delta_l).
    \end{align}
    \forcednewline \noindent\emph{Part II ($z_t < 1/2$).} Condition on $\mathcal{H}_{l-1}, \vec v_l, \vec u_l, z_l, t_l$. If $z_l<1/2$ and $\vec y_l = z_l \y^\delta_l$, we have 
\begin{align}
      \E \interval{\tilde l_l \given \mathcal{H}_{l-1}, \vec v_l, \vec u_l, z_l, t_l} &= \E \interval{-4 \alpha n \delta \parentheses{\vec v_l \cdot \vec u_l} f_{t_l}(\x_{t_l})  \given \mathcal{H}_{l-1}, \vec v_l,\vec u_l,  z_l, t_l} \\
      &=-4 \alpha n \delta \parentheses{\vec v_l \cdot \vec u_l} \efunc_{t_l}(z_l \y^\delta_l).
\end{align}
If $z_l<1/2$ and $\vec y_l = z_l \y^\delta_l + 1/2 \vec u_l $, we have 
\begin{align}
     \E \interval{\tilde l_l \given \mathcal{H}_{l-1}, \vec v_l, \vec u_l, z_l, t_l} &= \E \interval{4 \alpha n \delta \parentheses{\vec v_l \cdot \vec u_l} f_{t_l}(\x_{t_l})  \given \mathcal{H}_{l-1}, \vec v_l, \vec u_l, z_l, t_l} \\
     &=4 \alpha n \delta \parentheses{\vec v_l \cdot \vec u_l} \efunc_{t_l}( z_l \y^\delta_l + 1/2 \vec u_l ).
\end{align}
Condition on $\mathcal{H}_{l-1}, \vec v_l, z_l, t_l$, to obtain
\begin{align}
     \E \interval{\tilde l_l \given \mathcal{H}_{l-1}, \vec v_l, z_l, t_l} &= \sum^n_{i=1} \frac{1}{n} \parentheses{1/2 (4 \alpha n \delta \parentheses{\vec v_l \cdot \vec e_i} \efunc_{t_l}( z_l \y^\delta_l + 1/2 \vec e_i ) - 4 \alpha n \delta \parentheses{\vec v_l \cdot \vec e_i} \efunc_{t_l}(z_l \y^\delta_l))}\\
     &= \sum^n_{i=1} 2 \alpha \delta \parentheses{\vec v_l \cdot \vec e_i} \parentheses{\efunc_{t_l} ( z_l \y^\delta_l + 1/2 \vec e_i )  - \efunc_{t_l} ( z_l \y^\delta_l)}\\
     &= \sum^n_{i=1} 2/2 \alpha \delta \parentheses{\vec v_l \cdot \vec e_i} \frac{\partial\efunc_{t_l}(z_l \y^\delta_l)}{\partial x_i}\\
     &= \alpha \delta \vec v_l \cdot \nabla \efunc_{t_l} (z_l \y^\delta_l).
\end{align}

\noindent\emph{Part III. ($z_l \in [0,1]$)} Condition on $\mathcal{H}_{l-1}, \vec v_l$, to obtain
\begin{align}
      \E \interval{\tilde l_l \given \mathcal{H}_{l-1}, \vec v_l} &= \int^1_{0} \frac{\exp(z-1)}{\alpha} \frac{1}{W} \sum_{t \in \mathcal{W}_l}\parentheses{\alpha \delta \vec v_l \cdot \nabla \efunc_t (z_l \y^\delta_l)} dz\\
      &= \int^1_{0} \frac{\exp(z-1)}{\alpha} \parentheses{\alpha \delta \vec v_l \cdot \nabla \efunc^{\avg}_{l} (z_l \y^\delta_l)} dz\\
      &= \delta \vec v_l \cdot \nabla \eFunc^{\avg}_l(\vec y^\delta_l).
\end{align}
\end{proof}

\begin{lemma} \cite[Lemma 3.2]{wan2023bandit}
        Consider $\mathcal H_{l}$ and $\eFunc^{\avg}_t$  defined in Eq.~\eqref{eq:def_quantities}, the following holds for the supergradient estimator $\tilde{\vec g}_l$ in Alg.~\eqref{alg:lisaoco}:
    \begin{align}
        \E\interval{\tilde{\vec g}_l \given \mathcal{H}_{l-1}} = \nabla \eFunc^{\avg}_l(\y^\delta_l),\qquad \E\interval{\norm{\tilde{\vec g}_l}^2_2 \given \mathcal{H}_{l-1}} \leq 16 \frac{\alpha^2 n^4 L^2}{\delta^2}. 
    \end{align}
\end{lemma}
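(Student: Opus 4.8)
The plan is to reduce everything to two ingredients already established just above: the preceding lemma, which shows the scalar loss estimate satisfies $\E\interval{\tilde l_l \given \mathcal H_{l-1}, \vec v_l} = \delta\, \vec v_l \cdot \nabla \eFunc^{\avg}_l(\y^\delta_l)$, and the gradient estimator identity \cite[Corollary~6.8]{hazan2016introduction}. Throughout I would substitute the concrete exploration matrix $\vec H_l = \delta \vec I$ used by the algorithm, so that line~21 of Alg.~\ref{alg:lisaoco} becomes $\tilde{\vec g}_l = \tfrac{n}{\delta}\tilde l_l\, \vec v_l$, and recall $\norm{\vec v_l}_2 = 1$ since $\vec v_l \sim \mathbb S_{n-1}$.

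For the unbiasedness claim I would apply the tower property, conditioning first on $\vec v_l$. Since $\vec v_l$ is then fixed and $\tilde l_l$ carries all remaining randomness, I can pull $\vec v_l$ out and invoke the preceding lemma:
\begin{align}
\E\interval{\tilde{\vec g}_l \given \mathcal H_{l-1}, \vec v_l} = \frac{n}{\delta}\,\vec v_l\, \E\interval{\tilde l_l \given \mathcal H_{l-1}, \vec v_l} = n\, \vec v_l \vec v_l^\intercal\, \nabla \eFunc^{\avg}_l(\y^\delta_l).
\end{align}
Averaging over $\vec v_l \sim \mathbb S_{n-1}$ and using the isotropy identity $\E\interval{\vec v_l \vec v_l^\intercal} = \tfrac{1}{n}\vec I$ (by symmetry and a trace argument, or equivalently by applying the cited gradient estimator identity to the linear map $\x \mapsto \x \cdot \nabla \eFunc^{\avg}_l(\y^\delta_l)$, which coincides with its own $\vec H$-smoothing) collapses this to $\nabla \eFunc^{\avg}_l(\y^\delta_l)$, as required.

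For the second-moment bound the starting point is that $\norm{\tilde{\vec g}_l}^2_2 = \tfrac{n^2}{\delta^2}\tilde l_l^2 \norm{\vec v_l}^2_2 = \tfrac{n^2}{\delta^2}\tilde l_l^2$, so it suffices to bound $\tilde l_l^2$ pointwise, splitting on the two branches of the algorithm. When $z_l \geq \tfrac12$, the sign factor has modulus one, $1/z_l \leq 2$, and $f_t(\x_t) \leq L$ (rewards are bounded by $L$), so $\abs{\tilde l_l} \leq 4\alpha n L$. When $z_l < \tfrac12$, I would use $\abs{(\vec H_l \vec v_l)^\intercal \vec u_l} = \delta\abs{\vec v_l^\intercal \vec u_l} \leq \delta$, valid because $\vec u_l$ is a coordinate vector and $\norm{\vec v_l}_\infty \leq \norm{\vec v_l}_2 = 1$, so $\abs{\tilde l_l} \leq 4\alpha n\delta L$. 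As $\delta < 1$, the first branch dominates, giving $\tilde l_l^2 \leq 16\alpha^2 n^2 L^2$ and hence $\norm{\tilde{\vec g}_l}^2_2 \leq 16\alpha^2 n^4 L^2/\delta^2$ surely; the conditional-expectation bound then follows trivially.

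The main obstacle is the unbiasedness step: the whole argument turns on correctly passing the conditional mean of $\tilde l_l$ through the outer expectation over $\vec v_l$ and recognizing that the resulting rank-one average $\E\interval{\vec v_l \vec v_l^\intercal}$ isotropizes to $\tfrac1n \vec I$. Once that identity (or its packaging as the gradient estimator identity) is in place, both parts are short; the variance bound is a routine case analysis whose only subtlety is verifying that the $z_l \geq \tfrac12$ branch, carrying the $1/\delta^2$ blow-up, is the binding one.
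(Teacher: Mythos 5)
Your proof is correct and takes essentially the same route as the paper's: tower over $\vec v_l$, invoke the preceding lemma's conditional mean $\E\interval{\tilde l_l \given \mathcal{H}_{l-1},\vec v_l} = \delta\,\vec v_l\cdot\nabla\eFunc^{\avg}_l(\y^\delta_l)$, and collapse the sphere average via the gradient of the linear map $\x\mapsto\x\cdot\nabla\eFunc^{\avg}_l(\y^\delta_l)$ --- your isotropy identity $\E\interval{\vec v_l\vec v_l^\intercal}=\tfrac1n\vec I$ is exactly what the paper's appeal to \citet[Corollary~6.8]{hazan2016introduction} packages, as you yourself note --- followed by a pointwise bound on $\abs{\tilde l_l}$ for the second moment. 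Your explicit two-branch case analysis (with the $z_l\geq\tfrac12$ branch binding since $\delta<1$) is in fact more careful than the paper's compressed claim ``$\abs{\tilde l_l}\leq 4\alpha L$,'' which omits the factor of $n$ that the subsequent bound $\norm{\tilde{\vec g}_l}_2\leq 4\alpha n^2 L/\delta$ actually requires.
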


    \noindent\emph{Part I.}  We have the following
    \begin{align}
        \E\interval{\tilde{\vec g}_l \given \mathcal{H}_{l-1}} &=\E_{\vec v \sim \mathbb S_{n'-1}}\interval{ \E\interval{\tilde{\vec g}_l \given \mathcal{H}_{l-1}, \vec v} } =\E_{\vec v \sim \mathbb S_{n'-1}}\interval{ \frac{n'}{\delta} l_l \vec v} =  \E_{\vec v \sim \mathbb S_{n'-1}} \interval {n' \vec v \cdot \nabla \eFunc^{\avg}_l( \vec y^\delta_l) \vec v}\\
        &= \nabla l^{\vec I}_l(\vec 0) = \nabla l_l (\vec 0) = \nabla \eFunc^{\avg}_l(\vec y^\delta_l). \qquad\qquad \text{$(l_l(\x) \triangleq \x \cdot \nabla \eFunc^{\avg}_l(\y^\delta_l)).$}
    \end{align}Recall that $l^{\vec I}$ is the $\vec I$-smoothed version of $l$ in Def.~\ref{def:smoothed_func}, which are identical since $l$ is linear.
\begin{proof} This proof follows the same lines as the proof of \cite[Lemma 3.2]{wan2023bandit}.

\noindent\emph{Part II.} We have the following $|\tilde l_l| \leq 4 \alpha L$. It follows that $\norm{\tilde{\vec g}_l}_2 \leq 4 \frac{\alpha {n'}^2 L}{\delta} $. Thus, we have
\begin{align}
    \E\interval{\norm{\tilde{\vec g}_t}^2_2 | \mathcal{H}_{l-1}} \leq 16 \frac{\alpha^2 {n'}^4 L^2}{\delta^2}.
\end{align}
We conclude the proof.
\end{proof}

Recall that in the dynamic setting, the decision maker compares its performance to the best sequence of decisions $R(\x^\star_t)_{t \in \T} $ with a path length $P_T$ from the set   $$\Lambda_{\X} (T,P_T) = \set{\parentheses{\x_t}^T_{t=1} \in \X^T: \sum^T_{t=1} \norm{\x_{t+1} - \x_t} \leq P_T} \subset \X^T.$$ 
In the bandit setting, we define the following regularity condition that further restricts the movements of the comparator sequence:
\begin{align}
    \Lambda_{\X} (T, P_{T}, W) \triangleq \set{\parentheses{\x_t}^T_{t=1} \in \X^T: \sum^T_{t=1} \norm{\x_{t+1} - \x_t} \leq P_{T}, \x_t = \x_{t'}, t,t'\in \mathcal{W}_l, l \in [T/W] } \subset \X^T.
\end{align}
We extend the definition of the {dynamic $\alpha$-regret} in Sec.~\ref{sec:extensions}: 
\begin{align*}
     &\alpha\mathrm{-regret}_{T, P_T, W} ({\vec\P}_\X) \triangleq \sup_{\parentheses{{f}_t}^T_{t=1} \in {\F}^T} \bigg\{ \max_{\parentheses{\x^\star_t}^T_{t=1}\in    \Lambda_{\X} (T,P_T, W)}  \alpha\sum^T_{t=1} {f}_t(\x^\star_t) -\sum^T_{t=1} {f}_t(\x_t)\bigg\} 
\end{align*}
Note that this dynamic regret definition reduces to the static regret for $P_T=0$, and the definition in the full-information setting in Sec.~\ref{sec:extensions} for $W=1$ (i.e., $\alpha\mathrm{-regret}_{T} ({\vec\P}_\X) = \alpha\mathrm{-regret}_{T, 0, W} ({\vec\P}_\X)$ and $\alpha\mathrm{-regret}_{T, P_T, 1} ({\vec\P}_\X) = \alpha\mathrm{-regret}_{T, P_T} ({\vec\P}_\X)$).

\paragraph{Theorem~6 (Restatement).}  \emph{Under uniformly $L$-bounded  submodular monotone rewards and partition matroid constraint sets, LIRAOCO policy $\vec\P_\X$  in Alg.~\ref{alg:lisaoco} equipped with  an OCO policy $\vec\P_{\Y_\delta}$  
yields 
 $   \alpha\mathrm{-regret}_{T, P_T,W}  \parentheses{\vec\P_\X} \leq  W \cdot \mathrm{regret}_{T/W, P_T }\parentheses{\vec{\P}_{\Y_\delta}} + \frac{L T}{W} + 2 \alpha \delta r^2 n  L T,$}
 \emph{where $\mathrm{regret}_{T/W, P_T} \parentheses{\vec{\P}_{\Y_\delta}}$ is the regret of an OCO policy executed for $T/W$ timeslots.} 
\begin{proof}


The performance of $\vec \P_\X$ is measured against the  comparator sequence $$(\x_{\star, t})^T_{t=1} \in \argmax_{(\x_t)^T_{t=1} \in \Lambda_{\X} (T,P_T, W) } \sum^T_{t=1} f_t(\x_t).$$  
From the definition of the extension $\efunc$ in Lemma~\ref{lemma:bandit_extention}, we can always find a point $\y_{\star,t} \in \Y$ such that $f_t(\x_{\star,t}) = \efunc_t(\y_{\star,t})$  for every $t \in [T]$,  $\sum^T_{t=1} \norm{\y_{\star,t} - \y_{\star,t}} = \sum^T_{t=1} \norm{\x_{\star,t+1} - \x_{\star, t}} \leq P_T$, and $\y_{\star, t} = \y_{\star,t'}$ for $t,t' \in \mathcal{W}_l$ for $l \in [T/W]$ since $\x_{\star, t}$ is an integral point in $\set{0,1}^n$. We define $\y^\star_l = \y^\star_t$ for some  $t \in \mathcal W_l$,  $l \in [T/W]$.

\begin{align}
      &\E \interval{  \sum^T_{t=1}(1-1/e) f_t(\x_{\star, t}) - f_t (\x_t)} \\
      &=   \E \interval{  \sum^T_{t=1}(1-1/e) \efunc_t(\y_{\star, t}) - \E \interval{f_t (\x_t) \,\big| \,\mathcal{H}_{t-1}}}
  \\
  &=  \E \interval{  \sum^T_{t=1}(1-1/e) \efunc_t(\y_{\star, t}) - \efunc_t (\y_t)} \qquad \qquad \qquad \text{( $ \efunc_t (\y)= \E_{\x \sim \hat\Xi (\y)} \interval{f_t(\x)}$ in Lemma~\ref{lemma:bandit_extention})} \\
   &= \E \interval{  \sum^T_{t=1}(1-1/e) \efunc_t(\y_{\star, t}) - \efunc_t (\y_{\ceil{t/W}})} + \E \interval{  \sum^T_{t=1}\efunc_t (\y_{\ceil{t/W}}) - \efunc_t (\y_{t})} \\
   &=  \E \interval{ W \sum^{T/W}_{l=1} (1-1/e) {\efunc}^{\avg}_l(\y_{\star, l}) - {\efunc}^{\avg}_l (\y^\delta_{\ceil{t/W}})} + \E \interval{  \sum^{T/W}_{l=1}\efunc_t (\y^\delta_{l}) - \efunc_t (\y_{t_l}) \big| \mathcal{H}_{l-1}} \\
   &\leq  \E \interval{ W \sum^{T/W}_{l=1} (1-1/e) {\efunc}^{\avg}_l(\y^{\delta}_{l, \star}) - {\efunc}^{\avg}_l (\y^\delta_{\ceil{t/W}})}  + \frac{L T }{W} + 2 \alpha \delta r^2 n L T. 
\end{align}
The last inequality is obtained using $\efunc_t (\y^\delta_{l}) - \efunc_t (\y_{t_l}) \leq L$ to yield $\frac{L T }{W}$ term; moreover, consider that $\norm{\y_\star - \y^{\delta}_{t, \star}}_2 \leq \delta D$ ($D \leq 2r$) and $\efunc$ is $n' L$-Lipschitz (w.r.t. $\norm{\,\cdot\,}_2$) (see Lemma~\ref{lemma:bandit_extention}), this yields an additional $2 \alpha \delta n r^2 L T$ term ($n' \leq r n$).  It remains to bound $\E \interval{\sum^{T/W}_{l=1} (1-1/e) {\efunc}^{\avg}_l(\y^\delta_{l,\star}) - {\efunc}^{\avg}_l (\y^\delta_{\ceil{t/W}})}$ to complete the proof:

\begin{align}
  \E \interval{\sum^{T/W}_{l=1} (1-1/e) {\efunc}^{\avg}_l(\y^\delta_{\star,l}) - {\efunc}^{\avg}_l (\y^\delta_{\ceil{t/W}})}  &\leq  \sum^{T/W}_{l=1}  \E \interval{\nabla \hat{F}^{\avg}_l(\y^\delta_l) \cdot (\y^{\delta}_{l, \star} - \y^\delta_l ) \,\big|\, \mathcal{H}_{l-1}}  \\
  &\leq \mathrm{regret}_{T/W, P_{T}} \parentheses{\vec{\P}_{\Y_\delta}}.  
\end{align}
The first inequality follows from Lemma~\ref{lemma:aux}. The second inequality holds because the gradient estimates $\tilde{\vec{g}}_l$ are unbiased estimates of $\nabla \eFunc^{\avg}_l$ and the reduction of the regret bound of the first-order policy $\P_{\Y_\delta}$~\cite[Lemma 6.5]{hazan2016introduction}. We finally obtain the following:
\begin{align}
     \E \interval{  \sum^T_{t=1}(1-1/e) f_t(\x_{\star, t}) - f_t (\x_t)}  \leq W \cdot \mathrm{regret}_{T/W, P_T} \parentheses{\vec{\P}_{\Y_\delta}} + \frac{L T}{W} + 2 \alpha \delta r^2 n  L T.
\end{align}

\end{proof}

LIRAOCO policy $\vec\P_\X$  in Alg.~\ref{alg:lisaoco} equipped with   OOMA in Alg.~\ref{alg:ooma} configured with $\Phi$ satisfying Asms.~\ref{asm:mirror_map}--\ref{asm:bounded_gradient_mirror_map} as $\vec\P_{\Y_\delta}$  has the following regret:\footnote{Note that the result in Theorem~\ref{theorem:bandit} readily extends to dynamic regret as in the proof of Theorem~\ref{thm:dynamic}.}
     \begin{align}
         \aregret \parentheses{\vec\P_\X} \leq \frac{W}{\eta}\Big(4 r^2 + 2 L_{\Phi} {P_T}\Big) + \frac{W \eta }{2\rho} \parentheses{\sum_{l \in T/W}  \E \interval {\norm{\g_l - \pg_l}^2_\star}}+ \frac{L T}{W}  + 2 \alpha \delta r^2 n L T.
\end{align}

This bound is obtained considering the regret bound in Theorem~\ref{alg:lisaoco} and OOMA policy regret bound in Theorem~\ref{thm:general_regret_dynamic+optimistic}. Consider an Euclidean mirror map $\Phi(\y) = \frac{1}{2} \norm{\y}_2^2$ with the corresponding bounds in Table~\ref{tab:bounds}. 

\noindent\textbf{Static non-optimistic setting.} In the static non-optimistic setting ($P_T = 0, \vec g_l^\pi = \vec 0$), we have:
 \begin{align}
         \aregret \parentheses{\vec\P_\X} &\leq\frac{4 r^2  W}{\eta} +\eta \parentheses{ 8 \frac{\alpha^2 r^4 n^4 L^2}{\delta^2} T} + \frac{L T}{W}  + 2 \alpha \delta r^2 n L T\\
         &\leq (12 \alpha r^{5/2} n^{3/2}+1) L T^{4/5} 
\end{align}
For $\eta =  \frac{32^{1/4}}{2^{1/2} \alpha r^{1/2} n^{3/2} L}  T^{-3/5}$, where  $\delta = (32)^{1/4} (r^{1/2} n^{1/2}) T^{-1/5}$, $W = T^{1/5}$.

\noindent\textbf{Dynamic non-optimistic setting.}  In the static non-optimistic setting ($P_T > 0$ and $\vec g_l^\pi = \vec 0$). We have the following:
\begin{align}
    \aregret \parentheses{\vec\P_\X} = \BigO{\sqrt{P_T} T^{4/5}}.
\end{align}
For $\eta = \Theta\parentheses{\sqrt{P_T} T^{-3/5}}$, $\delta = \Theta \parentheses{T^{-1/5}}$, $W = \Theta\parentheses{T^{1/5}}$.

\noindent\textbf{Dynamic optimistic setting.}  In the dynamic optimistic setting ($P_T > 0$ and $\vec g_l^\pi \neq  \vec 0$). We have the following:
\begin{align}
    \aregret \parentheses{\vec\P_\X} = \BigO{ T^{\alpha}\sqrt{P_T \sum^{T/W}_{l=1} \norm{\tilde{\vec g}_{l} - \vec g^\pi_l}^2_2} + T^{1- \beta} + T^{1-\alpha}}.
\end{align}
For $\eta = \Theta \parentheses{\sqrt{P_T / \sum^{T/W}_{l=1} \norm{\tilde{\vec g}_{l} - \vec g^\pi_l}^2_2}}$, $\delta = \Theta \parentheses{T^{-\beta}} $, $W = \Theta \parentheses{T^\alpha}$. To interpret the above bound, consider for simplicity that $P_T = \Theta\parentheses{1}$ and  that the optimistic gradients $\g^\pi_t$ are inaccurate (but bounded by some constant). Then, when we take $\eta = \Theta\parentheses{T^{-3/5}}$, $\delta = \Theta \parentheses{T^{-1/5}}$, $W = \Theta\parentheses{T^{1/5}}$ and recover the worst-case regret  $\aregret \parentheses{\vec\P_\X}  = \BigO{T^{4/5}}$. However, when the optimistic gradients are accurate the above bound can be much tighter. For example, in the extreme scenario when $\sqrt{\sum^{T/W}_{l=1} \norm{\tilde{\vec g}_{l} - \vec g^\pi_l}^2_2} = \epsilon$ for $W = \Theta \parentheses{T}$, selecting $\delta =\Theta { \parentheses{T^{-1}}}$, and $\eta =  \Theta\parentheses{\frac{1}{\epsilon}}$, yields $\aregret  = \BigO{\epsilon}$.

\paragraph{Specialized Algorithms for Static Regret.} The regret attained through the reduction to an arbitrary OCO algorithm  ($\aregret \parentheses{\vec\P_\X}  = \BigO{T^{4/5}}$) is higher than with the specific instance of FTRL used by~\citet{wan2023bandit} ($\BigO{T^{2/3}}$), which however does not extend to the dynamic and optimistic settings. 

In particular, in the static regret setting, we can recover the bound by \citet{wan2023bandit} by instantiating the generic reduction in the manner they do, by employing a specialized OCO algorithm with appropriate exploration matrices $\vec H_l$. \citet{wan2023bandit}
 showed that when the OCO policy $\P_\Y$ is selected to be FTRL in Alg.~\ref{alg:ftrl} with a \emph{self-concordant regularizer}~\cite{hazan2016introduction} $R(\y): \reals^{n} \to \reals$, the exploration matrix can be  configured as $\H_l = \parentheses{\nabla^{2} R(\y_l)}^{1/2}$. Then, the sampled action $\y_l  + \vec H_l \vec v_l$ is located on the surface of the \emph{Dikin ellipsoid} centered at $\y_l$ defined as 
 \begin{align}
     \mathcal{E} (\y_l) \triangleq \set{\y \in \reals^n : \norm{\y - \y_l}_{l}\leq 1},
 \end{align}
where $\norm{\y}_l$ is a local norm given by $ \norm{\y}_l = \sqrt{\y^\intercal \nabla^2 R(\y_l) \y}$. The Dikin ellipsoid has the nice property that it is fully contained in $\Y$. This permits selecting $\delta = 0$, and this reduces the variance of the gradient estimates due to $1/\delta$ scaling. \citet{wan2023bandit} demonstrate that this configuration attains a regret $\aregret \parentheses{\vec\P_\X} = \BigO{T^{2/3}}$.

\subsection{Alternate Concave Relaxation to Reduce Computational Complexity}\label{appendix:alternate_relaxation}
In our setup so far, the concave relaxation  $\cfunc$ was the WTP function ``itself''. As an additional extension, we present a setting in which a concave relaxation is different from $f$, while still satisfying Asm.~\ref{assumption:sandwich}.  This example demonstrates that our framework and, in particular, Asm.~\ref{assumption:sandwich} provide additional flexibility to design algorithms with low time complexity. 

Consider the following set function
\begin{align}
    f(\x) = \sum_{\ell \in C} c_\ell \parentheses{ b_\ell - b_\ell \prod_{j \in S_\ell} (1 -  (w_{\ell,j}/b_\ell) x_{j})}, &&\text{$\x \in \set{0,1}^n$,}
\end{align}
where   $b_\ell \in \reals_{\geq 0} $,  $S_\ell \subseteq V$ is a subset of $V=[n]$,   $\vec w_\ell = (w_{\ell,j})_{j \in S}\in [0,b]^{|S|}$ bounded by $b_\ell$, and $c_\ell \in 
\reals_{\geq 0}$ for $\ell \in C$ for some index set $C$. 

This objective appears in many applications such as product ranking~\cite{ferreira2022learning,niazadeh2021online} and influence maximization~\cite{kempe2003maximizing, karimi2017stochastic}. Note for $\vec w_\ell = \vec 1, \ell \in C$, this set function reduces to the weighted coverage~\eqref{eq:wcf}. 

Lemma~\ref{lemma:tech_equiv_expression} implies that the function $f$  belongs to the class of WTP functions; in particular, 
 it can be expressed as follows:
\begin{align}
    f(\x) &= \sum_{\ell \in C}c_\ell\parentheses{ b_\ell -  b_\ell \sum_{S \subseteq S_\ell} \prod_{i \in S} (1-x_{i}) \prod_{i \in S} w_{\ell,i}/b_\ell\prod_{j \in S_\ell\setminus S} (1-w_{\ell,j}/b_\ell)}.
\end{align}
We obtain this equality by considering  that $1- \prod_{i \in S} (1- x_i) = \Psi_{1, \vec 1, S} (\x)$ for all $\x \in \set{0,1}^{|S|}$ and, hence:
\begin{align}
   f(\x) &=\sum_{\ell \in C}c_\ell\parentheses{ b_\ell -  b_\ell \sum_{S \subset S_\ell} \prod_{i \in S} w_{\ell,i}/b_\ell\prod_{j \in S_\ell\setminus S} (1-w_{\ell,j}/b_\ell)\parentheses{\Psi_{1, 
    \vec 1, S} (\x) - 1 }} \\
    &= \sum_{\ell \in C}c_\ell b_\ell \sum_{S \subset S_\ell} \parentheses{\prod_{i \in S} w_{\ell,i}/b_\ell\prod_{j \in S_\ell\setminus S} (1-w_{\ell,j}/b_\ell)}\Psi_{1, 
    \vec 1, S} (\x) \triangleq h(\x)
    & & \text{for $\x \in \set{0,1}^n$.}
\end{align}
The second inequality follows from the fact that $1 = \sum_{S \subset S_\ell} \prod_{i \in S} w_{\ell,i}/b_\ell\prod_{j \in S_\ell\setminus S} (1-w_{\ell,j}/b_\ell)$ for $\ell \in C$ (see Lemma~\ref{lemma:tech_equiv_expression}).

A straightforward approach to construct a concave relaxation of $f$ is to extend $h(\x)$ to fractional values $\y \in [0,1]^n$, i.e., $\cfunc (\y) = h(\y), \forall \y \in [0,1]^n$ as in Sec.~\ref{s:osm_wtp}. However, \emph{this extension is intractable, since it involves a summation of potentially $2^n$ terms}. Alternatively, we can consider the concave relaxation given by 
\begin{align}
    \cfunc (\y) = \sum_{\ell \in C} c_\ell \Psi_{b_\ell, \vec w_\ell, S_\ell} (\y) && \text{ for $\y \in [0,1]^n.$}
\end{align}
It is easy to check that Lemmas~\ref{lemma:util_bounds_lower}--\ref{lemma:util_bounds_upper} emply that this is a valid concave function satisfying Asm.~\ref{assumption:sandwich}. This relaxation is of interest for two reasons: (1) it provides an example of the ``sandwich'' property under which the two functions do not necessarily exactly match over integral values, and (2) it illustrates that  carefully designing the concave relaxation  can greatly change the computational complexity of the proposed method in Sec.~\ref{s:osm_genral}.

\section{Additional Experimental Details \& Results}
\label{appendix:experiments}

\subsection{Datasets \& Experiment Setup}
We provide additional details about the datasets we use and our experiment setup below. For all experiments reported in Tables~\ref{tab:res} and~\ref{tab:frac_res_large}, we repeat experiments 5 times per policy; we ensure that the adversary presents the same five sequences across all competitor policies. 

\paragraph{Influence Maximization.}
    In the case of influence maximization (see Appendix~\ref{appendix:wcf}), we use the Zachary Karate Club (\IMZKC) and the Epinions (\IMEpinions) datasets \cite{nr}. 
    We denote by $m$ the number of partitions of the partition matroid. 
    We also sort the number of nodes of the original graph by their degrees and we divide the nodes into $m=2$ partitions where sorted nodes with even indices are assigned to one partition and the odd indices are assigned to the other. At every timeslot $t$, the online policy selects $\frac{r_{\mathrm{part}}}{2}=2$ seeds from each partition. Then, the adversary generates
   a cascade reachability graph following the independent cascade (IC) model,  \cite{kempe2003maximizing} by independently sampling the edges of the original graph with probability $p = 0.1$. We repeat the above for $T=100$ timeslots. 
    We follow a similar procedure with the \IMEpinions{} dataset. This time, we generate a subgraph of the original dataset by sorting the nodes by their outdegrees and keeping the top $200$ nodes. Then, we sample the edges of this subgraph with probability $p=0.1$ and generate $T = 150$ instances. We divide the dataset into $m=2$ equal-sized partitions u.a.r and select at most $\frac{r_{\mathrm{part}}}{2}=5$ from each partition.
\paragraph{Facility Location.}      For this application, we experiment on a subset of the MovieLens 10M (\FLMovieLens{}) dataset \cite{movielens}. The original dataset has more than $10M$ ratings in total from $71567$ users for $10681$ movies. We sort the users based on the number of movies they have rated and keep the top $T=294$ of these users. Then, we take the user with the most number of ratings and only keep the movies that have been rated by this user in our subset. This leaves us with $|C|=21$ movies. In the facility location context, we treat movies as facilities, users as customers, and normalized ratings $w_{i,j}$ as the utilities. For the partition matroid constraint, we divide the movies into $m = 6$ partitions based on the first genre listed for each movie. At each timeslot, the adversary selects a user in the order they show up in the original dataset. The online algorithm maximizes the total utility by selecting $\frac{r_{\mathrm{part}}}{6} = 1$ movie from each genre. For the uniform matroid constraint, we set $r_{\mathrm{uni}}=6$.  
\paragraph{Team Formation.}     Our experiments for team-formation 
focus on objective functions that are monotone, submodular quadratic functions  (see Appendix~\ref{appendix: quadratic}). 
We generate $5$ monotone and submodular quadratic functions as follows: we generate the $h$ vector by sampling $n$ numbers from a normal distribution (with $\mu = 60$ and $\sigma = 20$) and enforcing every coordinate of $\mathbf{h}$ to be between $0$ and $100$. We generate the $\mathbf{H}$ (symmetric) matrix by sampling each entry ${H}_{i,j} = {H}_{j,i}$ from a normal distribution (with $\mu = -20$ and $\sigma = 10$). We enforce every entry to be less than or equal to 0 and the main diagonal of $H$ to be equal to 0. We shrink the rows (and columns) of $\mathbf{H}$ by multiplying them with a constant factor until the function satisfies the monotonicity property. 
    We consider a pool of $n = 100$ individuals. At timeslot $t$, the adversary selects one of the five functions uniformly at random and the online algorithm forms a team maximizing the function. We consider a total of $T = 100$ timeslots. For the uniform matroid constraint, the online algorithm can select at most $r_{\mathrm{uni}}=2$ individuals. For the partition matroid constraint, we divide the individuals into $m = 2$ partitions and the online algorithm selects at most $\frac{r_{\mathrm{part}}}{2} = 2$ individuals from each partition.
\paragraph{Synthetic Weighted Coverage.} We generate two variants of the weighted-coverage problem with a ground set size $n=20$, designed to study reward function distribution shifts in the dynamic regret setting (Appendix~\ref{appendix: dynamic regret}), and an additional variant designed to study optimistic learning (Appendix~\ref{appendix: optimistic learning}).  In all cases, we consider a uniform matroid constraint with rank $r_{\mathrm{uni}}=5$. The objectives are designed such that the maximizing decisions are disjoint under the two objectives. Each objective is the sum of $|C| = 63$ threshold potentials.  Our first (stationary setting) problem considers a fixed function equal to the average of the two objectives. In the second (time-varying setting) problem, we set the rewards to be equal to the first objective for the first $\frac{T}{2} = 25$ timeslots, and then abruptly set the rewards to be the second objective for the remaining $\frac{T}{2} = 25$ timeslots. Finally, in the third (also time-varying) setting,  we alternate between the two objectives at each timeslot.

\subsection{Policy Implementation}
All timing experiments were run on a machine with a Broadwell CPU and 128GB RAM. 
All policies, including ones by other researchers ($\texttt{FSF}^\star$ and \texttt{TabularGreedy}), were implemented by us in Python. 
For solving convex optimization problems, we used the CVXPY package, which is an open-source python library. 

Even though hyperparameters differ per algorithm, for a fair comparison, we explore the same number of configurations across all our algorithms and baselines. For the \RAOCOOGA{} algorithm we try $\eta$ values from the set $\{0.001, 0.01, 0.1, 0.5, 1, 1.5, 2, 2.5, 3, 3.5, 4, 6, 8, 10\}$. For the \RAOCOOMA{}, \Algorithm{FSF$^\star$}, and \Algorithm{TabularGreedy} algorithms; we perform grid search where $(\eta, \gamma) \in \{0.05, 0.1, 6.5, 10\} \times \{0.001, 0.01, 0.05, 0.1\}$, $(\eta, \gamma) \in \{1, 10, 75, 100\} \times \{0, 0.001, 0.01, 0.1\}$, and $(\eta, c_p) \in \{0.1, 1, 10, 160\} \times \{1, 2, 4, 8\}$ respectively. In order to choose these values, we first perform an exponential search and then conduct a linear search on the interval with maximum reward.
\begin{table}[!t]
    \centering
    \begin{tiny}
    \begin{tabular}{|c|c|c|c|c|c|c|c|c|c|c|c|c|}
    \hline
         \textbf{Dataset} & \textbf{Problem} &  \textbf{${T}$} & $n$ & $|C|$ & 
         $\mathrm{min}(|S_\ell|)$ & $\max(S_\ell)$ & $\mathrm{avg}(|S_\ell|)$ & 
         \textbf{${m}$} & $r_{\texttt{part}}$ & $r_{\texttt{uni}}$\\ 
         
         \hline
         \IMZKC & Inf. Max. & $100$ & $34$ & $34$ & 
         $1$ & $8$ & $1.27$ & 
         $2$ & $4$ & $4$\\
         \IMEpinions& Inf. Max.  & $150$ & $200$ & $200$ & 
         $1$ & $47$ & $2.80$ & 
         $2$ & $10$ & $10$\\
         \FLMovieLens{} & Fac. Loc. & $294$ & $21$ & $21$ &
         $1$ & $21$ & $1.80$ & 
         $6$ & $6$ & $6$\\
         \TeamFormation{}& Team Form.  & $100$ & $100$ & $4951$ & 
         $2$ & $100$  & $2.02$ & 
         $2$ & $4$ & $2$\\
         \WeightedCoverage{}& Weight. Cov.  & $50$ & $20$ & $24$ & 
         $1$ & $4$ & $2.04$ & 
         $4$ & 8 & $5$\\       
    \hline
    \end{tabular}
    \end{tiny}
    \caption{Dataset properties and experiment parameters. The columns $T$, $n$, $|C|$ specify the number of timeslots, the number of elements of the matroid, and the number of threshold potentials in each objective function, respectively. The operators $\avg$, $\min$, and $\max$ are applied on possible values of the input parameters $S_\ell$ of the WTP class in Eq.~\eqref{eq:wtp}. The columns $m$, $r_{\mathrm{part}}$, and $r_{\mathrm{uni}}$ specify the number of partitions in the partition matroid, the rank of the partition matroid, and the rank of the uniform matroid, respectively, used across the different problem instances.}
    \vspace*{-10pt}
    \label{tab:exp}
\end{table}

\begin{table}[!t]
    \centering
    \resizebox{\columnwidth}{!}{%
    \begin{scriptsize}
    \begin{tabular}{|c|c|c|c||c|c||c|c||c||c||c|c|c|}
    \cline{5-13}
         \multicolumn{4}{c}{} & \multicolumn{2}{|c||}{\RAOCOOGA{}} & \multicolumn{2}{c||}{\RAOCOOMA{}} & \multicolumn{1}{c||}{$\texttt{FSF}^\star$} & \multicolumn{1}{c||}{\texttt{TabularGreedy}} & \multicolumn{3}{c|}{\texttt{Random}} \\
    \hline
        \multicolumn{2}{|c|}{\makecell{datasets \\ \& constr.}} & $F^\star$ &$t$ & $\bar{F}_{\Y}/F^\star$ & \makecell{avg. time per\\timeslot (s)} &  $\bar{F}_{\Y}/F^\star$ & \makecell{avg. time per\\timeslot (s)} & 
        \makecell{avg. time per\\timeslot (s)} & 
        \makecell{avg. time per\\timeslot (s)} & $\bar{F}_{\Y}/F^\star$ & std. dev. & \makecell{avg. time per\\timeslot (s)} \\
    \hline
         \multirow{6}{*}{\rotatebox{90}{\IMZKC{}}} & \multirow{3}{*}{\rotatebox{90}{Uni.}} & \multirow{3}{*}{$0.234$} & $33$ & $0.912$ & \multirow{3}{*}{$7.62 \times 10^{-3}$} & $\mathbf{0.957}$ & \multirow{3}{*}{$7.71 \times 10^{-3}$} & 
         \multirow{3}{*}{$0.158$} & 
         \multirow{3}{*}{$7.63 \times 10^{-2}$} & $0.646$ & $0$ & \multirow{3}{*}{$3.32 \times 10^{-3}$} \\
    \cline{4-5}\cline{7-7}
    \cline{11-12}
         & & & $66$ & $0.929$ & & $\mathbf{0.965}$ &  & 
         & & 
         $0.643$ & $0$ &  \\
    \cline{4-5}\cline{7-7}
    \cline{11-12}
             & & & $99$ & $0.948$ & & $\mathbf{0.981}$ & & 
             &  & 
             $0.640$ & $0$ &  \\
    \cline{2-13}
         & \multirow{3}{*}{\rotatebox{90}{Part.}}               & \multirow{3}{*}{$0.83$} & $33$ & $0.995$ & \multirow{3}{*}{$6.85 \times 10^{-3}$} & $\mathbf{0.996}$ & \multirow{3}{*}{$7.09 \times 10^{-3}$} & \multicolumn{1}{c||}{\multirow{3}{*}{\xmark}}  & 
         \multirow{3}{*}{$1.97 \times 10^{-2}$} & $0.957$ & $0$ & \multirow{3}{*}{$2.84 \times 10^{-3}$}\\
    \cline{4-5}\cline{7-7}
    \cline{11-12}
         & & & $66$ & $0.99$ &  & $\mathbf{0.993}$ & & \multicolumn{1}{c||}{}& 
         & $0.953$ & $1.34 \times 10^{-16}$ & \\
    \cline{4-5}\cline{7-7}
    \cline{11-12}
         & & & $99$ & $0.993$ &  & $\mathbf{0.996}$ &  & \multicolumn{1}{c||}{} & 
         & $0.955$ & $0$ &  \\
    \hline
    \multirow{6}{*}{\rotatebox{90}{\IMEpinions{}}} & \multirow{3}{*}{\rotatebox{90}{Uni.}} & \multirow{3}{*}{$0.171$} & $50$ & $\mathbf{0.919}$ & \multirow{3}{*}{$7.27 \times 10^{-2}$} & $0.905$ & \multirow{3}{*}{$7.4 \times 10^{-2}$} 
    & \multirow{3}{*}{$14.2$} & 
    \multirow{3}{*}{$6.69$} & $0.724$ & $0$ & \multirow{3}{*}{$1.93 \times 10^{-2}$} \\
    \cline{4-5}\cline{7-7}
    \cline{11-12}
         & & & $100$ & $\mathbf{0.937}$ & & $0.932$ &  & 
         & 
         & $0.711$ & $0$ &  \\
    \cline{4-5}\cline{7-7}
    \cline{11-12}
             & & & $149$ & $\mathbf{0.946}$ & & $0.943$ & & 
             & 
             & $0.720$ & $0$ &  \\
    \cline{2-13}
         & \multirow{3}{*}{\rotatebox{90}{Part.}}               & \multirow{3}{*}{$0.171$} & $50$ & $\mathbf{0.918}$ & \multirow{3}{*}{$7.14 \times 10^{-2}$} & $0.902$ & \multirow{3}{*}{$7.41 \times 10^{-2}$} & \multicolumn{1}{c||}{\multirow{3}{*}{\xmark}}  & 
         \multirow{3}{*}{$3.3$} & $0.724$ & $0$ & \multirow{3}{*}{$1.86 \times 10^{-2}$}\\
    \cline{4-5}\cline{7-7}
    \cline{11-12}
         & & & $100$ & $\mathbf{0.938}$ &  & $0.929$ & & \multicolumn{1}{c||}{} & 
         & $0.711$ & $8.12 \times 10^{-17}$ & \\
    \cline{4-5}\cline{7-7}
    \cline{11-12}
         & & & $149$ & $\mathbf{0.949}$ &  & $0.941$ &  & \multicolumn{1}{c||}{}  & 
         & $0.720$ & $0$ &  \\
    \hline
    \multirow{6}{*}{\rotatebox{90}{\FLMovieLens{}}} & \multirow{3}{*}{\rotatebox{90}{Uni.}} & \multirow{3}{*}{$0.407$} & $98$ & $0.838$ & \multirow{3}{*}{$1.26 \times 10^{-2}$} & $\mathbf{0.875}$ & \multirow{3}{*}{$1.25 \times 10^{-2}$} & 
    \multirow{3}{*}{$1.29 \times 10^{-2}$} & 
    \multirow{3}{*}{$7.83 \times 10^{-3}$} & $0.833$ & $0$ & \multirow{3}{*}{$1.18 \times 10^{-3}$} \\
    \cline{4-5}\cline{7-7}
    \cline{11-12}
         & & & $196$ & $0.817$ & & $\mathbf{0.820}$ &  & 
         & 
         & $0.774$ & $0$ &  \\
    \cline{4-5}\cline{7-7}
    \cline{11-12}
             & & & $293$ & $0.868$ & & $\mathbf{0.893}$ & & 
             & 
             & $0.799$ & $0$ &  \\
    \cline{2-13}
         & \multirow{3}{*}{\rotatebox{90}{Part.}}               & \multirow{3}{*}{$0.419$} & $98$ & $0.896$ & \multirow{3}{*}{$1.2 \times 10^{-2}$} & $\mathbf{0.96}$ & \multirow{3}{*}{$4.69 \times 10^{-3}$} & \multicolumn{1}{c||}{\multirow{3}{*}{\xmark}} & 
         \multirow{3}{*}{$1.71 \times 10^{-2}$} & $0.882$ & $0$ & \multirow{3}{*}{$1.04 \times 10^{-3}$}\\
    \cline{4-5}\cline{7-7}
    \cline{11-12}
         & & & $196$ & $0.874$ &  &$ \mathbf{0.911}$ & & \multicolumn{1}{c||}{} & 
         & $0.852$ & $0$ & \\
    \cline{4-5}\cline{7-7}
    \cline{11-12}
         & & & $293$ & $0.940$ &  & $\mathbf{0.949}$ &  & \multicolumn{1}{c||}{}  & 
         & $0.903$ & $0$ &  \\
    \hline
    \multirow{6}{*}{\rotatebox{90}{\TeamFormation{}}} & \multirow{3}{*}{\rotatebox{90}{Uni.}} & \multirow{3}{*}{200} & $33$ & $0.983$ & \multirow{3}{*}{$0.515$} & $\mathbf{0.983}$ & \multirow{3}{*}{$0.517$} & 
    \multirow{3}{*}{$45.5$} & 
    \multirow{3}{*}{$44.4$} & $0.611$ & 0 & \multirow{3}{*}{$0.224$} \\
    \cline{4-5}\cline{7-7}
    \cline{11-12}
         & & & $66$ & $0.992$ & & $\mathbf{0.995}$ & &
         & 
         & $0.609$ & $0$ &  \\
    \cline{4-5}\cline{7-7}
    \cline{11-12}
             & & & $99$ & $0.995$ & & $\mathbf{0.999}$ & & 
             & 
             & $0.611$ & 0 &  \\
    \cline{2-13}
         & \multirow{3}{*}{\rotatebox{90}{Part.}}               & \multirow{3}{*}{400} & $33$ & $0.982$ & \multirow{3}{*}{$0.526$} & $\mathbf{0.985}$ & \multirow{3}{*}{$0.518$} & \multicolumn{1}{c||}{\multirow{3}{*}{\xmark}} & 
         \multirow{3}{*}{$22.2$} & $0.611$ & 0 & \multirow{3}{*}{$0.228$}\\
    \cline{4-5}\cline{7-7}
    \cline{11-12}
         & & & $66$ & $0.99$ &  & $\mathbf{0.992}$ & &  \multicolumn{1}{c||}{} & 
         & $0.609$ & $0$ & \\
    \cline{4-5}\cline{7-7}
    \cline{11-12}
         & & & $99$ & $0.993$ &  & $\mathbf{0.995}$ &  &  \multicolumn{1}{c||}{}  & 
         & $0.611$ & $0$ &  \\
    \hline
    
    \end{tabular}       
    \end{scriptsize}
    }
    \caption{Average cumulative reward $\bar{F}_{\Y}$ ($t = T/3, 2T/3, T$), normalized by fractional optimal $F^\star$, of fractional policies (\Algorithm{RAOCO} and \Algorithm{Random}) across different datasets and constraints, along with average execution time per timeslot (in seconds). Highest  $\bar{F}_{\Y}/F^\star$ values are indicated in bold.  The policies $\texttt{FSF}^*$ and \texttt{TabularGreedy} by construction only produce integral solutions (whose performance is reported in Table~\ref{tab:res}). The policy $\texttt{FSF}^*$ only operates on uniform matroids. The standard deviation of the rewards of \texttt{RAOCO} policies is 0, since the fractional rewards are deterministic, and thus omitted from the table. \Algorithm{RAOCO} combined with \Algorithm{OGA} or \Algorithm{OMA} outperforms \Algorithm{Random}, while almost reaching the optimal value~1. As \Algorithm{Random} also performs well on \texttt{MovieLens}, this indicates that the (static) offline optimal is quite poor for this reward sequence. With respect to the execution time, \Algorithm{RAOCO} policies are consistently faster by orders of magnitude than competing policies $\texttt{FSF}^*$ and \texttt{TabularGreedy}, and pretty close to (trivial to execute) policy \Algorithm{Random}; an exception is \Algorithm{TabularGreedy} on \texttt{MovieLens} under a partition matroid constraint, where its execution time is slightly better, but still comparable to \Algorithm{RAOCO} policies.  }
    \label{tab:frac_res_large}
\end{table}

\subsection{Additional Results}\label{sec:appendix_additional2}
\paragraph{Fractional policies.}
We report the fractional rewards $\bar{F}_{\Y}$ in Table \ref{tab:frac_res_large}. 
We observe that both \RAOCOOGA{} and \RAOCOOMA{} significantly outperform \Algorithm{Random}. Note that in the \FLMovieLens{} dataset all algorithms, including \texttt{Random}, attain almost optimal rewards. In comparison with the integral rewards (Table \ref{tab:res}), \RAOCOOGA{} slightly outperforms \RAOCOOMA{} in the \IMEpinions{} dataset. Overall, in all other datasets, fractional results are on par with the integral results reported in Table~\ref{tab:res}.

\paragraph{Execution times.}
We report the average execution time per timeslot (avg. time per timeslot) in Table~\ref{tab:frac_res_large}. We observe that \RAOCOOGA{} and \RAOCOOMA{}, significantly outperform all other policies, except \texttt{Random}. Particularly, in the 
\IMZKC{} dataset \texttt{RAOCO} algorithms are faster than the competitors by one order of magnitude, while in the
\IMEpinions{} dataset \texttt{RAOCO} algorithms are faster than the competitors by two orders of magnitude.

\paragraph{Dynamic Regret.} 
\label{appendix: dynamic regret} Recall that we consider two settings for the \WeightedCoverage{} dataset, which we designed to study dynamic regret: one with static rewards, and one with a distribution shift.  Recall that optimal decision under the two objectives are disjoint; therefore, the different policies need to relearn for the different objective.   We consider the following configurations for the different policies: \Algorithm{OGA} ($\eta=0.001$), \Algorithm{OMA} ($\eta=0.05, \gamma =0$) configured with negative-entropy, \Algorithm{OMA$^\star$} ($\eta=0.05, \gamma = 0.02$) configured with negative-entropy, \Algorithm{FSF$^\star$} ($\eta=0.05, \gamma = 0.02$), \Algorithm{TabularGreedy} ($\eta =0.05, c_p = 1$).

The results are provided in Fig.~\ref{fig:dynamicregret}. We observe that all the policies have similar performance in the stationary setting and are  able to match the performance of the static optimum. In the non-stationary setting, we observe that that  \Algorithm{OGA}, \Algorithm{OMA}, and \Algorithm{FSF$^\star$} are robust to to non-stationarity and are able to match the performance of a dynamic optimum. \Algorithm{TabularGreedy} is less robust to non-stationarity and is only able to match the performance of  a static optimum. We observe that the shifted version of \Algorithm{OMA} increases its robustness and provides a slightly better performance compared to \Algorithm{OGA}.

\paragraph{Optimistic Learning.}\label{appendix: optimistic learning}
We now consider  the third variant of \WeightedCoverage{} dataset, in which we alternate between the two objectives at each timeslot. Under this setting, it is difficult to track the dynamic optimum since $P_T =\Omega\parentheses{T}$ (see our theoretical result in Sec.~\ref{sec:extensions}; moreover, the regret $\BigO{\sqrt{P_T T}}$ is order-wise optimal~\cite{zhang2018adaptive}). However, if we incorporate predictions, the large $P_T$ can be dampened when the predictions are accurate. We provide as predictions the future supergradient $\vec g_t$ with added Gaussian noise with mean $\vec 0$ and standard deviation $n_\sigma \in \set{10, 200}$.

The results are provided in Fig.~\ref{fig:optimistic_figure}. We observe that when the predictions are accurate \Algorithm{Optimistic OGA} is able to exploit these predictions to match the performance of the dynamic optimum. We note that for larger learning rate, the performance improves as the policy can follow aggressively the predictions instead of the past rewards, whereas classical  \Algorithm{OGA} diverges for this learning rate configuration. The main difficulty with optimistic leaning is that the learning rate selection depends on the quality of the received predictions. This motivates the next experiment that considers a meta-learning setup that learns the appropriate learning rates in a parameter-free fashion.

    
\paragraph{Meta-Policies.}
We consider the \WeightedCoverage{} dataset and a fixed function equal to the average of the two objectives. We construct a parameter-free \Algorithm{OGA} that does not require a learning rate. The learning rate is adapted through the received gradients; such a learning rate schedule is known as  ``self-confident''~\cite{auer2002adaptive}.  This policy acts as a meta-policy that learns over different policies configured with different learning rates $\eta \in \set{5\times10^{-4}, 1\times10^{-3}, 2\times10^{-3}, 4\times10^{-3}}$. The results are provided in Fig.~\ref{fig:meta}. We observe that the meta-policy quickly learns the best learning rate.

\begin{figure}[t]
    \centering
   {
   \includegraphics[width = .4\textwidth]{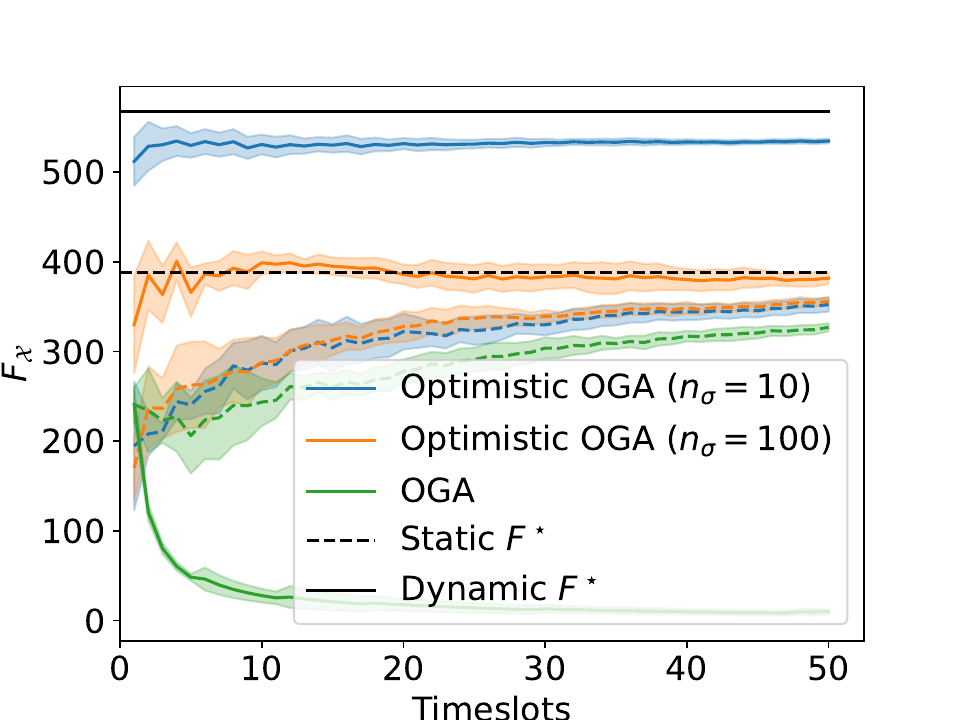}}
    \caption{ Average cumulative reward $\bar F_\X$ of the different policies under \WeightedCoverage{} dataset under a non-stationary setup: the objective is changed at every timeslot.  
 The algorithms \Algorithm{Optimistic OGA} and \Algorithm{OGA} are executed with  different learning rates under different prediction accuracy (noise with std. dev. $n_{\sigma} \in \set{10, 100}$). The larger learning rate is depicted by a solid line. 
    The area depicts the standard deviation over 5 runs.}\label{fig:optimistic_figure}
\end{figure}
\begin{figure}[t]
    \centering
    \includegraphics[width = .4\linewidth]{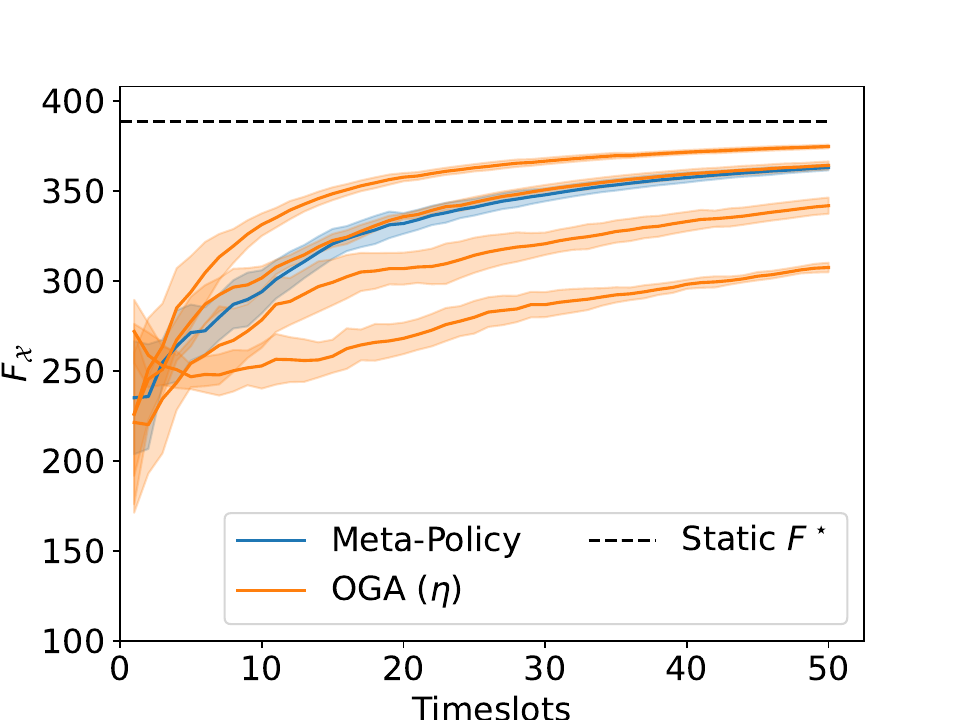}
    \caption{ Average cumulative reward $\bar F_\X$ of  \Algorithm{OGA} under \WeightedCoverage{} dataset for different learning rates.  The meta-policy can learn the best configuration of OGA without tuning the learning rate. The area depicts the standard deviation over 5 runs.}\label{fig:meta}
\end{figure}

\end{document}